\documentclass[12pt]{article}
\usepackage[top=1in, bottom=1in, left=1in, right=1in]{geometry}
\usepackage[utf8]{inputenc}
\usepackage{amsmath,amsthm,amsfonts,amssymb}
\usepackage{url}
\usepackage{comment}
\usepackage{hyperref}
\usepackage[capitalise]{cleveref}
\usepackage{mathrsfs}
\usepackage{xcolor}
\usepackage[backend=biber,style=alphabetic,maxbibnames=99]{biblatex}
\title{Efficiently learning and sampling multimodal distributions with data-based initialization}
\newcommand{\norm}[1]{\left\Vert {#1}\right\Vert}
\newcommand{\Renyi}{\mathcal{R}}
\newcommand{\E}{\mathbb{E}}
\newcommand{\set}[1]{\left\{ #1\right\}}

\renewcommand{\Pr}[1]
{\operatorname{Pr}\left(#1\right)}
\newtheorem{theorem}{Theorem}
\newtheorem*{theorem*}{Theorem}
\newtheorem{lemma}[theorem]{Lemma}
\newtheorem*{lemma*}{Lemma}
\newtheorem{prop}[theorem]{Proposition}
\newtheorem*{prop*}{Proposition}
\newtheorem{corollary}[theorem]{Corollary}
\newtheorem{remark}[theorem]{Remark}
\newtheorem{assumption}{Assumption}
\newtheorem{definition}[theorem]{Definition}











\newcommand{\sL}[0]{\mathscr{L}}

\newcommand{\N}[0]{\mathbb{N}}

\newcommand{\Pj}[0]{\mathbb{P}}


\newcommand{\R}[0]{\mathbb{R}}






\newcommand{\al}[0]{\alpha}
\newcommand{\be}[0]{\beta}
\newcommand{\ga}[0]{\gamma}
\newcommand{\Ga}[0]{\Gamma}
\newcommand{\de}[0]{\delta}

\newcommand{\ep}[0]{\varepsilon}

\newcommand{\ka}[0]{\kappa}
\newcommand{\la}[0]{\lambda}

\newcommand{\Te}[0]{\Theta}

\newcommand{\Om}[0]{\Omega}
\newcommand{\si}[0]{\sigma}



\newcommand{\bigot}[0]{\bigotimes}

\newcommand{\subeq}[0]{\subseteq}

\newcommand{\bs}[0]{\backslash}
\newcommand{\iy}[0]{\infty}

\newcommand{\na}[0]{\nabla}



\newcommand{\rc}[1]{\frac{1}{#1}}
\newcommand{\prc}[1]{\pa{\rc{#1}}}

\newcommand{\fc}[2]{\frac{#1}{#2}}
\newcommand{\sfc}[2]{\sqrt{\frac{#1}{#2}}}
\newcommand{\pf}[2]{\pa{\frac{#1}{#2}}}

\newcommand{\dd}[2]{\frac{d #1}{d #2}}

\newcommand{\nb}[0]{\nabla}

\newcommand{\dx}{\,dx}



\newcommand{\ab}[1]{\left| {#1} \right|}
\newcommand{\an}[1]{\left\langle {#1}\right\rangle}
\newcommand{\ba}[1]{\left[ {#1} \right]}
\newcommand{\bc}[1]{\left\{ {#1} \right\}}

\newcommand{\fl}[1]{\left\lfloor {#1}\right\rfloor}

\newcommand{\pa}[1]{\left( {#1} \right)}
\newcommand{\pat}[1]{\left( \text{#1} \right)}
\newcommand{\ve}[1]{\left\Vert {#1}\right\Vert}

\newcommand{\vvt}[1]{{#1}{#1}^\top}

\newcommand{\vtv}[1]{{#1}^\top{#1}}



\newcommand{\ol}[1]{\overline{#1}}

\newcommand{\ub}[2]{\underbrace{#1}_{#2}}

\newcommand{\wt}[1]{\widetilde{#1}}





\newcommand{\amin}{\operatorname{argmin}}




\newcommand{\Ent}{\operatorname{Ent}}

\newcommand{\im}[0]{\operatorname{im}}

\newcommand{\KL}[0]{\operatorname{KL}}

\newcommand{\poly}{\operatorname{poly}}



\newcommand{\sgn}{\operatorname{sign}}

\newcommand{\spn}{\operatorname{span}}

\newcommand{\Tr}[0]{\operatorname{Tr}}
\newcommand{\Trace}[0]{\operatorname{Tr}}

\newcommand{\Var}[0]{\operatorname{Var}}

\providecommand{\cal}[1]{\mathcal{#1}}
\renewcommand{\cal}[1]{\mathcal{#1}}





\newcommand{\pull}[9]{
#1\ar@/_/[ddr]_{#2} \ar@{.>}[rd]^{#3} \ar@/^/[rrd]^{#4} & &\\
& #5\ar[r]^{#6}\ar[d]^{#8} &#7\ar[d]^{#9} \\}

\newcommand{\cmp}[9]{
\xymatrix{
#1 \ar[r]^{#4}{#5} \ar@/_2pc/[rr]^{#8}_{#9} & #2 \ar[r]^{#6}_{#7} & #3
}
}

\newcommand{\ha}[1]{\ar@{^(->}[#1]}
\newcommand{\ls}[1]{\ar@{-}[#1]}
\newcommand{\sj}[1]{\ar@{->>}[#1]}
\newcommand{\aq}[1]{\ar@{=}[#1]}
\newcommand{\acir}[1]{\ar@{}[#1]|-{\textstyle{\circlearrowright}}}
\newcommand{\acil}[1]{\ar@{}[#1]|-{\textstyle{\circlearrowleft}}}
\newcommand{\ard}[1]{\ar@{.>}[#1]}
\newcommand{\mt}[1]{\ar@{|->}[#1]}
\newcommand{\inm}[1]{\ar@{}[#1]|-{\in}}
\newcommand{\inr}{\ar@{}[d]|-{\rotatebox[origin=c]{-90}{$\in$}}}
\newcommand{\inl}{\ar@{}[u]|-{\rotatebox[origin=c]{90}{$\in$}}}


\newcommand{\maxr}[2]{\max_{\scriptsize \begin{array}{c}{#1}\\{#2}\end{array}}}

\newcommand{\sumo}[2]{\sum_{#1=1}^{#2}}
\newcommand{\sumz}[2]{\sum_{#1=0}^{#2}}




\newcommand{\beq}[1]{\begin{equation}\llabel{#1}}
\newcommand{\eeq}[0]{\end{equation}}
\newcommand{\bal}[0]{\begin{align*}}
\newcommand{\eal}[0]{\end{align*}}
\newcommand{\ban}[0]{\begin{align}}
\newcommand{\ean}[0]{\end{align}}













\newcommand{\fixme}[1]{{\color{red}#1}}
\newcommand{\llabel}[1]{\label{#1}\text{\fixme{\tiny#1}}}


\newcommand{\arxiv}[1]{\url{http://www.arxiv.org/abs/#1}}


\newcommand{\vocab}[1]{\textbf{#1}} 

\newcommand{\CP}[0]{C_{\textup P}}

\newcommand{\CLS}[0]{C_{\textup{LS}}}

\newcommand{\PI}[0]{\mathsf{PI}}
\newcommand{\LSI}[0]{\mathsf{LSI}}

\newcommand{\dL}[3]{X^{#1}_{#2,#3}}
\newcommand{\cL}[3]{\bar X^{#1}_{#2,#3}}
\newcommand{\esc}[0]{\varepsilon_{\textup{score}}}
\newcommand{\mode}[0]{x^*}
\newcommand{\mean}[0]{\overline x}
\newcommand{\Ombd}[0]{\Omega_{\textup{bd}}}
\newcommand{\etv}[0]{\varepsilon_{\textup{TV}}}
\newcommand{\esm}[0]{\varepsilon_{\textup{small}}}
\newcommand{\TV}[0]{\operatorname{TV}}
\allowdisplaybreaks[2]

\DeclareFontFamily{U}{wncy}{}
    \DeclareFontShape{U}{wncy}{m}{n}{<->wncyr10}{}
    \DeclareSymbolFont{mcy}{U}{wncy}{m}{n}
    \DeclareMathSymbol{\Sh}{\mathord}{mcy}{"58} 
    
\newcommand{\holden}[1]{\textcolor{purple}{[Holden: #1]}}

\newcommand{\hlnote}[1]{}
\newcommand{\junenote}[1]{}
\newcommand{\frednote}[1]{}
\renewcommand{\holden}[1]{}

\addbibresource{bib.bib}
\author{Frederic Koehler\thanks{University of Chicago} \and Holden Lee\thanks{Johns Hopkins University} \and Thuy-Duong Vuong\thanks{University of California, Berkeley}}

\newcommand{\D}{\mathcal{D}}

\renewcommand{\P}{\mathbb{P}}

\begin{document}

\maketitle

\begin{abstract}
We consider the problem of sampling a multimodal distribution with a Markov chain given a small number of samples from the stationary measure.
Although mixing can be arbitrarily slow, we show that if the Markov chain has a $k$th order spectral gap, 
initialization from a set of 
$\tilde O(k/\varepsilon^2)$ samples from the stationary distribution will, with high probability over the samples, efficiently generate a sample whose conditional law is $\varepsilon$-close in TV distance to the stationary measure.
In particular, this applies to mixtures of $k$ distributions satisfying a Poincar\'e inequality, with faster convergence when they satisfy a log-Sobolev inequality.
Our bounds are stable to perturbations to the Markov chain, and in particular work for Langevin diffusion over $\mathbb R^d$ with score estimation error, as well as Glauber dynamics combined with approximation error from pseudolikelihood estimation. 
This justifies the success of data-based initialization for score matching methods despite slow mixing for the data distribution, and improves and generalizes the results of \cite{koehler2023sampling} to have linear, rather than exponential, dependence on $k$ and apply to arbitrary semigroups. As a consequence of our results, we show for the first time that a natural class of low-complexity Ising measures can be efficiently learned from samples.
\end{abstract}

\newpage
\tableofcontents
\newpage

\section{Introduction}
Since its introduction in 1953 by \textcite{metropolis1953equation},
Markov-Chain Monte Carlo (MCMC) has become the one of the dominant approaches to sampling and integration of high-dimensional distributions in Bayesian statistics, computational physics, biostatistics, astronomy, machine learning, and many other areas. Typically, in MCMC we sample from a distribution of interest by simulating a Markov chain which converges to the correct stationary measure. 

One of the key mathematical questions concerning a Markov chain is its \emph{mixing time}---how quickly does the process forget its initialization and converge to stationarity? 
While some Markov chains rapidly mix to their stationary distributions, it is well-known that in many other cases, the existence of ``bottlenecks'' (i.e., sparse cuts) between modes leads to slow or ``torpid'' mixing, oftentimes exponential in the dimension of the problem. In general, whenever the distribution of interest is supported on multiple well-separated clusters or modes, standard MCMC methods like Metropolis-Hastings, Langevin dyamics, Glauber dynamics, and so on which make ``local'' moves will get stuck in the first cluster they reach. This phenomena is often referred to as metastability in the literature \cite{gayrard2004metastability,gayrard2005metastability}. 

A large body of research in MCMC, both in theory and practice, is on developing ways to overcome this difficulty. For example, popular methods such as simulated tempering \cite{marinari1992simulated} and parallel tempering \cite{swendsen1986replica} attempt to improve connectivity by varying the temperature of the system. In other situations, alternative Markov chains can be constructed which are able to cross between the modes --- for example, the celebrated Swendsen-Wang dynamics \cite{swendsen1987nonuniversal} which are provably able to sample from the ferromagnetic Ising model at all temperatures in polynomial time \cite{jerrum1993polynomial,guo2017random}. In other cases, the sampling problem is provably hard (see e.g. \cite{sly2012computational,galanis2016inapproximability}) so no computationally efficient Markov chain could possibly mix rapidly to the stationary measure unless $\mathsf P = \mathsf{NP}$. 

Hence, sampling can be computationally intractable in general. However, many cases where standard Markov chains are known to fail actually correspond to very simple distributions. For example, even a simple mixture of two well-separated Gaussians leads to exponentially large mixing time for the Langevin dynamics (in the separation distance).
Previous work in MCMC theory has studied in depth some of these failure cases (e.g. in the Curie-Weiss model, see related work below)
and developed specialized solutions to resolve the mixing time issue in a particular setting. We might hope that a more general approach can resolve the difficulty with multimodality for a large class of models.

\paragraph{Our contribution.} 
In this work, we develop general tools to analyze MCMC chains in multimodal situations. Analogous to the role of the spectral gap in the unimodal setting, the key mathematical object in our theory is the \emph{higher-order spectral gap} of the transition matrix or generator of a Markov process. Looking at such a notion of gap is very natural from the perspective of higher-order analogues of Cheeger's inequality \cite{lee2014multiway,louis2012many,gharan2014partitioning,Miclo2014OnHA}. These results roughly tell us that if the vertices of a graph can be separated into a small number of well-connected parts whose boundaries are sparse cuts, then there has to be a corresponding gap in the spectrum of the Laplacian after a small number of eigenvalues, and vice versa. 

When we have only a higher-order, rather than standard, spectral gap, we cannot hope for rapid mixing of the dynamics from an arbitrary initialization. However, in some applications there is a natural candidate for a warm start for the dynamics.
In particular, in the application of density estimation or \emph{generative modeling}, a distribution is 
learned from access to samples from the ground truth distribution. In these settings, samples from the ground truth are available, which naturally suggests the idea of \emph{data-based initialization} --- starting the dynamics from the empirical measure. The idea of data-based initialization has  appeared in the empirical machine learning literature in many different forms, for example as a part of the mechanics of ``contrastive divergence'' training for energy-based methods and other approximations to Maximum Likelihood Estimation. For a few related references in the empirical literature, see \cite{hinton2002training,xie2016theory,gao2018learning,nijkamp2019learning,nijkamp2020anatomy,wenliang2019learning}, and see also \cite{koehler2023sampling} for more discussion. In particular, the terminology of ``data-based initialization'' is as used in \cite{nijkamp2020anatomy}. 

Intuitively, if the underlying distribution is a mixture distribution, then each mixture component will have a roughly proportional representation in the samples from the empirical measure, so we might hope that the dynamics run for a polynomial amount of time can actually recover the ground truth distribution. A direct analysis along these lines was done in \cite{koehler2023sampling} in the case of a mixture of strongly log-concave distributions; however, handling the behavior of overlapping clusters complicated the analysis and ultimately led to a poor (exponential) dependence on the number of clusters or mixture components in the distribution. 

From here on, we revisit the analysis of data-based initialization from the spectral perspective. This yields a much more elegant proof with dramatically improved quantitative dependencies. Our approach applies to general Markov semigroups, and in particular lets us prove new results for the Glauber dynamics as well as the Langevin dynamics. We also can easily obtain natural extensions of our results, such as more rapid mixing under a component-wise log-Sobolev inequality. Crucially for density estimation applications, we also show that our results are robust to perturbations in the Markov chain, which is very important when the chain transitions are themselves estimated from data. The quantitative improvements in this theory are key to an illustrative new application---a new result for learning a class of Ising models well beyond the regime where previous approaches were known to succeed. 


\subsection{Main results}
The heart of this work in the following theorem which holds in a very general setting---it applies to all Markov semigroups, and shows that a higher-order spectral gap implies rapid mixing from data-based initialization. 
\begin{theorem*}[\Cref{thm:main}, simplified]
    Let $P_t = e^{t \sL}$ be a reversible Markov semigroup with self-adjoint generator $\sL$ and stationary measure $\pi$ defined over $\mathcal D$. Suppose that the generator satisfies $\la_{k+1}(-\sL)\ge \al$ and there are constants $t_0, R$ such that 
    \[
\pat{warm start after time $t_0$}\quad 
\forall y\in \mathcal D, \quad \chi^2(\de_yP_{t_0} \|\pi)\le R.
    \]
    Let $\mu_0 = \rc n \sumo jn \de_{Y_j}$ where $Y_1,\ldots,Y_n \sim \pi$ are independent samples, and define $\mu_t = \mu_0 P_t$.
        Then with probability $\ge 1-\de$, for $n=\Om\pa{\fc{k}{\etv^2} \ln \pf k\de}$ and $t\ge t_0+\rc \al \ln \pf{4R}{\etv^2}$, we have 
\[ \TV(\mu_t , \pi)\le \etv. \]
\end{theorem*}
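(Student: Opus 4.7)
The plan is to work in the spectral decomposition of $-\sL$. Let $\{\phi_i\}_{i\ge 1}$ be an orthonormal eigenbasis of $L^2(\pi)$ with $-\sL\phi_i = \la_i \phi_i$ and $0 = \la_1 \le \la_2 \le \cdots$, so that $\phi_1 \equiv 1$. For $t\ge t_0$, let $f_t$ be the density of $\mu_t = \mu_0 P_t$ with respect to $\pi$ and expand $f_t - 1 = \sum_{i\ge 2} C_i(t)\phi_i$ in this basis. The core computation is that, by self-adjointness of $P_t$ and $P_t\phi_i = e^{-\la_i t}\phi_i$,
\[
C_i(t) = \int \phi_i\, d\mu_t = \int (P_t\phi_i)\, d\mu_0 = e^{-\la_i t}\cdot \rc n \sumo jn \phi_i(Y_j) =: e^{-\la_i t} Z_i,
\]
so Parseval gives the clean identity $\chi^2(\mu_t\|\pi) = \sum_{i\ge 2} e^{-2\la_i t} Z_i^2$.

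The next step is to split this sum at index $k$: the first few eigenmodes are ``low-frequency'' and not guaranteed to decay rapidly, but there are only $k-1$ of them, while modes $i\ge k+1$ decay at rate $\al$ by the hypothesized $k$th-order gap. For the high-frequency tail, apply convexity of $\chi^2$ in the first argument together with the warm-start hypothesis to conclude
\[
\sum_{i\ge 2} e^{-2\la_i t_0} Z_i^2 = \chi^2(\mu_0 P_{t_0}\|\pi) \le \rc n \sumo jn \chi^2(\de_{Y_j} P_{t_0}\|\pi) \le R,
\]
and then for $i\ge k+1$ use $e^{-2\la_i t} = e^{-2\la_i t_0}e^{-2\la_i(t-t_0)} \le e^{-2\la_i t_0} e^{-2\al(t-t_0)}$ to obtain $\sum_{i\ge k+1} e^{-2\la_i t}Z_i^2 \le e^{-2\al(t-t_0)} R$.

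For the low-frequency part, observe that each $Z_i$ is an empirical mean of an orthonormal, mean-zero function on $n$ i.i.d.\ samples from $\pi$, so $\EE[Z_i] = 0$ and $\EE[Z_i^2] = 1/n$, yielding $\EE\ba{\sum_{i=2}^k Z_i^2} \le (k-1)/n$. To upgrade this to a high-probability bound, view $\sum_{i=2}^k Z_i^2 = \|\tfrac{1}{n}\sum_j v_j\|_2^2$ where $v_j = (\phi_2(Y_j),\ldots,\phi_k(Y_j))^\top \in \R^{k-1}$ has mean zero and identity covariance, and apply vector/matrix concentration to obtain $\sum_{i=2}^k Z_i^2 = O(k\log(k/\de)/n)$ with probability $\ge 1-\de$. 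Combining both parts yields $\chi^2(\mu_t\|\pi) \le O(k\log(k/\de)/n) + e^{-2\al(t-t_0)} R$, which is at most $\etv^2/2$ for the stated $n = \Om(k\log(k/\de)/\etv^2)$ and $t-t_0 \ge \tfrac{1}{2\al}\log(4R/\etv^2)$; the conclusion $\TV(\mu_t,\pi) \le \rc 2 \sqrt{\chi^2(\mu_t\|\pi)} \le \etv$ then follows from the standard Cauchy--Schwarz bound of TV by $\chi^2$.

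The main obstacle is the high-probability concentration in the low-frequency step, because eigenfunctions $\phi_i$ need not be bounded and a naive Chebyshev argument only gives $O(k/(n\de))$ instead of $\tilde O(k/n)$. One route is to use additional smoothing from $P_{t_0}$ (or a direct truncation) to obtain pointwise control on $\sum_{i=2}^k \phi_i(y)^2$ and then invoke matrix Bernstein; an alternative is median-of-means aggregation over subgroups of samples to convert the variance bound into logarithmic tail dependence. Once the concentration is in hand, the remaining steps---the spectral identity for $C_i(t)$, convexity of $\chi^2$, the spectral splitting, and Cauchy--Schwarz---are essentially bookkeeping.
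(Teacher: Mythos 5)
Your skeleton matches the paper's argument: the identity $\chi^2(\mu_t\|\pi)=\sum_{i\ge 2}e^{-2\la_i t}Z_i^2$, the use of convexity of $\chi^2$ plus the warm start to control the modes beyond $k$, and the reduction of the low-frequency part to concentration of the empirical means $Z_i$ are exactly the content of \cref{l:small-sum} (eigenfunction balance) in the paper. The genuine gap is in the step you yourself flag as the main obstacle, and neither of your proposed fixes closes it as stated. The claim ``$\sum_{i=2}^k Z_i^2 = O(k\log(k/\de)/n)$ with probability $\ge 1-\de$'' is false in general for the \emph{raw} empirical averages: the $\phi_i$ are only known to have unit second moment under $\pi$, so a single heavy-tailed eigenfunction (e.g.\ $\phi_2$ taking a value of order $M$ with probability $\sim 1/M^2$) makes the failure probability of that event polynomially small in $n$, not exponentially small, no matter how you apply matrix Bernstein; only a Chebyshev-type $O(k/(n\de))$ bound is available without further structure. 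Median-of-means does not rescue this, because $\mu_0$ is literally the uniform measure on the $n$ samples --- the quantity $Z_i=\frac1n\sum_j\phi_i(Y_j)$ is dictated by the algorithm, and you are not free to replace it by a robust aggregate.

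The truncation route is the right one (it is the paper's, in \cref{lem:vec bound}), but it is more than bookkeeping, because after truncating you can no longer bound $\chi^2(\mu_t\|\pi)$ for the full empirical initialization --- indeed that quantity need not be small with probability $1-\de$, which is why the paper's conclusion is stated in TV. Concretely, one restricts to $\Om=\{y:\|f_{2:k}(y)\|\le \sqrt{k-1}/\ep\}$, which by Markov has $\pi(\Om^c)\le\ep^2$; applies matrix Bernstein only to the samples landing in $\Om$, which requires additionally controlling the bias $\|\E_{\pi|_\Om}f_{2:k}\|$ introduced by the restriction (handled via Cauchy--Schwarz, since the unrestricted mean is zero); then decomposes $\mu_0=(1-p)\mu_0'+p\mu_0''$ into the good and bad samples, runs the spectral $\chi^2$ contraction (your first two steps) only for $\mu_0'$, bounds the bad fraction $p$ by a Chernoff bound, and finally combines via the triangle inequality for TV, where $\mu_0''$ contributes at most $p$ additively. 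So your outline is correct through the spectral splitting and the warm-start step, but the concentration statement you invoke must be replaced by this restricted version, and the final bound must be assembled in TV rather than $\chi^2$; as written, the proof does not go through.
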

In fact, we only need a weaker version of the warm-start condition, which is important in some applications---see \cref{thm:main} and \Cref{thm:generalize main} for the more general result. A higher-order spectral gap is easy to show for Glauber or Langevin dynamics if our distribution is a mixture of well-connected components---see \cref{l:egap}.

\paragraph{Langevin dynamics on mixtures.} Our first application is to Langevin dynamics on mixture distributions. In this application, we include a perturbation analysis which shows that an $L_2$ approximate score function suffices for sampling---this is important if the score function is learned from data via score matching (see e.g.\ \cite{hyvarinen2005estimation}). 
\begin{theorem*}[Langevin with score matching, \Cref{l:ld}, \Cref{t:ld-better}, simplified]
    Assume $\pi=\sumo ik p_i\pi_i$, where $\pi_i$ are $O(1)$-smooth and the means of the $\pi_i$ are at distance $\lesssim \sqrt d$. Suppose that the approximate score function $s$ satisfies 
\[ \E_{\pi}\ve{s - \nabla \ln \pi }^2\le \esc^2. \]
    Let $\mu_0 = \rc n \sumo jn \de_{Y_j}$, where $Y_1,\ldots,Y_n \sim \pi$ are independent samples, and suppose 
$n=\Om\pf{k\ln (k/\delta)}{\etv^2}$.
Let $(\bar{X}_t)_t$ be the continuous Langevin diffusion wrt $\pi$ initialized at $ \mu_0$ and driven by $s$, so it satisfies the SDE
\[ d\bar{X}_t = s(\bar{X}_t) dt + \sqrt{2}\, dW_t \]
for an independent Brownian motion $W_t$, and $\bar{X}_0\sim \mu_0.$ Let $\mu_t$ be the law of $\bar{X}_t$ conditioned on the empirical samples $Y_1, \dots, Y_n$, i.e., $\mu_t = \mathcal{L}(\bar{X}_t | Y_1, \dots, Y_n).$

Suppose that either: 
\begin{enumerate}
    \item Each $\pi_i$ satisfies a Poincar\'e inequality with constant $\le\rc{\al}$, and  
    $T = \Om \pa{\rc{\al}\pa{d + \ln \pf{k}{\etv}^2}}.$
    \item Each $\pi_i$ satisfies a log-Sobolev inequality with constant $\le \rc{\al}$, and 
    $T = \Om \pa{\fc{1}{\al}\ln \pf{dk}{\etv}}.$
\end{enumerate}
Then with probability at least $1-\delta$ over the randomness of $Y_1,\ldots,Y_n$,
\[
\TV(\mu_T , \pi) \le \sqrt{T}\esc  + \etv
\]
\end{theorem*}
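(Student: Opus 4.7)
The plan is to introduce an auxiliary ``ideal'' diffusion driven by the exact score $\nabla\ln\pi$ and split the error by the triangle inequality into a mixing term (controlled by \cref{thm:main}) and a score-perturbation term (controlled by Girsanov). Concretely, let $X_t$ denote the true Langevin diffusion $dX_t = \nabla \ln \pi(X_t)\,dt + \sqrt{2}\,dW_t$ coupled to $\bar X_t$ through the same Brownian motion and the same data-based initial condition $X_0 = \bar X_0 \sim \mu_0$, and write $\mu_t^{\star} = \mathcal L(X_t \mid Y_1,\dots,Y_n)$. Then
\begin{equation*}
\TV(\mu_T, \pi) \;\leq\; \TV(\mu_T, \mu_T^\star) \,+\, \TV(\mu_T^\star, \pi),
\end{equation*}
and it suffices to show the two terms are bounded by $\sqrt{T}\,\esc$ and $\etv$ respectively.

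For the mixing term $\TV(\mu_T^\star, \pi)$, I apply \cref{thm:main} to the reversible Langevin semigroup $P_t = e^{t\sL}$ with generator $\sL = \Delta + \nabla \ln \pi \cdot \nabla$. The $(k+1)$th spectral gap $\lambda_{k+1}(-\sL) \geq \alpha$ follows from \cref{l:egap} applied to the decomposition $\pi = \sum_i p_i \pi_i$, using the PI (resp.\ LSI) for each component. For the warm-start hypothesis, smoothness of each $\pi_i$ together with the $\lesssim \sqrt d$ bound on the component means implies, via standard heat-kernel estimates, that after a short time $t_0$ the worst-case Dirac satisfies $\chi^2(\delta_y P_{t_0} \,\|\, \pi) \leq R$, where $R = e^{O(d)}$ in the PI case and $R = \operatorname{poly}(d)$ in the LSI case (invoking component-wise hypercontractivity). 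Plugging $R$ and the stated $T$ into \cref{thm:main} with $n = \Omega(k \log(k/\delta)/\etv^2)$ samples yields, with probability $\geq 1 - \delta$ over $Y_{1:n}$, that $\TV(\mu_T^\star, \pi) \leq \etv$.

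For the perturbation term $\TV(\mu_T, \mu_T^\star)$, Girsanov's theorem applied to the two coupled SDEs on $[0,T]$ gives
\begin{equation*}
\KL\bigl(\mathcal L(X_{[0,T]}) \,\bigl\|\, \mathcal L(\bar X_{[0,T]})\bigr) \;=\; \tfrac14 \,\E\!\int_0^T \|s(X_t) - \nabla \ln \pi(X_t)\|^2\, dt.
\end{equation*}
The integrand is an expectation under the ideal marginal $\mu_t^\star$, which is $\chi^2$-close to $\pi$ for $t \geq t_0$. Transferring the $L^2(\pi)$ score-error bound to an $L^2(\mu_t^\star)$ bound (at the cost of constants absorbed by the warm-start), the KL is at most $O(T \esc^2)$, and Pinsker's inequality combined with the data-processing inequality yield $\TV(\mu_T, \mu_T^\star) \lesssim \sqrt{T}\,\esc$.

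The main obstacle is the transfer step in the Girsanov argument: the score-error assumption holds in $L^2(\pi)$ while the Girsanov identity produces an integral under the path marginal $\mu_t^\star$, and the warm-start only controls $\chi^2(\mu_t^\star \| \pi)$ (an $L^2$ density bound, not $L^\infty$). The cleanest route is to invoke the stability/perturbation version of \cref{thm:main} (the ``robust'' feature advertised in the introduction), which directly handles approximate generators, rather than performing a na\"ive path-space Girsanov in isolation. A secondary issue is cleanly splitting off the pre-warm-start window $[0,t_0]$, which is short and contributes negligibly to the Girsanov integral, as well as matching the constants in $T$: in the PI case, $R = e^{O(d)}$ gives the additive $d$ term in $T$, while in the LSI case, hypercontractivity reduces this to a logarithmic dependence.
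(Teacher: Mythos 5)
Your skeleton (triangle inequality into a mixing term handled by \cref{thm:main} plus \cref{l:egap}, and a score-perturbation term handled by Girsanov) matches the paper's, but two essential steps are missing, and you have flagged one of them yourself without resolving it. The transfer problem in the Girsanov step is not a technicality you can wave away by invoking a ``robust version of \cref{thm:main}'' --- no such version exists, and \cref{thm:main} itself says nothing about approximate generators. The paper's actual fix is different and is the main new ingredient of \cref{s:perturb}: run Girsanov with both diffusions initialized at the \emph{stationary} measure $\pi$, so that the marginal law at every time is exactly $\pi$ and the $L^2(\pi)$ score bound applies verbatim, giving $\KL \le T\esc^2$ (\cref{l:ld-score-error}, \cref{l:ld-score-error-mix}); then transfer this to the data-based initialization via \cref{thm:approximate-sample-init}, which bounds the conditional path-space TV by combining Jensen, Pinsker and the KL chain rule (to control the mean and variance of the per-sample TV) with Bernstein's inequality over the i.i.d.\ samples. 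Your coupled-at-$\mu_0$ construction cannot be repaired by the $\chi^2$ warm start alone: converting $\E_{\mu_t^\star}\|s-\nabla\ln\pi\|^2$ into an $L^2(\pi)$ quantity via Cauchy--Schwarz would require a fourth-moment bound on the score error that is not assumed, and the pre-warm-start window has no density control at all.

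The warm-start step also has gaps. The claim that the \emph{worst-case} Dirac satisfies $\chi^2(\de_y P_{t_0}\|\pi)\le R$ is false ($V_i(y)-\min V_i$ is unbounded over $y$); one must restrict to a high-probability set $\Ombd$, exactly as \cref{thm:main} permits and as \cref{c:R2-bd} does (the resulting $R=e^{O(d)}$ in the PI case also produces the $\be L^2$ and $d\ln\ka$ terms you dropped). More seriously, in the LSI case hypercontractivity for mixtures (\cref{l:hyper}, used in \cref{lem:better bound on init}) yields $R \approx 1/\min_i p_i$, not $\poly(d)$, so your argument would give a mixing time depending on the minimum mixture weight, which the stated $T=\Om(\al^{-1}\ln(dk/\etv))$ does not allow. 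Removing that dependence is the content of \cref{t:ld-better}: truncate to the submixture of components with weight at least $\esm/k$, treat the data as samples from $\pi_S$ with TV error $\esm$ (handled by \cref{thm:generalize main}), and absorb the induced score error $\E_\pi\|\nb V-\nb V_S\|^2 \lesssim \esm(\be\ka d+\be^2L^2)$ via \cref{l:score-error-small}. Without this truncation argument your proof does not yield the advertised LSI-case running time.
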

The above result is stated for the continuous-time Langevin diffusion, but we also prove a version of this result for its discrete-time analogue, Langevin Monte Carlo \Cref{thm:langevin discretization}. We note there are previous results that quantify mixing up to multimodality in the distribution (e.g., \cite{balasubramanian2022towards} for convergence in Fisher information for averaged LMC, or \cite{tzen2018local,zhang2017hitting}); our innovation is to show that data-based initialization can lead to the much stronger condition of mixing. This is analogous to the difference between finding a stationary point vs.\ a global optimum in nonconvex optimization.
\begin{remark}[Matching sample complexity lower bound]
It is a classical fact that $\Theta(k/\etv^2)$ samples are needed to learn a distribution on the alphabet $\{1,\ldots,k\}$ within total variation distance $\etv$ (see e.g.\ \cite{han2015minimax} for references). This is a special case of our problem: it corresponds to the case where $\pi$ is a mixture of $k$ known components with disjoint support where only the mixing weights are unknown. (Note that the score function in this case does not depend on the mixing weights.) Thus, the dependence on $n$ in both of the previous theorems is optimal up to the $\log$ factor in $k$. 
\end{remark}
\begin{remark}[Using score matching in Gaussian mixture models]
Recent works \cite{chen2024learning,gatmiry2024learning} show how to learn a mixture of well-conditioned Gaussians from samples using a computationally efficient score matching approach. The key step is to show that the score function of such a distribution can be well-approximated using a piecewise-polynomial function, which can be efficiently estimated from data. Because mixtures of Gaussians are closed under convolution with noise, this implies that they can use the learned score functions at different noise levels to approximately sample via a denoising diffusion process (see e.g.\ \cite{benton2023linear,chen2023improved} and references therein). Since they show that the score function can be accurately estimated from samples (at least for a slightly noised version of the distribution, which is close in TV---see Proposition 2.1 and the proof of Theorem 4.1 in \cite{chen2023improved}), their score function estimate could be combined with our method (data-based initialization of the vanilla Langevin dynamics) to give an alternative and arguably simpler algorithm. 
\end{remark}

\paragraph{Glauber dynamics and an application to learning.} A similar result to \cref{t:ld-better} holds for Glauber dynamics (see \cref{lem:gibbs-approximate})---in this context, we also show that the dynamics are robust to a small error in the KL divergence, which would occur if we estimate the dynamics via pseudolikelihood estimation \cite{besag1975statistical}. 
Pseudolikelihood is a very classical and popular method, and the exact analogue of score matching for Glauber---see \cite{hyvarinen2007connections,koehler2022statistical} for more discussion.

As a concrete end-to-end application of the result for Glauber dynamics, we prove a new theorem about learning a large class of Ising models: those which are in some sense low complexity or approximately low rank. This class of models has been extensively studied in probability theory due to its connection to mean-field approximation---see the discussion of related work below.
As we discuss therein, this class of models is well outside of the realm where previous learning results (e.g.\ \cite{wu2019sparse,gaitonde2024unified}) can be applied.
\begin{theorem*}[Learning approximate low-rank Ising models, \Cref{t:learn-ising}, simplified]
Suppose $\pi$ is an Ising model, i.e. a probability measure on $\{ \pm 1\}^n$ satisfying
\[ \pi(x) \propto \exp\left(\rc 2\langle x, J x \rangle + \langle h, x \rangle\right) \]
for some symmetric interaction matrix $J\in \R^{n\times n}$ and external field vector $h\in \R^n$.

    Suppose $J$ has eigenvalues (ordered from the largest)
    \[ \la_1\ge \cdots \la_r> 1-\rc c\ge \la_{r+1}\ge\cdots \ge \la_n, \] 
    for a constant $c > 1$ and $-\sum_{j:\la_j<0}\la_j=O(1)$. 
        Given $(nr\la_1)^{O(r)}/\etv^4$ samples from $\pi_{J,h}$, with high probability over the samples, the distribution output pseudolikelihood estimation and Glauber dynamics from data-based initialization is within TV error $\etv$ of $\pi_{J,h}$.
\end{theorem*}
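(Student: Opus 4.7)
The plan is to combine a structural decomposition of $\pi$ as a small mixture of high-temperature Ising models (via a Hubbard--Stratonovich transform on the top-$r$ part of $J$) with the main theorem on data-based initialization (\Cref{thm:main}) applied to Glauber dynamics, using pseudolikelihood for parameter estimation and the perturbation-stable Glauber version of the theorem (\Cref{lem:gibbs-approximate}) to absorb the estimation error.

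First I would split $J = M + N$, where $M = \sum_{j\le r}\la_j v_jv_j^\top$ is PSD (since $\la_r > 1-1/c > 0$) and $N = \sum_{j>r}\la_j v_jv_j^\top$ has $\la_{\max}(N)\le 1-1/c$ and $-\sum_{j:\la_j(N)<0}\la_j(N)=O(1)$. The Hubbard--Stratonovich identity applied to $M$ gives
\[
\pi(x) \propto \int_{\R^r} \exp\!\pa{\langle M^{1/2}y+h,\, x\rangle + \tfrac12\langle x,Nx\rangle}\,\exp\!\pa{-\tfrac12\ve{y}^2}\,dy,
\]
expressing $\pi$ as a continuous mixture over $y\in\R^r$ of Ising models $\pi_y$ that share the interaction matrix $N$ but have external field $h+M^{1/2}y$. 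The induced mixing density concentrates on a ball of radius $R=O\!\pa{\sqrt{r\la_1\log(n/\etv)}}$; covering this ball by an $\eta$-net with $\eta\sim\etv/(n\sqrt{\la_1})$ controls $\ve{M^{1/2}(y-y')}_\infty\ll 1/n$ within each cell, which yields a uniform log-density shift of $O(\etv)$ for $\pi_y$ on $\{\pm 1\}^n$. The net has size $k=(nr\la_1)^{O(r)}$, producing a discrete mixture $\wt\pi=\sum_{i=1}^k p_i\pi_{y_i}$ with $\TV(\wt\pi,\pi)\le \etv$.

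Next I would observe that each Ising component $\pi_{y_i}$ has $\la_{\max}(N)\le 1-1/c$ and bounded negative-eigenvalue trace, so a modified log-Sobolev inequality for Glauber dynamics holds uniformly in the external field by standard high-temperature Ising mixing results; by \Cref{l:egap}, $\wt\pi$ therefore has a $(k+1)$st-order spectral gap bounded away from $0$, and so does $\pi$ up to the $\etv$ TV shift. Pseudolikelihood estimation then produces $(\widehat J,\widehat h)$ with $\KL(\pi\,\|\,\widehat\pi)\le \etv^2$ from $\tilde O(n^2/\etv^4)$ samples (via a quantitative analysis exploiting only the spectrum of $J$, not sparsity), so $\widehat\pi$ inherits the approximate low-rank structure and spectral-gap bound. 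Feeding $\widehat\pi$ into \Cref{lem:gibbs-approximate} with $\tilde O(k/\etv^2)$ empirical samples of $\pi$ as the data-based initialization yields $\TV(\mu_T,\pi)\le \etv$ after $T=\poly(n,1/\etv)$ Glauber steps on $\widehat\pi$, with total sample complexity $(nr\la_1)^{O(r)}/\etv^4$ as claimed.

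The hard part will be the uniform modified log-Sobolev inequality for the family $\{\pi_{y_i}\}$: bounding $\la_{\max}(N)$ alone is insufficient when $N$ has large negative eigenvalues, so the trace assumption $-\sum_{j:\la_j<0}\la_j=O(1)$ must be fed into a refined high-temperature Ising LSI rather than a classical Dobrushin-type argument, and one must verify that the LSI constant is indeed independent of the external field $h+M^{1/2}y_i$. A secondary technical point is controlling $\KL(\pi\,\|\,\widehat\pi)$ by the excess pseudolikelihood loss with a sample complexity that depends on the spectrum of $J$ (as $n^2/\etv^4$ above) rather than on sparsity; bookkeeping the $(nr\la_1)^{O(r)}$ factor through both the net size and this pseudolikelihood bound is where the constants have to be tracked carefully.
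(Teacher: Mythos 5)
Your architecture matches the paper's (Hubbard--Stratonovich on the top-$r$ part of $J$, a $(nr\la_1)^{O(r)}$-size mixture, a higher-order gap via \Cref{l:egap}, pseudolikelihood plus perturbed Glauber from data-based initialization), but two steps fail as written. First, your mixture approximation is only in total variation, and the inference ``so does $\pi$ up to the $\etv$ TV shift'' is not valid: higher-order spectral gaps, like ordinary Poincar\'e constants, are not stable under TV perturbations of the stationary measure, and \Cref{thm:main} needs the gap for the generator of the $\pi$-Glauber dynamics itself, since that is the chain the perturbation analysis compares against. The paper's \Cref{thm:mult-approx-mixture} instead controls the Gaussian-tail truncation \emph{pointwise in $x$}, so that the discretized mixture $\pi_2$ satisfies $d\pi_2/d\pi\in[1/e^3,e^3]$, and \Cref{thm:ising-higher-order-gap} then writes $\pi$ as an \emph{exact} mixture of reweighted components $\bar\pi_h\propto(\pi/\pi_2)\tilde\pi_h$; each $\bar\pi_h$ inherits the field-uniform Poincar\'e inequality of $\tilde\pi_h$ (the result of Anari et al.\ invoked there, which is exactly the ``refined high-temperature inequality'' you flag as the hard part and leave unproved) by robustness of the Poincar\'e inequality to bounded change of measure, and only then does \Cref{l:egap} give $\la_{k+1}(-\sL)$ bounded below for $\pi$. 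Your within-cell multiplicative control of the net discretization is fine; the missing pieces are the multiplicative treatment of the truncation and the exact-mixture boost.

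Second, the claim that pseudolikelihood yields $\KL(\pi\,\|\,\hat\pi)\le\etv^2$ from polynomially many samples, independent of sparsity, is unjustified and is not what the paper proves; the paper explicitly remarks that it does not know whether pseudolikelihood alone can learn these models in TV, and controlling the joint KL by the excess pseudolikelihood risk would require restricted functional inequalities (approximate tensorization) that fail precisely in such multimodal models. What the symmetrization/Rademacher argument actually gives is a bound on $\sum_{i}\E_{X\sim\pi}\KL\bigl(\pi(X_i=\cdot\mid X_{\sim i}),\hat\rho(X_i=\cdot\mid X_{\sim i})\bigr)$, which is exactly the hypothesis of \Cref{lem:gibbs-approximate}; combined with \Cref{thm:approximate-sample-init} this bounds the TV distance between the $\hat\rho$-Glauber and $\pi$-Glauber trajectories started from the empirical samples, and the triangle inequality with \Cref{thm:main} (applied with $t_0=0$ and the trivial $\chi^2$ warm-start bound $e^{O(Rn)}$ on the hypercube) finishes the proof. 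Routing the error this way also removes the need for $\hat\pi$ to ``inherit'' any spectral gap: no functional inequality for the estimated model is required, only the higher-order gap for $\pi$.
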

See the full statement of the theorem for more details and the precise definition of pseudolikelihood estimation.
\subsection{Other related work}
\paragraph{Learning Markov random fields from samples.} There have been too many works on learning Markov random fields from samples to give an exhaustive list, so we instead summarize some of the most recent and directly relevant works. 
Information-theoretically, it is known that density estimation of Ising models in TV distance can be done with polynomial dependence on the dimension $n$ and target accuracy $\epsilon$ \cite{devroye2020minimax}. However, all existing results for learning Ising models with computationally efficient algorithms, which typically use some variant of pseudolikelihood estimation and include works such as
\cite{ravikumar2010high,lokhov2018optimal,bresler2017learning,klivans2017learning,wu2019sparse,gaitonde2024unified}, either restrict the model to be in a high-temperature regime, or have an exponential dependence on some parameter in the sample complexity---typically the maximum $\ell_1$-norm of any row of the interaction matrix. In many examples this $\ell_1$-norm is polynomial in the dimension (see e.g.\ \cite{anari2024universality,gaitonde2024unified} for more discussion), and it is always linear in the ``inverse temperature'' $\beta$ of the system. So in particular, all of these results have an exponential dependence on $\beta$. In contrast, our result has polynomial dependence on these parameters when the interaction matrix is (approximately) low rank. See Remark~\ref{remark:separation} for a much more detailed discussion of the limitations of previous techniques.




\paragraph{Tempering and annealing on multimodal distributions.} 
The empirical community has developed many algorithms for sampling from multimodal distributions. This includes \emph{tempering} methods such as simulated tempering \cite{marinari1992simulated} and parallel tempering \cite{swendsen1986replica}, which involve constructing a Markov chain which varies the temperature of the system, as well as \emph{annealing} or \emph{sequential} methods such as sequential Monte Carlo \cite{liu1998sequential} and annealed importance sampling \cite{neal2001annealed}, which vary the temperature unidirectionally over time.

Efficient theoretical guarantees are known only in special cases; a necessary condition for all known results is that the distribution is decomposable into parts which are not too imbalanced (do not have a bottleneck) between different temperatures. 
For simulated tempering, these results can be used to show that simulated tempering with Langevin dynamics can sample from an isotropic mixture of Gaussians (or more generally, translates of a fixed log-concave distribution) \cite{ge2018beyond,ge2018simulated}. 
A version of simulated tempering was also used in \cite{koehler2022sampling} to sample from multimodal Ising models.
There are analogous results for sampling from multimodal distributions under stronger assumptions for parallel tempering \cite{woodard2009conditions,lee2023improved} and sequential Monte Carlo \cite{schweizer2012nonasymptotic,paulin2018error,mathews2022finite,lee2024convergence}. 

However, even simple multimodal distributions can violate the condition of balance between temperatures and cause sampling to be provably hard. In particular, \cite[Appendix F]{ge2018simulated} show an exponential query complexity lower bound for sampling from a $L^\iy$-perturbed mixture of two Gaussians with different covariances.
Here, the key difficulty in their setting is finding the components, whereas our results apply without issue.
Simulated tempering also fails for 
the mean-field Potts models \cite{bhatnagar2004torpid} (whereas our method will succeed, see \cref{thm:mf-potts}). 

The problem of sampling from multimodal distributions given \emph{warm starts} to the different modes has also been considered, for which the Annealed Leap Point Sampler \cite{tawn2021annealed,Roberts_Rosenthal_Tawn_2022} has been proposed. We note this is a weaker notion of ``advice" than ours, and guarantees have only been given in a limit where the components are approximately Gaussian.




\paragraph{Glauber dynamics in multimodal/metastable settings.} There has been a lot of work on understanding the behavior of Glauber dynamics with respect to metastability in spin systems, especially for complete graph models, random graphs, and the square lattice. See e.g.\ \cite{gheissari2022low,blanca2024mean,bovier2021metastability,cuff2012glauber,levin2010glauber,ding2009censored,ding2009mixing,galanis2024planting} for rigorous results. For the most part, this literature focuses on settings where there are a small number of metastable states which can be explicitly characterized, often taking advantage of symmetry considerations. For example, when the Ising model has no external field, it is symmetric under interchange of $+$ and $-$, so initializations like $(1/2)(\delta_{\vec 1} + \delta_{-\vec 1})$ are natural and have been studied in some of the aforementioned works. Generally speaking, our results hold in a general setting and do not rely on any structure of the underlying distribution besides the higher-order eigenvalue gap, but unlike those works we do not obtain explicit characterizations of the metastable states. In \cref{s:extra-examples}, we show how to combine our general technique with results from this literature in the case of the Curie-Weiss model, which lets us make more precise statements about the spectrum of the generator and mixing from non-data-based initialization. 

\paragraph{Learning from dynamics.} There have also been recent works on learning a distribution \emph{from} the Glauber dynamics, rather than from i.i.d.\ samples. For example, see \cite{bresler2017learning,gaitonde2024efficiently}. The work \cite{jayakumar2024discrete} is in spirit closely related---they study learning the Ising model when given i.i.d.\ samples from a metastable state (i.e. a region where Glauber dynamics becomes trapped for a long time). These results do not have new implications for the i.i.d.\ setting we study, but combining the ideas from our work with this setting may be an interesting direction for future work.  

\paragraph{Low-complexity Ising models.} The class of Ising models which are close to being low rank is significant, because informally these are the models for which the ``naive mean-field approximation'' from statistical physics is appropriate. (Naive mean field is, generally speaking, the most common type of variational approximation when doing variational inference in practice. See e.g.\ \cite{wainwright2008graphical} for some background.) One significance of this class of models is that it includes models in all of the high temperature, critical temperature, and low temperature regimes, and in particular many settings where the mixing time of natural Markov chains are exponential in the dimension due to multimodality or metastability. There have been many works in probability theory studying the structural properties of this class of models and generalizations---see e.g. \cite{basak2017universality,eldan2018gaussian,eldan2018decomposition,jain2019mean,austin2019structure,augeri2021transportation}. 

As in \cite{koehler2022sampling}, the class of models we consider for learning are somewhat broader, in that we allow the bulk of the spectrum to have diameter at most $1$ instead of requiring the bulk to be asymptotically negligible; this means the class also includes some models where naive mean field approximation is highly inaccurate (e.g. sparse models and spin glasses at high temperature). 

\paragraph{Theory for score matching.} There has been a lot of recent work on score matching, diffusion models, and related topics which we cannot exhaustively survey; instead, we mention a few relevant works. Denoising diffusion models are a popular approach to generative modeling which use approximations to the score function of the distribution convolved with different levels of Gaussian noise; recent works showed that these methods are robust to $L_2$-approximation of the score function (see e.g.\ \cite{chen2023sampling,chen2023improved,benton2023linear} and references within). The method we study is different in that only the original (``vanilla'') score function is needed; see \cite{koehler2023sampling} for more discussion. One of the motivations for score matching is that it can be easier to compute than the maximum likelihood estimator; for example, in some cases computing the MLE is NP-hard but vanilla score matching is statistically effective and computationally efficient \cite{pabbaraju2024provable,hyvarinen2005estimation,hyvarinen2007connections}. 
When the vanilla score function is estimated from data, it turns out that data-based initialization is not only needed for computational reasons---if Langevin dynamics on the estimated score function is run to stationarity, then in many multimodal settings the resulting distribution will be a poor estimate of the ground truth \cite{lee2022convergence,koehler2022statistical,balasubramanian2022towards}. If the distribution is unimodal, more specifically satisfies the log-Sobolev inequality, then the same work shows that vanilla score matching is statistically efficient even when Langevin is run to stationarity.

\paragraph{Concurrent work.} During the preparation of this manuscript, we were made aware of independent and concurrent work \cite{huang2024weak} 
which also gives guarantees for sampling mixture distributions using Langevin with data-based initialization, with a different proof technique based on weak Poincar\'e inequalities. We note that their number of samples in the data-based initialization (see \cite[Theorem 5.1]{huang2024weak}) has $\rc{p_*}$ dependence on the minimum mixture weight $p_*$, while ours depends only on the number of components $k$.

\subsection{Technical overview}
The proof of our main result, \cref{thm:main}, is conceptually simple. 
We aim to prove that the process initialized at the empirical samples contracts in $\chi^2$-divergence, which corresponds to the contraction in $L^2(\pi)$-norm where $\pi$ is the stationary distribution.
When the Markov process has a higher order spectral gap, i.e., $\la_{k+1}(-\sL)\ge \al$ where $\sL$ is the generator of the Markov process and $\la_{k+1}(-\sL)$ is the $(k+1)$-th smallest eigenvalue, this contraction holds for
any function $\phi$ which is orthogonal to the subspace $V$ spanned by the eigenfunctions corresponding to the $k$ smallest eigenvalues of $-\sL.$ 
The conclusion still holds if the projection of $\phi$ to $V$ has a sufficiently small norm, a condition we name \emph{eigenfunction balanced} (see \Cref{def:eigenfunction balanced}).

Hence, to prove rapid mixing from data-based initialization, we only need to show that the empirical distribution satisfies this condition. This is also the key technical challenge of our main result. 
For ease of presentation, we summarize our argument in the simpler finite-dimensional setting.
We observe that when $ \phi$ corresponds to the empirical distribution, the \emph{expected} projection\footnote{taken over the randomness of the empirical dataset} is precisely zero, due to the orthogonality of the eigenfunctions/eigenvectors. Our task is thus to establish a strong (Chernoff-like) concentration bound for this projection norm. A key technical difficulty is that we do not have high moment bounds; we only have a second moment bound due to orthonormality. A second moment bound typically leads to a weaker concentration bound, where the number of samples $n$ has linear dependency on the failure probability $\de,$ instead of the expected $\log(1/\delta)$ dependency. 

For concreteness, let $\pi$ be the stationary distribution of the Markov process and $(f_i)_{i=1}^{\iy}$ be an orthonormal basis of eigenvectors of $-\sL$ with eigenvalues $\la_1 = 0\leq  \la_2\leq \la_3\leq \cdots.$ Then for $y\sim\pi$
\[\E_{y\sim \pi}[\langle \delta_y , f_1 \rangle_{L^2(\pi)}]=\E_{y\sim \pi}[f_i(y)] =  \langle f_i, f_1 \rangle_{L^2(\pi)} = 0\]
since $f_1 \equiv 1.$ We can also bound the second moment of the projection by
\[\E_{y\sim \pi}[\langle \delta_y , f_1 \rangle_{L^2(\pi)}^2] =\E_{y\sim \pi}[f_i^2(y)] =  \langle f_i, f_i \rangle_{L^2(\pi)} = 1. \]
More generally, for $\phi =\rc n \sum_{i=1}^n \delta_{y_i}$ where $y_1, \cdots, y_n\sim  \pi$ are i.i.d.,
\[ \E_{y_1,\dots, y_n  \sim \pi\text{ i.i.d}}[\langle \phi,  f_1 \rangle_{L^2(\pi)}] = 0 \quad \text{and} \quad  \E_{y_1,\dots, y_n \sim \pi \text{ i.i.d}}[\langle \phi,  f_1 \rangle_{L^2(\pi)}^2] = 1/n.  \]
A naive application of standard concentration inequality such as Chebyshev's inequality gives us the following
\[\P_{y_1,\cdots,y_n \sim \pi \text{ i.i.d.}} [| \langle \phi,  f_1 \rangle_{L^2(\pi)}|\geq \etv ] \leq \de \]
when $n \geq \frac{1}{ \etv^2 \langle \phi,  f_1 \rangle_{L^2(\pi)}^2},$ which has sub-optimal dependency on $ \de.$

We obtain an exponential improvement on the $\delta$-dependency by restricting the domain to those with bounded projection norm. We then show a strong concentration bound on this restricted domain using standard concentration inequalities for bounded random variables  (Bernstein's inequality). This implies that the process initialized at samples from the restricted domain is rapidly mixing. By an appropriate choice of parameters, we can ensure that most of the samples will be from the restricted domain, and thus the result followed from a standard comparison argument.

An important case satisfying the higher-order spectral gap is when the stationary distribution $ \pi$ admits a decomposition into a \emph{mixture} of distributions each satisfying a Poincar\'e inequality (see \Cref{l:egap}). For this case, our mixing time bound matches the state-of-the-art worst-case-start mixing for a single distribution satisfying the Poincar\'e inequality (see \Cref{l:ld}(1)).
If we further assume that each component of the mixture satisfies the stronger log-Sobolev inequality, then using the hyper-contractivity argument from \cite{lee2024convergence}, we can obtain a tighter bound on the mixing time. For the continuous Langevin diffusion on mixtures of distributions satisfying log-Sobolev inequality, our mixing time bound has an optimal dependency on all parameters (see \Cref{t:ld-better}).

\paragraph{Markov chain perturbation.} In many applications, we suffer errors when implementing the Markov process. For example, we can only implement a discretization of the continuous Langevin diffusion, i.e., the Langevin Monte-Carlo. Furthermore, in learning problems, we might not have access to the chain transition probabilities but need to estimate them from data, e.g., using score matching or pseudolikelihood estimation. We show that our analysis is robust to such perturbations. 

In particular, we establish that data-based initialization is a natural and elegant way to exploit the guarantees on transition probabilities estimators learned from data, which is typically of the form $ \E_{x\sim \pi}[\text{dist}(\hat{u}(x), u(x))]\leq \esc$ where $\hat{u}, u$ stand for the estimated and the true transition probabilities, respectively. A priori, it is unclear how to use such a guarantee in the initial stage of the Markov process when the distribution is very far from the stationary distribution $\pi.$ Even when $\pi$ has rapid convergence to stationarity from \emph{worst-case} start, most previous works \cite{lee2022convergence}
can only handle such perturbation by assuming that the process is initialized at a a distribution $\nu$ s.t. $\chi^2(\nu||\mu)$ is small.  
However, 
the empirical distribution \emph{does not} satisfy this condition. For example, if $\pi$ a continuous distribution and $\nu$ is the empirical distribution then $ \chi^2(\nu||\pi) = +\infty$ since $\nu$ is discrete. If $\pi$ 
is the uniform distribution over the hypercube $\set{\pm 1}^d $ and $ \nu$ is the empirical distribution formed by $n= o(2^d)$ samples, then $ \chi^2(\nu ||\pi) =\Omega(2^d).$ Nevertheless, the empirical distribution is closely related to $\pi$ in an average sense, and \cite{koehler2023sampling} exploited this fact in their perturbation analysis; however, their analysis is lossy, incurring an extra $ \poly(1/\etv)$ factor in the number of samples. 

In \Cref{thm:approximate-sample-init}, we directly reduce the perturbation analysis from processes initialized at data samples to the perturbation analysis when initialized at \emph{the stationary distribution}. Unlike \cite{koehler2023sampling}, we do not incur any loss in the number of samples (see \Cref{thm:langevin discretization}). The key idea is to bound the expected TV distance between two processes $X_t$ and $\tilde{X}_t$ when initialized at a data-sample $Y\sim \pi$ by the KL-divergence when initialized at the stationary distribution $\pi$ by applying Jensen's inequality, Pinsker's inequality and the chain rule for KL-divergence.
\begin{align*}
\E_{Y\sim \pi} [
\TV
    (\mathcal L((X^{Y}_t)_{0\le t\le T}), \mathcal L((\tilde X^{Y}_t)_{0\le t\le T}))]
\le \sqrt{\rc 2 \KL(\mathcal L((X^{\pi}_t)_{0\le t\le T})\| \mathcal L((\tilde X^{\pi}_t)_{0\le t\le T}))}
\end{align*}
We note that we can bound the second moment using a similar argument. We can then obtain a strong concentration bound using Bernstein's inequality and the fact that $\TV$-distance is bounded by $1.$ 

Finally, we perform perturbation analysis for processes initialized at the \emph{stationary distribution}. For continuous processes such as the Langevin diffusion with smooth stationary distribution, such a bound follows from the Girsanov's theorem (see \Cref{l:ld-score-error}). In our application, the stationary distribution is not necessarily smooth, but is a mixture of smooth distributions satisfying a Poincar\'e inequality. In that case, we can establish quantitatively similar results using higher moment bounds implied by the Poincar\'e inequality on each component (see \Cref{l:ld-score-error-mix}). For the Glauber dynamics, we derive a qualitatively similar result in \Cref{lem:gibbs-approximate} when the transition probabilities, i.e., the conditional marginals, are estimated using pseudo-likelihood. 



\paragraph{Application to low-complexity Ising models.} The learning result follows from our general theory provided we can: (1) prove such a distribution has a higher-order spectral gap, and (2) estimate the needed Glauber transitions from data. For (1), we do this using a two-step argument; first, we use techniques from \cite{koehler2022sampling} and a result from \cite{anari2024trickle} to prove that such an Ising model can be approximately decomposed into a small mixture of rapidly mixing Ising models. Because our approximate decomposition has a density which is within a constant factor of the true model, we can boost this to an exact mixture decomposition into a mixture of Poincar\'e distributions by using the robustness of spectral gap to small changes of measure. For problem (2) of learning the transitions from data, this is exactly the problem solved by pseudolikelihood estimation, which we can analyze using standard symmetrization techniques from statistical learning theory.

\subsection{Organization}
We cover the mathematical preliminaries in \cref{s:prelim}. \cref{s:main} is devoted to the proof of our main result, \cref{thm:main}. In \cref{s:perturb}, we show the robustness of Langevin and Glauber dynamics in our setting to perturbations in the dynamics. \cref{s:langevin} covers the application of our theory to Langevin dynamics on mixtures, and \cref{s:learning-ising} illustrates the application to Glauber dynamics for sampling and learning Ising models. Finally, in \cref{s:extra-examples} we give some examples of non-sample initializations which also satisfy eigenfunction balance.
\section{Preliminaries}\label{s:prelim}
For a vector-valued function $f,$ let $f_{r+1:s}(y)$ define the vector $(f_{r+1}(y),\ldots, f_s(y))\in \R^{s-r}$.
\subsection{R\'enyi divergence}
The R\'enyi divergence, which generalizes the more well-known KL divergence, is a useful technical tool in the analysis of the Langevin diffusion, see e.g., \cite{vempala2019rapid}.
The R\'enyi divergence of order $q \in (1, \infty)$ of $\mu$ from $\pi$ is defined to be
\begin{align*}
  \Renyi_q(\mu || \pi) &= \frac{1}{q-1} \ln \E_{\pi} \left[\left( \frac{d\mu}{d\pi}\right)^q\right] 
  =   \frac{1}{q-1} \ln \E_{\mu} \left[\left( \frac{d\mu}{d\pi}\right)^{q-1}\right]  . 
\end{align*}
The limit $\Renyi_q$ as $q \to 1^+$ is the Kullback-Leibler divergence $\D_{\KL}(\mu || \pi)  = \int \log \pf{d\mu}{d\pi} d\mu,$ thus we write $ \Renyi_1 (\cdot ) = \D_{\KL}(\cdot).$ R\'enyi divergence increases as $q$ increases, i.e. $\Renyi_q \leq \Renyi_{q'} $ for $1\leq q \leq q'.$ We note some basic facts about R\'enyi divergence.
\begin{lemma}[Weak triangle inequality, {\cite[Proposition 11]{mironov2017renyi}}] \label{lem:weak triangle inequality}
For $q > 1$ and any measure $\nu$ absolutely continuous with respect to measure $\mu$,
for any $a,b>1$ such that $\rc a + \rc b=1$, 
\[
\Renyi_q(\nu || \mu) \leq \frac{aq - 1}{a(q - 1)} \Renyi_{aq}(\nu || \nu') + \Renyi_{b(q-1)+1} (\nu' || \mu).
\]
\end{lemma}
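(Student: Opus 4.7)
The plan is to reduce the inequality to a single application of H\"older's inequality after rewriting the $\chi^q$-type integral defining $\Renyi_q(\nu\|\mu)$ with a judicious change of measure. Starting from
\[
\exp((q-1)\Renyi_q(\nu\|\mu))=\E_\mu\left[\left(\frac{d\nu}{d\mu}\right)^{q}\right],
\]
I would factor the Radon--Nikodym derivative as $\frac{d\nu}{d\mu}=\frac{d\nu}{d\nu'}\cdot\frac{d\nu'}{d\mu}$ (assuming, as is standard for this kind of statement, that $\nu\ll\nu'\ll\mu$ so that the factorization is well-defined), and then peel off one factor of $\frac{d\nu'}{d\mu}$ to turn the $\mu$-expectation into a $\nu'$-expectation:
\[
\E_\mu\left[\left(\frac{d\nu}{d\mu}\right)^{q}\right]=\E_{\nu'}\left[\left(\frac{d\nu}{d\nu'}\right)^{q}\left(\frac{d\nu'}{d\mu}\right)^{q-1}\right].
\]

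Next I would apply H\"older's inequality on $L^a(\nu')\cdot L^b(\nu')$ with the given conjugate pair $\tfrac1a+\tfrac1b=1$, splitting the two factors above:
\[
\E_{\nu'}\left[\left(\frac{d\nu}{d\nu'}\right)^{q}\left(\frac{d\nu'}{d\mu}\right)^{q-1}\right]\le\left(\E_{\nu'}\left[\left(\frac{d\nu}{d\nu'}\right)^{qa}\right]\right)^{1/a}\left(\E_{\nu'}\left[\left(\frac{d\nu'}{d\mu}\right)^{(q-1)b}\right]\right)^{1/b}.
\]
The first factor is exactly $\exp\!\big(\tfrac{qa-1}{a}\Renyi_{qa}(\nu\|\nu')\big)$ by the definition of R\'enyi divergence. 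For the second factor, I would shift back to a $\mu$-expectation via $d\nu'=\frac{d\nu'}{d\mu}d\mu$, giving $\E_\mu[(d\nu'/d\mu)^{(q-1)b+1}]=\exp\!\big((q-1)b\cdot\Renyi_{(q-1)b+1}(\nu'\|\mu)\big)$, so that the $1/b$-th power contributes $\exp\!\big((q-1)\Renyi_{(q-1)b+1}(\nu'\|\mu)\big)$.

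Combining these, taking logarithms, and dividing by $q-1$ yields
\[
\Renyi_q(\nu\|\mu)\le\frac{qa-1}{a(q-1)}\Renyi_{qa}(\nu\|\nu')+\Renyi_{(q-1)b+1}(\nu'\|\mu),
\]
which is the claimed inequality. The only substantive step is choosing the correct splitting of the exponents before invoking H\"older; everything else is bookkeeping about changes of measure. The mild subtlety is verifying the absolute-continuity hypotheses so that the intermediate Radon--Nikodym derivatives exist and the $\nu'$-expectations are well-defined, but this is not really an obstacle and can be handled by the standard convention that the right-hand side is $+\infty$ whenever any of the required absolute continuities fails.
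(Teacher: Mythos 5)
Your proof is correct and is essentially the standard argument behind this lemma: the paper itself gives no proof but cites \cite[Proposition 11]{mironov2017renyi}, whose proof is exactly your factorization $\frac{d\nu}{d\mu}=\frac{d\nu}{d\nu'}\cdot\frac{d\nu'}{d\mu}$ followed by H\"older's inequality with exponents $(a,b)$ and the bookkeeping of exponents you describe. The handling of the absolute-continuity caveat (the bound is vacuous, i.e.\ the right-hand side is $+\infty$, when $\nu\ll\nu'\ll\mu$ fails) is also the standard convention, so nothing is missing.
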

\begin{lemma}[Weak convexity of R\'enyi entropy] \label{lem:weak convexity}
 For $q> 1$, if $\mu$ is a convex combination of $\mu_i$, i.e. $\mu = \sum_{i=1}^k p_i \mu_i$ for $p_i>0$ such that $\sum_{i=1}^k p_i=1$, and $\nu\ll \mu$, then
 \[ \E_{\nu}\left[\left(\frac{d\nu }{d \mu}\right)^{q-1}\right] \leq \sum_{i=1}^k  p_i \E_{\nu} \left[\left(\frac{d\nu }{d \mu_i}\right)^{q-1}\right]. \]
 Consequently,
 $\Renyi_q (\nu || \mu) \leq \max_i\Renyi_q (\nu || \mu_i)  $. If instead $\mu\ll \nu$, we have $ \Renyi_q(\mu || \nu) \leq \max_i\Renyi_q (\mu_i || \nu) $.
 \end{lemma}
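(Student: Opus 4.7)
The plan is to reduce both inequalities to a single pointwise application of Jensen's inequality. First I would fix the reference measure to be $\mu$ itself and set $h_i := d\mu_i/d\mu$, which is well-defined $\mu$-a.e.\ since $p_i\mu_i \le \mu$; the mixture identity $\mu = \sum_i p_i \mu_i$ then gives $\sum_i p_i h_i = 1$ a.e.\ $\mu$. Writing $f := d\nu/d\mu$, the chain rule yields $d\nu/d\mu_i = f/h_i$ wherever $h_i>0$, so a direct change of measure computation gives
\[
\E_{\nu}\left[\left(\frac{d\nu}{d\mu}\right)^{q-1}\right] = \int f^q\,d\mu
\quad\text{and}\quad
\E_{\nu}\left[\left(\frac{d\nu}{d\mu_i}\right)^{q-1}\right] = \int \frac{f^q}{h_i^{q-1}}\,d\mu.
\]
(If $\nu\{h_i=0\}>0$, then $\nu\not\ll\mu_i$ and the right-hand side is $+\infty$ by the standard convention, making the claimed bound trivial, so I may restrict to $\{h_i>0\}$.)

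Next I would apply Jensen's inequality pointwise. Since $q>1$, the function $x\mapsto x^{-(q-1)}$ is convex on $(0,\infty)$, whence
\[
1 \;=\; \Bigl(\textstyle\sum_i p_i h_i\Bigr)^{-(q-1)} \;\le\; \sum_i p_i\, h_i^{-(q-1)}.
\]
Multiplying through by $f^q$ and integrating against $\mu$ yields the first displayed inequality of the lemma. The first consequence, $\Renyi_q(\nu\|\mu)\le \max_i \Renyi_q(\nu\|\mu_i)$, then follows by bounding the convex combination $\sum_i p_i \E_\nu[(d\nu/d\mu_i)^{q-1}]$ by its maximum term and applying the monotone operations $\log(\cdot)$ and division by $q-1$.

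For the variant $\Renyi_q(\mu\|\nu)\le \max_i \Renyi_q(\mu_i\|\nu)$ under $\mu\ll\nu$, I would start instead from the symmetric identity $(q-1)\,\Renyi_q(\mu\|\nu) = \log\int (d\mu/d\nu)^q\,d\nu$, expand $d\mu/d\nu = \sum_i p_i\, d\mu_i/d\nu$, and apply Jensen's inequality to the convex function $x\mapsto x^q$ (convex since $q>1$): pointwise, $\bigl(\sum_i p_i\,d\mu_i/d\nu\bigr)^q \le \sum_i p_i\,(d\mu_i/d\nu)^q$. Integrating against $\nu$, bounding the sum by its maximum, and taking $\log$ finishes the argument.

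The only real subtlety is bookkeeping on the sets where some $h_i$ vanishes, i.e., measurability and finiteness when $\nu\not\ll\mu_i$; this is handled cleanly by the $+\infty$ convention on Rényi divergence. The rest is routine Radon--Nikodym manipulation followed by one convexity step, so I do not expect any genuine obstacle.
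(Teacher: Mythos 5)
Your proposal is correct and takes essentially the same route as the paper: the heart of both arguments is the same pointwise inequality $\left(\frac{d\nu}{d\mu}\right)^{q-1}\le \sum_i p_i\left(\frac{d\nu}{d\mu_i}\right)^{q-1}$, which the paper gets from H\"older's inequality applied to the densities $\frac{d\mu_i}{d\nu}$ and you get from Jensen's inequality for $x\mapsto x^{-(q-1)}$ applied to $h_i=\frac{d\mu_i}{d\mu}$ — two equivalent formulations of the same convexity step. The remaining consequences (bounding the convex combination by its maximum, and Jensen with convexity of $x^q$ for $\Renyi_q(\mu\|\nu)$) are handled exactly as in the paper.
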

 \begin{proof}
 By H\"older's inequality, we have $\nu$-almost everywhere that 
 \[\pa{\sum_{i=1}^k p_i \dd{\mu_i}{\nu}}^{q-1} \left( \sum_{i=1}^k p_i \pa{\dd{\nu}{\mu_i}}^{q-1} \right) \geq \pa{\sum_{i=1}^k p_i}^q = 1.\]
 Thus, we have $\nu$-almost everywhere that 
 \[ \left(\frac{d\nu}{d\mu}\right)^{q-1}\leq \sum_{i=1}^k p_i \left(\frac{d\nu}{d\mu_i}\right)^{q-1}. \]
 Taking expectation in $\nu$ gives the first statement. 
 For the second statement, by bounding the convex combinations above with the maximal term,
 \[ \Renyi_q(\nu || \mu) = \frac{\ln\E_{\nu}[(\frac{d\nu}{d \mu})^{q-1}] }{q-1} \leq \frac{\ln (\max_{i} \E_{\nu}[(\frac{d\nu }{d \mu_i})^{q-1}] ) }{q-1} = \max_i \Renyi_q (\nu ||\mu_i). \]
For the final statement, by Jensen's inequality and convexity of $x^q$,
\begin{align*}
    \ln \E_\nu \ba{\pa{\dd \mu\nu}^q} 
    &\le \ln \sum_{i=1}^k p_i \E_\nu \ba{\pa{\dd{\mu_i}{\nu}}^q} \le \ln \max_i \E_\nu \ba{\pa{\dd{\mu_i}{\nu}}^q} = \max_i \Renyi_q(\mu_i\|\nu). 
\end{align*}
 \end{proof}
 
\subsection{Functional inequalities}
 For nonnegative $ f: \R^d \to \R_{\geq 0},$ define the entropy of $f$ with respect to probability distribution $\pi$ to be \[ \Ent_{\pi} [f]  = \E_{\pi} [f\ln (f/ \E_{\pi}[f]) ]. \]

\paragraph{Functional inequalities for the Langevin diffusion.}
We say $\pi$ satisfies a log-Sobolev inequality (LSI) with constant $\CLS$ with respect to the Langevin semigroup if for all smooth functions $f$,
\[\Ent_{\pi} [f^2] \leq 2 \CLS \E_{\pi}[ \norm{\nabla f}^2 ] \]
and $\pi$ satisfies a Poincar\'e inequality (PI) with constant $\CP$ if
\[\Var_{\pi}[f] \leq 2 \CP \E_{\pi}[ \norm{\nabla f}^2 ]. \]
The log-Sobolev inequality implies the Poincar\'e inequality: $\CP \leq \CLS.$
Due to the Bakry-\'Emery criterion \cite{bakry2006diffusions}, if $\pi$ is $\alpha$-strongly log-concave then $ \pi $ satisfies LSI with constant $\CLS=1/\alpha.$ 

LSI and PI are equivalent to statements about exponential ergodicity of the continuous-time Langevin diffusion, which is defined by the Stochastic Differential Equation (SDE)
\[ d\bar X_t^{\pi} = \nabla \log \pi(\bar X_t^{\pi})\, dt + \sqrt{2}\, dB_t. \]   
Specifically, let $\pi_t$ denote the law of the diffusion at time $t$ initialized from $\pi_0$. Then a LSI is equivalent to the inequality
\[ \KL (\pi_t\| \pi) \leq \exp(-2t/\CLS) \KL(\pi_0\| \pi) \]
holding for an arbitrary initial distribution $\pi_0$.
Similarly, a PI is equivalent to
$\chi^2 (\pi_t\| \pi) \leq \exp(-2t/\CP) \chi^2(\pi_0\| \pi)$. Here $\KL(P\| Q) = \E_P[\ln \frac{dP}{dQ}]$ is the Kullback-Liebler divergence and $\chi^2(P\| Q) = \E_Q[(\dd PQ - 1)^2]$ is the $\chi^2$-divergence.

\paragraph{Markov semigroups and generators.} Our results apply to general diffusions besides Langevin. In particular, they also have interesting and useful applications for the Glauber dynamics or Gibbs sampler, which resamples the coordinates of the random vector one at a time. In order to state results in the appropriate level of generality, we recall the definition of a Markov semigroup.  See \cite{bakry2014analysis,van2014probability} for more background.

A stochastic process $(X_t)_{t \ge 0}$ is a \emph{time-homogenous Markov process} if for every $t \ge 0$, there exists a linear operator $P_t$ such that for all $s \ge 0$ and bounded measurable functions $f$,
\[ \mathbb E[f(X_{s+t}) \mid (X_r)_{r \le s}] = (P_t f)(X_s). \]
We define the corresponding operator on measures $\mu P_t$ or $P_t^*\mu$ as the distribution of $X_{s+t}$ when $X_s\sim \mu$; this leads to the identity $\int \mu (P_tf) = \int (\mu P_t) f$ where $f$ is bounded measurable and $\mu$ is a measure.

Suppose that the stochastic process has stationary measure $\pi$. 
Given such a process, we can define its infinitesimal generator 
as $\sL$ such that 
$P_t = e^{t \sL}$.
We say $P_t$ is \emph{reversible} if $\sL$ is a self-adjoint operator on $\L_2(\pi)$ (this is discussed more later). 
The \emph{Dirichlet form} is then defined as
\[ \mathcal{E}(f,g) = -\langle f, \sL g \rangle_{\L_2(\pi)}. \]
The \emph{spectral gap} of the generator is the maximal $\gamma \ge 0$ such that for all test functions $f$,
\[ \gamma \Var(f) \le \mathcal{E}(f,f). \]
To be consistent with our notation above, we call $1/(2\gamma)$ the Poincar\'e constant; the above inequality is the general form of the Poincar\'e inequality for Markov semigroups. As before, the Poincar\'e constant characterizes ergodicity in $\chi^2$-divergence. 

For example, for the Langevin diffusion described above, the generator is 
\begin{equation}\label{eqn:langevin-generator}
\sL f = \langle \nabla \log \pi, \nabla f \rangle_\pi + \Delta f 
\end{equation}
where $\Delta f = \sum_i \partial_i^2 f$ is the usual Laplacian, and we can compute that $\mathcal E(f,f) = \mathbb E \|\nabla f\|^2$. Similarly, the semigroup for the 
Glauber dynamics with stationary measure $\pi$ over a measure space $\bigotimes_{i = 1}^n \Omega_i$ is generated by
\[ \sL = \sum_{i = 1}^n (E_i - I) \]
where $E_i f = \E_{X \sim \pi}[f(X) \mid X_{\sim i}]$ for all $1 \le i \le n$, and $X_{\sim i}$ denotes all the coordinates of $X$ except $i$. 
Note that one unit of continuous time for the Glauber dynamics corresponds to $\textup{Poisson}(n)$ discrete updates.
The generator of the Glauber dynamics is a self-adjoint operator on $L_2(\pi)$ and the generator of the Langevin dynamics is essentially self-adjoint;  we define these terms more precisely below.

\subsection{Spectral theory}


Because operators like the generator of the Langevin diffusion are (essentially) self-adjoint operators on infinite-dimensional spaces, we will need to use some terminology and tools from functional analysis to analyze them in full generality. Readers who are only interested in operators with discrete spectrum (e.g., those on finite-dimensional spaces, or for Langevin dynamics on distributions with sufficiently rapid tail decay---see Corollary 4.10.9 of \cite{bakry2014analysis}) can largely ignore this section.

In the general case the spectrum of any self-adjoint operator  can be decomposed into two disjoint parts: a discrete spectrum (the only part present in the finite-dimensional case) and an essential spectrum, which are both defined precisely below. For example, the spectrum of the Laplacian in Euclidean space is $[0,\infty)$ so it is all essential spectrum. For a simple example in our context, the spectrum of the generator of the Langevin diffusion on the symmetric exponential density $\mu(x) = \frac{1}{2} e^{-|x|}$ is not discrete. (This is also true if the density is modified in a neighborhood of zero to be smooth. See Section 4.4.1 of \cite{bakry2014analysis} for discussion.) Nevertheless, this density admits a spectral gap in the appropriate sense, which we also define below. 

First, we briefly review the most relevant facts from functional analysis, see e.g. \cite{reed1980methods,bakry2014analysis,hall2013quantum,teschl2014mathematical} for details. For an operator $A$ defined on $\operatorname{Dom}(A)$, we let $A^*$ denote its adjoint. We say that operator $A$ is \emph{self-adjoint} if $A = A^*$ and $\operatorname{Dom}(A) = \operatorname{Dom}(A^*)$. An operator is said to be \emph{essentially self-adjoint} if its closure is self-adjoint.
\begin{theorem}[Special case of Corollary 3.2.2 of \cite{bakry2014analysis}]
Suppose that $\mu$ is a probability measure on $\mathbb{R}^n$ with a smooth density. Then the generator $\sL$ of the corresponding Langevin diffusion, defined on the set of smooth compactly supported functions, is essentially self-adjoint.
\end{theorem}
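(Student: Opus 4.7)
The plan is to verify that $\sL$ is symmetric on $C_c^\infty(\mathbb{R}^n)$ and then to establish essential self-adjointness by showing that both deficiency indices vanish. Let $\rho$ denote the smooth density of $\mu$ with respect to Lebesgue measure. Writing the generator in divergence form as $\sL f = \rho^{-1}\,\nabla\cdot(\rho\nabla f)$ and integrating by parts (with no boundary contribution since $f\in C_c^\infty$), one obtains for $f,g\in C_c^\infty(\mathbb{R}^n)$,
\begin{equation*}
\langle f, \sL g\rangle_{L^2(\mu)} = \int f\,\nabla\cdot(\rho\nabla g)\,dx = -\int \langle \nabla f, \nabla g\rangle\,d\mu,
\end{equation*}
which is symmetric in $(f,g)$. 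Hence $\sL$ is a densely defined symmetric operator on $L^2(\mu)$.

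For essential self-adjointness, I would invoke the standard criterion that it suffices to show $\ker(\sL^{*}-zI)=\{0\}$ for some $z\in\mathbb{C}\setminus\mathbb{R}$, e.g.\ $z=i$; equivalently, that $\operatorname{Ran}(\sL - zI)$ is dense in $L^2(\mu)$. Suppose $u\in L^2(\mu)$ satisfies $\sL^{*}u = iu$ in the distributional sense. Since $\sL$ is a second-order operator, uniformly elliptic on every ball with smooth coefficients (the coefficients involve only $\rho$ and $\nabla\log\rho$, which are smooth on $\{\rho>0\}$), elliptic regularity promotes $u$ to a smooth function. I would then show $u\equiv 0$ via a cutoff argument using a sequence $\chi_R\in C_c^\infty(\mathbb{R}^n)$ with $\chi_R\equiv 1$ on $B_R$, $\operatorname{supp}(\chi_R)\subset B_{2R}$, and $\|\nabla\chi_R\|_\infty\le C/R$.

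The main obstacle is controlling $u$ at infinity without any a priori growth assumption on $\nabla\log\rho$. Testing the equation against $\chi_R^2\bar u$ and using the divergence-form integration by parts yields an identity of the shape
\begin{equation*}
-\int \chi_R^2 |\nabla u|^2\,d\mu - 2\int \chi_R\,\bar u\,\langle \nabla\chi_R, \nabla u\rangle\,d\mu = i\int \chi_R^2 |u|^2\,d\mu,
\end{equation*}
and taking imaginary parts cancels the real $|\nabla u|^2$ term. A Caccioppoli-type estimate (obtained by taking real parts of the same identity and absorbing) then gives local control of $\|\chi_R\nabla u\|_{L^2(\mu)}$ in terms of $\|u\|_{L^2(\mu)}$; combined with Cauchy--Schwarz and the bound $\|\nabla\chi_R\|_\infty\le C/R$, this forces $\int \chi_R^2|u|^2\,d\mu\to 0$ as $R\to\infty$, whence $u\equiv 0$. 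Finiteness of $\mu$ is essential so that the annular cross-terms are uniformly controlled even when $\nabla\log\rho$ grows. The analogous argument with $z=-i$ handles the other deficiency index and completes the proof.
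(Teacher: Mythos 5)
Your proposal is correct, but note that the paper does not prove this statement at all: it is quoted verbatim as a special case of Corollary 3.2.2 of \cite{bakry2014analysis}, so the only ``proof'' in the paper is the citation. What you have written is essentially the standard textbook argument behind that corollary: symmetry of $\sL$ on $C_c^\infty$ via the divergence-form identity $\langle f,\sL g\rangle_{L^2(\mu)}=-\int\langle\nabla f,\nabla g\rangle\,d\mu$, elliptic regularity to upgrade a distributional solution of $\sL^*u=\pm i u$ to a smooth one, and the completeness-of-$\R^n$ cutoff argument: testing against $\chi_R^2\bar u$, the real part gives the Caccioppoli bound $\|\chi_R\nabla u\|_{L^2(\mu)}\le 2\|\nabla\chi_R\|_\infty\|u\|_{L^2(\mu)}$, and the imaginary part then forces $\int\chi_R^2|u|^2\,d\mu=O(R^{-2})\to 0$, so both deficiency spaces vanish (Bakry--Gentil--Ledoux run the same computation with a positive spectral parameter, using semiboundedness of $-\sL$, rather than $z=\pm i$; the two routes are interchangeable). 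Two small corrections. First, you need the density $\rho$ to be strictly positive, not merely smooth, for the drift $\nabla\log\rho$ and the local uniform ellipticity you invoke; this is implicit in the statement (otherwise the Langevin generator is not even defined), but your parenthetical ``smooth on $\{\rho>0\}$'' should be made into a standing hypothesis rather than left hanging. Second, your closing claim that finiteness of $\mu$ is ``essential'' is not right: the estimate only uses $u\in L^2(\mu)$, Cauchy--Schwarz, and $\|\nabla\chi_R\|_\infty\le C/R$, so the argument works verbatim for any smooth positive weight (this is exactly the generality of the cited corollary); what is genuinely essential is completeness of $\R^n$, i.e.\ the existence of the cutoff family $\chi_R$ with uniformly decaying gradients. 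Neither point affects the validity of the proof.
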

\begin{theorem}[Spectral theorem, Theorem VIII.6 of \cite{reed1980methods}, see also Chapter A.4 of \cite{bakry2014analysis}]
There is a one-to-one correspondence between self-adjoint operators $A$ and projection-valued measures $\pi$ on a Hilbert space $H$, given by
\[ A = \int \lambda\, d\pi_{\lambda}. \]
\end{theorem}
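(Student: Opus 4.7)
This is a classical theorem of functional analysis cited from Reed--Simon, so the paper is presumably invoking it as a black box rather than reproving it; nonetheless, I will sketch the standard strategy. The plan is to reduce the unbounded case to the bounded (unitary) case via the Cayley transform, and to construct the projection-valued measure in the bounded case through a continuous functional calculus followed by monotone class extension.

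\textbf{Step 1: continuous functional calculus for bounded self-adjoint $A$.} For polynomials $p$, define $p(A)$ in the obvious algebraic way. The key estimate is $\|p(A)\| = \sup_{\lambda \in \sigma(A)} |p(\lambda)|$, obtained from the spectral radius formula together with the fact that $\|p(A)\|^2 = \|p(A)^* p(A)\| = \|\bar p p (A)\|$ and an induction showing $\|A^{2^k}\| = \|A\|^{2^k}$. By Stone--Weierstrass, polynomials are dense in $C(\sigma(A))$, so the map $p \mapsto p(A)$ extends uniquely to an isometric $*$-homomorphism $\Phi \colon C(\sigma(A)) \to B(H)$.

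\textbf{Step 2: from continuous functional calculus to the projection-valued measure.} For each $v \in H$, the positive linear functional $f \mapsto \langle v, \Phi(f) v\rangle$ on $C(\sigma(A))$ is given, by the Riesz--Markov representation theorem, by integration against a finite positive Borel measure $\mu_v$ on $\sigma(A)$. By polarization, one obtains complex measures $\mu_{u,v}$ satisfying $\langle u, \Phi(f) v\rangle = \int f\, d\mu_{u,v}$. Extending $f \mapsto \int f\, d\mu_{u,v}$ to bounded Borel $f$ by dominated convergence, the Riesz representation of sesquilinear forms yields a bounded operator $\Phi(f)$ for every bounded Borel $f$. Setting $\pi(E) := \Phi(\mathbf 1_E)$ gives a projection-valued measure, and applying it to $f(\lambda) = \lambda$ recovers $A = \int \lambda\, d\pi_\lambda$.

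\textbf{Step 3: the unbounded case via the Cayley transform.} For unbounded self-adjoint $A$, the operator $U := (A - iI)(A + iI)^{-1}$ is unitary (in particular bounded and normal), and $1$ is not an eigenvalue. Apply the bounded spectral theorem (extended from self-adjoint to normal operators by the standard decomposition $U = \Re U + i\,\Im U$) to obtain a projection-valued measure $\pi^U$ on the unit circle, and pull back via $\lambda = i(1+z)/(1-z)$ to get the projection-valued measure of $A$ on $\mathbb R$. Uniqueness holds because two projection-valued measures that produce the same $A$ must agree on polynomials, then by Stone--Weierstrass on continuous functions, and finally on Borel sets by a monotone class argument.

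\textbf{Main obstacle.} The genuinely delicate step is Step 2, namely checking that the collection $\{\pi(E)\}_E$ really forms a projection-valued measure (idempotency, orthogonality on disjoint sets, strong $\sigma$-additivity), which requires care in interchanging limits with the sesquilinear form representation. In the unbounded extension, the additional subtlety is verifying that the resulting operator $\int \lambda\, d\pi_\lambda$ on its natural maximal domain genuinely coincides with $A$ rather than with some proper self-adjoint extension; this is what forces the use of the Cayley transform, whose rigidity (unitarity, no fixed point at $1$) pins down the extension uniquely.
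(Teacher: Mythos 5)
The paper does not prove this statement; it is quoted verbatim as a classical result with a citation to Reed--Simon (Theorem VIII.6) and Bakry--Gentil--Ledoux, and used as a black box. Your sketch is a correct outline of the standard textbook proof behind that citation (continuous functional calculus via Stone--Weierstrass, Riesz--Markov to extract the projection-valued measure, and reduction of the unbounded case to the bounded one through the Cayley transform/resolvent), so there is nothing to reconcile with the paper beyond noting that no proof is expected there.
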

With the notation of the above theorem, we have the functional calculus $g(A) = \int g(\lambda)\, d\pi_{\lambda}$ and in particular
\[ P_t f = e^{t \sL} f = \int e^{t \lambda}\, d\pi_{\lambda}; \]
see Chapter VIII.3 of \cite{reed1980methods} or Chapter A.4 of \cite{bakry2014analysis}. This enables us to generalize results from the case of a completely discrete spectrum in a natural way.

\paragraph{Discrete and essential spectrum.} The support of the projection-valued measure $\pi$ is called the \emph{spectrum} of the operator $A$ and denoted $\sigma(A)$. Furthermore, the spectrum can be decomposed as a disjoint union
\[ \sigma(A) = \sigma_d(A) \cup \sigma_{\textup{ess}}(A) \]
where $\sigma_d(A)$ is called the \emph{discrete spectrum}, and consists of isolated points in the spectrum which are also required to be eigenvalues of $A$ with finite multiplicity, and $\sigma_{\textup{ess}}(A) = \sigma(A) \setminus \sigma_d(A)$ is called the \emph{essential spectrum}. Note that the discrete spectrum may be a proper subset of the point spectrum of $A$, i.e., of the full set of eigenvalues of $A$.

\paragraph{Example: finite-dimensional case.} For example, in the finite-dimensional case where $A$ is an operator on $\mathbb R^n$ the above formulation of the spectral theorem reduces to the fact that
\[ A = \sum_{i = 1}^n \lambda_i v_i v_i^T \]
where $\lambda_i$ is the $i$th eigenvalue, $v_i$ is the $i$th eigenvector, and $v_i v_i^T$ is the (rank one) projection operator onto the span of $v_i$. The spectrum will just be the set of eigenvalues of $A$, which is necessarily discrete.

\paragraph{Higher-order spectral gap.} For finite dimensional operators, the natural higher-order spectral gap assumption is that $\lambda_{k + 1}(A) \ge \al$ for some $k \ge 1$ and $\al > 0$.
We can now write down the general analogue for infinite-dimensional operators.
\begin{definition}\label{def:higher-order-gap}
We say a positive-semidefinite self-adjoint operator $A$ has a spectral gap after eigenvalue $k$ (or a $k$th order spectral gap) of size at least $\al$ for $k \ge 1$ and $\al > 0$ if 
\[ \sigma_{\textup{ess}}(A) \subset [\al, \infty) \]
and the projection $\pi([0,\al))$ has rank at most $k$, where $\pi$ is the spectral measure. (In other words, the latter condition says there are at most $k$ eigenvalues, counted with multiplicity, in $[0,\al)$.) We will also write this condition as $\la_{k+1}(A)\ge \al$. 
\end{definition}
The higher-order spectral gap defined in the above sense has the same min-max variational characterization as in the finite dimensional case: see Theorem 4.10 of \cite{teschl2014mathematical}. For example, this means that the characterization of the spectral gap in terms of the Poincar\'e functional inequality holds in general. 

Similar to \cite{teschl2014mathematical}, we will generally use the following notational convention for infinite-dimensional operators where the spectrum is bounded below: we write that the spectrum of a self-adjoint operator $A$ below the essential spectrum is $\lambda_1 \le \lambda_2 \le \cdots$, where this is a list first consisting of the ordered elements of the discrete spectrum below $\inf \sigma_{\textup{ess}}(A)$, and then if the first list is finite, this is followed by $\inf \sigma_{\textup{ess}}(A)$ repeated infinitely many times. This convention generally enables us to write the same statement for finite and infinite-dimensional settings. 

\section{Markov chains with data-based initialization: \mbox{Mixing} given warm start}\label{s:main}
The following general theorem analyzes the performance of a generic Markov chain with a higher-order spectral gap, showing it will mix well when started from a small number of samples from the true distribution.
A key assumption for the theorem is the guarantee that dynamics run for some initial time period $t_0$, started from a typical sample, reaches a distribution with bounded $\chi^2$-divergence to the stationary measure $\pi$. Proving this assumption requires a different analysis for different Markov chains.
We will later show how to apply this result in natural settings like mixtures of log-concave measures and in statistical physics models. 
\begin{theorem} \label{thm:main}
Let $ \sL$ be the self-adjoint generator of a reversible Markov semigroup $P_t = e^{t\sL}$ with stationary distribution $\pi$ defined over $\mathcal D$. 
Suppose that $- \sL$ satisfies the $k$th order spectral gap condition from \cref{def:higher-order-gap}, $\la_{k+1}(-\sL)\ge \al>0$, where $k\ge 1$.

For $y\in \mathcal{D}$ and $t \ge 0$, let $\rho^y_t = \delta_y P_t$ be the marginal law of the Markov chain generated by $\sL$ initialized at $\delta_y.$ 
Let $y_1, \dots, y_n$ be i.i.d.\ samples from $ \pi$ and let $U_{\text{sample}}$ be the multiset of $y_1, \ldots, y_n$. 
Consider the Markov process generated by $\sL$ initialized at $\mu_0 = \frac{1}{n} \sum_{j=1}^n \delta_{y_j}$, and let $\mu_t = \mu_0 P_t = \frac{1}{n} \sum_{j = 1}^n \rho_t^{y_j}$ be the marginal law of the process at time $t$ conditional on $U_{\text{sample}}$.

Suppose there exists time $t_0 \ge 0,$ parameter $R \ge 0$ and a set $\Ombd$ such that
\[\forall y \in \Ombd: \chi^2(\rho^y_{t_0} ||\pi)\leq R, \]
and $ \pi(\Ombd^c) \leq \frac{\etv^2}{16k}.$

Then for $t\ge t_0$, with probability $\ge 1-k\exp\pa{-\Om\pf{n\etv^2}k}$, 
\[
\TV(\mu_t,\pi)\le  \sqrt{\fc{\etv^2}4 + e^{-\al (t-t_0)}R} + \fc{\etv}{16k}.
\]
In particular, for $n= \Om\pa{\fc{k}{\etv^2}\ln \pf k\de}$  and 
\[ t\geq T:= t_0 + \fc1\al \ln \pf{2R}{\etv^2}, \] with probability at least $1- \de$ over the randomness of $U_{\textup{sample}},$
\[ \TV (\mu_t, \pi) \leq \etv. \]
\end{theorem}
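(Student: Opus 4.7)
The plan is to exploit the $k$th order spectral gap via a direct $L^2$-decomposition of the density of the data-initialized chain, after peeling off the small contribution from samples landing outside $\Ombd$. Fix a basis $f_1\equiv 1, f_2, \ldots, f_k$ for the range of the spectral projection $\pi([0,\al))$, orthonormal in $L^2(\pi)$. For any density $\phi=d\nu/d\pi$, decompose $\phi - 1 = g + h$, where $g$ is the projection onto $\spn(f_2,\ldots,f_k)$ and $h$ is the orthogonal complement in $\{1\}^\perp$. Since $P_s$ preserves this decomposition and contracts $h$ at rate $e^{-\al s}$ in $L^2(\pi)$, one has $\chi^2(\nu P_s\|\pi)\le \norm{g}^2 + e^{-2\al s}\norm{h}^2$, so the analysis reduces to controlling $\norm{g}^2$ plus an exponentially small tail from $h$.

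First I would use a Chernoff bound on $\rc n\abs{\{i:y_i\notin \Ombd\}}$: its mean is at most $\pi(\Ombd^c)\le \etv^2/(16k)$, so with the stated probability the fraction of bad samples is at most $\etv/(16k)$. On this event, write $\mu_0 = p\,\nu_0^\Ombd + (1-p)\,\nu_0^{\Ombd^c}$ with $1-p\le \etv/(16k)$, so the triangle inequality yields $\TV(\mu_t,\pi)\le \TV(\nu_t^\Ombd,\pi) + (1-p)$. Convexity of $\chi^2$ applied to $\nu_{t_0}^\Ombd = \rc{\abs S}\sum_{i\in S}\rho_{t_0}^{y_i}$, together with the hypothesis $y_i\in\Ombd$, gives $\chi^2(\nu_{t_0}^\Ombd\|\pi)\le R$, hence $\norm{h}^2\le R$ when $\phi=d\nu_{t_0}^\Ombd/d\pi$. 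Applying the spectral bound at time $t-t_0$ and Cauchy--Schwarz ($\TV\le \tfrac12\sqrt{\chi^2}$) reduces the theorem to showing $\norm{g}^2\lesssim \etv^2$ with the right probability.

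The main work is this concentration bound, where $\norm{g}^2 = \sum_{j=2}^k \an{\phi_{t_0}^\Ombd, f_j}_{L^2(\pi)}^2$ with each inner product an empirical average $\rc{\abs{S}}\sum_{i\in S}X_{i,j}$ for $X_{i,j}=\an{\rho_{t_0}^{y_i}/\pi - 1, f_j}$. Truncation to $\Ombd$ is crucial: Cauchy--Schwarz gives $\abs{X_{i,j}}\le \sqrt R$ for $y_i\in\Ombd$ (the first factor has $L^2$-norm $\le \sqrt R$ by the warm-start hypothesis), which allows a Bernstein tail bound per $j$ with deviations of order $\sqrt{\log(k/\de)/n}$; a union bound over $k-1$ coordinates contributes $O(k\log(k/\de)/n)\le \etv^2/8$ once $n=\Om(k\log(k/\de)/\etv^2)$. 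The conditional mean is not zero because we condition on $y_i\in\Ombd$, but it is small: $\E[X_{i,j}\mid y_i\in\Ombd] = e^{-\la_j t_0}\int_\Ombd f_j\,d\pi/\pi(\Ombd)$, and the orthogonality $\int f_j\,d\pi=0$ with Cauchy--Schwarz yields $\abs{\int_\Ombd f_j\,d\pi}=\abs{\int_{\Ombd^c}f_j\,d\pi}\le \sqrt{\pi(\Ombd^c)}$, so squared biases sum to $\sum_j(\E X_j)^2\le k\cdot \pi(\Ombd^c)\le \etv^2/16$. Plugging in $t\ge t_0+\rc\al\ln(2R/\etv^2)$ to make $e^{-2\al(t-t_0)}R\lesssim \etv^2$ closes the argument.

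The hard part is precisely this Bernstein step. A naive approach using only the $L^2$-orthonormality of the $f_j$ yields per-coordinate variance $O(1/n)$ but only a Chebyshev-type tail, which would degrade the sample complexity from $\log(k/\de)$ to $1/\de$ and ruin the theorem. The truncation to $\Ombd$ is what enables the sup-norm control needed for Bernstein; the threshold $\pi(\Ombd^c)\le \etv^2/(16k)$ is calibrated precisely so that the resulting conditioning bias, summed over $k$ eigenfunctions, stays within the target error budget and so that the Chernoff bound on the outside-$\Ombd$ mass fits within the TV slack $\etv/(16k)$.
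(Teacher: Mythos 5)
Your overall architecture matches the paper's: split off the samples outside a good set by Chernoff plus a TV triangle inequality, decompose the evolved density into the span of $f_2,\ldots,f_k$ plus its orthogonal complement, contract the complement at rate $e^{-\al(t-t_0)}$ using the warm start and weak convexity of $\chi^2$, and reduce everything to a Bernstein-type concentration bound for the low-eigenspace coefficients with a small conditioning bias of order $k\,\pi(\Ombd^c)$. All of that is sound and is essentially the paper's proof.

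The genuine gap is in the Bernstein step itself. You truncate only to $\Ombd$ and take the almost-sure bound $\abs{X_{i,j}}\le \sqrt R$ from Cauchy--Schwarz against the warm-start hypothesis. With $L=\sqrt R$ and per-coordinate variance $O(1)$, Bernstein at the required deviation scale $s\asymp \etv/\sqrt k$ gives a failure probability of the form $\exp\bigl(-\Om\bigl(\tfrac{n s^2}{1+\sqrt R\, s}\bigr)\bigr)$, so to reach confidence $1-\de/k$ you need $n\gtrsim \bigl(\tfrac{k}{\etv^2}+\tfrac{\sqrt{Rk}}{\etv}\bigr)\ln\tfrac k\de$; equivalently, your claimed deviation $\sqrt{\log(k/\de)/n}$ only holds in the regime $n\gtrsim R\log(k/\de)$. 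Either way the sample complexity acquires a dependence on $R$, which the theorem explicitly does not have: the statement allows arbitrary $R\ge 0$ with $n=\Om\bigl(\tfrac{k}{\etv^2}\ln\tfrac k\de\bigr)$, and in every intended application $R$ is exponentially large (e.g.\ $e^{\Theta(\ka d)}$ for Langevin, $e^{\Theta(Rn)}$ for Ising), so your bound would be vacuous exactly where the theorem is needed. The paper avoids this by truncating on a \emph{different} set, $\Om=\{y:\ve{f_{2:k}(y)}\le \sqrt{k-1}/\ep\}$ with $\ep=\etv/(8\sqrt k)$, whose complement has mass at most $\ep^2$ by Markov's inequality since $\E_\pi\ve{f_{2:k}}^2=k-1$; on $\Om\cap\Ombd$ the summands $f_{2:k}(y_j)$ are bounded by $O(\sqrt k/\ep)=O(k/\etv)$, which is calibrated to the variance $O(k)$ so that vector/matrix Bernstein gives exactly $n=\Om\bigl(\tfrac k{\etv^2}\ln\tfrac k\de\bigr)$ independently of $R$ (the warm-start set $\Ombd$ is only used for the $\chi^2\le R$ bound on the surviving sub-mixture, never as a sup-norm bound in the concentration). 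Your bias computation and the rest of the argument go through unchanged once the truncation is replaced, with the extra $O(\ep^2)$ bad mass of $\Om^c$ absorbed the same way you already absorb $\Ombd^c$.
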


    
\subsection{Eigenvalue gap for mixtures}
We now show that the assumption in \Cref{thm:main} is satisfied in the case of mixtures of distributions satisfying a Poincar\'e inequality. More precisely, this holds for arbitrary semigroups as long as the Dirichlet form of the mixture dominates the average Dirichlet form of the components (\cref{e:mix-assm2} below); this holds automatically for common semigroups like the Langevin and Glauber dynamics.
\begin{lemma}[{Eigenvalue gap for Markov chain on mixtures}]
\label{l:egap}
    Let $\sL$, and $\sL_i, 1\le i\le k$ be generators of Markov processes on a measurable space $(\cal D, \mathscr F)$ with stationary distributions $\pi$ and $\pi_i, 1\le i\le k$, respectively. Suppose that there are weights $w_i>0$ such that 
    \begin{align}
    \label{e:mix-assm1}
        \pi &= \sumo ik w_i\pi_i\\
    \label{e:mix-assm2}
        \forall f\in \operatorname{Dom}(\sL) \subseteq \L_2(\pi), \quad 
        \an{f, -\sL f}_\pi & \ge \sumo ik w_i \an{f,-\sL_i f}_{\pi_i}.
    \end{align}
    Suppose that $\la_2(-\sL_i)\ge \al$ for each $1\le i\le k$ (i.e., each $\pi_i$ satisfies a Poincar\'e inequality with constant $\rc \al$). Then $\la_{k+1}(-\sL)\ge \al$.

    In particular, \eqref{e:mix-assm2} holds in the following cases.
    \begin{enumerate}
        \item $\cal D= \R^k$ and $\sL$, $\sL_i, 1\le i\le k$ are the generators of Langevin diffusion with stationary distributions $\pi$, $\pi_i, 1\le i\le k$.
        \item $\cal D= \bigot_{i=1}^n \Om_i$ and  $\sL$, $\sL_i, 1\le i\le k$ are the generators of Glauber dynamics with stationary distributions $\pi$, $\pi_i, 1\le i\le k$.
    \end{enumerate}
\end{lemma}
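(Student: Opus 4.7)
The strategy is to reduce the statement to the min-max variational characterization of $\lambda_{k+1}(-\sL)$ in the generalized sense of \Cref{def:higher-order-gap}. It suffices to exhibit a $k$-dimensional subspace $V \subseteq L^2(\pi)$ such that $\an{f,-\sL f}_\pi \ge \al \ve{f}_{L^2(\pi)}^2$ for every $f \in V^\perp$; the standard min-max principle for self-adjoint operators bounded below (Theorem 4.10 of \cite{teschl2014mathematical} as cited in the preliminaries) will then yield both ingredients of the definition, namely $\sigma_{\textup{ess}}(-\sL) \subseteq [\al,\infty)$ and at most $k$ eigenvalues in $[0,\al)$.

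The natural candidate is $V := \spn\{\rho_1,\ldots,\rho_k\}$ where $\rho_i := d\pi_i/d\pi$. From \eqref{e:mix-assm1} and $w_i>0$ we have $\pi \ge w_i \pi_i$, so each $\rho_i$ is bounded by $1/w_i$ a.s.\ and hence lies in $L^2(\pi)$; thus $\dim V \le k$. The key observation is that for $f \in V^\perp$,
\[ \E_{\pi_i}[f] \;=\; \int f\,\rho_i\,d\pi \;=\; \an{f,\rho_i}_\pi \;=\; 0 \qquad \text{for each } i. \]
Combining \eqref{e:mix-assm2} with the Poincar\'e inequality $\an{f,-\sL_i f}_{\pi_i} \ge \al\,\Var_{\pi_i}(f)$ for each component, and using $\pi=\sum_i w_i\pi_i$ together with $\E_{\pi_i}[f]=0$,
\[ \an{f,-\sL f}_\pi \;\ge\; \sumo ik w_i \an{f,-\sL_i f}_{\pi_i} \;\ge\; \al \sumo ik w_i \E_{\pi_i}[f^2] \;=\; \al\, \ve{f}_{L^2(\pi)}^2. \]

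It remains to verify \eqref{e:mix-assm2} in the two concrete cases. For Langevin, the Dirichlet form satisfies $\an{f,-\sL f}_\pi = \E_\pi\ve{\nabla f}^2$, and since $\pi=\sum_i w_i \pi_i$ we get equality (not just inequality) in \eqref{e:mix-assm2} by linearity of expectation. For Glauber on $\bigot_{i=1}^n \Om_i$, one has $\an{f,-\sL f}_\pi = \sumo jn \E_\pi[\Var_\pi(f \mid X_{\sim j})]$ and analogously for each $\pi_i$. Fix a coordinate $j$ and condition on $X_{\sim j}=y$; the one-dimensional law $\pi(\cdot \mid y)$ is the mixture $\sum_i q_i(y)\,\pi_i(\cdot \mid y)$ with posterior weights $q_i(y):= w_i \pi_i(y)/\pi(y)$. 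The law of total variance applied to a mixture gives $\Var_\pi(f\mid y) \ge \sum_i q_i(y)\,\Var_{\pi_i}(f\mid y)$, and averaging using $\pi(y)q_i(y)=w_i\pi_i(y)$ then summing over $j$ yields \eqref{e:mix-assm2}.

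The main subtlety I expect is not the algebraic manipulation but the spectral-theoretic step: in the infinite-dimensional setting, one must check that the subspace-based min-max bound upgrades to the statement about $\sigma_{\textup{ess}}$ and the eigenvalue count that appears in \Cref{def:higher-order-gap}, rather than only a statement about a classical $(k{+}1)$-th eigenvalue. This is exactly handled by the generalized min-max principle for self-adjoint operators bounded below, under the convention already adopted in the preliminaries where $\inf \sigma_{\textup{ess}}$ is treated as a repeated eigenvalue once the discrete spectrum is exhausted.
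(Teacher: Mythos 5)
Your proposal is correct and follows essentially the same route as the paper: the same test subspace $V=\spn\{d\pi_i/d\pi\}$, the same orthogonality observation $\E_{\pi_i}[f]=0$ combined with the component Poincar\'e inequalities and assumption \eqref{e:mix-assm2}, and the same min-max variational characterization (interpreted via \Cref{def:higher-order-gap} in the non-discrete case). The only cosmetic differences are that you bound $\|f\|_{L^2(\pi)}^2=\sum_i w_i\E_{\pi_i}[f^2]$ directly instead of invoking the law of total variance for $\Var_\pi(f)$, and you spell out the Glauber Dirichlet-form comparison (conditional mixture decomposition plus law of total variance) that the paper sketches and cites.
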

If the generator does not have a discrete spectrum, the assumption and conclusion should be interpreted in the sense of \cref{def:higher-order-gap}; see the discussion after the definition about our notational convention.
\begin{proof}
    This is a generalization of the proof for Langevin dynamics in \cite[Lemma 6.1]{ge2018beyond}\footnote{See the arxiv version \arxiv{1710.02736}}. 
    Let $V=\spn\bc{\dd{\pi_i}{\pi}:1\le i\le k}$, and take $f\in\operatorname{Dom}(\sL) \subseteq \L_2(\pi)$ such that $f\in V^\perp$ (with respect to $\an{\cdot,\cdot}_\pi$).  
    This means that 
    \[
\E_{\pi_i}f = \E_{\pi} \dd{\pi_i}{\pi}f = 0 \text{ and }\E_\pi f= 0.
    \]
    Then by the law of total variance and the spectral gap assumption for each $\pi_i$, 
    \begin{align*}
    \ve{f}_\pi^2 = 
        \Var_\pi(f) &= \sumo ik w_i (\E_{\pi_i} f - \E_\pi f)^2 + \sumo ik w_i \Var_\pi(f)\\
        &\le 0 + \sumo ik w_i \rc \al \an{f, -\sL_i f}_{\pi_i} \\
        &\le \rc \al \an{f, -\sL f}_{\pi}. 
    \end{align*}
    Since we proved this for all $f$ orthogonal to a subspace of dimension $k$, the desired conclusion follows from the variational characterization of eigenvalues,
    \[
\la_{k+1}(-\sL) = \maxr{\text{subspace }S\subeq \L^2(\pi)}{\dim S = k} \min_{f\perp_\pi S} \fc{\an{f, -\sL f}_\pi }{\ve{f}_\pi^2}\ge \al.
    \]
    Finally, we note that for Langevin diffusion, \eqref{e:mix-assm2} holds with equality:
    \[
\int_{\R^n} \ve{\nb f}^2 \,d\pi = 
\sumo ik \int_{\R^n} w_i  \ve{\nb f}^2 \,d\pi_i.
    \]
    For Glauber dynamics, \eqref{e:mix-assm2} holds as well---see \cite{lee2024convergence} or Lemma 29 of \cite{anari2024trickle}. In short, it follows from the equation $\mathcal{E}(f,f) = \sum_{i = 1}^n \E \Var(f(X) \mid X_{\sim i})$ for the Dirichlet form of the Glauber dynamics, combined with the law of total variance over the choice of mixture component. 
\end{proof}
\begin{remark}
The comparison of Dirichlet forms, \cref{e:mix-assm2}, also holds for a Metropolis-Hastings chain with fixed proposal \cite{lee2024convergence}.
\end{remark}

\subsection{Mixing from a balanced initialization}
To prove \Cref{thm:main}, we use the following \Cref{l:small-sum} which says that as long as a set of points $y_1,\ldots,y_n$ satisfy a natural balance condition \eqref{eqn:balance} defined in terms of the eigenfunctions of the generator, a higher-order spectral gap leads to rapid contraction of the $\chi^2$-divergence along the diffusion. 
Then 
in \Cref{lem:vec bound},
we show the natural balance condition is satisfied with high probability using matrix Bernstein. 
\begin{definition} \label{def:eigenfunction balanced}
Let $\sL$ be the self-adjoint generator of a reversible Markov semigroup on $\cal D$, and 
        let $f_1\equiv 1,f_2,\ldots, f_k$ be the eigenfunctions corresponding to (discrete) eigenvalues $0=\la_1\le \la_2\le \cdots\le \la_k$ of $-\sL$. 
        A distribution $\mu_0$ on $\cal D$ is \vocab{$(k,\ep)$-eigenfunction balanced} if
        \[
    \ve{\E_{Y\sim \mu_0} [f_{2:k}(Y)]}\le \ep.
        \]
\end{definition}
\begin{lemma}[Mixing under eigenfunction balance]\label{l:small-sum}
Let $\sL$ denote the generator of a Markov semigroup $P_t = e^{t\sL}$ with stationary distribution $\pi$ defined over $\mathcal D$, and suppose that 
$- \sL$ satisfies the $k$th order spectral gap condition from \cref{def:higher-order-gap}, $\la_{k+1}(-\sL)\ge \al>0$.

Consider an initial distribution $\mu_0$ and define $\mu_t = \mu_0P_t$.
    Let $\ep > 0$ and $t_0 \ge 0$ be arbitrary such that $ \chi^2(\mu_{t_0}|| \pi) < \infty$ and 
    \begin{equation}\label{eqn:balance general}
\ve{\E_{Y\sim \mu_0}\left[ f_{2:k} (Y)\right]} \le \ep.
    \end{equation}
 Then for $t\ge t_0$,
\[
\chi^2(\mu_t\|\pi) \le \ep^2 + e^{-\al(t-t_0)}\chi^2(\mu_{t_0} \| \pi). 
\]
 In particular, for $t \ge t_0 +\rc \al \ln \pf{\chi^2(\mu_{t_0}|| \pi)}{\ep^2},$ we have that
    $
\chi^2 (\mu_t \|\pi)\le 2 \ep^2
    $.
\end{lemma}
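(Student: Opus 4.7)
The plan is to work in $L^2(\pi)$, where $\chi^2(\mu_t\|\pi) = \ve{\phi_t - 1}_{L^2(\pi)}^2$ for the density $\phi_t = \dd{\mu_t}{\pi}$ (the hypothesis $\chi^2(\mu_{t_0}\|\pi)<\iy$ guarantees this density lies in $L^2(\pi)$ from time $t_0$ onward). Reversibility makes $\sL$ self-adjoint on $L^2(\pi)$, so the density evolves by the same semigroup: for $t\ge t_0$, $\phi_t = P_{t-t_0}\phi_{t_0}$. I will decompose $\phi_{t_0}-1$ orthogonally according to the spectral structure of $-\sL$ and bound $P_{t-t_0}$ on each piece.

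Since $\E_\pi[\phi_{t_0}]=1$, we have $\phi_{t_0}-1\perp f_1\equiv 1$, so I split
\[
\phi_{t_0}-1 = g + h,\qquad g\in \spn(f_2,\ldots,f_k),\qquad h\perp \spn(f_1,\ldots,f_k).
\]
Both subspaces are invariant under $P_t$ by the spectral theorem, so $P_{t-t_0}g$ and $P_{t-t_0}h$ remain orthogonal. Writing $g=\sum_{j=2}^k c_j f_j$, the coefficient is $c_j = \an{\phi_{t_0}-1,f_j}_\pi = \E_{Y\sim \mu_{t_0}}[f_j(Y)]$, and I pull this back to $\mu_0$ via $P_{t_0}f_j = e^{-\la_j t_0}f_j$, which gives $c_j = e^{-\la_j t_0}\E_{Y\sim \mu_0}[f_j(Y)]$ and hence $|c_j|\le |\E_{Y\sim \mu_0}[f_j(Y)]|$ since $\la_j\ge 0$. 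The balance hypothesis then yields
\[
\ve{g}^2 = \sum_{j=2}^k c_j^2 \le \ve{\E_{Y\sim \mu_0}[f_{2:k}(Y)]}^2 \le \ep^2,
\]
and because each $\la_j\ge 0$, we also have $\ve{P_{t-t_0}g}^2 = \sum_{j=2}^k c_j^2 e^{-2\la_j(t-t_0)}\le \ep^2$.

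For the high-mode piece, the $k$th-order spectral gap gives $\ve{P_{t-t_0}h}^2\le e^{-2\al(t-t_0)}\ve{h}^2\le e^{-2\al(t-t_0)}\chi^2(\mu_{t_0}\|\pi)$, using $\ve{g}^2+\ve{h}^2=\chi^2(\mu_{t_0}\|\pi)$ (Pythagoras). Adding the two bounds and using $e^{-2\al(t-t_0)}\le e^{-\al(t-t_0)}$ yields
\[
\chi^2(\mu_t\|\pi)\le \ep^2 + e^{-\al(t-t_0)}\chi^2(\mu_{t_0}\|\pi),
\]
and the ``in particular'' statement follows by substituting $t-t_0=\rc\al\ln\pf{\chi^2(\mu_{t_0}\|\pi)}{\ep^2}$.

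The one delicate step will be invoking the spectral gap for $h$ when $-\sL$ has nontrivial essential spectrum, since $h$ need not then be expressible as a sum of eigenfunctions. I would handle this through the functional calculus: on the invariant subspace $\spn(f_1,\ldots,f_k)^\perp$, the projection-valued measure of $-\sL$ is supported in $[\al,\iy)$, so $P_{t-t_0}=\int e^{-\la(t-t_0)}\,d\pi_\la$ automatically has operator norm at most $e^{-\al(t-t_0)}$ on that subspace. Once this is set up, the remainder of the argument is routine.
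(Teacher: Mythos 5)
Your proof is correct and follows essentially the same route as the paper's: decompose the relative density into its component in $\spn(f_2,\ldots,f_k)$, which the balance condition bounds by $\ep$, and its component orthogonal to $\spn(f_1,\ldots,f_k)$, which the $k$th-order spectral gap contracts at rate $e^{-\al(t-t_0)}$ (via functional calculus when the spectrum is not discrete). The only difference is cosmetic — you decompose the time-$t_0$ density $\phi_{t_0}\in L^2(\pi)$ and pull the low-mode coefficients back to $\mu_0$ using $P_{t_0}f_j=e^{-\la_j t_0}f_j$, whereas the paper formally expands $\dd{\mu_0 P_t}{\pi}$ in the full eigenbasis; your bookkeeping arguably handles the $L^2$ justification a bit more cleanly, but the argument is the same.
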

In particular, if $\mu_0= \rc n \sumo jn \de_{y_j}$ for some $y_1,\ldots, y_n \in \mathcal D$ then
\Cref{eqn:balance general} becomes
\begin{equation}\label{eqn:balance}
\ve{\rc n \sumo jn f_{2:k} (y_j)} \le \ep.
    \end{equation}

\begin{proof}
First we prove the result in the special case of a discrete spectrum.
Let $(f_i)_{i=1}^{\iy}$ be the eigenfunctions of $\sL$ with eigenvalues $-\la_1 = 0> - \la_2\ge -\la_3\ge\cdots$. 
Then for $t \ge t_0$, we have
\[
\dd{\de_{y}P_t}{\pi}(x) = \sumo i{\iy}e^{-\la_i t}f_i(y)f_i(x).
\]
Informally, this is because we would have $\dd{\de_{y}}{\pi}(x) = \sumo i{\iy} f_i(y)f_i(x)$, if we use the  ``calculation'' $\an{\dd{\de_y}{\pi}, f}_\pi = f(y)$. 
More formally, this is because by functional calculus, $P_t = e^{-t \sL} = \sum_{i = 1}^{\infty} e^{-\lambda_i t} f_i f_i^*$ where $f_i^* g = \langle f, g \rangle_{L_2(\pi)}$, i.e. $f_i^*$ is the $L_2(\pi)$-adjoint of $f_i$. Hence $\langle \frac{d \delta_y P_t}{d\pi}, g \rangle_{L_2(\pi)} = \de_y P_t g =  \sum_{i = 1}^{\infty} e^{-\lambda_i t} f_i(y) \langle f_i, g \rangle_{L_2(\pi)}$ for all $g \in L_2(\pi)$, so as claimed $\frac{d \delta_y P_t}{d\pi}(x) = \sum_{i = 1}^{\infty} e^{-\lambda_i t} f_i(y) f_i(x)$.


By linearity, 
we have 
\[ \dd{\mu_0P_t }{\pi} (x) =\sumo i{\iy}e^{-\la_i t} \E_{y\sim \mu_0}[f_i(y)]f_i(x)  = \sumo i{\iy}e^{-\la_i t}C_i f_i(x) = 1+\sum_{i=2}^{\infty}e^{-\la_i t}C_i f_i(x) \]
where $C_i= \E_{y\sim \mu_0}[f_i(y)] $ and the final equality follows from $ \lambda_1 = 0$ and $f_1 \equiv 1.$

We know that $ \langle f_i, f_j \rangle_{L^2(\pi)} = \begin{cases} 1 &\text{ if } i = j \\ 0 &\text{ else} \end{cases}$ thus
\begin{align*}
\chi^2(\mu_t \| \pi) = 
\ve{\dd{\mu_t}{\pi} -1}_{L^2(\pi)}^2 = 
\ve{g_t}_{L^2(\pi)}^2 + \ve{h_t}_{L^2(\pi)}^2.
\end{align*}
where $g_t = \sum_{i=2}^k e^{-\la_i t}C_i f_i(x)  $ and $ h_t = \sum_{i=k+1}^{\infty} e^{-\la_i t}C_i f_i(x) .$ Next, we bound 
\begin{equation}\label{eq:bounded gt}
   \begin{split}
\ve{g_t}_{L^2(\pi)}^2 &= \ve{\sum_{i=2}^k e^{-\la_i t} C_i  f_i}_{L^2(\pi)}^2
=  \sum_{i=2}^k e^{-2\la_i t} \cdot \norm{C_i f_i}_{L^2(\pi)} = \sum_{i=2}^k e^{-2\la_i t} \cdot \ab{C_i}^2\\
&\le  \sum_{i=2}^k  \ab{C_i}^2 = \ve{\E_{y\sim \mu_0} f_{2:k}(y)}^2 
\leq \ep^2,
   \end{split}
\end{equation}
where the first inequality follows from $ \lambda_i \geq 0$, the second equality from orthogonality of the eigenvectors $f_i$, the third equality from $ \norm{f_i}_{L^2(\pi)}=1,$ and the fourth equality from $\E_{y\sim \mu_0} f_{2:k}(y)  = [ C_2 \dots C_k]^\intercal.$ 
 Further,
\begin{align*}\ve{h_t}_{L^2(\pi)}^2= \sum_{i=k+1}^{\infty} e^{-2\la_i t} \cdot \ab{C_i}^2 &\leq e^{-\al (t-t_0)} \cdot \sum_{i=k+1}^{\infty} e^{-2\la_i t_0} \cdot \ab{C_i}^2 \\
&= e^{-\al(t-t_0)} \ve{h_{t_0}}_{L^2(\pi)}^2 \leq e^{-\al (t-t_0)} \chi^2(\mu_{t_0}||\pi) \end{align*}
where the inequality follows from $\lambda_i \geq \al$ for $i\geq k+1.$ Thus
\begin{align*}
\chi^2(\mu_t \| \pi) \le 
\ep^2 +  e^{-\al (t-t_0)} \chi^2(\mu_{t_0}||\pi) \le 
2\ep^2
\end{align*}
for $ t\ge t_0 + \rc \al \ln \pf{ \chi^2(\mu_{t_0}|| \pi)}{\ep^2}.$

In the case of a general operator satisfying \cref{def:higher-order-gap}, the argument works the same way except that we replace the tail sum $\sum_{i = k + 1}^{\infty} e^{-\lambda_i t} f_i f_i^*$ in the representation of $e^{t \mathcal L}$ by the integral $\int e^{-t \lambda} d\pi_{\lambda}$ over the projection-valued measure $\pi$ corresponding to $\mathcal L$. By using the functional calculus, we can still prove in the same way that $\|h_t\|^2_{L^2} \le e^{-(t - t_0)/C} \|h_{t_0}\|_{L^2}^2$, which controls the contribution to the $\chi^2$-divergence from the non-leading eigenvalues.
\end{proof}

\subsection{Ensuring balance}
We will ensure the balance condition \eqref{eqn:balance} when initialized at the empirical distribution using the following.
\begin{theorem}[{Matrix Bernstein, \cite[Theorem 6.1.1]{tropp2015introduction}}]
\label{t:mat-bern}
    Let $S_k$, $1\le k\le n$ be independent random matrices with dimension $d_1\times d_2$, with $\E S_k = 0$ and $\ve{S_k}\le L$ for each $k$. Let $Z=\sumo kn S_k$, and define
    \[
v(Z)= \max\bc{\ve{\sumo kn \E S_k S_k^*}, \ve{\sumo kn \E S_k^*S_k}}. 
    \]
    Then 
    \[
\Pr{\ve{Z}\ge t}
\le (d_1+d_2) \exp\pa{
   -\fc{t^2/2}{v(Z) + Lt/3} 
}.
    \]
\end{theorem}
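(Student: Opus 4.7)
The plan is to invoke the matrix Laplace transform method, Tropp's generalization of the scalar Chernoff argument to the operator-norm setting. First, I would reduce to the Hermitian case via the Hermitian dilation: set
\begin{equation*}
\tilde{S}_k \;=\; \begin{pmatrix} 0 & S_k \\ S_k^* & 0 \end{pmatrix},
\end{equation*}
a Hermitian $(d_1+d_2)\times(d_1+d_2)$ matrix with $\|\tilde{S}_k\|=\|S_k\|\le L$ and $\tilde{S}_k^2=\operatorname{diag}(S_kS_k^*,\,S_k^*S_k)$. Since $\|Z\|=\lambda_{\max}(\tilde{Z})$ for $\tilde{Z}=\sum_k \tilde{S}_k$, it suffices to control the top eigenvalue of a Hermitian sum, and by construction $\|\sum_k \mathbb{E}\tilde{S}_k^2\|=v(Z)$.

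Next, I would apply the matrix Markov inequality: for any $\theta>0$,
\begin{equation*}
\mathbb{P}\bigl[\lambda_{\max}(\tilde{Z})\ge t\bigr]\;\le\; e^{-\theta t}\,\mathbb{E}\,\operatorname{tr}\exp(\theta\tilde{Z}).
\end{equation*}
The crucial ingredient for handling independent matrix summands is Lieb's concavity theorem, which asserts that $A\mapsto\operatorname{tr}\exp(H+\log A)$ is concave on positive Hermitian matrices. Applied iteratively over the independent $\tilde{S}_k$ via Jensen, this yields matrix subadditivity of cumulant-generating functions:
\begin{equation*}
\mathbb{E}\,\operatorname{tr}\exp\!\Bigl(\theta\sum_k\tilde{S}_k\Bigr)\;\le\;\operatorname{tr}\exp\!\Bigl(\sum_k\log\mathbb{E}\,e^{\theta\tilde{S}_k}\Bigr).
\end{equation*}

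To bound each matrix mgf, I would Taylor-expand $e^{\theta\tilde{S}_k}=I+\theta\tilde{S}_k+\sum_{p\ge 2}\theta^p\tilde{S}_k^p/p!$ and use the operator inequality $\tilde{S}_k^p\preceq L^{p-2}\tilde{S}_k^2$ for $p\ge 2$ together with $\mathbb{E}\tilde{S}_k=0$ to obtain
\begin{equation*}
\mathbb{E}\,e^{\theta\tilde{S}_k}\;\preceq\; I+g(\theta)\,\mathbb{E}\tilde{S}_k^2\;\preceq\;\exp\!\bigl(g(\theta)\,\mathbb{E}\tilde{S}_k^2\bigr),\qquad g(\theta)=\frac{e^{\theta L}-1-\theta L}{L^2}.
\end{equation*}
Operator monotonicity of $\log$ then turns this into $\log\mathbb{E}\,e^{\theta\tilde{S}_k}\preceq g(\theta)\,\mathbb{E}\tilde{S}_k^2$. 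Summing over $k$ and using $\operatorname{tr}\exp(A)\le(d_1+d_2)\exp(\lambda_{\max}(A))$ on our $(d_1+d_2)$-dimensional space yields
\begin{equation*}
\mathbb{P}\bigl[\|Z\|\ge t\bigr]\;\le\;(d_1+d_2)\exp\!\bigl(-\theta t+g(\theta)\,v(Z)\bigr).
\end{equation*}

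Finally, I would optimize over $\theta$ by the standard Bennett change of variable $\theta=L^{-1}\log(1+Lt/v(Z))$, which converts the exponent into $-\tfrac{v(Z)}{L^2}\,h(Lt/v(Z))$ with $h(x)=(1+x)\log(1+x)-x$; the elementary inequality $h(x)\ge x^2/(2+2x/3)$ for $x\ge 0$ then reproduces precisely the claimed Bernstein form $t^2/2\,/(v(Z)+Lt/3)$. The main obstacle is Lieb's concavity theorem itself: once it is in hand, everything else is routine manipulation of matrix exponentials and a scalar optimization. A self-contained derivation of Lieb would require either the joint concavity of $(A,B)\mapsto\operatorname{tr}(K^*A^{1-s}KB^s)$ or an operator-convexity argument, both of which are delicate; in practice one simply cites Lieb (or directly Tropp's monograph, as the paper does).
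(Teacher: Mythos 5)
Your proposal correctly reconstructs the standard Laplace-transform proof (Hermitian dilation, Lieb's concavity theorem giving subadditivity of matrix cumulant generating functions, the Bennett-type mgf bound, and the scalar optimization), which is exactly the argument in Tropp's monograph that the paper cites for \cref{t:mat-bern} rather than reproving. So the proof is correct and takes essentially the same route as the paper's source; nothing further is needed.
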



\begin{lemma}[Eigenfunction balance with high probability] \label{lem:vec bound}
Let $\sL$ denote a Markov chain generator with stationary distribution $\pi:\mathcal{D}\to \R_{\geq 0}.$ Let $(f_i)_{i=1}^{\iy}$ be the eigenfunctions of $\sL$ with eigenvalues $-\la_1 = 0> - \la_2\ge -\la_3\ge\cdots$. 
Fix $ k\geq 2$ and recall that $ f_{2:k}(y) = (f_2(y), \dots, f_k(y))\in \R^k.$
 Let $\ep \in (0, \rc{2}]$.
    Let 
    \[ \Om = \set{y\in \mathcal{D} : \ve{f_{2:k}(y)}\le \frac{\sqrt{k-1}}{\ep}}. \] 
    Then $ \pi(\Om^c) \leq \ep^2.$

    Let $y_1,\ldots, y_n\sim \pi$ be iid. 
    Consider $\tilde{\Om}\subseteq \Om$ and $U=\set{y_j: y_j \in \tilde{\Omega}}.$ If $ \pi(\tilde{\Om}^c) \leq 2\ep^2$ then 
    \[\Pr{ \ve{\frac{1}{|U|}\sum_{y_j \in U} f_{2:k}(y_j)} \ge 4 \ep \sqrt{k} } \le k \exp\ba{-\Om (n \ep^2)}.\]
\end{lemma}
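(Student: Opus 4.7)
The first claim is an immediate consequence of Markov's inequality: by orthonormality of the eigenfunctions $f_2,\ldots,f_k$ in $L^2(\pi)$, we have
\[
\E_\pi \ve{f_{2:k}(y)}^2 \;=\; \sum_{i=2}^k \ve{f_i}_{L^2(\pi)}^2 \;=\; k-1,
\]
so $\pi(\Om^c) = \pi(\ve{f_{2:k}}^2 > (k-1)/\ep^2) \le \ep^2$.

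For the second claim, the strategy is to apply matrix Bernstein (\cref{t:mat-bern}) to the truncated vectors
\[
g_j \;:=\; f_{2:k}(y_j)\cdot \mathbf{1}[y_j\in\tilde\Om], \qquad j=1,\ldots,n,
\]
viewed as $(k-1)\times 1$ matrices. Note that $\sum_{y_j\in U} f_{2:k}(y_j) = \sumo{j}{n} g_j$, so it suffices to control this sum. Because $\tilde\Om\subseteq\Om$, the truncation gives the almost-sure bound $\ve{g_j}\le \sqrt{k-1}/\ep$, which serves as the parameter $L$ in Bernstein. Orthonormality gives $\E\ve{g_j}^2 \le \E_\pi \ve{f_{2:k}}^2 = k-1$, which (since each $g_j$ is a rank-one matrix, and its self-outer-product has trace bounded by this quantity) bounds the matrix-variance parameter $v(Z)\le n(k-1)$ in the notation of \cref{t:mat-bern}.

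The one subtlety is that $\E g_j$ is nonzero because of truncation. However, since $f_i\perp f_1\equiv 1$ in $L^2(\pi)$ for $i\ge 2$, we have $\E_\pi f_{2:k}=0$, so by Cauchy--Schwarz
\[
\ve{\E g_j} \;=\; \ve{\E_\pi\ba{f_{2:k}\,\mathbf{1}[y\notin\tilde\Om]}}
\;\le\; \sqrt{\E_\pi\ve{f_{2:k}}^2}\cdot\sqrt{\pi(\tilde\Om^c)} \;\le\; \ep\sqrt{2(k-1)},
\]
using the hypothesis $\pi(\tilde\Om^c)\le 2\ep^2$. Applying matrix Bernstein to $S_j := g_j - \E g_j$ with deviation $t = \Theta(\ep n\sqrt{k})$ yields
\[
\Pr{\ve{\sumo{j}{n} g_j - n\E g_j}\ge \Theta(\ep n\sqrt{k})} \;\le\; k\,\exp(-\Om(n\ep^2)),
\]
since the dimension factor in \cref{t:mat-bern} is $(k-1)+1=k$. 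Combining with the bias bound on $\ve{\E g_j}$ gives $\ve{\sumo{j}{n} g_j}= O(\ep n\sqrt{k})$ on this good event.

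To finish, observe that $|U| = \sumo{j}{n}\mathbf{1}[y_j\in\tilde\Om]$ is a sum of i.i.d. Bernoullis with success probability $\pi(\tilde\Om)\ge 1-2\ep^2\ge 1/2$ (using $\ep\le 1/2$), so a Chernoff bound gives $|U|\ge n/2$ except with probability $e^{-\Om(n)}$, which is dominated by the Bernstein bound. Dividing through yields $\ve{\tfrac{1}{|U|}\sum_{y_j\in U}f_{2:k}(y_j)}\le O(\ep\sqrt{k})$ with failure probability $k\exp(-\Om(n\ep^2))$; routine tightening of constants gives the stated bound $4\ep\sqrt{k}$. The main conceptual point---and the only real obstacle to a cleaner proof---is that although we only have second-moment control of the $f_i$ (so Chebyshev would give a $1/\de$ rather than $\log(1/\de)$ rate), truncating to $\tilde\Om$ yields bounded random vectors at the cost of a small bias, and the mean-zero property of the non-constant eigenfunctions ensures this bias is dominated by the Bernstein fluctuation.
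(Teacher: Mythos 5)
Your proposal follows the same core strategy as the paper's proof: Markov's inequality for $\pi(\Om^c)$, truncation to $\tilde\Om$ to obtain boundedness, control of the truncation bias via $\E_\pi f_{2:k}=0$ together with Cauchy--Schwarz and $\pi(\tilde\Om^c)\le 2\ep^2$, matrix Bernstein (\cref{t:mat-bern}) with $L\approx \sqrt k/\ep$, and a Chernoff-type bound on $|U|$. The only structural difference is bookkeeping: you work unconditionally with the i.i.d.\ truncated vectors $g_j=f_{2:k}(y_j)\,\mathbf{1}[y_j\in\tilde\Om]$, normalize by $n$, and pass to $|U|$ at the end, whereas the paper conditions on $|U|=m$ and treats the points of $U$ as i.i.d.\ draws from the restricted measure $\pi'=\pi|_{\tilde\Om}$, applying Bernstein directly to the average over $U$.

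That difference is exactly where your write-up has a soft spot. Because you divide by $|U|$ only at the end, both the fluctuation and, more importantly, the accumulated bias $n\ve{\E g_j}\le \ep\sqrt{2(k-1)}\,n$ are inflated by the factor $n/|U|$, so to reach the stated constant $4\ep\sqrt k$ you need $|U|\ge n/2$, which you justify by ``Chernoff with success probability $\pi(\tilde\Om)\ge 1-2\ep^2\ge 1/2$.'' This is not valid uniformly over $\ep\in(0,1/2]$: when $\ep$ is close to $1/2$ the success probability can be arbitrarily close to (or equal to) $1/2$, so $|U|\ge n/2$ is not a high-probability event and the Hoeffding exponent $2n(\pi(\tilde\Om)-1/2)^2$ degenerates, so the overall failure probability is no longer $k\exp(-\Om(n\ep^2))$ with an absolute constant; the trivial bound $\ve{f_{2:k}(y_j)}\le\sqrt{k-1}/\ep$ only covers $\ep^2\ge \sqrt{k-1}/(4\sqrt k)$, which for large $k$ leaves a window uncovered. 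The paper's conditioning device sidesteps this: the average over $U$ is directly a mean of $|U|$ i.i.d.\ draws from $\pi'$, so the bias bound $\ve{\E_{\pi'}f_{2:k}}\le\ep\sqrt{8(k-1)}$ enters without any $n/|U|$ inflation, and the crude event $|U|\ge n/4$ (which holds with probability $1-e^{-\Om(n)}$ by Hoeffding whenever $\pi(\tilde\Om)\ge 1/2$, and $e^{-\Om(n)}\le e^{-\Om(n\ep^2)}$ since $\ep\le 1/2$) suffices. Your argument is fine for $\ep$ bounded away from $1/2$ --- the regime actually used downstream, where $\ep=\etv/(8\sqrt k)$ --- and becomes fully correct either by adopting the paper's conditioning on $|U|$, or by replacing $|U|\ge n/2$ with $|U|\ge n/4$ at the cost of a larger absolute constant in place of $4$.
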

\begin{proof}
    Since the eigenfunctions have norm 1, $\E_\pi \ve{f_{2:k}(y)}^2 = \sum_{i=2}^k \E_{y \sim \pi}[f_i(y)^2] = \sum_{i=2}^k ||f_i||_\pi^2 = k-1$.
    By Markov's inequality,
    \begin{align*}
        \pi\pa{\Om^c} 
        < \fc{\E_{y\sim \pi} \ve{f_{2:k}(y)}^2}{(k-1)/\ep^2}
        = \fc{k-1}{(k-1)/\ep^2} = \ep^2.
    \end{align*}

    We now proceed to prove the main claim. 
    Let $\pi' = \pi|_{\tilde{\Om}} = \fc{\pi(\cdot \cap \tilde{\Om})}{\pi(\tilde{\Om})}$.
We claim that $ \ve{\E_{\pi'}[f_{2:k}]}\leq\ep \sqrt{8(k-1)}.$ 
Indeed, 
\begin{align}
\nonumber
  \ve{\E_{\pi'}[f_{2:k}(y)]}&= \rc{\pi(\tilde{\Om})} \ve{ \int_{\tilde{\Om}} f_{2:k}(y) \pi(y) dy } 
  = \rc{\pi(\tilde{\Om})} \ve{\int_{\tilde{\Om}^c}f_{2:k}(y) \pi(y) dy }
\\
\nonumber
&\leq \rc{\pi(\tilde{\Om})}  \sqrt{\int_{\tilde{\Om}^c} \pi(y) dy}\cdot \sqrt{\int_{\tilde{\Om}^c} \pi(y) \ve{f_{2:k}(y)}^2 dy} \\
\label{e:eig-conc-1}
&\leq 2 \sqrt{\pi(\tilde{\Om}^c)} \cdot \sqrt{\E_{\pi}[\ve{f_{2:k}}^2] }\leq  \ep\sqrt{8(k-1)} \leq \sqrt{2(k-1)} 
\end{align}
where the second equality is due to $\E_{\pi}[f_i]= 0$ (by the orthogonality of eigenfunctions),  the first inequality follows from Cauchy-Schwarz, and the second from $ \pi(\tilde{\Om}) \geq 1/2.$ To use Matrix Bernstein, we calculate 
\begin{align}
\nonumber 
    \ve{\E_{\pi'} \vvt{\pa{f_{2:k}-\E_{\pi'} f_{2:k}}}}
    &\le 
    \E_{\pi'} \vtv{\pa{f_{2:k}-\E_{\pi'} f_{2:k}}}\\
    \nonumber
    &\le \E_{\pi'} \ve{f_{2:k}}^2\le \rc{\pi(\wt \Om)} \E_\pi \ve{f_{2:k}}^2\\
    &\le 2 \E_\pi \ve{f_{2:k}}^2 = 2(k-1)\le 2k.
    \label{e:eig-conc-2}
\end{align}
Conditioned on $|U|=m$, $y_j\in U$ are independent draws from $\pi'$. 
Applying \Cref{t:mat-bern} to $f_{2:k}(y_j) -\E_{\pi'} f_{2:k}$ and using~\eqref{e:eig-conc-2} gives us
\begin{align*}
    \Pr{ \ve{\rc{|U|}\sum_{y_j\in U} [f_{2:k}(y_j) -\E_{\pi'} f_{2:k}]} \ge \ep \sqrt{k} \Bigg| |U|=m}
    &\le k \exp\ba{-\Om\pf{m\ep^2 k}{k + \fc{\sqrt k}{\ep}\cdot \ep \sqrt{k} }}\\
    &\le k \exp\ba{-\Om (m\ep^2)}
\end{align*}
Using~\eqref{e:eig-conc-1} ($\ve{\E_{\pi'}[f_{2:k}(y)]}<\ep \sqrt{8(k-1)}$), we obtain
\begin{align*}
    \Pr{ \ve{\rc{|U|}\sum_{y_j\in |U|} f_{2:k}(y_j)} \ge 4 \ep \sqrt{k} \Bigg| |U|=m} &\le k \exp\ba{-\Om (m\ep^2) }. 
\end{align*}
Finally, we note that by Hoeffding's inequality 
\begin{align*}
    \Pr{|U| \le \rc 4 n}
    &\le \exp\pa{-\Om(n)}.
\end{align*}
Taking a union bound and adjusting constants appropriately then gives the result.
\end{proof}
\subsection{Proof of main theorem}
\begin{proof}[Proof of \cref{thm:main}]

We prove the first statement.
Let $\ep =\frac{\etv}{8 \sqrt{k}}<1/2.$  
As in \cref{lem:vec bound}, define
\[ \Omega =\set{y\in \mathcal{D} : \ve{f_{2:k}(y)}\le \frac{\sqrt{2(k-1)}}{\ep}}, \]
let $ \tilde{\Om} = \Om \cap \Ombd$, and let $U = U_{\text{sample}} \cap \tilde{\Om}.$ By \cref{lem:vec bound}, $ \pi(\Om^c) \leq \ep^2,$ thus $ \pi(\tilde{\Om}^c) \leq 2 \ep^2 = \fc{\etv^2}{32k}$ by a union bound and the definition of $ \Ombd.$


Let $\mathcal{E}_1$ be the event that
\[ \ve{\frac{1}{|U|}\sum_{y_j \in U} f_{2:k}(y_j)} \le  4 \ep \sqrt{k}, \] 
which happens with probability at least $ 1 - k \exp(-\Om(n \ep^2))$ by \Cref{lem:vec bound}.

Let  $\mu'_t$ be the distribution at time $t$ of 
Langevin diffusion initialized at $\mu'_0= \frac{1}{\ab{U}} \sum_{y_j\in U}\delta_{y_j}.$ 
By \cref{lem:weak convexity} and the definition of $U,$ for 
\[ \chi^2(\mu'_{t_0}||\pi) \leq \max_{y_j\in U} \chi^2(\rho^{y_j}_{t_0}||\pi) \leq R. \] 
By~\Cref{l:small-sum}, 
if $ \mathcal{E}_1$ holds, then for $t\ge t_0$, 
\[
\chi^2(\mu_t' \|\pi) \le 16\ep^2 k + e^{-\al (t-t_0)}R
= \fc{\etv^2}{4} + e^{-\al(t-t_0)}R.
\]

We can rewrite $ \mu_0 = (1-p) \mu'_0+ p\mu''_0$ where $\mu'_0 =\sum_{y_j\in U} \delta_{y_j}$, $ \mu''_0=\sum_{y_j\not\in U} \delta_{y_j}$ and 
$p=\fc{|U_{\textup{sample}}\bs U|}n$. Let $\mathcal{E}_2$ be the event $p\leq 4 \ep^2$. 
Since $\pi(\tilde{\Omega}^c)\leq 2 \ep^2$, by  Chernoff's bound, 
\[\Pr{\mathcal{E}_2} \geq 1- \exp(-\Omega(n \ep^2)) .\]

Let  $\mu_t, \mu_t', \mu''_t$ be the distribution at time $t$ of the process driven by $P_t$ initialized at $\mu_0, \mu_0', \nu_0$, respectively; then $\mu_t = (1-p) \mu'_t + p\mu''_t.$ With probability $ 1- k\exp(-\Omega (n \ep^2 )) $ over the random choice of the samples $y_j,$ both  $\mathcal{E}_1$ and $\mathcal{E}_2$ hold, and thus 
by triangle inequality (for details, see \cite[Proposition 9]{koehler2023sampling}) 
\begin{align*}
\TV (\mu_t, \pi) &\leq (1-p) \TV  (\mu'_t, \pi) + p \TV (\mu''_t, \pi) \\
&\leq 
\sqrt{\fc{\etv^2}4 + e^{-\al (t-t_0)}R} + p
\le \sqrt{\fc{\etv^2}4 + e^{-\al (t-t_0)}R} + \fc{\etv}{16k}.
\end{align*}
When $t\ge t_0+\fc1\al \ln \pf{2R}{\etv^2}$, this is $\le \etv$. When $n=\Om\pa{\fc{k}{\etv^2}\ln \pf k\de}$, the probability of $\cal E_1\cup \cal E_2$ is $\ge 1-\de$.
\end{proof}

\subsection{Error in sampled distribution}
\label{s:error-sampled}
We will also need the following generalization of \Cref{thm:main}, which shows it is robust to TV error in the sampled distribution.
\begin{theorem}\label{thm:generalize main}
    Keep the setting of \Cref{thm:main}, but assume that $y_1,\ldots, y_n$ are drawn iid from $\nu$. Then for $n = \Om\pa{\fc{k}{\etv^2} \ln \pf{k}{\de}}$, 
$t\ge T:=t_0+\rc \al \ln \pf{4R}{\etv^2}$ and $\TV(\pi,\nu) \le \fc{\etv}{16}$, with probability $\ge1-\de$ over the randomness of the sample, 
    \[
\TV(\mu_t, \pi) \le \etv.
    \]
\end{theorem}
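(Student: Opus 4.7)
The plan is to reduce to \Cref{thm:main} via a maximal coupling between $\nu$ and $\pi$. Construct, independently for each $j=1,\ldots,n$, a coupled pair $(y_j,y_j')$ on a common probability space such that $y_j\sim\nu$, $y_j'\sim\pi$, and $\Pr{y_j\ne y_j'}=\TV(\nu,\pi)\le\etv/16$. Then marginally $y_1,\ldots,y_n\sim\nu$ i.i.d., as in the hypothesis, while $y_1',\ldots,y_n'\sim\pi$ i.i.d., so we can apply the original theorem to the latter sample and then control the discrepancy introduced by the coupling.

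Let $B=\set{j:y_j\ne y_j'}$; since $|B|$ is a sum of $n$ independent Bernoullis with mean at most $\etv/16$, a multiplicative Chernoff bound gives $|B|/n\le \etv/8$ except with probability $\exp(-\Om(n\etv))$, which is at most $\de/2$ under the hypothesis $n=\Om(\frac{k}{\etv^2}\ln\frac{k}{\de})$. Set $\mu_0'=\rc n\sumo jn\de_{y_j'}$ and $\mu_t'=\mu_0'P_t$. I would then apply \Cref{thm:main} to the i.i.d.\ $\pi$-samples $(y_j')_{j=1}^n$ with target TV error $\etv/\sqrt{2}$ in place of $\etv$: the required sample size is still $\Om(\frac{k}{\etv^2}\ln\frac{k}{\de})$ up to absorbed constants, and the time threshold becomes $t_0+\rc\al\ln(2R/(\etv/\sqrt{2})^2)=t_0+\rc\al\ln(4R/\etv^2)=T$, matching the stated $T$ exactly. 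Hence for $t\ge T$, with probability at least $1-\de/2$ over the coupled samples, $\TV(\mu_t',\pi)\le\etv/\sqrt{2}$.

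Finally, I compare $\mu_t$ and $\mu_t'$. For $j\notin B$ we have $\de_{y_j}=\de_{y_j'}$, so the two empirical measures $\mu_0$ and $\mu_0'$ differ only on indices in $B$, giving $\TV(\mu_0,\mu_0')\le|B|/n$. Since $P_t$ is a Markov kernel it contracts total variation, so $\TV(\mu_t,\mu_t')\le\TV(\mu_0,\mu_0')\le\etv/8$. The triangle inequality then yields
\[
\TV(\mu_t,\pi)\le \TV(\mu_t,\mu_t')+\TV(\mu_t',\pi)\le \frac{\etv}{8}+\frac{\etv}{\sqrt{2}}<\etv,
\]
and a union bound over the two failure events gives the overall probability $\ge 1-\de$.

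The argument is essentially routine given \Cref{thm:main}; there is no genuine obstacle, only the bookkeeping of budgeting the TV slack so that the $\etv/\sqrt{2}$ target for the inner application, combined with the $\etv/8$ coupling error, stays strictly below $\etv$ while the time threshold remains exactly $T=t_0+\rc\al\ln(4R/\etv^2)$. This is precisely the reason the factor $4$ (rather than $2$) appears inside the logarithm in the definition of $T$.
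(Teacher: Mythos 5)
Your proposal is correct and is essentially the paper's own argument: a maximal coupling of $\nu$ with $\pi$, a Chernoff bound showing at most an $\etv/8$ fraction of mismatched samples, convexity/data-processing to bound $\TV(\mu_t,\mu_t')$, an application of \Cref{thm:main} to the coupled $\pi$-samples, and a union bound plus triangle inequality. The only (cosmetic) difference is bookkeeping: the paper invokes the first display of \Cref{thm:main} at the original $\etv$ and evaluates it at $t\ge T$ to get $7\etv/8$, whereas you rescale $\etv\to\etv/\sqrt{2}$ in the ``in particular'' clause, which strictly speaking also tightens the warm-start requirement on $\Ombd$ by a constant factor---a harmless slip, avoidable by arguing as the paper does.
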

\begin{proof}
    We use a coupling argument. Let $y_1^\pi,\ldots, y_n^\pi\sim \pi$ be iid, and couple these variables with $y_1,\ldots, y_n$ such that $\Pj(y_i^\pi\ne y_i) = \TV(\mu,\nu)$. By a Chernoff bound,
    \[
\Pj\pa{\ab{\bc{i:y_i^\pi \ne y_i}}\le \fc{\etv}8 n}
\ge 1-e^{-\Om(\etv n)}.
    \]
    Thus for $n=\Om\pa{\rc{\etv}\ln \pf 1\de}$, this is $\ge 1-\fc\de2$.
    Under this event, by triangle inequality
    \[
\TV\pa{\mu_T, \rc n \sumo in \de_{y_i}^\pi P_T} =\TV\pa{\rc n \sumo in \de_{y_i} P_T, \rc n \sumo in \de_{y_i}^\pi P_T}  \le \fc{\etv}8 .
    \]
    Also with probability $\ge 1-\fc\de2$, by \Cref{thm:main},
    \[
\TV\pa{ \rc n \sumo in \de_{y_i}^\pi P_T , \pi}\le
\sqrt{\fc{\etv^2}4 + e^{-\al (t-t_0)}R} + \fc{\etv}{16k}\le \fc{7\etv}8.
    \]
    The theorem follows from a union bound and triangle inequality.
\end{proof}

\section{Markov chain perturbation}\label{s:perturb}

We show that for both Langevin and Glauber dynamics, if the dynamics are perturbed within some error, and the chain is started at the original stationary distribution $\pi$, then the KL divergence between $\pi$ and the resulting distribution grows at most linearly in this perturbation. By an averaging and concentration argument, this then implies that for most sets of samples drawn from $\pi$, the distribution starting from those samples also stays close. 

\subsection{Error growth when starting from stationary distribution}
For Langevin dynamics, the appropriate notion of error is the $L^2$ error in the score estimate, and we bound the growth in error using Girsanov's Theorem, similar to the analysis for reverse diffusion in \cite{chen2023sampling}.
\begin{remark}[Connection to score matching]
A $L^2$-accurate score function can naturally be obtained from score matching; in particular, \cite{hyvarinen2005estimation} showed via integration by parts that
\[  \frac{1}{2} \E_{X \sim p}[\|\nabla \log p(X) - \nabla \log q(X)\|^2] + K_p = \E_{X \sim p}\left[\Tr \nabla^2 \log q + \frac{1}{2} \|\nabla \log q\|^2\right] \]
where $K_p$ is a constant independent of $p$. If we consider the empirical analogue of the right hand side, then this gives Hyv\"arinen's score matching objective which can be optimized over a finite dataset, and standard tools from statistical learning theory can be used to recover bounds on the population score matching loss, i.e. the $L^2$-error of the score estimate (see e.g. proof of Theorem 1 of \cite{koehler2022statistical}).
\end{remark}

We let $\cL{}{s}{t}$ denote Langevin diffusion with score estimate $s$, satisfying the SDE
\[
d\cL{}{s}{t}= 
s(\cL{}{s}{t}) \,dt + 
\sqrt2 \,dB_t.
\]
This reduces to the usual Langevin diffusion with stationary distribution $\pi\propto e^{-V}$ when $s = -\nb V$.
We denote the discretized Langevin dynamics with score estimate $s$ and  given step size $h$ by $\dL{}{s}{t}$, and extend it to continuous time by interpolation:
\[
d\dL{}{s}{t} = 
s( \dL{}{s}{\fl{t/h}h})\,dt
+ \sqrt2 \,dB_t.
\]
In both cases, we denote the initial distribution (the distribution at $t=0$) as a superscript.

\begin{lemma}
\label{l:ld-score-error}
Let $\pi\propto e^{-V}$ where $V$ is $\be$-smooth. 
Let $s$ be such that $\E_{\pi}\ve{s - (-\nb V)}^2\le \esc^2$. 
Then we have
\begin{align*}
    \KL(\cal L ((\cL {\pi}{-\nb V}{t})_{0\le t\le T})\| 
        \cal L ((\cL {\pi}{s}{t})_{0\le t\le T}))
        &\le T\esc^2\\
    \KL(\cal L ((\cL {\pi}{-\nb V}{t})_{0\le t\le T})\| 
        \cal L ((\dL {\pi}{s}{t})_{0\le t\le T})) &\le 8 T (  h^2 d\be + h d ) \be^2 + 2T\esc^2 \\
        &\quad \text{ when }T=Nh, \, N\in \N.
        \qedhere 
\end{align*}
\end{lemma}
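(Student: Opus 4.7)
My plan is to prove both bounds via Girsanov's theorem, differing only in how the KL-integrand $\| -\nabla V(\bar X_t) - s(\text{stuff}) \|^2$ is controlled. The key observation for both parts is that since we initialize at the stationary measure $\pi$ and run \emph{along the true} Langevin dynamics $\bar X^{\pi,-\nabla V}$, every marginal $\bar X_t^{\pi,-\nabla V}$ equals $\pi$. This is what lets us convert pathwise expectations into expectations under $\pi$, where the score error hypothesis $\E_\pi \|s-(-\nabla V)\|^2 \le \esc^2$ directly applies.

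For part (1), I would couple the two SDEs to the same Brownian motion and appeal to Girsanov (after a standard truncation argument to verify Novikov's condition, since $V$ is $\beta$-smooth so $\nabla V$ is Lipschitz; $s$ can be assumed similarly regular or one may argue by approximation). This gives
\[
\KL\bigl(\mathcal L((\cL{}{-\nabla V}{t})_{t\le T}) \,\|\, \mathcal L((\cL{}{s}{t})_{t\le T})\bigr) = \tfrac14 \int_0^T \E\bigl\|s(\bar X_t^{\pi,-\nabla V}) + \nabla V(\bar X_t^{\pi,-\nabla V})\bigr\|^2\,dt,
\]
and since $\bar X_t^{\pi,-\nabla V}\sim\pi$, the integrand is at most $\esc^2$, yielding $T\esc^2/4 \le T\esc^2$.

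For part (2), I would write the piecewise-constant-drift dynamics $\dL{}{s}{t}$ as an SDE with (path-dependent) drift $s(\dL{}{s}{\fl{t/h}h})$, apply Girsanov the same way, and then use $2\|a-c\|^2 \le 2\|a-b\|^2 + 2\|b-c\|^2$ to split the integrand:
\[
\bigl\|-\nabla V(\bar X_t) - s(\bar X_{\fl{t/h}h})\bigr\|^2
\le 2\beta^2 \|\bar X_t - \bar X_{\fl{t/h}h}\|^2 + 2\bigl\|\nabla V(\bar X_{\fl{t/h}h}) + s(\bar X_{\fl{t/h}h})\bigr\|^2.
\]
The second piece is again bounded by $2\esc^2$ since $\bar X_{\fl{t/h}h}\sim \pi$. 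For the first piece, I would expand $\bar X_t - \bar X_{\fl{t/h}h}$ via the Langevin SDE, use $(a+b)^2 \le 2a^2+2b^2$ together with Cauchy--Schwarz on the drift integral and the standard second moment of Brownian increments, obtaining $\E \|\bar X_t - \bar X_{\fl{t/h}h}\|^2 \le 2h^2 \E_\pi\|\nabla V\|^2 + 4dh$. The crucial step is then Stein's identity (integration by parts): $\E_\pi\|\nabla V\|^2 = \E_\pi \Delta V \le d\beta$ by $\beta$-smoothness, giving the displacement bound $\le 2h^2 d\beta + 4dh$. Integrating and summing produces the stated $8T(h^2 d\beta + hd)\beta^2 + 2T\esc^2$ (with some slack, as Girsanov's $\tfrac14$ prefactor actually yields a strictly better constant).

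The main obstacle I anticipate is technical rather than conceptual: verifying the integrability needed to apply Girsanov rigorously for a discontinuous, path-dependent drift like $s(\bar X_{\fl{t/h}h})$. I would address this by a standard localization/truncation argument, approximating $s$ by bounded Lipschitz functions and passing to the limit, relying on $\beta$-smoothness of $V$ and the $L^2(\pi)$-closeness of $s$ to $-\nabla V$ (plus mild regularity assumptions on $s$ implicit in the problem setup). Everything else is bookkeeping: a triangle-inequality split and a one-step moment bound for Langevin starting at stationarity.
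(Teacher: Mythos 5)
Your proposal is correct and follows essentially the same route as the paper: Girsanov's theorem applied along the true Langevin path started at $\pi$ (the paper likewise sidesteps Novikov via the argument in \cite{chen2023sampling}), then for the discretized chain the same triangle-inequality split into a score-error term and a $\be^2\E\ve{\bar X_t-\bar X_{t_k}}^2$ term, with the one-step displacement bounded using $\E_\pi\ve{\nb V}^2\le \be d$ (the paper's \cref{l:norm bound}, which is exactly your integration-by-parts step). The only cosmetic differences are your explicit $\tfrac14$ Girsanov prefactor and the localization discussion, neither of which changes the argument.
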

\begin{proof}
    By Girsanov's Theorem (we do not need to check Novikov's condition since we can use \cite[Eq (5.5) and Theorem 9]{chen2023sampling}), we have that
    \begin{align*}
        \KL(\cal L ((\cL {\pi}{-\nb V}{t})_{0\le t\le T})\| 
        \cal L ((\cL {\pi}{s}{t})_{0\le t\le T}))
        &\le \int_0^T \E \ve{\nb \ln \pi(\cL{\pi}{-\nb V}{t}) - s(\cL{\pi}{-\nb V}{t})}^2\,dt \le T\esc^2
    \end{align*}
    since the distribution of $\cL{\pi}{-\nb V}{t}$ is $\pi$.
    Let $t_k = kh$. 
    Write $\cL{\pi}{-\nb V}{t}$ as $\bar{X}_t$ for short. 
    Also by Girsanov's Theorem,
    \begin{align}
    \nonumber
        &\KL(\cal L ((\cL {\pi}{-\nb V}{t})_{0\le t\le T})\| 
        \cal L ((\dL {\pi}{s}{t})_{0\le t\le T})) \le 
        \sumz k{N-1}\int_{t_{k}}^{t_{k+1}}
        \E \ve{\nb \ln \pi(\bar{X}_{t}) - s(\bar{X}_{t_k})}^2\,dt\\
    \label{e:compare-g0}
        &\le 2\sumz k{N-1} \int_{t_k}^{t_{k+1}} \E \ve{\nb \ln \pi(\bar{X}_{t}) - \nb \ln \pi(\bar{X}_{t_k})}^2 + 
        \ve{\nb \ln \pi(\bar{X}_{t_k}) - s(\bar{X}_{t_k})}^2 \,dt\\
        &\le 2\pa{\sumz k{N-1} \int_{t_{k}}^{t_{k+1}} \be^2 \E \ve{\bar{X}_{t}-\bar{X}_{t_k}}^2\,dt} + 2T\esc^2.
        \label{e:compare-g1}
    \end{align}
    Now 
    \begin{align}
    \nonumber
        \E \ve{\bar{X}_{t}-\bar{X}_{t_k}}^2 
        & = \ve{\int_{t_k}^t -\nb V(\bar{X}_s)\,ds + \int_{t_k}^t \sqrt 2 \,dB_t}^2\\
    \nonumber
        &\le 2 \E \ve{\int_{t_k}^t \nb V(\bar{X}_s)\,ds}^2 + 4(t-t_k)d \\
    \nonumber 
        &\le 2(t-t_k)^2 \E \ve{\nb V(\bar{X}_s)}^2 + 4(t-t_k)d \\
    \nonumber 
        &\le 2(t-t_k)^2\be d + 4 (t-t_k) d 
    \end{align}
    using the bound $\E_\pi \ve{-\nb V}^2 \le \be d$ from \Cref{l:norm bound}.
    Hence
    \begin{align}
    \int_{t_{k}}^{t_{k+1}} \E \ve{\bar{X}_{t}-\bar{X}_{t_k}}^2 \,dt &\le 2h^3 \be d + 4h^2 d.
    \label{e:compare-g2}
    \end{align}
    Combining \eqref{e:compare-g1} and \eqref{e:compare-g2} gives
    \begin{align*}
        \KL(\cal L ((\cL {\pi}{-\nb V}{t})_{0\le t\le T})\| 
        \cal L ((\cL {\pi}{s}{t})_{0\le t\le T}))
        &\le 
        2\cdot \fc Th \cdot h\cdot( 2 h^2 d\be + 4h d ) \be^2 + 2T\esc^2.
        \qedhere
    \end{align*}
\end{proof}
When $ \pi$ is not smooth but is a mixture of smooth distributions $\pi_i$ with good tail bounds, we have a similar result.

\begin{lemma}\label{l:ld-score-error-mix}
Let $\pi\propto e^{-V} =\sum_i p_i \pi_i$ 
where each $\pi_i=\exp(-V_i)$ is $\beta$-smooth, $p_i>0$, and $\sum_i p_i=1$. 
Let $ G(x)=\max_i \ve{\nb V_i(x)}.$ Suppose  $ \E_{\pi} [G(x)^6] \leq \tilde{G}^6.$

Let $s$ be such that $\E_{\pi}\ve{s - (-\nb V)}^2\le \esc^2$. 
Then we have
\begin{align*}
    \KL(\cal L ((\cL {\pi}{-\nb V}{t})_{0\le t\le T})\| 
        \cal L ((\cL {\pi}{s}{t})_{0\le t\le T}))
        &\le T\esc^2\\
    \KL(\cal L ((\cL {\pi}{-\nb V}{t})_{0\le t\le T})\| 
        \cal L ((\dL {\pi}{s}{t})_{0\le t\le T})) &
        \lesssim T\cdot
        \pa{\be^4 (hd + h^6\tilde G^6 + h^3 d^3) + (h^2 \tilde G^2 + hd) \tilde G^4 + \esc^2}\\
        &\quad 
        \text{ when }T=Nh, \, N\in \N.
\end{align*}
\end{lemma}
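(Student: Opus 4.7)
The first bound is immediate. Applying Girsanov's theorem exactly as in the proof of \Cref{l:ld-score-error}, I have
\[
\KL(\mathcal L((\cL{\pi}{-\nb V}{t})_{0\le t\le T}) \,\|\, \mathcal L((\cL{\pi}{s}{t})_{0\le t\le T})) \le \int_0^T \E \ve{\nb \ln \pi(\cL{\pi}{-\nb V}{t}) - s(\cL{\pi}{-\nb V}{t})}^2\, dt \le T\esc^2,
\]
using $\cL{\pi}{-\nb V}{t}\sim \pi$ by stationarity; note that no smoothness of the individual components was needed.

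For the discretization bound I would follow the structure of \Cref{l:ld-score-error}, writing $\bar X_t := \cL{\pi}{-\nb V}{t}$ and combining Girsanov with $(a+b)^2\le 2a^2+2b^2$ to get
\[
\KL \le 2T\esc^2 + 2\sum_{k=0}^{N-1}\int_{t_k}^{t_{k+1}} \E\ve{\nb\ln\pi(\bar X_t) - \nb\ln\pi(\bar X_{t_k})}^2\,dt.
\]
Since $\nb \ln\pi$ is \emph{not} globally Lipschitz, the crux is to replace the global $\beta$-smoothness step with a local Lipschitz estimate on $u := \nb\ln\pi$. Writing $u(x) = -\sum_i w_i(x)\nb V_i(x)$ with $w_i(x) = p_i\pi_i(x)/\pi(x)$, one computes $\nb w_i = w_i(-\nb V_i - u)$, and another differentiation yields the Jacobian
\[
\nb u = \sum_i w_i\, \nb V_i (\nb V_i)^\top - u u^\top - \sum_i w_i\, \nb^2 V_i.
\]
Since $\ve{u(x)}\le G(x)$ (as $u$ is a convex combination of the $-\nb V_i$), the three terms have operator norm at most $G(x)^2$, $G(x)^2$, $\beta$ respectively, so $\ve{\nb u(x)}\le 2G(x)^2+\beta$. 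Integrating along the segment $\xi_s = y+s(x-y)$ and using $G(\xi_s)\le G(y)+\beta\ve{x-y}$ (a consequence of $\beta$-smoothness of each $\nb V_i$),
\[
\ve{u(x)-u(y)}^2 \lesssim \beta^2\ve{x-y}^2 + G(y)^4\ve{x-y}^2 + \beta^4\ve{x-y}^6.
\]

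Next I would bound the moments of the one-step increment $\bar X_t - \bar X_{t_k} = -\int_{t_k}^t\nb V(\bar X_s)\,ds + \sqrt 2(B_t - B_{t_k})$ using $\ve{\nb V}\le G$ pointwise, Jensen (or the power-mean inequality) on the drift integral, stationarity of $\bar X_s$, and standard Gaussian moments:
\[
\E\ve{\bar X_t - \bar X_{t_k}}^2 \lesssim h^2\tilde G^2 + hd, \qquad \E\ve{\bar X_t - \bar X_{t_k}}^6 \lesssim h^6\tilde G^6 + h^3 d^3.
\]
Plugging into the local Lipschitz inequality and applying H\"older at exponents $(3,3/2)$ to the mixed moment $\E[\ve{\bar X_t - \bar X_{t_k}}^2 G(\bar X_{t_k})^4]$ --- which is exactly where the sixth-moment hypothesis $\E_\pi G^6\le \tilde G^6$ gets used --- gives
\[
\E\ve{u(\bar X_t)-u(\bar X_{t_k})}^2 \lesssim \beta^2(h^2\tilde G^2+hd) + (h^2\tilde G^2+hd)\tilde G^4 + \beta^4(h^6\tilde G^6+h^3 d^3).
\]
Integrating over each step, summing the resulting $N = T/h$ steps, and absorbing $\beta^2 hd$ into $\beta^4 hd$ (and similarly handling $\beta^2 h^2 \tilde G^2$) produces the claimed bound.

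\textbf{The main obstacle} is the local Lipschitz estimate for $\nb \ln\pi$ of a mixture: the effective Lipschitz constant grows quadratically in $G(x)$, which in turn forces a sixth-moment hypothesis and the H\"older step that produces the $\tilde G^4(h^2\tilde G^2+hd)$ and $\beta^4(h^6\tilde G^6+h^3d^3)$ terms. Once this Jacobian computation and the accompanying moment bookkeeping are in place, the rest of the argument parallels the smooth case of \Cref{l:ld-score-error}.
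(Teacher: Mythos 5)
Your proposal is correct and follows essentially the same route as the paper: Girsanov plus a mean-value bound on $\nabla\ln\pi$ via the mixture Hessian estimate $\|\nabla^2 V\|_{\textup{op}}\le \beta+G^2$ (your Jacobian computation is exactly the content of \cref{l:hessian bound for mixture}), then stationarity, increment moment bounds, and H\"older with the sixth-moment hypothesis to handle the cross term. The only differences are cosmetic bookkeeping (you use $\tilde G^2$ where the paper uses $\E_\pi\|\nabla V\|^2\le\beta d$ for the second-moment term), at the same level of constant-absorption the paper itself employs.
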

\begin{proof}
    Note that since $ \ve{\nb V(x)}\leq G(x),$ for $p\leq 6$, $\E_{\pi}[\ve{\nb V(x)}^p]\leq \E_{\pi}[G(x)^p] \leq \tilde{G}^p$ by H\"older's inequality.
    Let $\bar{X}_t:= \cL{\pi}{-\nb V}{t}$.
    It suffices to bound \eqref{e:compare-g0}.
    By the mean value inequality, 
    \[ \ve{\nb V(\bar{X}_{t}) - \nb V(\bar{X}_{t_k})}\leq \ve{\bar{X}_t - \bar{X}_{t_k}} \max_{y = \eta \bar{X}_t + (1-\eta) \bar X_{t_k}, \eta\in [0,1] } \norm{\na^2 V(y)}_{\textup{op}}.\]
By \cref{l:hessian bound for mixture} and H\"older's inequality,
\[\norm{\na^2 V(y)}_{\textup{op}} \leq G(\bar{X}_t)^2 +G (\bar{X}_{t_k})^2 +\beta. \]
 For $y=\eta \bar X_t + (1-\eta) X_{t_k}$, $\eta\in [0,1]$, 
\begin{align*}
    \ve{\nb^2 V(y)}_{\textup{op}}
    &\le \be + G(y)^2\\
    &\le \be + (G(\bar X_t) + \be \ve{y-\bar X_t})^2\\
    &\le \be + 2G(\bar X_t)^2 + 2\be^2 \ve{\bar X_{t_k} - \bar X_t}^2
\end{align*}
Then,
\begin{align*}
    \ve{\nb V(\bar X_t) - \nb V(\bar X_{t_k})}^2
    &\le 
    \ve{\bar X_{t_k}-\bar X_t}^2
    \pa{\be^2 + 2G(\bar X_t)^2 + 2\be^2 \ve{\bar X_{t_k} - \bar X_t}^2}^2\\
    &\le 
    3\ve{\bar X_{t_k}-\bar X_t}^2
    \pa{
    \be^4 + 4G(\bar X_t)^4 + 4\be^4 \ve{\bar X_{t_k} - \bar X_t}^4
    }
\end{align*}
Hence
\begin{align*}
&\E\ve{\nb V(\bar X_t) - \nb V(\bar X_{t_k})}^2\\
        &\le
    3\be^4 \E[\ve{\bar X_{t_k}-\bar X_t}^2] + 12\be^4 \E[\ve{\bar X_{t_k}-\bar X_t}^6]
    + 12 \E[\ve{\bar X_{t_k}-\bar X_t}^2G(\bar X_t)^4] \\
    &\le 
    3\be^4 \E[\ve{\bar X_{t_k}-\bar X_t}^2] + 12\be^4 \E[\ve{\bar X_{t_k}-\bar X_t}^6]
    +
    12 \E[\ve{\bar X_{t_k}-\bar X_t}^6]^{1/3} \tilde G^4
\end{align*}
where the last step uses H\"older's inequality.

 We bound for $p\ge 1$
  \begin{align}
    \nonumber
        \E \ve{\bar{X}_{t}-\bar{X}_{t_k}}^{2p} 
        & = \E \left[\ve{\int_{t_k}^t -\nb V(\bar{X}_s)\,ds + \int_{t_k}^t \sqrt 2 \,dB_t}^{2p}\right ]\\
    \nonumber
        &\le 2^{2p-1} \E \left[\ve{\int_{t_k}^t \nb V(\bar{X}_s)\,ds}^{2p}\right] + 2^{3p-1} (t-t_k)^p d^p \\
    \nonumber 
        &\le 2^{2 p-1}(t-t_k)^{2p} \E \ve{\nb V(\bar{X}_s)}^{2p} + 2^{3p-1}(t-t_k)^{p} d^p \\
    \nonumber 
        &\leq 2^{2p-1} (t-t_k)^{2p} \tilde{G}^{2p} +  2^{3p-1} (t-t_k)^p d^p
    \end{align} 
    For $p=1,$ using the bound  $\E{\ve{\nb  V}^2}\leq \beta d$ from \Cref{l:norm bound} gives
    \[\E\ve{\bar{X}_t -\bar{X}_{t_k}}^2 \leq 2 (t-t_k)^2 \beta d + 4 (t-t_k) d \]
    Hence,
    \begin{align*}
        \E\ve{\nb V(\bar X_t) - \nb V(\bar X_{t_k})}^2
        &\lesssim 
        \be^4 (hd + h^6\tilde G^6 + h^3 d^3) + (h^2 \tilde G^2 + hd) \tilde G^4.
    \end{align*}
    Finally, substituting into \eqref{e:compare-g0} gives
   the desired bound.
\end{proof}
Next we show a similar error bound for the Gibbs sampler (Glauber dynamics) that if it is executed with a distribution $\mu$ which approximately matches the conditional law of $\nu$. This is useful because pseudolikelihood estimation can naturally produce such a $\mu$ from samples. 

\begin{lemma}\label{lem:gibbs-approximate}
Suppose $t \ge 0$, $\ep  > 0$ and that $\nu$ and $\mu$ are distributions on a product space $\Sigma_1 \otimes \cdots \otimes \Sigma_n$ such that
\begin{equation}\label{eqn:eps-accurate-gibbs}
\frac{1}{n} \sum_{i = 1}^n \E_{X \sim \nu} \KL(\nu(X_i = \cdot \mid X_{\sim i}), \mu(X_i = \cdot \mid X_{\sim i})) \le \ep . 
\end{equation}
Let $X^{(0)},\ldots,X^{(t)}$ is the trajectory of the $\nu$-Gibbs sampler initialized from $X^{(0)} \sim \nu$, and let $\tilde X^{(0)}, \ldots, \tilde X^{(t)}$ be the trajectory of the $\mu$-Gibbs sampler with the same initialization $\tilde X^{(0)} = X^{(0)}$. Then the KL divergence between the law of the trajectories can be bounded as
\begin{equation} \KL(\mathcal{L}(X^{(0)}, \ldots, X^{(t)}), \mathcal{L}(\tilde X^{(0)}, \ldots, \tilde X^{(t)})) \le t \ep . 
\end{equation}
Furthermore, for any $\delta > 0$, for the trajectories conditioned on the initial point $X^{(0)} \sim \nu$, we have
\begin{equation} \nu\left(\KL(\mathcal{L}(X^{(0)}, \ldots, X^{(t)} \mid X^{(0)}), \mathcal{L}(\tilde X^{(0)}, \ldots, \tilde X^{(t)} \mid \tilde X^{(0)} = X^{(0)}) > t \ep /\delta \right) \le \delta. \label{eqn:good-event-markov}
\end{equation}
\end{lemma}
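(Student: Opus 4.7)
The key tool is the chain rule for KL divergence applied step-by-step along the Gibbs trajectory. Let me write a single step of the Gibbs sampler as: pick $i \in [n]$ uniformly at random, then resample coordinate $i$ from the conditional law. Writing $K_\nu$ and $K_\mu$ for the one-step transition kernels of the two chains, the KL divergence between the one-step kernels at a state $x$ satisfies
\begin{align*}
\KL(K_\nu(x, \cdot), K_\mu(x, \cdot))
= \frac{1}{n} \sum_{i=1}^n \KL(\nu(X_i = \cdot \mid X_{\sim i} = x_{\sim i}), \mu(X_i = \cdot \mid X_{\sim i} = x_{\sim i})),
\end{align*}
where I use the fact that KL between mixtures over a common discrete random variable (the choice of $i$) equals the average KL of the components when the mixing variable is copied exactly in both laws (which is the case here, since both chains use the same random coordinate choice).

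\textbf{Step 1: Telescoping via the chain rule.} Apply the chain rule for KL divergence to the joint laws of $(X^{(0)}, \ldots, X^{(t)})$ and $(\tilde X^{(0)}, \ldots, \tilde X^{(t)})$. Since the two trajectories share the same initial distribution, the initial contribution vanishes, and
\begin{align*}
\KL(\mathcal{L}(X^{(0)}, \ldots, X^{(t)}), \mathcal{L}(\tilde X^{(0)}, \ldots, \tilde X^{(t)}))
= \sum_{s=0}^{t-1} \E_{X^{(s)}}\bigl[ \KL(K_\nu(X^{(s)}, \cdot), K_\mu(X^{(s)}, \cdot)) \bigr].
\end{align*}

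\textbf{Step 2: Exploit stationarity.} Because $X^{(0)} \sim \nu$ and $\nu$ is the stationary distribution of the $\nu$-Gibbs sampler, every marginal $X^{(s)} \sim \nu$ for $s = 0, 1, \ldots, t-1$. Substituting the formula for the one-step KL and using the assumption \eqref{eqn:eps-accurate-gibbs} on the per-coordinate average conditional KL under $\nu$, each summand is bounded by $\ep$, yielding the first claim $\KL \le t\ep$.

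\textbf{Step 3: Conditional bound via Markov.} For the second claim, observe that since both chains share the same initial state, the chain rule gives
\begin{align*}
\KL(\mathcal{L}((X^{(s)})_s), \mathcal{L}((\tilde X^{(s)})_s)) = \E_{X^{(0)} \sim \nu}\bigl[ \KL(\mathcal{L}((X^{(s)})_s \mid X^{(0)}), \mathcal{L}((\tilde X^{(s)})_s \mid \tilde X^{(0)} = X^{(0)})) \bigr].
\end{align*}
Applying Markov's inequality to this nonnegative random variable, whose expectation is at most $t\ep$ by Step~2, gives $\nu$-probability at most $\delta$ that it exceeds $t\ep/\delta$, which is exactly \eqref{eqn:good-event-markov}.

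\textbf{Anticipated obstacle.} The proof is essentially routine once the right decomposition is set up; the only subtle point is that one needs the randomness of the coordinate choice $i$ to be coupled identically between the two chains, so that the one-step KL computes as an average over $i$ rather than having an extra term from a KL between the laws of $i$. This coupling is natural and was implicit in the statement (both trajectories use the ``same'' Gibbs sampler structure with only the conditional update rule differing).
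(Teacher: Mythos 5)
Your proposal is correct and follows essentially the same route as the paper: chain rule for KL along the Markovian trajectory, stationarity of $\nu$ under its own Gibbs sampler to invoke the assumed average conditional-KL bound at every step, and Markov's inequality applied to the conditional-on-$X^{(0)}$ KL for the second claim. One small imprecision: your claimed \emph{equality} between the one-step kernel KL and the coordinate-averaged conditional KL should in general be an inequality $\le$ (by joint convexity, or by the data-processing inequality applied to the trajectory augmented with the coordinate choices), but since only the upper bound is needed this does not affect the argument.
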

\begin{proof}
By the chain rule for the KL divergence (see e.g. \cite{cover1999elements}) and the fact that the dynamics are Markovian, we have
\begin{align*} 
&\KL(\mathcal{L}(X^{(0)}, \ldots, X^{(t)}), \mathcal{L}(\tilde X^{(0)}, \ldots, \tilde X^{(t)})) \\
&= \sum_{s = 1}^t \E_{X^{(s - 1)} \sim \nu} \KL(\nu(X^{(s)} = \cdot \mid X^{(s - 1)}), \mu(\tilde X^{(s)} = \cdot \mid \tilde X^{(s - 1)} = X^{(s - 1)}) \\
&\le t \ep 
\end{align*}
where we used the assumption and the fact that a step of the Gibbs sampler picks a coordinate $i \sim Uni [n]$ and then samples from the corresponding conditional law. 

The second claim in the theorem follows by applying Markov's inequality, using by the chain rule that 
\begin{align*} 
&\KL(\mathcal{L}(X^{(0)}, \ldots, X^{(t)}), \mathcal{L}(\tilde X^{(0)}, \ldots, \tilde X^{(t)})) \\
&= \E_{X^{(0)} \sim \nu} \KL(\mathcal{L}(X^{(0)}, \ldots, X^{(t)} \mid X^{(0)}), \mathcal{L}(\tilde X^{(0)}, \ldots, \tilde X^{(t)} \mid \tilde X^{(0)} = X^{(0)})). \qedhere 
\end{align*}
\end{proof}

\subsection{Error growth when starting from samples}


\begin{theorem}\label{thm:approximate-sample-init}
    Suppose that we have a bound 
    \[
\KL(\cal L((X^\pi_t)_{0\le t\le T}) \| \cal L((\tilde X^\pi_t)_{0\le t\le T})) \le T\ep\le 1,
    \]
    where $X^\pi_t$ and $\tilde X^\pi_t$ denote Markov chains or processes $X_t$ and $\tilde X_t$ initialized at $\pi$ (the index set can be $\N_0$ or $\R_{\ge 0}$). 

    Let $\hat \pi$ be the empirical distribution of $m$ i.i.d. samples from $\pi$.
    Then with probability at least $1 - \gamma$ over the randomness of $\hat \pi$, the law of the trajectories initialized at $\hat \pi$ satisfy
\[ \TV (\mathcal L((X^{\hat \pi}_t)_{0\le t\le T}\mid \hat \pi), \mathcal L((\tilde X^{\hat \pi}_t)_{0\le t\le T} \mid \hat \pi)) \le \sqrt{T\ep} + \fc{2\log(1/\gamma)}{m}. \] 
\end{theorem}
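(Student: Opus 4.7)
The proof naturally splits into two steps: a reduction to a concentration question via the chain rule for KL divergence, and then concentration of a bounded empirical average via Bernstein's inequality, exploiting that total variation distance lies in $[0,1]$.

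\textbf{Step 1 (From trajectory KL to expected TV from a single sample).} For $y$ in the support of $\pi$, set $K(y) := \KL(\mathcal L((X^y_t)_{0\le t\le T})\|\mathcal L((\tilde X^y_t)_{0\le t\le T}))$ and $T(y) := \TV(\mathcal L((X^y_t)_{0\le t\le T}),\mathcal L((\tilde X^y_t)_{0\le t\le T}))$, so $T(y) \in [0,1]$. Since both trajectories have the same initial distribution $\pi$, the chain rule for KL divergence (conditioning on $X_0 = y$) gives
\[
\E_{Y\sim\pi}[K(Y)] = \KL(\mathcal L((X^\pi_t)_{0\le t\le T})\|\mathcal L((\tilde X^\pi_t)_{0\le t\le T})) \le T\ep.
\]
Pinsker's inequality gives $T(y)\le \sqrt{K(y)/2}$, and Jensen's inequality applied to the concave square root yields the mean bound $\E_\pi[T(Y)] \le \sqrt{T\ep/2}$. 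Similarly, squared Pinsker $T(y)^2\le K(y)/2$ gives the variance bound $\E_\pi[T(Y)^2]\le T\ep/2$.

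\textbf{Step 2 (Convexity reduces to a sum).} Conditional on $\hat\pi=\{Y_1,\ldots,Y_m\}$, the trajectories driven by $X_\cdot$ and $\tilde X_\cdot$ are mixtures of $\mathcal L((X^{Y_i}_t)_t)$ and $\mathcal L((\tilde X^{Y_i}_t)_t)$ respectively, so by convexity of total variation,
\[
\TV(\mathcal L((X^{\hat\pi}_t)_{0\le t\le T}\mid\hat\pi),\mathcal L((\tilde X^{\hat\pi}_t)_{0\le t\le T}\mid\hat\pi)) \le \frac{1}{m}\sum_{i=1}^m T(Y_i)=:S.
\]

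\textbf{Step 3 (Bernstein concentration).} The variables $T(Y_i)$ are i.i.d., bounded in $[0,1]$, with mean $\mu\le \sqrt{T\ep/2}$ and variance $\sigma^2\le T\ep/2$. Bernstein's inequality gives, with probability at least $1-\gamma$,
\[
S \le \mu + \sqrt{\frac{2\sigma^2\log(1/\gamma)}{m}} + \frac{2\log(1/\gamma)}{3m}.
\]
Bounding $\sqrt{2\sigma^2\log(1/\gamma)/m}\le \sqrt{T\ep\log(1/\gamma)/m}$ and applying AM--GM with $T\ep\le 1$ absorbs this cross term into $\sqrt{T\ep}$ and $\log(1/\gamma)/m$ contributions, yielding the stated bound $\sqrt{T\ep}+2\log(1/\gamma)/m$ (possibly after adjusting explicit constants).

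\textbf{Main obstacle.} The structural part is essentially automatic once one realizes that the chain rule makes $Y\sim\pi$ the right averaging variable; the slightly delicate piece is tracking constants in the Bernstein/AM--GM step to match the clean bound $\sqrt{T\ep}+\tfrac{2\log(1/\gamma)}{m}$, and verifying that initial state independence of $\hat\pi$ allows conditioning inside the mixture step of the convexity argument.
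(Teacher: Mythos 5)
Your proposal is correct and follows essentially the same route as the paper's proof: convexity of TV to reduce to the empirical average of per-sample trajectory TV distances, the chain rule for KL plus Pinsker and Jensen to bound the mean and variance of that average, and Bernstein's inequality (using $\TV\le 1$) together with an AM--GM step and $T\ep\le 1$ to absorb the cross term into $\sqrt{T\ep}+\tfrac{2\log(1/\gamma)}{m}$. The constant bookkeeping you flag does in fact close, exactly as in the paper's choice $u=\max\{\sqrt{2T\ep\ln(1/\gamma)/m},\,4\ln(1/\gamma)/(3m)\}$.
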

\begin{proof}
By convexity of TV distance, 
\begin{align*}
    \TV (\mathcal L((X^{\hat \pi}_t)_{0\le t\le T}\mid \hat \pi), \mathcal L((\tilde X^{\hat \pi}_t)_{0\le t\le T} \mid \hat \pi)) 
    \le 
    \rc m \sumo im \TV  
    (\mathcal L((X^{Y_i}_t)_{0\le t\le T}\mid Y_i), \mathcal L((\tilde X^{Y_i}_t)_{0\le t\le T} \mid Y_i)) 
\end{align*}
where $\hat \pi = \rc m \sumo im \de_{Y_i}$, $Y_i$ being independent draws from $\pi$. Now for $Y\sim \pi$, 
\begin{align*}
    &\Var_{Y\sim \pi}(\TV  
    (\mathcal L((X^{Y}_t)_{0\le t\le T}), \mathcal L((\tilde X^{Y}_t)_{0\le t\le T})))\\
    &\le \E_{Y\sim \pi} [\TV  
    (\mathcal L((X^{Y}_t)_{0\le t\le T}), \mathcal L((\tilde X^{Y}_t)_{0\le t\le T}))^2]\\
    &\le \rc 2 \E_{Y\sim \pi} [\KL 
    (\mathcal L((X^{Y}_t)_{0\le t\le T}) \| \mathcal L((\tilde X^{Y}_t)_{0\le t\le T}))] &\text{by Pinsker's inequality}\\
    &= \rc 2 \KL(\mathcal L((X^{\pi}_t)_{0\le t\le T})\| \mathcal L((\tilde X^{\pi}_t)_{0\le t\le T})) \le \rc 2 T\ep 
    &\text{by chain rule for KL}.
\end{align*}
This calculation also shows, by  Jensen's inequality
\begin{multline*}
\E_{Y\sim \pi} [\TV  
    (\mathcal L((X^{Y}_t)_{0\le t\le T}), \mathcal L((\tilde X^{Y}_t)_{0\le t\le T}))]\\
\le \sqrt{\E_{Y\sim \pi} [\TV  
    (\mathcal L((X^{Y}_t)_{0\le t\le T}), \mathcal L((\tilde X^{Y}_t)_{0\le t\le T}))^2]}
    \le \sqrt{\rc 2 T\ep}.
\end{multline*}
Since TV distance is bounded by 1, by Bernstein's inequality,
\begin{align*}
\Pj\pa{
    \rc m \sumo im \TV  
    (\mathcal L((X^{Y_i}_t)_{0\le t\le T}\mid Y_i), \mathcal L((\tilde X^{Y_i}_t)_{0\le t\le T} \mid Y_i))
 \ge \sqrt{\rc 2 T\ep} + u
}&\le  \exp\ba{-\fc{mu^2/2}{\rc 2 T\ep + u/3}}.
\end{align*}
Taking $u = \max\bc{\sfc{2T\ep \ln (1/\ga)}{m}, \fc{4\ln (1/\ga)}{3m}}$, we get that this is $\le \ga$.
Hence, with probability $\ge 1-\ga$, 
\[
\TV (\mathcal L((X^{\hat \pi}_t)_{0\le t\le T}\mid \hat \pi), \mathcal L((\tilde X^{\hat \pi}_t)_{0\le t\le T} \mid \hat \pi)) \le \sqrt{\rc 2 T\ep} + 
\max\bc{\sfc{2T\ep \ln (1/\ga)}{m}, \fc{4\ln (1/\ga)}{3m}}.
\]
Finally, using the inequality $2ab\le a^2+b^2$, 
\[
\sfc{2T\ep \ln (1/\ga)}{m} 
\le \rc 4 T\ep + \fc{2\ln (1/\ga)}{m}
\le \rc 4 \sqrt{T\ep} + \fc{2\ln (1/\ga)}{m}
\]
giving the desired conclusion.
\end{proof}

\section{Application: Langevin dynamics with estimated score}\label{s:langevin}
In this section, we illustrate how to apply our general results to sample a mixture of Poincar\'e distributions with an approximate score function. We also show how to obtain stronger results when the components satisfy the stronger log-Sobolev inequality. 

\begin{assumption}[Mixture assumption]\label{assumption:mixture}
Let $\pi = \sumo ik p_i\pi_i$ be a mixture of distributions $\pi_i \propto e^{-V_i(x)}$, where $p_i>0$ for each $i$ and $\sumo ik p_i=1$. Suppose each 
$V_i$ is $\be$-smooth and satisfies either:
\begin{itemize}
    \item $\mathsf{PI}\prc\al$: A Poincar\'e inequality with constant $\CP\le \rc \al$.
    \item $\mathsf{LSI}\prc\al$: A log-Sobolev inequality with constant $\CLS\le \rc \al$.
\end{itemize}
We let $\ka:=\fc{\be}{\al}$. 
Let $\mean_i:= \E_{\pi_i}x$ and $\mode_i\in \amin V_i$. Suppose $L$ is such that the means satisfy
\[\max_{i,j}\ve{\mean_i - \mean_j}\le L.\]
\end{assumption}
Note that we assume a bound on the distance between means rather than modes, because it is more natural to establish concentration around the mean. 

Given \Cref{assumption:mixture}, we observe that 
from \Cref{l:pi-conc} in \cref{s:fi-conseq}, we obtain the concentration bound
\begin{align}\label{e:conc}
\Pj\pa{\min_{i} \ve{x-\E_{\pi_i}x} \ge R_\ep} 
\le \ep, \quad \text{where }R_\ep:= \sqrt{\rc\al} \pa{\sqrt d + \ln \pf 3\ep}.
\end{align}

\subsection{Warm start}
\label{s:warm}
To apply our result for the Langevin dynamics initialized from samples, we need to control the R\'enyi distance to stationarity of  the distribution of the Langevin diffusion initialized at a typical sample $x.$
\begin{lemma}\label{l:riy}
    Let $\nu$ be the measure of $\cal N(y,\si^2 I)$. If $\pi(x) \propto \exp(-V(x))$ is $\be$-smooth and satisfies a Poincar\'e inequality with constant $\CP\le \rc \al$ and $\si^2\be \le \rc 2$, 
    then 
    \[
\Renyi_{\iy}(\nu \|\pi)\le 
\si^2\ve{\nb V(y)}^2 + V(y) - V(x^*) +1 - \rc 2 \ln (\pi d) + \fc{d}{2}\ln \pf{e}{\al \si^2}.
    \]
\end{lemma}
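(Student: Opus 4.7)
The plan is to bound the density ratio $\tfrac{d\nu}{d\pi}(x)$ pointwise and take its logarithm. Writing out the two densities explicitly gives
\[
\Renyi_\iy(\nu\|\pi) = \ln Z - \tfrac{d}{2}\ln(2\pi\sigma^2) + \sup_x\Bigl[V(x) - \tfrac{\|x-y\|^2}{2\sigma^2}\Bigr],
\]
where $Z = \int e^{-V(x)}\,dx$. The proof then reduces to bounding the supremum and $\ln Z$ separately.

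For the supremum, I will apply $\beta$-smoothness of $V$ at $y$, namely $V(x) \le V(y) + \langle \nabla V(y), x-y\rangle + \tfrac{\beta}{2}\|x-y\|^2$. The hypothesis $\sigma^2\beta \le \tfrac12$ makes the coefficient of $\|x-y\|^2$ in the expression under the supremum at most $-1/(4\sigma^2)$, so completing the square in $x-y$ yields
\[
\sup_x\Bigl[V(x) - \tfrac{\|x-y\|^2}{2\sigma^2}\Bigr] \le V(y) + \sigma^2\|\nabla V(y)\|^2.
\]

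For the normalization, the key identity is $\ln Z + V(x^*) = -\ln\pi(x^*)$, which reduces the problem to a lower bound on the density at the mode. Because $x^*\in\arg\min V$ implies $x^* = \arg\max\pi$, averaging over any ball $B$ gives $\pi(x^*) \ge \pi(B)/\vol(B)$. I take $B = B_R(\mean)$ centered at the mean of $\pi$, choosing $R$ via the Poincar\'e concentration bound \eqref{e:conc} with $\ep$ of constant order, so that $R^2 = O(d/\alpha)$ and $\pi(B)$ is of constant order. Stirling's inequality $\Gamma(d/2+1) \ge \sqrt{\pi d}(d/(2e))^{d/2}$ then gives $\vol(B_R) \le (2\pi eR^2/d)^{d/2}/\sqrt{\pi d}$; combining these and carefully tracking constants produces $-\ln\pi(x^*) \le 1 - \tfrac{1}{2}\ln(\pi d) + \tfrac{d}{2}\ln(2\pi e/\alpha)$.

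Assembling the three pieces and simplifying $\tfrac{d}{2}\ln(2\pi e/\alpha) - \tfrac{d}{2}\ln(2\pi\sigma^2) = \tfrac{d}{2}\ln(e/(\alpha\sigma^2))$ yields the stated bound. The main technical obstacle is the last step: extracting the precise form of the constants from the interplay between the Poincar\'e concentration radius and Stirling's volume bound requires some care, though all ingredients are elementary.
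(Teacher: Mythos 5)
Your reduction and the treatment of the supremum coincide with the paper's proof: writing the density ratio explicitly, using $\be$-smoothness at $y$, and completing the square under $\si^2\be\le\rc{2}$ gives $\sup_x\ba{V(x)-\fc{\ve{x-y}^2}{2\si^2}}\le V(y)+\si^2\ve{\nb V(y)}^2$, exactly as in the paper. The gap is in the normalization step. You need $\ln Z+V(x^*)=-\ln\pi(x^*)\le 1-\rc{2}\ln(\pi d)+\fc{d}{2}\ln\pf{2\pi e}{\al}$, and you propose to get it from $\pi(x^*)\ge \pi(B_R(\mean))/\vol(B_R)$ with $R=R_\ep$ from \eqref{e:conc} at constant $\ep$. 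But with $R_\ep=\al^{-1/2}(\sqrt d+\ln(3/\ep))$ the Stirling volume bound gives
\[
\fc{d}{2}\ln\fc{2\pi e R_\ep^2}{d}=\fc{d}{2}\ln\fc{2\pi e}{\al}+d\ln\pa{1+\fc{\ln(3/\ep)}{\sqrt d}},
\]
and the second term is of order $\sqrt d\,\ln(3/\ep)\ge \sqrt{d}\ln 3$; no constant-order choice of $\ep$ makes it fit inside the claimed ``$+1$'', so the asserted constant-tracking does not go through. Even replacing \cref{l:pi-conc} by Chebyshev with radius $\sqrt{\CP d/\theta}$ and optimizing the radius--probability trade-off leaves an extra additive $\Theta(\ln d)$ (the loss from $-\ln\pi(B)$ versus the inflated radius), so the mode-over-ball argument cannot reproduce the stated constants.

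The paper takes a different route for exactly this piece: \cref{l:Z-bound} bounds $\int e^{-(V-V(x^*))}$ by comparing the second moment of the unnormalized density with that of the uniform density on a ball of the \emph{same total mass}, which forces the ball radius to be at most $\sqrt{\CP(d+2)}$ with no probability loss; the factor $((d+2)/d)^{d/2}\le e$ is precisely where the ``$+1$'' comes from. Your argument, as written, proves the lemma only with the right-hand side weakened by an additive $O(\sqrt d)$ term (or $O(\ln d)$ after optimizing), which would in fact suffice for the downstream use in \cref{c:R2-bd}, where the bound is only used up to constants, but it does not establish the lemma as stated. To repair it, replace the concentration-ball lower bound on $\pi(x^*)$ by the equal-mass second-moment comparison of \cref{l:Z-bound}.
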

\begin{proof}
We use $\be$-smoothness of $V$ to upper bound $e^{V(x)}$ and \Cref{l:Z-bound} to upper bound $\int_{\R^d} e^{-V(x)}\dx$.
\begin{align*}
    \fc{\nu(x)}{\pi(x)} 
    &= (2\pi \si^2)^{-d/2}
        e^{-\fc{\ve{x-y}^2}{2\si^2} + V(x)} \cdot \int_{\R^d} e^{-V(x)}\dx\\
    &\le (2\pi \si^2)^{-d/2}
        e^{-\fc{\ve{x-y}^2}{2\si^2} + V(y) + \an{\nb V(y), x-y} + \fc\be2\ve{x-y}^2}
        e^{-V(x^*)}\fc{e}{\sqrt{\pi d}} \pf{2e\pi}{\al}^{d/2}\\
    &= \exp\ba{-\rc 2\pa{\rc{\si^2}-\be} \ve{x-y - \rc{\rc{\si^2}-\be}\nb V(y)}^2 + \rc{2\pa{\rc{\si^2}-\be}} \ve{\nb V(y)}^2 + V(y)-V(x^*)} \\
    &\quad \cdot 
    \fc{e}{\sqrt{\pi d}}\pf{e}{\al \si^2}^{d/2}\\
    &\le \exp\ba{\fc{\si^2}{2(1-\si^2\be)} \ve{\nb V(y)}^2 + V(y) - V(x^*)}\fc{e}{\sqrt{\pi d}}\pf{e}{\al\si^2}^{d/2}\\
    &\le \exp\ba{\si^2 \ve{\nb V(y)}^2 + V(y) - V(x^*)}\fc{e}{\sqrt{\pi d}}\pf{e}{\al \si^2}^{d/2}
\end{align*}
Taking the log gives the result.
\end{proof}

\begin{lemma}\label{lem:cont-init2}
Suppose $\pi$ satisfies \Cref{assumption:mixture} with the Poincar\'e inequality $\PI\prc{\al}$. 
Let $ \bar{\nu}_t, \nu_t$ be respectively the distribution at time $t$ of the continuous Langevin diffusion and the LMC with step size $h$ 
initialized at $\delta_x$. Let $G(x):=\max_i \norm{\nabla V_i(x)}$. 
Suppose $h \leq 1/(30\beta) $. Then 
\begin{align*}
   \Renyi_q  (\bar{\nu}_h || \nu_h)
&\lesssim q^2 h (G^2(x) + \beta^2 dh)
&\text{for any $q\in \pa{1, \frac{1}{10\beta h }}$}
\\
\Renyi_{q} (\nu_h || \pi) &\lesssim h \max_i\ve{\nb V_i(x)}^2 + \max_i (V_i(x) - V_i(\mode_i)) + d\pa{1+\ln \prc{\al h}}
&\text{for any $q\in (1, \iy]$}
\end{align*}
and
\begin{align*}\Renyi_{q}(\bar{\nu}_h ||  \pi) &\lesssim
q^4 h (G^2(x) + \beta^2 dh) + h\max_i \ve{\nb V_i(x)}^2\\
& \quad + \max_i(V_i(x) - V(\mode_i)) + d\pa{1+\ln \prc{\al h}}
&
\text{for any $q\in \pa{1, \sfc{1}{10\beta h }}$}.
\end{align*}
\end{lemma}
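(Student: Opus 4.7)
The plan is to prove the three claimed inequalities in the order stated and derive the third from the first two using the weak triangle inequality for R\'enyi divergence (\Cref{lem:weak triangle inequality}). The key observation about the third bound is that applying the weak triangle with $a = q$ and $b = q/(q-1)$ yields
\[
\Renyi_q(\bar\nu_h \| \pi) \le \tfrac{q+1}{q}\,\Renyi_{q^2}(\bar\nu_h \| \nu_h) + \Renyi_{q+1}(\nu_h \| \pi).
\]
Since the first bound is $\lesssim q^2 h(G^2(x) + \beta^2 dh)$ on the range $q < 1/(10\beta h)$, evaluating it at order $q^2$ produces the $q^4$ factor and the compatible range $q < (10\beta h)^{-1/2}$; the second bound is stated uniformly in $q$, so $\Renyi_{q+1}(\nu_h\|\pi)$ contributes its right-hand side unchanged. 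This reduces the task to proving the first two bounds.

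For the second bound, I would use that $\nu_h = \mathcal{N}(y, 2hI)$ with $y = x - h\nabla V(x)$ is a single-step Gaussian. Weak convexity of R\'enyi (\Cref{lem:weak convexity}) applied to the mixture $\pi = \sum_i p_i \pi_i$ gives $\Renyi_\infty(\nu_h\|\pi) \le \max_i \Renyi_\infty(\nu_h\|\pi_i)$, and for each component I apply \Cref{l:riy} (its hypothesis $2h\beta \le 1/2$ is satisfied since $h \le 1/(30\beta)$) to bound $\Renyi_\infty(\nu_h \| \pi_i)$ in terms of $\|\nabla V_i(y)\|^2$, $V_i(y) - V_i(\mode_i)$, and the volume factor $\tfrac{d}{2}\ln(e/(\alpha\cdot 2h))$. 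To transfer these quantities from $y$ back to $x$ I use $\beta$-smoothness of each $V_i$ together with $\|y - x\| \le h\|\nabla V(x)\| \le hG(x)$; this gives $\|\nabla V_i(y)\|^2 \lesssim \|\nabla V_i(x)\|^2 + \beta^2 h^2 G^2(x)$ and $V_i(y) - V_i(x) \lesssim h G^2(x)$ for $h \lesssim 1/\beta$, and all these extra terms are absorbed into the leading $h\max_i\|\nabla V_i(x)\|^2$ term. Since $\Renyi_q \le \Renyi_\infty$, bound 2 then holds for every $q > 1$.

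The main work is the first bound, which compares the continuous diffusion $\bar\nu_h$ to its one-step discretization $\nu_h$. I would use a Girsanov-based comparison: coupling $\bar X_t$ and the piecewise-constant-drift process underlying $\nu_h$ under the same Brownian motion, a R\'enyi version of Girsanov (valid only when $qh\beta$ is bounded, which is precisely the origin of the restriction $q < 1/(10\beta h)$) gives
\[
\Renyi_q(\bar\nu_h \| \nu_h) \lesssim q \int_0^h \mathbb{E}_{\widetilde{\mathbb{P}}}\|\nabla V(\bar X_t) - \nabla V(x)\|^2\,dt
\]
under a suitably tilted measure $\widetilde{\mathbb{P}}$. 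The drift $\nabla V = \sum_i \pi(i\mid \cdot)\nabla V_i$ is a posterior average, so $\|\nabla V\| \le \max_i \|\nabla V_i\| \le G$ pointwise, and componentwise $\beta$-smoothness yields $\|\nabla V(\bar X_t) - \nabla V(x)\| \lesssim G(x) + \beta\|\bar X_t - x\|$. A Gronwall/moment argument on the SDE (using $\|\bar X_t - x\| \le \int_0^t \|\nabla V(\bar X_s)\|\,ds + \sqrt{2}\|B_t\|$) yields $\mathbb{E}\|\bar X_t - x\|^2 \lesssim t^2 G^2(x) + td$ on the time scale $h \lesssim 1/\beta$; integrating in $t \in [0,h]$ and multiplying by the $q$ from Girsanov produces the claimed $q^2 h(G^2(x) + \beta^2 dh)$.

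The hard part will be making the R\'enyi Girsanov step rigorous when $V$ itself is not globally $\beta$-smooth (the Hessian of $-\log\pi$ picks up a mixture-covariance term), and checking that passing to the tilted measure $\widetilde{\mathbb{P}}$ only costs an $O(1)$ factor in the second-moment bound throughout the stated range of $q$. I would circumvent the missing global smoothness by always working with the componentwise smoothness of the $V_i$ through the pointwise bound $\|\nabla V\| \le G$, rather than trying to use a uniform smoothness constant for $V$, and track the tilting factor explicitly to confirm that it remains bounded exactly when $qh\beta \lesssim 1$.
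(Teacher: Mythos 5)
Your treatment of the second and third bounds is exactly the paper's proof: for the second, weak convexity of R\'enyi (\Cref{lem:weak convexity}) reduces to the components, \Cref{l:riy} is applied to the one-step Gaussian $\nu_h=\cal N(x-h\nb V(x),2hI)$ (with $2h\be\le 1/15\le 1/2$), and componentwise $\be$-smoothness together with $\ve{\nb V(x)}\le G(x)$ transfers $\ve{\nb V_i(y)}^2$ and $V_i(y)-V_i(\mode_i)$ from $y=x-h\nb V(x)$ back to $x$; for the third, the weak triangle inequality (\Cref{lem:weak triangle inequality}) with $a=q$, $b=q/(q-1)$ gives $\Renyi_q(\bar\nu_h\|\pi)\lesssim \Renyi_{q^2}(\bar\nu_h\|\nu_h)+\Renyi_{q+1}(\nu_h\|\pi)$, which explains both the $q^4$ and the range $q<(10\be h)^{-1/2}$. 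The only place you diverge is the first bound: the paper does not prove it at all but cites \cite[Lemma 10]{koehler2023sampling}, remarking only that strong log-concavity is not used in that proof, whereas you sketch a direct R\'enyi--Girsanov argument. Your sketch is the standard route to such one-step discretization bounds and is consistent with the claimed form (the crude estimate $\ve{\nb V(\bar X_t)-\nb V(x)}\le 2G(x)+\be\ve{\bar X_t-x}$ is exactly why the $hG^2(x)$ term appears), but the step you yourself flag---the change of measure giving a R\'enyi comparison only for $q\lesssim 1/(\be h)$, with the correct power of $q$---is the entire content of the cited lemma and is left unproven in your write-up. So relative to the paper nothing is gained by re-deriving it; either carry out that Girsanov/exponential-martingale estimate in full or, as the paper does, invoke the existing lemma after checking its proof uses only smoothness of the drift components.
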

\begin{proof}
The first inequality follows from \cite[Lemma 10]{koehler2023sampling} ($\al$-strong log-concavity is not used in the proof). 
    For the second inequality, note by weak-convexity of Renyi divergence (\Cref{lem:weak convexity}) and \Cref{l:riy}, 
    \begin{align*}
        \Renyi_{q} (\nu_h || \mu) &
        \le \max_i \Renyi_{q} (\nu_h || \mu_i)
        \le \max_i \Renyi_{\iy} (\nu_h || \mu_i)\\
        &\lesssim \max_i \left (h \ve{\nb V_i(x-h\nb V(x))}^2 + V_i(x-h\nb V(x))-V_i(x^*) + d\pa{1 + \ln \pf{1}{\al h}} \right).
    \end{align*}
    We have by using $h \le \rc{30\be}$ and $\ve{\nb V(x)}\le \max_j \ve{\nb V_j(x)}$ that 
    \begin{align*}
        \ve{\nb V_i(x-h \nb V(x))}^2 &\lesssim \ve{\nb V_i(x)}^2 + \be^2 h^2  \ve{\nb V(x)}^2 \lesssim \max_j \ve{\nb V_j(x)}^2\\
        V_i(x-h\nb V(x)) & \le V_i(x) - h\an{\nb V_i(x), \nb V(x)} + \fc{\be}2h^2 \ve{\nb V(x)}^2\\
        &\le V_i(x) + \pa{h+\fc{\be h^2}2}\max_j \ve{\nb V_j}^2\le V_i(x) + O\pa{h \max_j \ve{\nb V_j(x)}^2}
    \end{align*}
    so
    \begin{align*}
        \Renyi_{q} (\nu_h || \mu)
        &\lesssim h \max_i\ve{\nb V_i(x)}^2 + \max_i (V_i(x) - V_i(\mode_i)) + d\pa{1+\ln \prc{\al h}}.
    \end{align*}
    By the weak triangle inequality \Cref{lem:weak triangle inequality}, choosing $a=q$, $b=\fc{q}{q-1}$,
    \begin{align*}
        \Renyi_q(\bar{\nu}_h ||  \mu)
        &\le
        \fc{aq-1}{a(q-1)} \Renyi_{aq}(\bar{\nu}_h || \nu_h) 
        + \Renyi_{b(q-1)+1} (\nu_h || \mu)\\
        &\lesssim \Renyi_{q^2}(\bar{\nu}_h || \nu_h) 
        + \Renyi_{q+1} (\nu_h || \mu).
    \end{align*}
    Substituting in the previous two inequalities then gives the last bound.
\end{proof}

\begin{corollary}\label{c:R2-bd}
Suppose $\pi$ satisfies \Cref{assumption:mixture} with the Poincar\'e inequality $\PI\prc{\al}$. 
    Let $\ep_1\le \rc 2$, and let $R_\ep = \rc{\sqrt{\al}}\pa{\sqrt d + \ln \pf 3\ep}$. Then 
    \[
\Pj_{x\sim \pi}\pa{\max_j \ve{x-\ol x_j}< R_{\ep_1} + L} \ge 1-\ep_1,
    \]
    and under this event, for 
    $h=\rc{50\be}$, when $\bar \nu_0=\de_x$, for any $q\in (1,2]$, 
    \[
\Renyi_q(\bar \nu_h \|\pi )\lesssim 
\ka d + \ka \ln \prc{\ep_1}^2 + \be L^2 + d\ln\ka. 
    \]
\end{corollary}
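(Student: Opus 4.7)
The first statement follows almost immediately from the concentration bound~\eqref{e:conc}. Applying it with $\ep = \ep_1$, with probability at least $1 - \ep_1$ there is some index $i^\star = i^\star(x)$ such that $\|x - \mean_{i^\star}\| < R_{\ep_1}$. Combining the triangle inequality with the assumption $\max_{i,j}\|\mean_i - \mean_j\| \le L$ then yields $\|x - \mean_j\| \le \|x - \mean_{i^\star}\| + \|\mean_{i^\star} - \mean_j\| < R_{\ep_1} + L$ simultaneously for every $j$, which is the desired event.

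For the R\'enyi bound, I would apply \Cref{lem:cont-init2} (the Poincar\'e version) with $q = 2$ and $h = 1/(50\be)$. The hypothesis $q \le \sqrt{1/(10\be h)} = \sqrt 5$ is satisfied, so the lemma gives
\[
\Renyi_q(\bar\nu_h \| \pi) \lesssim q^4 h\bigl(G(x)^2 + \be^2 d h\bigr) + h \max_i \|\nb V_i(x)\|^2 + \max_i(V_i(x) - V_i(\mode_i)) + d\bigl(1 + \ln(1/(\al h))\bigr).
\]
The dimensional piece $d(1 + \ln(50\ka))$ contributes $\lesssim d + d\ln\ka$, and the discretization piece $q^4 h^2\be^2 d \lesssim d$ (using $h\be \le 1/50$) is absorbed into $\ka d$.

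For the two gradient contributions $q^4 h G(x)^2$ and $h\max_i\|\nb V_i(x)\|^2$, I would use $\be$-smoothness to split $\|\nb V_i(x)\|^2 \lesssim \|\nb V_i(\mean_i)\|^2 + \be^2 \|x - \mean_i\|^2$, and then bound the first summand by noting that $\E_{\pi_i}\nb V_i = 0$ (integration by parts) so that Jensen's inequality gives
\[
\|\nb V_i(\mean_i)\| = \bigl\|\E_{\pi_i}[\nb V_i(\mean_i) - \nb V_i(x)]\bigr\| \le \be\, \E_{\pi_i}\|x - \mean_i\| \le \be\sqrt{\Var_{\pi_i}(x)} \le \be\sqrt{d/\al},
\]
where the last step uses the coordinate-wise Poincar\'e inequality. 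Substituting $\|x-\mean_i\| \le R_{\ep_1} + L$ and $R_{\ep_1}^2 \lesssim (d + \ln(3/\ep_1)^2)/\al$, multiplying through by $h = 1/(50\be)$ gives $h\max_i\|\nb V_i(x)\|^2 \lesssim \ka d + \ka\ln(1/\ep_1)^2 + \be L^2$.

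\textbf{The main obstacle} is the potential gap $\max_i(V_i(x) - V_i(\mode_i))$. I would split $V_i(x) - V_i(\mode_i) = [V_i(x) - V_i(\mean_i)] + [V_i(\mean_i) - V_i(\mode_i)]$. The first piece is controlled by $\be$-smoothness around $\mean_i$, namely $V_i(x) - V_i(\mean_i) \le \|\nb V_i(\mean_i)\|\|x-\mean_i\| + \tfrac{\be}{2}\|x-\mean_i\|^2$; using the gradient bound above together with AM-GM ($\sqrt{\be\ka d}\,R \le \tfrac12(\ka d + \be R^2)$) this contributes $\lesssim \ka d + \ka\ln(1/\ep_1)^2 + \be L^2$. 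The second piece is purely a property of $\pi_i$: by $\be$-smoothness and $\nb V_i(\mode_i) = 0$, $V_i(\mean_i) - V_i(\mode_i) \le \tfrac{\be}{2}\|\mean_i - \mode_i\|^2$, which must be controlled by a separate mean--mode estimate under the Poincar\'e inequality (e.g.\ the analog of~\eqref{e:mean-mode-pi}); the resulting contribution should be $\lesssim \ka d$ up to polylog factors in $\ka$ that are absorbable into $d\ln\ka$. Summing the four contributions yields the claimed R\'enyi bound.
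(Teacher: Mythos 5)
The first probability statement and the overall skeleton (apply \cref{lem:cont-init2} with $h=\rc{50\be}$, then bound $G(x)$, $\max_i\ve{\nb V_i(x)}^2$ and the potential gap on the event $\max_j\ve{x-\ol x_j}<R_{\ep_1}+L$) match the paper, and your integration-by-parts bound $\ve{\nb V_i(\mean_i)}\le \be\sqrt{d/\al}$ is a perfectly good substitute for \cref{l:dV-mean}. The problem is the last step, the term $\max_i\bigl(V_i(\mean_i)-V_i(\mode_i)\bigr)$. You reduce it via smoothness to $\tfrac\be2\ve{\mean_i-\mode_i}^2$ and then defer to "a separate mean--mode estimate under the Poincar\'e inequality," asserting the contribution is $\lesssim \ka d$ up to polylogs in $\ka$. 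That assertion is not substantiated, and with the bounds actually available it fails: under a Poincar\'e inequality alone (which is the hypothesis here) the known mean--mode estimate is of order $\fc{d}{\sqrt\al}(1+\ln\ka)$, so $\tfrac\be2\ve{\mean_i-\mode_i}^2$ is of order $\ka d^2(1+\ln\ka)^2$ --- a full factor of $d$ worse than the claimed $\ka d + d\ln\ka$. (Even assuming an LSI, where the mean--mode distance improves to $\sqrt{\fc d\al(1+\ln\ka)}$, this route gives $\ka d\ln\ka$, which is still not absorbable into $\ka d + d\ln\ka$.) Without convexity there is no way to convert the small value gap at $\mean_i$ into a small distance to the mode, so the detour through $\ve{\mean_i-\mode_i}$ is the wrong reduction.

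The paper avoids the mode location entirely: \cref{l:V-mean} bounds the \emph{value} gap $V_j(\mean_j)-\min V_j\lesssim \be \CP d=\ka d$ directly, by sandwiching the partition function $\int e^{-V_j}$ --- a lower bound from $\be$-smoothness at the minimizer, and an upper bound from the sub-exponential concentration of $\pi_j$ around its mean (\cref{l:pi-conc}), which localizes half the mass in a ball of radius $R_{1/2}$ around $\mean_j$ where $V_j$ is controlled by $V_j(\mean_j)$, $\ve{\nb V_j(\mean_j)}$ and $\be$. That partition-function argument is the missing ingredient; once you replace your mean--mode step with it, the rest of your proof goes through and recovers the stated bound.
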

\begin{proof}
    By \Cref{l:pi-conc},
    \[
\Pj_{x\sim \pi_i}(\ve{x-\ol x_i} \ge R_{\ep_1})\le \ep_1.
    \]
    Hence, considering a draw over the mixture distribution,
    \[
\Pj_{x\sim \pi}(\exists i, \, \ve{x-\ol x_i} < R_{\ep_1}) \ge 1-\ep_1.
    \]
    Under this event, for all $j$,
    \begin{align*}
        \ve{x-\ol x_j} &< R_{\ep_1} + L\\
        V_j(x) - V_j(\ol x_j)
        &\le \an{\nb V_j(\ol x_j), x-\ol x_j} + \fc \be2 \ve{x - \ol x_j}^2\\
        &\lesssim \be \sqrt{d/\al} (R_{\ep_1}+L) + \be (R_{\ep_1}+L)^2 & \text{by \Cref{l:dV-mean}}\\
        &\lesssim 
        \fc{\be d}{\al} + \be (R_{\ep_1}+L)^2\\
        &\lesssim \ka d + \ka \ln \prc{\ep_1}^2 + \be L^2\\
        V_j(x) - \min V_j &\lesssim \ka d + \ka \ln \prc{\ep_1}^2 + \be L^2 & \text{by \Cref{l:V-mean}}\\
        \ve{\nb V_j(x)} &\le \ve{\nb V_j(x) - \nb V_j(\ol x_j)} + \ve{\nb V_j(\ol x_j)}\\
        &\lesssim \be (R_{\ep_1}+L) + \be \sqrt{d/\al}
        & \text{by \Cref{l:dV-mean}}\\
        &\lesssim \sqrt{\be \ka d} + \sqrt{\be \ka}\ln \prc{\ep_1} + \be L.
    \end{align*}
    Then for $\bar \nu_0=\de_x$, by \Cref{lem:cont-init2}, for $h<\rc{40\be}$,
    \begin{align*}
\Renyi_q(\bar \nu_h \|\pi )
&\lesssim 
h(G^2(x) + \be^2 dh) + h\max_j\ve{\nb V_j(x)}^2 \\
&\quad + \max_j (V_j(x) - V_j(x_i^*))  + d\pa{1+\ln \prc{\al h}}
    \end{align*}
    Choose $h=\rc{50\be}$. 
By the above, with probability $\ge 1-\ep_1$, this is bounded by
\begin{align*}
    & \lesssim 
    h\pa{\be \ka d + \be \ka \ln \prc{\ep_1}^2 + \be^2L^2 + \be^2dh} + \ka d + \ka \ln \prc{\ep_1}^2 + \be L^2 + d\ln \prc{\al h}\\
    &\lesssim
    \ka d + \ka \ln \prc{\ep_1}^2 + \be L^2 + d\ln \ka . 
\end{align*}
\end{proof}

\begin{lemma} \label{lem:better bound on init}
Suppose $\pi$ satisfies \Cref{assumption:mixture} with the Poincar\'e inequality $\PI\prc{\al}$. 
Suppose furthermore that each $\pi_i$ satisfies $\LSI(\CLS)$. 
Let $ \bar{\nu}_t$ be the distribution of the continuous Langevin diffusion wrt $\mu$ initialized at $\delta_x$, where $\max_j \ve{x-\ol x_j}< R_{\ep_1}+L$. 
Then there is $t=O\pa{\CLS \ln (\ka d\ln (
1/\ep_1) + \be L^2)}$
such that 
    \[
\ve{\dd{\bar{\nu}_{t}}{\mu}}_{L^2(\mu)}
\le \prc{\min p_i}^{1/2}e.
    \]
\end{lemma}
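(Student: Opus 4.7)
The plan is to combine the initial R\'enyi bound from \Cref{c:R2-bd} as a warm start with a hypercontractivity argument powered by the component-wise log-Sobolev inequalities. Since $\ve{d\bar\nu_t/d\pi}_{L^2(\pi)}^2 = \chi^2(\bar\nu_t\|\pi) + 1$, the claim reduces to showing $\chi^2(\bar\nu_t\|\pi) \le e^2/p_* - 1$ after time $t = O(\CLS\ln(\ka d\ln(1/\ep_1) + \be L^2))$, where $p_* := \min_i p_i$. The first step is to apply \Cref{c:R2-bd} at step size $h = 1/(50\be)$: the hypothesis $\max_j\ve{x-\ol x_j} < R_{\ep_1} + L$ holds by assumption, so this yields the warm-start bound $R_2(\bar\nu_h\|\pi) \lesssim R_0 := \ka d + \ka\ln(1/\ep_1)^2 + \be L^2 + d\ln\ka$.

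For the main stage, I would import the hypercontractivity argument of \cite{lee2024convergence}. Although $\pi$ itself need not satisfy an LSI, the Dirichlet form decomposes along the mixture as $\E_\pi\ve{\nabla g}^2 = \sum_i p_i\E_{\pi_i}\ve{\nabla g}^2$; applying $\LSI(\CLS)$ on each component $\pi_i$ to $g = f_t^{q/2}$ with $f_t = d\bar\nu_t/d\pi$, and then transferring the resulting $\Ent_{\pi_i}(f_t^q)$ back to an $\Ent_\pi$-type quantity via weak convexity of the R\'enyi divergence (\Cref{lem:weak convexity}) together with the bound $d\pi/d\pi_i \ge p_i$, one obtains a differential inequality of the form
\[
\frac{d}{dt} R_q(\bar\nu_t\|\pi) \le -\frac{2}{q\CLS}\bigl(R_q(\bar\nu_t\|\pi) - \ln(1/p_*) - O(1)\bigr).
\]
This says $R_q$ contracts exponentially with rate $1/(q\CLS)$ down to a floor of $\ln(1/p_*) + O(1)$. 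Integrating at $q = 2$ from time $h$ with initial condition $R_2(\bar\nu_h\|\pi) \lesssim R_0$ gives $R_2(\bar\nu_t\|\pi) \le \ln(1/p_*) + 2$ as soon as $t - h \ge 2\CLS\ln R_0$, and exponentiating yields $\chi^2(\bar\nu_t\|\pi) + 1 \le e^2/p_*$, which is exactly the claim.

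The main obstacle is setting up the differential inequality above cleanly. The standard Vempala--Wibisono derivation of hypercontractivity uses an LSI for the full target, which fails here, so one has to carefully exploit the mixture decomposition and track the mass weights $p_i$. In particular, one must verify that $1/p_*$ enters only as the additive $\ln(1/p_*)$ floor, rather than as a multiplicative loss in the contraction rate; this is precisely what allows the final $L^2$ bound to depend on $p_*$ only through the benign factor $1/\sqrt{p_*}$ and keeps the mixing time independent of $p_*$. A minor bookkeeping point is that the warm-start from \Cref{c:R2-bd} provides $R_q$ control only for $q \in (1,2]$, but this range is exactly sufficient since we only need to track $R_2$ along the flow.
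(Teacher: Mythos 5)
Your proposal is correct in substance, but it replaces the paper's key step with a genuinely different mechanism. The paper's proof is a two-step application of existing results: it uses \Cref{c:R2-bd} with $\ep\approx 1/R_0$ (where $R_0 = \ka d + \ka\ln(1/\ep_1)^2+\be L^2 + d\ln\ka$) to get $\ve{d\bar\nu_h/d\pi}_{L^{1+\ep}(\pi)}\le e$, and then invokes the mixture hypercontractivity lemma (\Cref{l:hyper}, imported from \cite{lee2024convergence}) with the moving exponent $q(t)=1+\ep e^{2t/\CLS}$, so that at $t_1=\fc{\CLS}2\ln(1/\ep)$ the exponent reaches $2$ and the norm picks up exactly the factor $(1/\min_i p_i)^{1/(1+\ep)-1/2}\le (1/\min_i p_i)^{1/2}$. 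You instead run a fixed-order argument: start from the $\Renyi_2$ warm start of \Cref{c:R2-bd} and integrate a defective R\'enyi decay $\partial_t \Renyi_2 \le -\fc1{\CLS}(\Renyi_2-\ln(1/p_*))$. This inequality is indeed valid: it follows from $\partial_t\Renyi_q = -\fc4q \E_\pi\ve{\nabla f_t^{q/2}}^2/F_q$, the Dirichlet-form decomposition $\E_\pi\ve{\nabla g}^2=\sum_i p_i\E_{\pi_i}\ve{\nabla g}^2$, the component LSIs, the law of total entropy for mixtures with the cross term bounded by $\ln(1/p_*)\,\E_\pi[f_t^q]$ (using $p_i\pi_i\le\pi$), and finally $\Ent_\pi(f_t^q)/F_q\ge \Renyi_q$, which comes from monotonicity of R\'enyi divergence in the order. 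Integrating from time $h=1/(50\be)$ gives the same floor $e/\sqrt{p_*}$ and the same time scale $O(\CLS\ln R_0)$, so both routes yield identical quantitative conclusions; yours is more self-contained (no black-box appeal to \Cref{l:hyper}) at the cost of having to derive and justify the differential inequality, which in your write-up is only sketched.

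Two small corrections to your sketch. First, the "transfer" you attribute to weak convexity of R\'enyi divergence (\Cref{lem:weak convexity}) is not what that lemma provides; the step you actually need is the mixture entropy decomposition
\[
\Ent_\pi(h)=\sum_i p_i\Ent_{\pi_i}(h)+\sum_i p_i\,\E_{\pi_i}[h]\ln\fc{\E_{\pi_i}[h]}{\E_\pi[h]},
\]
with the second term bounded by $\ln(1/p_*)\E_\pi[h]$ via $\E_{\pi_i}[h]\le \E_\pi[h]/p_i$ — you correctly identify this last bound, so the gap is one of citation rather than substance. Second, you should state explicitly the inequality $\Ent_\pi(f_t^q)\ge F_q\,\Renyi_q$ (equivalently $\partial_q\Renyi_q\ge0$), since without it the differential inequality does not close; it is standard (it underlies the Vempala--Wibisono R\'enyi decay under LSI) but it is a real ingredient of your argument.
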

\begin{proof}
By choosing 
$h=\rc{50\be}$ 
and $\ep = \fc{c}{\ka d + \ka \ln (1/\ep_1)^2 + \be L^2 + d\ln 
\ka }$ for small enough constant $c$, using \Cref{c:R2-bd}, 
\begin{align*}
    \ve{\dd{\bar{\nu}_h}{\mu}}_{L^{1+\ep}(\mu)} 
    &= 
    \exp\ba{\ep \Renyi_{1+\ep}(\bar{\nu}_h ||\mu)} \le e.
\end{align*}
Now choose $t_1=\fc{\CLS}{2}\ln \prc\ep$, so $q(t_1) = 1+\ep e^{2t/\CLS}=2$. We get by hypercontractivity for mixtures (\Cref{l:hyper}) that
\begin{align*}
    \ve{\dd{\bar{\nu}_{h+t_1}}{\mu}}_{L^2(\mu)}
    &\le 
    \prc{\min p_i}^{\rc{1+\ep} - \rc 2}\ve{\dd{\bar{\nu}_h}{\mu}}_{L^{1+\ep}(\mu)} 
    \le \prc{\min p_i}^{1/2} e
\end{align*}
Finally note 
\[\ln\prc{\ep} = O\pa{\CLS \ln (\ka d + \ka \ln (1/\ep_1)^2 + \be L^2 + d\ln 
\ka)} = O\pa{\CLS \ln (\ka d\ln (
1/\ep_1) + \be L^2)}.\]
\end{proof}

\subsection{Convergence for Langevin diffusion}
\label{s:ld}

\begin{theorem}\label{l:ld}
Suppose that $\pi$ satisfies \Cref{assumption:mixture} with either the Poincar\'e inequality $\PI\prc{\al}$ or log-Sobolev inequality $\LSI\prc{\al}.$ 
Let $ \hat{\pi}$ be the uniform distribution over $n$ i.i.d. samples from $\pi$\footnote{as in \Cref{thm:main}} and $  \wt \nu_T$ be the distribution at time $t$ of the continuous Langevin diffusions initialized at the empirical distribution $\hat{\pi}$ driven by the score function $s$, where $s$ satisfies $\E_{\pi}\ve{s - \nabla \ln \pi }^2\le \esc^2$.

Suppose $n=\Om\pa{\fc{k}{\etv^2} \ln \pf k\de}$ with appropriate constants.

\begin{enumerate}
    \item 
Under $\PI\prc{\al}$, for $T\ge t_0 = 
\rc{50\be}$, with probability $\ge 1-\de$, 
    \[
\TV(\wt \nu_T, \pi) 
\lesssim 
\etv + 
e^{O\pa{\ka d + \ka \ln \pf{k}{\etv}^2 + \be L^2 + d\ln \ka 
}}e^{-\al(T-t_0)} + \sqrt T \esc + \fc{\ln (1/\de)}{n}.
\]
In particular, when $T=\Te\pa{\fc \ka\al \pa{d + \ln \pf{k}{\etv}^2} + \ka L^2 + \fc{d}{\al} \ln \ka 
+  \rc \al \ln \prc{\etv}}$, and $\esc = O\pf{\etv}{\sqrt T} $with appropriate constants, then this is $\le \etv$. 
\item 
Under $\LSI\prc{\al}$, there is 
$t_1=O\pa{\rc{\al} \ln (\ka d\ln (
k/\etv) + \be L^2)}$, 
such that for $T\ge t_1$, with probability $\ge 1-\de$, letting $p^* = \min_i p_i$, 
\[
\TV(\wt \nu_T, \pi) 
\lesssim \etv + \prc{p^*}^{1/2} e^{-\al (T-t_1)}+ \sqrt T \esc + \fc{\ln (1/\de)}{n}.
\]
In particular, when $T= t_1 + \Te \pa{\rc{\al} \ln \pf{1}{\etv  p^*} }$, and $\esc = O\pf{\etv}{\sqrt T}$ with appropriate constants, then this is $\le \etv$.
\end{enumerate}

\end{theorem}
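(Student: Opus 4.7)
The argument combines three ingredients: a $k$th-order spectral gap, a warm-start estimate, and a perturbation bound for the approximate score.

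First, Lemma~\ref{l:egap} supplies the spectral gap. In both cases each component $\pi_i$ satisfies $\PI(1/\al)$ (since $\LSI(1/\al)$ implies $\PI(1/\al)$), so the Langevin generator $\sL$ of $\pi$ satisfies $\la_{k+1}(-\sL)\ge \al$, giving the spectral hypothesis of Theorem~\ref{thm:main}.

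Second, I verify the warm-start hypothesis of Theorem~\ref{thm:main}. Choose $\ep_1 = \etv^2/(16k)$ and let $\Ombd := \{y : \max_j \ve{y-\mean_j} < R_{\ep_1}+L\}$; the tail bound~\eqref{e:conc} gives $\pi(\Ombd^c)\le \ep_1$. Under $\PI(1/\al)$, Corollary~\ref{c:R2-bd} applied at time $h = 1/(50\be)$ shows that for every $y\in\Ombd$, $\Renyi_2(\de_y P_h\|\pi)\lesssim \ka d + \ka\ln(k/\etv)^2 + \be L^2 + d\ln\ka$, whence $\chi^2(\de_y P_{t_0}\|\pi)\le R$ with $t_0 = 1/(50\be)$ and $R$ equal to the exponential of that bound. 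Under $\LSI(1/\al)$, Lemma~\ref{lem:better bound on init} provides the much stronger statement that at time $t_1 = O(\al^{-1}\ln(\ka d\ln(k/\etv)+\be L^2))$, $\chi^2(\de_y P_{t_1}\|\pi)\le e^2/p^*$ uniformly on $\Ombd$. Feeding these warm starts into Theorem~\ref{thm:main} with $n = \Om(k\etv^{-2}\ln(k/\de))$ yields, on an event of probability $\ge 1-\de/2$ over the samples, $\TV(\mu_T,\pi)\lesssim \etv + e^{-\al(T-t_\star)/2}\sqrt R$, where $\mu_T := \hat\pi P_T$ is the trajectory of the \emph{true} Langevin and $t_\star\in\{t_0,t_1\}$; the stated choices of $T$ drive the exponential term below $\etv$ and match the claimed bounds in each case.

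Third, I pass from $\mu_T$ to $\wt\nu_T$ (same initialization, approximate score). Girsanov's theorem in continuous time (the first bound in Lemma~\ref{l:ld-score-error-mix}) gives $\KL(\cal L((\cL{\pi}{-\nb V}{t})_{0\le t\le T})\|\cal L((\cL{\pi}{s}{t})_{0\le t\le T}))\le T\esc^2$, using only the $L^2(\pi)$-score error. Theorem~\ref{thm:approximate-sample-init} with $\ga = \de/2$ and $\ep = \esc^2$ then converts this stationary-start KL bound into the sample-initialized estimate $\TV(\mu_T,\wt\nu_T)\le \sqrt T\esc + O(\ln(1/\de)/n)$ on a further event of probability $\ge 1-\de/2$. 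A union bound on the two events, together with the triangle inequality $\TV(\wt\nu_T,\pi)\le \TV(\mu_T,\pi)+\TV(\mu_T,\wt\nu_T)$, completes the proof. The main obstacle is calibrating the warm-start step so that $\pi(\Ombd^c)$ is small enough for Theorem~\ref{thm:main} to absorb bad samples into its $\etv/(16k)$ slack while $\chi^2(\de_y P_{t_\star}\|\pi)$ stays uniformly bounded on $\Ombd$; under LSI, the hypercontractivity-for-mixtures step inside Lemma~\ref{lem:better bound on init} is what converts the polynomial-in-$d$ R\'enyi bound into a logarithmic mixing time and replaces the exponent of $R$ with the much tamer $\ln(1/p^*)$.
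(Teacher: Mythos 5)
Your proposal is correct and follows essentially the same route as the paper: higher-order gap via Lemma~\ref{l:egap}, warm start on a high-probability set via Corollary~\ref{c:R2-bd} (PI case) or Lemma~\ref{lem:better bound on init} (LSI case) fed into Theorem~\ref{thm:main}, then the Girsanov bound from Lemma~\ref{l:ld-score-error-mix} combined with Theorem~\ref{thm:approximate-sample-init} and a triangle inequality to handle the estimated score. Aside from immaterial constant choices (e.g.\ $\ep_1=\etv^2/(16k)$ versus $\etv^2/(64k)$), this is the paper's argument.
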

\begin{proof}
Let $\bar{\nu}_t^{\hat{\pi}}$ be the distribution at time $t$ of the continuous Langevin diffusions initialized at the empirical distribution $\hat{\pi}$ driven by the score function $\nb \ln \pi.$
\begin{enumerate}
    \item 
    Let $R_\ep = \rc{\sqrt \al}\pa{\sqrt d + \ln \pf{3}{\ep}} $.
    Let $\ep_1 = \fc{\etv^2}{64k}$. By 
    \Cref{c:R2-bd}, 
    when $\bar \nu_0=\de_x$, with probability $\ge 1-\ep_1$ over $x\sim \pi$,
\begin{align*}
    \ln (\chi^2(\bar\nu_h\|\pi)+1) = 
    \Renyi_2(\bar \nu_h \|\pi )
    &\lesssim \ka d + \ka \ln \prc{\ep_1}^2 + \be L^2 + d\ln \ka. 
\end{align*}

By \Cref{thm:main}, with probability $\ge 1-\fc\de2$, 
\begin{align*}
    \TV(\bar{\nu}_T^{\hat{\pi}} , \pi)
    &\le \fc{\etv}{2} + e^{O\pa{\ka d + \ka \ln \prc{\ep_1}^2 + \be L^2 + d\ln \ka 
    }}e^{-\al(T-h)}.
\end{align*}
Now we consider score error. 
By \Cref{l:ld-score-error-mix} and \Cref{thm:approximate-sample-init}, with probability $\ge 1-\fc\de2$, 
\[
\TV(\bar{\nu}_T^{\hat{\pi}},\wt \nu_T)
\le 
\sqrt{2T}\esc  + \fc{2\ln (2/\de)}{n}.
\]
By the triangle inequality, with probability $\ge 1-\de$,
\[
\TV(\wt \nu_T, \pi) 
\le \fc{\etv}2 + e^{O\pa{\ka d + \ka \ln \prc{\ep_1}^2 + \be L^2 + d\ln 
\ka}}e^{-\al(T-h)} + \sqrt{2T} \esc + \fc{2\ln (2/\de)}{n}.
\]
When $T=\Te\pa{\fc \ka\al \pa{d + \ln \prc{\ep_1}^2} + \ka L^2 + \fc{d}{\al} \ln 
\ka +  \rc \al \ln \prc{\etv}}$, $\esc = O\pf{\etv}{\sqrt T}$, and $\de = e^{-O(n\etv)}$ with appropriate constants, then this is $\le \etv$. 
\item Let $\bar{\nu}_t^{x}$ be the distribution at time $t$ of the continuous Langevin diffusion initialized at $\delta_x$  driven by the score function $\nb \ln \pi.$
By \Cref{lem:better bound on init}, there is 
$t_1=O\pa{\rc\al \ln (\ka d\ln (
1/\ep_1) + \be L^2)}$ such that 
with probability $\ge 1-\ep_1$ over $x\sim \pi$, 
\[
\chi^2(\bar\nu^x_{t_1}\| \pi)\le 
    \ve{\dd{\bar \nu_{t_1}}{\pi}}_{L^2(\pi)}
    \le  \prc{\min p_i}^{1/2}e.
\]
Set $\ep_1 = O(\frac{\etv ^2}{16k})$ with appropriate constant.
By \Cref{thm:main}, with probability $\ge 1-\fc\de2$, for $T\ge t_1$,
\begin{align*}
    \TV(\bar \nu_T , \pi)
    &\le \fc{\etv}{2} + \prc{\min p_i}^{1/2}e\cdot e^{-\al(T-t_1)}.
\end{align*}
By the triangle inequality, with probability $\ge 1-\de$,
\[
\TV(\wt \nu_T, \pi) 
\le \fc{\etv}2 + \prc{\min p_i}^{1/2}e\cdot e^{-\al(T-t_1)} + \sqrt{2T} \esc + \fc{2\ln (2/\de)}{n}.
\]
When $T=t_1+\Te\pa{ \rc\al \ln \prc{\etv p^*}}$, $\esc = O\pf{\etv}{\sqrt T}$, and $\de = e^{-O(n\etv)}$ with appropriate constants, then this is $\le \etv$. 
\end{enumerate}
\end{proof}



    

\subsection{Removing dependence on the minimum weight}
\label{s:ld-wo-min}
We define some notation to refer to submixtures containing a subset of the components.
\begin{definition}\label{def:partial distribution}
Suppose that $\pi = \sumo ik p_i\pi_i$, where $p_i>0$ for each $i$ and $\sumo ik p_i=1$. 
    For set $S\subeq [k]$, let $p_S =\sum_{i\in S} p_i$ and 
    \[ \pi_S = p_S^{-1}\sum_{i\in S} p_i \pi_i. \] 
    Note that we can write $\pi(x) = p_{S} \pi_{S}(x) + p_{S^c} \pi_{S^c}(x)$
where $S^c $ is the complement of $S.$
Let $V_S$ be such that $\pi_S (x)= \exp(-V_S(x))$.
\end{definition}
\begin{lemma}\label{l:expected gradient bound}
Suppose $\pi$ satisfies \Cref{assumption:mixture} with the Poincar\'e inequality $\PI\prc{\al}$. 
    Let $G = \max_{1\le i\le k} \ve{\nb V_i(x)}$. Let $\tilde{G} = (\be \ka d)^{1/2} + \be L.$ Then for all $1\le i\le k$ and $p = O(1)$,
    \begin{align*}
    \E_{\pi_i} \ve{G(x)}^{2p} \lesssim \tilde{G}^p  \quad \text{and}\quad 
     \E_{\pi} \ve{G(x)}^{2p} \lesssim \tilde{G}^p .
\end{align*}
\end{lemma}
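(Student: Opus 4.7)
My plan is to reduce both bounds to controlling moments of $\|x - \bar x_j\|$ under $\pi_j$, which Poincaré concentration lets us handle. I will also implicitly assume the stated bound $\tilde G^p$ on the right-hand side should read $\tilde G^{2p}$, since with $\tilde G = (\beta\kappa d)^{1/2} + \beta L$ the quantity $G(x)^{2p}$ has typical size $(\beta d)^p$, which is not controlled by $\tilde G^p$ in general.

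The core pointwise estimate will be a triangle-plus-smoothness bound for every component $V_i$ evaluated at a point $x$, anchored at the mean $\bar x_j$ of whichever component governs the integration:
\[
\|\nabla V_i(x)\| \le \|\nabla V_i(\bar x_i)\| + \|\nabla V_i(\bar x_j) - \nabla V_i(\bar x_i)\| + \|\nabla V_i(x) - \nabla V_i(\bar x_j)\| \le \sqrt{\beta d} + \beta L + \beta \|x-\bar x_j\|,
\]
where the first term uses the already-established bound $\|\nabla V_i(\bar x_i)\| \lesssim \sqrt{\beta d}$ from Lemma~\ref{l:dV-mean}, the second uses $\beta$-smoothness of $V_i$ together with the mean-separation hypothesis $\|\bar x_i - \bar x_j\|\le L$, and the third is again $\beta$-smoothness. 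Maximizing over $i$ gives $G(x) \lesssim \sqrt{\beta d} + \beta L + \beta\|x-\bar x_j\|$, so
\[
G(x)^{2p} \lesssim (\beta d)^p + (\beta L)^{2p} + \beta^{2p}\|x-\bar x_j\|^{2p}.
\]

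Next I will bound $\E_{\pi_j}\|x-\bar x_j\|^{2p}$. Applying the Poincaré inequality coordinatewise gives $\E_{\pi_j}\|x-\bar x_j\|^2 \le d/\alpha$. For higher moments I will invoke Bobkov--Ledoux exponential concentration: since $x\mapsto \|x-\bar x_j\|$ is $1$-Lipschitz and $\pi_j$ satisfies $\PI(1/\alpha)$, the tails decay like $\exp(-c t\sqrt\alpha)$, so for any $p = O(1)$,
\[
\E_{\pi_j}\|x - \bar x_j\|^{2p} \lesssim (d/\alpha)^p.
\]
Multiplying by $\beta^{2p}$ yields $\beta^{2p}\E_{\pi_j}\|x-\bar x_j\|^{2p} \lesssim (\beta\kappa d)^p$, which dominates $(\beta d)^p$ since $\kappa \ge 1$. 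Combining,
\[
\E_{\pi_j} G(x)^{2p} \lesssim (\beta\kappa d)^p + (\beta L)^{2p} \lesssim \tilde G^{2p},
\]
which is the first claim.

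The second claim follows by linearity: $\E_\pi G(x)^{2p} = \sum_{j=1}^k p_j \E_{\pi_j} G(x)^{2p} \lesssim \tilde G^{2p}$, using $\sum_j p_j = 1$. The main obstacle is really just the tail bound on $\|x-\bar x_j\|$ under $\PI$; once this is in place, everything else is triangle inequality plus $\beta$-smoothness. No log-Sobolev is needed, only the mean-separation hypothesis and the Poincaré concentration property, so the argument is valid throughout the regime of Assumption~\ref{assumption:mixture}.
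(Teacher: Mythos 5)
Your proposal is correct and follows essentially the same route as the paper: triangle inequality plus $\be$-smoothness anchored at the component means, the mean-separation bound $L$, the gradient-at-mean estimate, and the Poincar\'e-based moment bound $\E_{\pi_j}\ve{x-\mean_j}^{2p}\lesssim_p (d/\al)^p$ (this is exactly \eqref{eq:poincare moment bound} from \cref{l:pi-conc}), followed by averaging over components; your reading of the right-hand side as $\tilde G^{2p}$ also matches what the paper's proof actually establishes and how the lemma is used downstream. One small inaccuracy: \cref{l:dV-mean} gives $\ve{\nb V_i(\mean_i)}\lesssim \be\sqrt{\CP d}\asymp\sqrt{\be\ka d}$, not $\sqrt{\be d}$ (the latter is not true in general for a $\be$-smooth measure that only satisfies a Poincar\'e inequality); using the correct bound leaves your argument and final estimate unchanged, since the dominant term is $(\be\ka d)^p$ in either case.
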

\begin{proof}
For all $1\le j\le k$,
    \begin{align*}
\ve{\nb V_j(x)}
&\le \ve{\nb V_j(x) - \nb V_j(\mean_j)} + \ve{\nb V_j(\mean_j)}\\
&\le \be \ve{x-\mean_j} + \ve{\nb V_j(\mean_j)}\\
&\lesssim \be \ve{x-\mean_i} + \be \ve{\mean_i - \mean_j}+\be \sfc{d}{\al} 
\end{align*}
by \Cref{l:dV-mean}.
Then by \eqref{eq:poincare moment bound},
\begin{align*}
    \ve{G(x)}^{2p} &\lesssim \be^{2p} \ve{x-\mean_i}^{2p} + \be^{2p} L^{2p} + (\be \ka d)^p\\
    \implies \E_{\pi_i} \ve{G(x)}^{2p} &\lesssim \E_{\pi_i}[\ve{x-\bar{x_i}}^{2p}] + (\be\ka d)^p + (\be L)^{2p} \lesssim (\be\ka d)^p + (\be L)^{2p}.
    \qedhere
\end{align*}
\end{proof}
\begin{lemma}[$L^2$ score error from removing small components]\label{l:score-error-small}
Suppose $\pi$ satisfies \Cref{assumption:mixture} with the Poincar\'e inequality $\PI\prc{\al}$. 
    Then 
    \[
\E_\pi \ve{\nb V(x) - \nb V_S(x)}^2 
\lesssim p_{S^c} (\be \ka d + \be^2 L^2).
    \]
\end{lemma}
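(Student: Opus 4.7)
The plan is to first derive an exact pointwise formula for $\nabla V - \nabla V_S$, and then use a change-of-measure argument to reduce to a second-moment bound on the component gradients that already appears in \cref{l:expected gradient bound}.

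First, I would use the decomposition $\pi = p_S\pi_S + p_{S^c}\pi_{S^c}$ from \Cref{def:partial distribution} to write the score of $\pi$ as a $\pi$-dependent convex combination of the scores of $\pi_S$ and $\pi_{S^c}$. Setting $w_{S^c}(x) = p_{S^c}\pi_{S^c}(x)/\pi(x) \in [0,1]$, a direct calculation gives
\[
-\nabla V(x) = (1 - w_{S^c}(x))(-\nabla V_S(x)) + w_{S^c}(x)(-\nabla V_{S^c}(x)),
\]
which after rearranging yields the clean pointwise identity
\[
\nabla V(x) - \nabla V_S(x) \;=\; w_{S^c}(x)\bigl(\nabla V_{S^c}(x) - \nabla V_S(x)\bigr).
\]

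Next, I would square, use $w_{S^c}(x)^2 \le w_{S^c}(x)$, and apply the elementary bound $\ve{\nabla V_{S^c} - \nabla V_S}^2 \le 2(\ve{\nabla V_S}^2 + \ve{\nabla V_{S^c}}^2)$. Because the score of any mixture is a convex combination of component scores (by the same identity applied within $\pi_S$ and $\pi_{S^c}$), we have $\ve{\nabla V_S(x)} \le \max_{j} \ve{\nabla V_j(x)} = G(x)$ and likewise $\ve{\nabla V_{S^c}(x)} \le G(x)$. Therefore
\[
\ve{\nabla V(x) - \nabla V_S(x)}^2 \;\lesssim\; w_{S^c}(x) \, G(x)^2.
\]

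Then I would change measures: since $w_{S^c}(x)\pi(x) = p_{S^c}\pi_{S^c}(x)$, we get $\E_\pi[w_{S^c}(x) G(x)^2] = p_{S^c}\E_{\pi_{S^c}}[G(x)^2]$. Writing $\pi_{S^c} = p_{S^c}^{-1}\sum_{i\in S^c} p_i\pi_i$ reduces this to a convex combination of $\E_{\pi_i}[G(x)^2]$ for $i\in S^c$, each of which is bounded by $\tilde G^2 = \be\ka d + \be^2 L^2$ (up to constants) by \Cref{l:expected gradient bound} with $p=1$. Combining gives the claimed bound.

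I do not anticipate a major obstacle: the only step that requires care is recognizing the identity $\nabla V - \nabla V_S = w_{S^c}(\nabla V_{S^c} - \nabla V_S)$ so that the factor of $w_{S^c}$ produces exactly the change-of-measure weight $p_{S^c}\pi_{S^c}/\pi$, which is what gives the final factor of $p_{S^c}$ rather than a trivial bound independent of $|S^c|$.
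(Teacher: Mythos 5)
Your proposal is correct and follows essentially the same route as the paper: the paper writes $\nabla V - \nabla V_S = \pi(x)^{-1}\sum_{i\in S^c} p_i\pi_i(x)(\nabla V_i - \nabla V_S)$ (which is your identity $w_{S^c}(\nabla V_{S^c}-\nabla V_S)$ in unfactored form), bounds the gradient differences pointwise by $2G(x)$ using that mixture scores are convex combinations of component scores, uses $w_{S^c}\le 1$ to change measure and obtain $\sum_{i\in S^c}p_i\,\E_{\pi_i}[G^2]$, and concludes via \cref{l:expected gradient bound}. No gaps.
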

\begin{proof}
First note that
    \begin{align*}
        \nabla V(x) - \nabla V_{S}(x) = \frac{\sum_{i\in S^c} p_i \pi_i(x) (\nabla V_i(x) - \nabla V_{S}(x)) }{p_S \pi_{S^c}(x) + p_{S}\pi_S(x)}.
    \end{align*}
    Letting $G(x) = \max_{1\le i\le k}\ve{\nb V_i(x)}$, note that $\ve{\nb V (x)}, \ve{\nb V_S(x)} \leq G(x)$ by \cite[Proposition 6]{koehler2023sampling}.
Thus, we have 
\begin{align}
\nonumber
    \E_\pi \ve{\nb V(x) - \nb V_S(x)}^2 
    &\le 
    4 \int \ve{G(x)}^2 \frac{p_{S^c} \pi_{S^c}(x)}{p_{S^c}\pi_{S^c}(x) + p_{S} \pi_{S}(x)} \sum_{i\in S^c} p_i \pi_i(x)\,dx\\
    &\le 
    4\sum_{i\in S^c} p_i \E_{\pi_i}\ve{G(x)}^2.
    \label{e:V-VS-error}
\end{align}
 Then \eqref{e:V-VS-error} combined with \Cref{l:expected gradient bound} give the desired bound.
\end{proof}

\begin{theorem}\label{t:ld-better}

Let $ \hat{\pi}$ be the uniform distribution over $n$ i.i.d. samples from $\pi$\footnote{as in \Cref{thm:main}} and $  \wt \nu_T$ be the distribution at time $t$ of the continuous Langevin diffusion initialized at the empirical distribution $\hat{\pi}$ driven by $s$, where $s$ satisfies $\E_{\pi}\ve{s - \nabla \ln \pi }^2\le \esc^2$.

    Suppose that $\pi$ satisfies \Cref{assumption:mixture} with the log-Sobolev inequality $\LSI\prc{\al}$, and $n=\Om\pa{\fc{k}{\etv^2} \ln \pf k\de}$ with appropriate constants.
    Let $\esm\in(0,\rc 2)$. 
    There is 
\[t_1=O\pa{\rc{\al} \ln (\ka d\ln (
k/\etv) + \be L^2)},\] 
such that for $T\ge t_1$, with probability $\ge 1-\de$, 
\[
\TV(\wt \nu_T, \pi) 
\lesssim \etv + \pf{k}{\esm}^{1/2} e^{-\al (T-t_1)}+ \sqrt T \pa{\esc + \sqrt{\esm(\be \ka d + \be^2 L^2)}} + \fc{\ln (1/\de)}{n}+ \esm.
\]

In particular, when $T=t_1+\Om\pa{\rc{\al} \ln \pa{\fc{  k  }{\etv \esm} }}$, $\esc = O\pf{\etv}{\sqrt T}$, and 
$\esm =O\pf{\etv^2}{T(\be \ka d + \be^2 L^2)}$
with appropriate constants, then this is $\le \etv$. 
The condition on $\esm$ can be satisfied by taking $T = \Te\pa{\rc \al \ln \pf{k(\ka^2 d + \be\ka L^2)}{\etv}}$ and \[\esm = O\pf{\etv^2 \al}{\ln \pf{
k\ka d + \be L^2}{\etv} (\be \ka d + \be^2 L^2)} = O\pf{\etv^2 }{\ln \pf{k \ka^2 d + \be\ka L^2}{\etv} (\ka^2 d + \be\ka L^2)} \] for appropriate constants.

\end{theorem}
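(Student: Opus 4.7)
The plan is to eliminate the $1/\sqrt{p^*}$ factor in Theorem~\ref{l:ld}(2) by truncating $\pi$ to the submixture $\pi_S$ (in the sense of Definition~\ref{def:partial distribution}) that drops all components with mixing weight below $\esm/k$, applying Theorem~\ref{l:ld}(2) to $\pi_S$, and absorbing the extra errors using Lemma~\ref{l:score-error-small} and Theorem~\ref{thm:generalize main}. Concretely, set $S = \{i\in[k]:p_i \ge \esm/k\}$ so that $p_{S^c} \le \esm \le 1/2$ and hence $p_S \ge 1/2$. Then $\pi_S$ is a mixture of at most $k$ components, each still $\LSI(1/\al)$ and $\be$-smooth with the same $L$-bound on pairwise mean distances, so $\pi_S$ satisfies Assumption~\ref{assumption:mixture}; moreover its minimum mixture weight is $\min_{i\in S} p_i/p_S \ge \esm/k$.

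Two discrepancies prevent a direct appeal to Theorem~\ref{l:ld}(2) on $\pi_S$, and both are handled by pieces already in the paper. First, the empirical initialization $\hat\pi$ consists of samples from $\pi$, not $\pi_S$, but $\TV(\pi,\pi_S) \le p_{S^c} \le \esm$, so as long as $\esm \le \etv/16$ we may invoke Theorem~\ref{thm:generalize main} in place of Theorem~\ref{thm:main} inside the proof of Theorem~\ref{l:ld}(2). Second, the diffusion is driven by $s$ which approximates $\nabla\ln\pi$, not $\nabla\ln\pi_S$. Lemma~\ref{l:score-error-small} gives $\E_\pi\|\nabla\ln\pi - \nabla\ln\pi_S\|^2 \lesssim \esm(\be\ka d + \be^2 L^2)$; combined with $\E_\pi\|s-\nabla\ln\pi\|^2 \le \esc^2$ and the pointwise domination $\pi_S \le 2\pi$ (since $\pi \ge p_S \pi_S \ge \tfrac12 \pi_S$), the triangle inequality in $L^2(\pi_S)$ yields
\[
\bigl(\E_{\pi_S}\|s - \nabla\ln\pi_S\|^2\bigr)^{1/2} \lesssim \esc + \sqrt{\esm(\be\ka d + \be^2 L^2)} =: \esc'.
\]
This is exactly the hypothesis of Theorem~\ref{l:ld}(2) applied to $\pi_S$ with score error $\esc'$ and $p^* \ge \esm/k$.

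Running this application of Theorem~\ref{l:ld}(2) with $t_1 = O(\tfrac{1}{\al}\ln(\ka d\ln(k/\etv) + \be L^2))$ produces, with probability $\ge 1-\de$,
\[
\TV(\wt\nu_T, \pi_S) \lesssim \etv + \sqrt{k/\esm}\,e^{-\al(T-t_1)} + \sqrt{T}\,\esc' + \tfrac{\ln(1/\de)}{n},
\]
and the final triangle-inequality step $\TV(\wt\nu_T,\pi) \le \TV(\wt\nu_T,\pi_S) + \esm$ gives the displayed bound. The stated choices of $T$ and $\esm$ follow by balancing terms: picking $T = t_1 + \Theta(\tfrac{1}{\al}\ln(k/(\etv\esm)))$ kills the exponential term, $\esc = O(\etv/\sqrt{T})$ kills the raw score-error contribution, and $\esm = O(\etv^2/(T(\be\ka d + \be^2 L^2)))$ simultaneously controls $\sqrt{T\esm(\be\ka d + \be^2 L^2)}$ and the additive $\esm$. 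Substituting this $\esm$ back into the formula for $T$ and solving the (mildly implicit) equation yields the explicit $T = \Theta(\tfrac{1}{\al}\ln(k(\ka^2 d + \be\ka L^2)/\etv))$ and corresponding $\esm$ stated in the theorem. The only delicate point is ensuring the $L^2(\pi)$-to-$L^2(\pi_S)$ transfer of the score error (clean, via $\pi_S \le 2\pi$), and the book-keeping needed to see that the parameter balance is self-consistent; everything else is a mechanical combination of Theorems~\ref{l:ld}(2), \ref{thm:generalize main} and Lemma~\ref{l:score-error-small}.
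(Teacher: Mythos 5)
Your proposal is correct and follows essentially the same route as the paper: truncate to the submixture $\pi_S$ with weights above $\esm/k$, transfer the score error to $L^2(\pi_S)$ via Lemma~\ref{l:score-error-small} and the pointwise bound $\pi_S \le 2\pi$, rerun the argument of Theorem~\ref{l:ld}(2) with Theorem~\ref{thm:generalize main} replacing Theorem~\ref{thm:main} to handle that the samples come from $\pi$ rather than $\pi_S$, and finish with the triangle inequality $\TV(\pi_S,\pi)\lesssim\esm$ and the same parameter balancing. The paper's proof does exactly this, so no further comparison is needed.
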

\begin{proof}
    Let 
    \[
S = \bc{i:p_i>\fc{\esm}k}. 
    \]
    Then $p_{S^c} \le \esm$. 
    We have samples from $\pi$, but treat them as samples from $\pi_S$ with TV error $\le \esm$. Comparing Langevin with $\nb V$ and $\nb V_S$, by \Cref{l:score-error-small}, the score error is
    \begin{align*}
\E_{\pi_S} \ve{s(x) -(- \nb V_S(x))}^2
&\le
2\E_{\pi} \ve{s(x) - (-\nb V_S(x))}^2\\
&\lesssim \E_{\pi} \ve{s(x) -(-\nb V(x))}^2 + \E_{\pi} \ve{\nb V(x) -\nb V_S(x)}^2 \\
&\lesssim \esc^2 + p_{S^c} (\be \ka d + \be^2 L^2)
\le \esc^2 + \esm (\be \ka d + \be^2 L^2).
    \end{align*}
    Let $ \bar{\nu}_t$ be the distribution at time $t$ of the continuous Langevin diffusion initialized at the empirical distribution $\hat{\pi}$ driven by the score functions $\nabla \ln \pi_S\equiv -\nb V_S.$ 

    Note that $ \pi_S =\sum_{i\in S} p'_i \pi_i$ where $ p'_i =p_i/p_S\geq p_i > \fc{\esm}k.$
    As in the proof of \Cref{l:ld}(2), by \Cref{lem:better bound on init} and \Cref{thm:generalize main}, choosing $t_1, T$ as in the theorem statement gives with probability $\ge 1-\fc\de2$ that 
    \[
\TV(\bar \nu_T, \pi_S)\lesssim \etv + \pf{k}{\esm}^{1/2} e^{-\al(T-t_1)}.
    \]
By \Cref{l:ld-score-error-mix} and \Cref{thm:approximate-sample-init}, with probability $\ge 1-\fc\de2$, 
\[
\TV(\bar\nu_T,\wt \nu_T)
\lesssim
\sqrt{T}\pa{\esc + \sqrt{\esm (\be \ka d + \be^2 L^2)}}  + \fc{\ln (1/\de)}{n}.
\]
Noting $\TV(\pi_S, \pi)\lesssim \esm$, by the triangle inequality, with probability $\ge 1-\de$,
\[
\TV(\wt \nu_T, \pi)\lesssim 
\etv + \pf{k}{\esm}^{1/2} e^{-\al(T-t_1)} + \sqrt{T}\pa{\esc + \sqrt{\esm (\be \ka d + \be^2 L^2)}}  + \fc{\ln (1/\de)}{n} + \esm.
\]
Choosing parameters as in the theorem statement then makes this $\le \etv$.
\end{proof}

\subsection{Convergence for Langevin Monte Carlo}
\label{s:lmc}

\begin{theorem}[Langevin Monte Carlo on mixture] \label{thm:langevin discretization}
Let $\pi =\exp(-V(x))=\sum_{i=1}^k p_i \pi_i$ be a mixture of distributions  $\pi_i \propto \exp(-V_i(x))$ satisfying \cref{assumption:mixture}.
Let $\tilde{G} = (\be \ka d)^{1/2} + \be L.$ 

Let $\hat{\pi}$ be the empirical distribution i.e. the uniform distribution over be the multiset $U_{\text{sample}}$ of $y_1, \ldots, y_n$ i.i.d. sampled from $\pi.$ 

Let $s$ be such that $\E_{\pi}\ve{s - (-\nb V)}^2\le \esc^2$. 
Let $ X_t^{\hat{\pi}}$ be the LMC driven by $s$ with step size $h$ initialized at $\hat{\pi}.$
Suppose  $n \geq \Omega\left(\frac{k}{\etv ^2} \ln\pf{k}{ \de}\right).$
We state two results, corresponding to the two different functional inequalities for the mixture components. 
\begin{enumerate}
\item
Under $\PI\prc{\al},$ 
    suppose
    \[
    T =\Te\pa{\fc \ka\al \pa{d + \ln \pf{k}{\etv}^2} + \ka L^2 + \fc{d}{\al} \ln 
    \ka +  \rc \al \ln \prc{\etv}} \]
    and
    \[ h = O\left(\frac{\etv^2}{d (\tilde{G}^4 + \be^4) T }\right), \quad \esc  = O\left(\frac{\etv }{\sqrt{T}}\right), \quad \de = e^{-O(n\etv)}\]
with appropriate constants.
Then
    with probability $\geq 1-\de$ over the randomness of $U_{\text{sample}},$ 
    \[ \TV (\mathcal L(X^{\hat \pi}_T\mid \hat \pi), \pi) \le \etv . \]
    \item Under $\LSI\prc{\al}$, suppose
    \[
    T = \Te\pa{\rc \al \ln \pf{
    k\ka d + \be L^2}{\etv}},\,  h = O\pa{\frac{\etv^2}{d (\tilde{G}^4 + \be^4) T }} 
,\, \esc =  O\pa{\frac{\etv }{\sqrt{T}}},\, \de = e^{-O(n\etv)}\]
with appropriate constants.
Then
    with probability $\geq 1-\de$ over the randomness of $U_{\text{sample}},$ 
    \[ \TV (\mathcal L(X^{\hat \pi}_T\mid \hat \pi), \pi) \le \etv.  \]
\end{enumerate}
\end{theorem}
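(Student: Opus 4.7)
The plan is to reduce the discretized setting to the continuous-time statements already proved, using the path-space perturbation machinery from \cref{s:perturb}. Concretely, the argument combines three ingredients: (i) continuous-time convergence of the true-score Langevin diffusion from data-based initialization; (ii) a KL bound on path space comparing the true-score continuous diffusion to LMC with the estimated score $s$, both initialized at the stationary measure $\pi$; and (iii) transfer of this KL bound to $\hat\pi$-initialization with high probability via \Cref{thm:approximate-sample-init}.

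First, let $\bar X^{\hat\pi}_t$ denote the \emph{continuous} Langevin diffusion driven by the true score $-\nabla V$ and initialized at $\hat\pi$. Running the argument from the proof of \Cref{l:ld}(1) with $\esc=0$ in case (1), and the argument from the proof of \Cref{t:ld-better} with $\esc=0$ in case (2) (retaining the sub-mixture reduction to $\pi_S$ to avoid a $1/\min_i p_i$ factor), the stated choices of $T$ give
\[ \TV\bigl(\mathcal L(\bar X^{\hat\pi}_T \mid \hat\pi),\, \pi\bigr) \lesssim \etv \]
with probability at least $1-\delta/2$ over $\hat\pi$.

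Second, I invoke the second inequality of \Cref{l:ld-score-error-mix}. Its hypothesis $\E_\pi G^{6} \le \tilde G^{6}$ is supplied by \Cref{l:expected gradient bound} with $\tilde G = (\be\ka d)^{1/2} + \be L$. This yields
\[ \KL\bigl(\mathcal L((\bar X^{\pi}_t)_{0\le t\le T}) \,\big\|\, \mathcal L((X^{\pi}_t)_{0\le t\le T})\bigr) \lesssim T\bigl(\be^4(hd + h^6\tilde G^6 + h^3 d^3) + (h^2\tilde G^2 + hd)\tilde G^4 + \esc^2\bigr). \]
With the stated choices $h \lesssim \etv^2 /(Td(\tilde G^4+\be^4))$ and $\esc \lesssim \etv/\sqrt T$, each term in parentheses is $O(\etv^2/T)$, so the right-hand side is $O(\etv^2)$. \Cref{thm:approximate-sample-init} then upgrades this to
\[ \TV\bigl(\mathcal L(X^{\hat\pi}_T\mid\hat\pi),\, \mathcal L(\bar X^{\hat\pi}_T\mid\hat\pi)\bigr) \lesssim \etv + \tfrac{\log(1/\delta)}{n} \lesssim \etv \]
with probability at least $1-\delta/2$, using $n = \Omega(k\log(k/\delta)/\etv^2)$. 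A union bound and the triangle inequality combine the two pieces, and rescaling $\etv$ by an absolute constant finishes both cases.

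The main obstacle is the bookkeeping in the second step: the discretization bound has several terms with different scalings in $h$, $d$, $\tilde G$, and $\be$, and one must verify that the single constraint $h \lesssim \etv^2/(Td(\tilde G^4+\be^4))$ drives \emph{all} of them below $\etv^2/T$. This is routine using $\etv \le 1$, $T \gtrsim 1/\al$, $h \le 1/\be$, and $\tilde G^2 \gtrsim \be d$, but it is the only part of the argument not directly inherited from the continuous-time theorems. The parameter choices in cases (1) and (2) differ only through $T$ (polynomial vs.\ logarithmic dependence on $k/\etv$), which is dictated entirely by Step~1.
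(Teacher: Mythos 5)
Your proposal is correct and follows essentially the same route as the paper's proof: continuous-time convergence from data-based initialization via \cref{l:ld}(1) and \cref{t:ld-better}, the discretization-plus-score-error KL bound of \cref{l:ld-score-error-mix} (with moments from \cref{l:expected gradient bound}) transferred to empirical initialization by \cref{thm:approximate-sample-init}, then a union bound and triangle inequality. The parameter bookkeeping you flag is handled in the paper at the same level of detail (it is simply asserted to follow from the choice of $h$ and $\esc$), so nothing is missing relative to the paper's argument.
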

\begin{remark}
    When $L\geq 1,$ $\norm{\nabla^2 V}_{OP}$ can be as large as $\beta_* = (\beta L)^2 = \Theta (\tilde{G}^2),$ thus the step size $h =\frac{1}{d (\tilde{G}^4+\beta^4) T} =\Theta(\frac{1}{d\beta_*^2 T})$  is expected for Langevin Monte-Carlo.
\end{remark}
\begin{proof} The proof is similar to that of \Cref{l:ld}. 
We can compare between the Langevin Monte Carlo and the continuous Langevin diffusion by combining \Cref{l:ld-score-error-mix,l:expected gradient bound} and \Cref{thm:approximate-sample-init}.
Let
$ \bar{X}_t^{\hat{\pi}}$ be the continuous Langevin diffusion initialized at $\hat{\pi}$. Then with probability $\ge 1-\fc \de 2$, 
\begin{align*}
    &\TV  ( \mathcal L(\bar{X}^{\hat \pi}_T\mid \hat \pi), \mathcal L(X^{\hat \pi}_T\mid \hat \pi) ) \\
    &\lesssim 
\sqrt{T\cdot
        \pa{\be^4 (hd + h^6\tilde G^6 + h^3 d^3) + (h^2 \tilde G^2 + hd) \tilde G^4 + \esc^2}}
+ \fc{\ln (1/\de)}{n}\leq \fc{\etv}{2}\end{align*}
by our choice of $ h$ and $\esc.$

Part 1 is by combining \Cref{l:ld}(1) and the triangle inequality. Indeed, the choice of $T$ implies
\[\TV  ( \mathcal L(X^{\hat \pi}_T\mid \hat \pi), \pi) \leq \etv /2 +\TV  ( \mathcal L(\bar{X}^{\hat \pi}_T\mid \hat \pi), \pi) \leq \etv \]
Similarly, part 2 follows by combining \Cref{t:ld-better} and the triangle inequality.

\end{proof}

\section{Application to low-complexity Gibbs measures}\label{s:learning-ising}
\subsection{Multiplicative mixture approximation}
The following result shows that for spin systems with pairwise/quadratic interactions, we can eliminate large positive eigenvalues in the interaction matrix at the cost of decomposing them as ``small'' mixture distributions. The main idea is to apply the Hubbard-Stratonovich transform to the large eigendirections of the interaction matrix, which naturally decomposes the distribution into an explicit infinite mixture, and then argue that this decomposition can be discretized to a finite mixture with small loss. This is inspired by \cite{koehler2022sampling}, where a related idea was used to build an approximate sampling algorithm in the case of the Ising model (i.e. $\mu_i$ supported on $\{\pm 1\}$). The argument here is done in a more general setting and crucially guarantees a multiplicative approximation to the true density. 
\begin{theorem}\label{thm:mult-approx-mixture}
Suppose that $\mu = \mu_1 \otimes \cdots \otimes \mu_n$ is a product measure where each $\mu_i$ is supported within a Euclidean ball in $\mathbb{R}^d$ of radius $D$. Suppose that the measure $\pi$ is absolutely continuous to $\mu$ with Radon-Nikodym derivative
\[ \frac{d\pi}{d\mu}(x) \propto \exp\left(\frac{1}{2} \langle x, J x \rangle\right). \]
Suppose that $J = J_+ + \tilde J$ and that $J_+$  has rank $r$ with eigenvalues $\lambda_1 \ge \lambda_2 \ge \cdots \ge \lambda_r > 0$. Then there exists a set $S$ and a mixture distribution $\pi_2$ with mixing weights $p_h > 0$ for $h \in S$ such that:
 \[ \pi_2(x) = \sum_{h \in S} p_h \tilde{\pi}_h(x) \]
 where 
 $\tilde{\pi}_h$ is the probability measure with normalizing constant $Z_H$ satisfying
\[ \frac{d\tilde{\pi}_h}{d\mu}(x) = \frac{1}{Z_h} \exp\left(\langle x, \tilde J x \rangle + \langle h, x \rangle\right). \]
Furthermore, 
\[ \frac{d\pi_2}{d\pi}(x) \in [1/e^3,e^3] \]
for all $x$, and
\[ |S| \le \left(D^2 n \lambda_1 + D\sqrt{n}r\sqrt{\lambda_1} + \sqrt{\lambda_1 D^2 n r \ln(\lambda_r^{-1/2} + D\sqrt{n})}\right)^{O(r)}.  \]
\end{theorem}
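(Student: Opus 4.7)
The plan is to use the Hubbard--Stratonovich transform to represent $\pi$ as a continuous Gaussian mixture over $r$-dimensional ``tilts,'' and then to discretize that mixture on a grid with a pointwise multiplicative error bound. Write $J_+ = VV^\top$ with $V \in \R^{n\times r}$. The Gaussian identity
\[ \exp\pa{\tfrac12 \an{x, J_+ x}} = (2\pi)^{-r/2} \int_{\R^r} \exp\pa{\an{Vg,x} - \tfrac12 \ve{g}^2}\,dg \]
turns $\dd{\pi}{\mu}(x) \propto \exp(\tfrac12 \an{x,Jx})$ into $\dd{\pi}{\mu}(x) \propto \int_{\R^r} f(g,x)\,dg$, where
\[ f(g,x) := \exp\pa{\tfrac12 \an{x,\tilde J x} + \an{Vg,x} - \tfrac12 \ve{g}^2}. \]
Equivalently, $\pi$ is already a continuous mixture of densities of the required form $\tilde\pi_h$ (with $h = Vg$), with mixing weight proportional to $Z_{Vg} e^{-\ve{g}^2/2}$; the task reduces to approximating the integral by a finite Riemann sum with uniform multiplicative control.

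Next, I would truncate the integral to a ball $B_R \subseteq \R^r$. Every $x \in \supp(\mu)$ satisfies $\ve{x} \le D\sqrt n$, so $\ve{V^\top x} \le \sqrt{\la_1} D\sqrt n$. Since $e^{\an{Vg,x} - \ve{g}^2/2}$ is, up to an $x$-dependent constant, the unnormalized density of $\mathcal N(V^\top x, I_r)$, a standard Gaussian tail bound shows the mass of the integrand outside $B_R$ is at most $\exp(-c(R - \sqrt{\la_1}D\sqrt n - \sqrt r)^2)$ of the full integral, uniformly in $x$. Picking $R$ of order $\sqrt{\la_1}D\sqrt n + \sqrt r + \sqrt{\ln(\la_r^{-1/2} + D\sqrt n)}$ reduces the truncation error to a multiplicative factor of $e^{\pm 1}$ uniformly in $x \in \supp(\mu)$.

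On the truncated region I discretize with a cubic grid of mesh $\eta$ in $\R^r$. The log-gradient in $g$ is $\nabla_g \log f(g,x) = V^\top x - g$, so $\ve{\nabla_g \log f(g,x)} \le \sqrt{\la_1} D\sqrt n + R$ uniformly over $g \in B_R$ and $x \in \supp(\mu)$. Choosing $\eta \asymp (\sqrt{\la_1}D\sqrt n + R)^{-1}$ makes $f(\cdot, x)$ vary by at most a factor of $e^{\pm 1}$ on each grid cell, again uniformly in $x$. The Riemann sum $\sum_i \eta^r f(g_i, x)$ then agrees with $\int f(g,x)\,dg$ up to a multiplicative $e^{\pm 1}$ factor uniformly in $x$. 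Combining the two error sources and renormalizing yields $\dd{\pi_2}{\pi}(x) \in [e^{-3}, e^3]$ for the finite mixture $\pi_2 = \sum_i p_i \tilde\pi_{Vg_i}$ with weights $p_i \propto \eta^r e^{-\ve{g_i}^2/2} Z_{Vg_i}$.

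The cardinality bound is just the number of grid cells in $B_R \subseteq \R^r$, namely $|S| \lesssim (R/\eta)^r \lesssim ((\sqrt{\la_1}D\sqrt n + R) R)^r$; expanding with the chosen $R$ and bookkeeping the three regimes (dominated by $\la_1 D^2 n$, by a mixed term of the form $r \sqrt{\la_1}D\sqrt n$, or by the logarithmic tail contribution $\sqrt{\la_1 D^2 n\,r\ln(\la_r^{-1/2}+D\sqrt n)}$) reproduces the stated bound. The main obstacle is the balancing in the discretization step: the Lipschitz constant of $\log f$ grows linearly in both $\ve{x}$ and $R$, so the mesh $\eta$ and the truncation radius $R$ must be tuned in tandem; however, because $f$ has an explicit Gaussian form in $g$, both the tail estimate and the Lipschitz calculation reduce to routine manipulations rather than delicate estimates.
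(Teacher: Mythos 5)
Your proposal follows essentially the same route as the paper's proof: the Hubbard--Stratonovich identity exhibits $\pi$ as a continuous mixture of tilted product measures, the Gaussian mixing variable is truncated to a ball of radius $R$, and the truncated mixture is discretized, with every error controlled multiplicatively and uniformly in $x$ via $\|x\|\le D\sqrt n$. Your parametrization $h=Vg$ with $J_+=VV^\top$ and $g\sim\mathcal N(0,I_r)$ is in fact a small simplification: the truncation error becomes a relative Gaussian tail for $\mathcal N(V^\top x,I_r)$, compared directly against the full integral for the same $x$, so you never need the paper's lower bound on the bulk integral (the step that brings in $\lambda_r$); consequently the $\sqrt{\ln(\lambda_r^{-1/2}+D\sqrt n)}$ term in your $R$ is unnecessary, and $R\asymp\sqrt{\lambda_1}D\sqrt n+\sqrt r+O(1)$ already gives a uniform $e^{\pm1}$ truncation factor.

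The one place where your bookkeeping does not reproduce the stated bound is the discretization. Because your Riemann-sum weights freeze the Gaussian factor $e^{-\|g\|^2/2}$ at the cell centers, your per-cell Lipschitz constant is $\sqrt{\lambda_1}D\sqrt n+R$, forcing $\eta\lesssim(\sqrt{\lambda_1}D\sqrt n+R)^{-1}$ and a count with base roughly $\lambda_1 D^2 n+r+\ln(\lambda_r^{-1/2}+D\sqrt n)$. This matches the claimed $|S|$ bound in the typical regime $\lambda_1 D^2 n\gtrsim r$, but is genuinely weaker when $\lambda_1 D^2 n$ is small (the claimed base then degrades like $r\sqrt{\lambda_1 D^2 n}$, not $r$), and your gratuitous logarithmic term can also dominate when $\lambda_r$ is extremely small. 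The fix is exactly the paper's device: keep the Gaussian weight integrated exactly over each cell, i.e.\ take $p_i\propto\int_{C_i}e^{-\|g\|^2/2}Z_{Vg}\,dg$ with component $\tilde\pi_{Vg_i}$, so the mesh only needs to control the ratio $\tilde\pi_{Vg}(x)/\tilde\pi_{Vg_i}(x)$, whose logarithm is $2\sqrt{\lambda_1}D\sqrt n$-Lipschitz in $g$ (including the normalizing constants) independently of $R$; with mesh $\asymp(\sqrt{\lambda_1}D\sqrt n)^{-1}$ your count becomes $\bigl(R\sqrt{\lambda_1}D\sqrt n\bigr)^{O(r)}$, which is dominated by the stated bound. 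With that adjustment your argument is a correct, and slightly streamlined, version of the paper's proof.
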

\begin{remark}
In the greater than one-dimensional case, we view $x$ as an $n \times d$ matrix and the inner product is the usual Frobenius inner product on matrices.
\end{remark}
\begin{remark}
    The proof can be straightforwardly modified to replace the interval $[1/e^3,e^3]$ by one of width $[1 - \delta, 1 + \delta]$ for any $\delta > 0$, with a polylogarithmic dependence on $\delta$. 
\end{remark}
\begin{remark}[Exponential dependence on $r$ is necessary]
    Consider an Ising model which is a disjoint union of $r$ low temperature Curie-Weiss models each on $m$ spins. Since each Curie-Weiss model individually has a bimodal limiting distribution for the average magnetization $\frac{1}{m}\sum_i X_i$ (see e.g. \cite{ellis2007entropy}) as $m \to \infty$, the joint distribution of the $r$ magnetizations will have $2^r$ well-separated modes.
\end{remark}
\begin{proof}
Using the moment generating function of a Gaussian distribution (Hubbard-Stratonovich identity) we have
\[ e^{\|J_+^{1/2} x\|_2^2/2} = \E_{H \sim N(0, J_+)}\left[e^{\langle x, H \rangle}\right]. \] 
Therefore
\begin{align*} 
\frac{d\pi}{d\mu}(x) 
&\propto \E_{H \sim N(0,J_+)}\left[ \exp\left(\frac{1}{2} \langle x, \tilde J x \rangle + \langle x, H \rangle\right)\right]  \\
&\propto \E_{H \sim N(0,J_+)}\left[ \frac{Z_H}{Z_H}\exp\left(\frac{1}{2} \langle x, \tilde J x \rangle + \langle x, H \rangle\right)\right]  \\
&\propto \E_{H \sim N(0,J_+)}\left[ Z_H \tilde{\pi}_H(x) \right] \\
&\propto \int_{\im(J_+)} \tilde{\pi}_H(x) Z_H e^{-\langle H, J^{\dagger}_+ H \rangle/2} dH
\end{align*}
where the last integral is with respect to Lebesgue measure on the hyperplane $\im(J_+)$, and where $J_+^{\dagger}$ is the pseudoinverse of $J_+$. 
\[ \frac{d\tilde{\pi}_H}{d\mu}(x) = \frac{1}{Z_H} \exp\left(\langle x, \tilde J x \rangle + \langle H, x \rangle\right). \]

Let $R > 0$ to be fixed later and let $B_R(0)$ be the ball of radius $R$ centered at the origin. We have that
\[ \frac{d\pi}{d\mu}(x) \propto \int_{\im(J_+) \cap B_R(0)} \tilde{\pi}_H(x) Z_H e^{-\langle H, J^{\dagger}_+ H \rangle/2} dH +  \int_{\im(J_+) \setminus B_R(0)} \tilde{\pi}_H(x) Z_H e^{-\langle H, J^{\dagger}_+ H \rangle/2} dH\]
and for any $H$, we know by the Cauchy-Schwarz inequality that
\begin{equation}\label{eqn:part-ratio}
Z_H/Z_0 = \mathbb{E}_{X \sim \tilde \pi_0} e^{\langle H, X \rangle} \in [e^{-\|H\|_2 D \sqrt{n}},e^{\|H\|_2 D \sqrt{n}}], 
\end{equation}
and similarly
\begin{equation}\label{eqn:dens-ratio}
\tilde{\pi}_H(x)/\tilde{\pi}_0(x) = \frac{e^{\langle H, X \rangle}}{Z_H} \in [e^{-2\|H\|_2 D \sqrt{n}},e^{2\|H\|_2 D \sqrt{n}}]. 
\end{equation}
Let $C_r$ be the surface area of the ($r-1$ dimensional) unit sphere in $r$ dimensions. 
It follows that if $J_+$ has rank $r$ with eigenvalues $\lambda_1 \ge \lambda_2 \ge \lambda_r > 0$ that
\begin{align*} 
\int_{\im(J_+) \setminus B_R(0)} \tilde{\pi}_H(x) Z_H e^{-\langle H, J^{\dagger}_+ H \rangle/2} dH 
&\le \tilde{\pi}_0(x) Z_0 \int_{\im(J_+) \setminus B_R(0)}  e^{3\|H\|_2 D \sqrt{n} -\lambda_1^{-1} \|H\|_2^2/2} dH \\
&\le C_r \tilde{\pi}_0(x) Z_0 \int_{R}^{\infty} s^{r - 1}  e^{3s D \sqrt{n} -\lambda_1^{-1} s^2/2} ds \\
&\le C_r \tilde{\pi}_0(x) Z_0 e^{-\lambda_1^{-1} R^2/4} \int_{R}^{\infty} s^{r - 1}  e^{3s D \sqrt{n} -\lambda_1^{-1} s^2/4} ds \\
&\le 2 C_r \tilde{\pi}_0(x) Z_0 e^{-\lambda_1^{-1} R^2/4}
\end{align*}
where in the first inequality we applied \cref{eqn:part-ratio,eqn:dens-ratio},
in the second inequality we rewrote the integral in terms of spherical coordinates, and
the last step holds under the requirement $R = \Omega(D\sqrt{n}\lambda_r + r \lambda_1)$.

By similar arguments, we have that
\begin{align*} 
\int_{\im(J_+) \cap B_R(0)} \tilde{\pi}_H(x) Z_H e^{-\langle H, J^{\dagger}_+ H \rangle/2} dH 
&\ge \tilde{\pi}_0(x) Z_0 \int_{\im(J_+) \cap B_R(0)}  e^{-3\|H\|_2D\sqrt{n} - \lambda_r^{-1}\|H\|_2^2/2} dH \\
&\ge C_r \tilde{\pi}_0(x) Z_0 \int_{0}^R s^{r - 1}  e^{-3 s D\sqrt{n} - \lambda_r^{-1} s^2/2} ds \\
&\ge C_r \tilde{\pi}_0(x) Z_0 \int_{0}^{1/(\lambda_r^{-1/2} + D\sqrt{n})} s^{r - 1}  e^{-3 s D\sqrt{n} - \lambda_r^{-1} s^2/2} ds \\ 
&\ge e^{-4} (C_r/r) \tilde{\pi}_0(x) Z_0 [\lambda_r^{-1/2} + D\sqrt{n}]^{-r}.
\end{align*}

Hence
\[ \frac{\int_{\im(J_+) \setminus B_R(0)} \tilde{\pi}_H(x) Z_H e^{-\langle H, J_+ H \rangle/2} dH }{\int_{\im(J_+) \cap B_R(0)} \tilde{\pi}_H(x) Z_H e^{-\langle H, J_+ H \rangle/2} dH } \le 2e^4 r [\lambda_r^{-1/2} + D\sqrt{n}]^r e^{-\lambda_1^{-1} R^2/4} < 1/4  \]
provided that we additionally require $R = \Omega\left(\sqrt{r\lambda_1 \ln(\lambda_r^{-1/2} + D\sqrt{n})}\right)$.

Therefore if we define $\pi_1$ to be the probability measure with density
\[ \frac{d\pi_1}{d\mu}(x) \propto \int_{\im(J_+) \cap B_R(0)} \tilde{\pi}_H(x) Z_H e^{-\langle H, J_+ H \rangle/2} dH \]
we have shown that $\frac{d\pi_1}{d\pi}(x) \in [3/4, 5/4]$ for all $x$. Let $\delta > 0$ and $N_{\delta}(R)$ to be an $\delta$-net of $\im(J_+) \cap B_R(0)$. For any point $h \in N_{\delta}(R)$, define the set $S_h$ to be the subset of points in $\im(J_+) \cap B_R(0)$ such that $h$ is the closest point in the net, i.e. $S_h$ is the Voronoi region of $h$. Define probability measure
\[ \frac{d\pi_2}{d\mu}(x) \propto \sum_{h \in N_{\delta}(R)} \tilde{\pi}_h(x) \int_{S_h}  Z_H e^{-\langle H, J_+ H \rangle/2} dH. \]
Since $\tilde{\pi}_h(x)/\tilde{\pi}_H(x) \in [e^{-2 \delta D \sqrt{n}}, e^{2 \delta D \sqrt{n}}]$, it follows that if $\delta = 1/2D \sqrt{n}$ then for any $H \in N_{\delta}(R)$,
$\tilde{\pi}_h(x)/\tilde{\pi}_H(x) \in [1/e, e]$. Using that
\[ \frac{d\pi_2}{d\pi_1}(x) \propto \frac{\sum_{h \in N_{\delta}(R)} \tilde{\pi}_h(x) \int_{S_h}  Z_H e^{-\langle H, J_+ H \rangle/2} dH}{\int_{\im(J_+) \cap B_R(0)} \tilde{\pi}_H(x) Z_H e^{-\langle H, J_+ H \rangle/2} dH} \]
we conclude that $\frac{d\pi_2}{d\pi_1}(x)\in [1/e^2,e^2]$ for all $x$, hence $\frac{d\pi_2}{d\pi}(x) \in [1/e^3,e^3]$ for all $x$.

Finally, by standard covering number bounds (see e.g. \cite{vershynin2018high}) we observe that $\pi_2$ is a mixture of
\[ |N_{\delta}(R)| \le (R/\delta)^{O(r)} \]
many components. Taking
\[ R = \Theta\left(\lambda_1 D\sqrt{n} + r\sqrt{\lambda_1} + \sqrt{\lambda_1 r \ln(\lambda_r^{-1/2} + D\sqrt{n})}\right) \]
yields that
\[ R/\delta = D\sqrt{n} \cdot \Theta\left(D\sqrt{n}\lambda_1 + r\sqrt{\lambda_1} + \sqrt{\lambda_1 r \ln(\lambda_r^{-1/2} + D\sqrt{n})}\right) \]
as claimed. 
\end{proof}
\subsection{Exact mixture decomposition and higher-order spectral gap}
The previous section gave an approximate mixture decomposition, but when the mixture components satisfy a functional inequality, we now show that it leads to an exact mixture decomposition of components satisfying the same functional inequality. This in particular yields a higher-order spectral gap estimate. 

We now give an example to illustrate the idea. For concreteness, we state the following result for Ising models, but as discussed after the proof it can be adapted to other similar models like the Potts model or $O(N)$ model, since \cref{thm:mult-approx-mixture} is a general result. For example, we illustrate the application of the result to prove a higher-order spectral gap for Glauber dynamics in the mean-field Potts model.
\begin{theorem}\label{thm:ising-higher-order-gap}
Let $c \in [1, \infty)$.
Suppose that $\pi$ is a probability measure on $\{ \pm 1\}^n$ satisfying
\[ \pi(x) \propto \exp\left((1/2)\langle x, J x \rangle + \langle h, x \rangle\right) \]
for some symmetric interaction matrix $J$ and external field vector $h$.
Suppose that $J$ has $r$ eigenvalues greater than $ 1-1/c$ and the rest are at most $1-1/c$, i.e. the eigenvalues of $J$ are $\lambda_1 \ge \lambda_2 \ge \cdots \ge \lambda_r > 1- 1/c \geq \lambda_{r+1} \geq \cdots \geq \lambda_n.$ 
Then $\pi$ admits a mixture decomposition
\[ \pi(x) = \sum_{h \in S} q_h \overline{\pi}_h(x) \]
where $q_h \ge 0, \sum_h q_h = 1$, 
\[ |S| \le \left(n \lambda_1 + \sqrt{n}r\sqrt{\lambda_1} + \sqrt{\lambda_1 n r \ln(\sqrt{\lambda_r^{-1/2}} + \sqrt{n})}\right)^{O(r)},  \]
and each component $\overline{\pi}_h$ satisfies the Poincar\'e inequality for the continuous-time Glauber dynamics with constant
\[ \exp\left[O\left(\int_0^{T_0} c e^{c s \Tr(J_-)} ds\right) \right] \]
where $-J_-$ is the negative definite part of $J$ and $T_0  = 1-1/c - \lambda_{\min} (J)$. 
Furthermore, $\pi$ satisfies the higher-order spectral gap inequality
\[ \lambda_{|S| + 1}\left(-\sL\right) \ge \exp\left[-O\left(\int_0^{T_0} c e^{c s \Tr(J_-)} ds\right)\right]\]
where $\sL$ is the generator of the continuous-time Glauber dynamics for $\pi$.
\end{theorem}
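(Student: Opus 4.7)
The plan is to combine three ingredients: the multiplicative mixture approximation of \cref{thm:mult-approx-mixture}, the Poincar\'e inequality for Ising models with spectrum bounded above due to \cite{anari2024trickle}, and the higher-order spectral gap for mixtures established in \cref{l:egap}. Since \cref{thm:mult-approx-mixture} produces only an \emph{approximate} mixture, the crux will be to reweight it into an \emph{exact} mixture while transferring the functional inequality via a Holley--Stroock-type perturbation argument.

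First, I would split $J = J_+ + \tilde J$, where $J_+$ has the eigenvalues $\lambda_1,\dots,\lambda_r$ exceeding $1-1/c$ (and is PSD) and $\tilde J$ has spectrum contained in $(-\infty,\, 1-1/c]$; in particular $\tilde J$ contains the entire negative part of $J$, so $\lambda_{\min}(\tilde J) = \lambda_{\min}(J)$. Applying \cref{thm:mult-approx-mixture} with $\mu$ the uniform measure on $\{\pm 1\}^n$ (so $D=1$), and absorbing the external field $\langle h, x\rangle$ into the field term produced by the theorem, yields a probability measure $\pi_2 = \sum_{h\in S} p_h \tilde\pi_h$ with $|S|$ bounded as claimed and $d\pi_2/d\pi \in [e^{-3},e^{3}]$, where each $\tilde\pi_h$ is an Ising model on $\{\pm 1\}^n$ with interaction matrix $\tilde J$ and some external field.

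Since $\lambda_{\max}(\tilde J)\le 1-1/c$ and $\lambda_{\min}(\tilde J)=\lambda_{\min}(J)$, so that the spectral diameter of $\tilde J$ is exactly $T_0$, the trickle-down / localization-scheme analysis of \cite{anari2024trickle} implies that continuous-time Glauber dynamics for each $\tilde\pi_h$ satisfies a Poincar\'e inequality with constant bounded by $\exp\!\bigl[O\!\bigl(\int_0^{T_0} c\, e^{c s\, \Tr(J_-)}\, ds\bigr)\bigr]$, uniformly in $h$ (the external field plays no role in this bound). To convert the approximate decomposition into an exact one, define
\[
\overline\pi_h(x) = \frac{(d\pi/d\pi_2)(x)\, \tilde\pi_h(x)}{Z_h}, \qquad Z_h = \E_{\tilde\pi_h}[d\pi/d\pi_2], \qquad q_h = p_h Z_h.
\]
A direct check shows $\sum_h q_h \overline\pi_h = (d\pi/d\pi_2)\,\pi_2 = \pi$ and $\sum_h q_h = \E_{\pi_2}[d\pi/d\pi_2] = 1$, while $d\overline\pi_h / d\tilde\pi_h \in [e^{-6}, e^{6}]$. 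Combined with the Glauber Dirichlet-form identity $\mathcal{E}_\nu(f,f) = \sum_i \E_\nu[\Var_\nu(f \mid X_{\sim i})]$, a Holley--Stroock-type comparison argument transfers the Poincar\'e inequality from $\tilde\pi_h$ to $\overline\pi_h$ at the cost of an $e^{O(1)}$ multiplicative factor, which is absorbed into the $O(\cdot)$ in the exponent.

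Finally, applying \cref{l:egap} to the exact mixture $\pi = \sum_{h\in S} q_h \overline\pi_h$, whose Dirichlet-domination hypothesis \eqref{e:mix-assm2} is automatic for Glauber dynamics, delivers the claimed bound $\lambda_{|S|+1}(-\sL) \ge \exp\!\bigl[-O\!\bigl(\int_0^{T_0} c\, e^{c s\, \Tr(J_-)}\, ds\bigr)\bigr]$. The main technical obstacle is the Holley--Stroock step: we need Poincar\'e for $\overline\pi_h$ with respect to \emph{its own} Glauber generator rather than that of $\tilde\pi_h$, since the two operators differ. Fortunately, both the Glauber Dirichlet form and the variance decompose into conditional expectations and conditional variances, each of which rescales by a uniformly bounded factor under a bounded change of measure, so the ratios can be tracked coordinate-by-coordinate; but the bookkeeping on these ratios warrants care, and is the main place where constants accumulate.
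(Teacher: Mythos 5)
Your proposal is correct and follows essentially the same route as the paper's proof: apply \cref{thm:mult-approx-mixture} to get the multiplicative mixture approximation $\pi_2$, reweight by $d\pi/d\pi_2$ to obtain the exact mixture components $\overline\pi_h \propto (\pi/\pi_2)\tilde\pi_h$ with weights $q_h = p_h Z_h$, invoke the Poincar\'e bound of \cite{anari2024trickle} for each $\tilde\pi_h$, transfer it to $\overline\pi_h$ via the bounded density ratio (the paper cites Proposition 4.2.7 of \cite{bakry2014analysis} where you argue the Holley--Stroock comparison by hand), and finish with \cref{l:egap}. The only cosmetic difference is your explicit coordinate-wise bookkeeping for the perturbation step, which the paper replaces by a citation.
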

\begin{proof}
By \cref{thm:mult-approx-mixture}, there exists a distribution $\pi_2$ such that
\[ \pi_2(x) = \sum_{h \in S} p_h \tilde{\pi}_h(x) \]
 where 
 $\tilde{\pi}_H$ is the probability measure satisfying
\[ \tilde{\pi}(x) \propto \exp\left(\langle x, \tilde J x \rangle + \langle H, x \rangle\right) \]
and 
$\tilde{J} = \sum_{i=r+1}^{n} \lambda_{i}\phi_i \phi_i^\intercal$ 
where $\lambda_1\geq \cdots \geq \lambda_r > 1-1/c\ge  \lambda_{r+1} \ge\cdots \ge \lambda_n$ are the eigenvalues of $J$ and $\phi_i$ are the eigenvector corresponds to $ \lambda_i.$

Furthermore, 
\[ d\pi_2/d\pi(x) \in [1/e^3,e^3] \]
for all $x$, and
\[ |S| \le \left(n\lambda_1 + \sqrt{n}r\sqrt{\lambda_r} + \sqrt{\lambda_1 n r \ln(\lambda_r^{-1/2} + \sqrt{n})}\right)^{O(r)}.  \]
It follows that
\[ \pi(x) = \frac{\pi(x)}{\pi_2(x)} \pi_2(x) = \sum_{h \in S} p_h \frac{\pi(x)}{\pi_2(x)} \tilde{\pi}_h(x) =  \sum_{h \in S} p_h \left(\sum_x \frac{\pi(x)}{\pi_2(x)} \tilde{\pi}_h(x)\right) \frac{\frac{\pi(x)}{\pi_2(x)} \tilde{\pi}_h(x)}{\sum_x  \frac{\pi(x)}{\pi_2(x)} \tilde{\pi}_h(x)}.  \]
By Theorem 103 of \cite[Appendix B]{anari2024trickle}, the measure $\tilde{\pi}_h$ satisfies the Poincar\'e inequality for the continuous-time Glauber dynamics with constant
\[ \exp\left(\int_0^{T_0} c e^{c s \Tr(J_-)} ds \right)  \]
where $-J_-$ is the negative definite part of $\tilde J$ (so $J_-$ is positive definite), which is the same as the negative definite part of $J$ and $T_0 = \lambda_{\max}(\tilde{J}) - \lambda_{\min}(\tilde{J})  \leq 1-1/c - \lambda_{\min} (J)$.
The probability density
\[ \overline{\pi}_h =  \frac{\frac{\pi(x)}{\pi_2(x)}\tilde{\pi}_h(x)}{\sum_x  \frac{\pi(x)}{\pi_2(x)} \tilde{\pi}_h(x)}\]
has a density with respect to $\tilde \pi$ which is upper and lower bounded by a constant (since this is true for $\frac{\pi(x)}{\pi_2(x)}$, and using this we know the denominator is also upper and lower bounded by a constant), so by Proposition 4.2.7 of \cite{bakry2014analysis} we see that $\overline{\pi}_h$ satisfies the Poincar\'e inequality as stated in the theorem. 

Finally, the higher-order spectral gap inequality follows by applying \cref{l:egap}. 
\end{proof}
\begin{remark}[Generalizations such as Potts model]
It is straightforward to extend the above result to similar models such as the Potts model under the same conditions on $J$. 
In the following theorem, for illustration, we prove an analogous result for the well-studied mean-field Potts model on $q$ coclors.
When $q = 2$, this is the same as the Curie-Weiss model, which is discussed more in \cref{s:curie-weiss}.
When $q > 2$, the model has a qualitatively different behavior from the Curie-Weiss model which is called a ``first-order phase transition''. The implication for the Glauber dynamics is that in certain regimes, the dynamics with worst-case initialization can become trapped in a metastable region which has negligible mass under the true Gibbs measure; as our techniques rigorously show, this is not a problem for data-based initialization. See e.g.\ \cite{cuff2012glauber} for extensive discussion of this model.  
\end{remark}
\begin{theorem}[Application to Mean-field Potts Model]\label{thm:mf-potts}
Suppose that $q \ge 2$ and $\pi$ is the distribution on $[q]^n$ with probability mass function
\[ \pi(x) \propto \exp\left(\frac{\beta}{2n} \sum_{i,j} 1(x_i = x_j)\right). \]
Then $\pi$ admits a mixture decomposition
\[ \pi(x) = \sum_{h \in S} q_h \overline{\pi}_h(x) \]
where $q_h \ge 0, \sum_h q_h = 1$, 
\[ |S| \le \left(n\beta + \sqrt{n}r\sqrt{\beta} + \sqrt{\beta n r \ln(n)}\right)^{O(r)},  \]
and each component $\overline{\pi}_h$ satisfies the Poincar\'e inequality for the continuous-time Glauber dynamics with constant $O(1)$. 
Furthermore, $\pi$ satisfies the higher-order spectral gap inequality
\[ \lambda_{|S| + 1}\left(-\sL\right) = \Omega(1) \]
where $\sL$ is the generator of the continuous-time Glauber dynamics for $\pi$.
\end{theorem}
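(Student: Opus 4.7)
The plan is to adapt the proof of \Cref{thm:ising-higher-order-gap} to the Potts model via a one-hot encoding. I would identify $x_i \in [q]$ with the standard basis vector $e_{x_i} \in \R^q$, so a configuration becomes $X \in \R^{n \times q}$. Since $\one(x_i = x_j) = \langle e_{x_i}, e_{x_j}\rangle$, the Potts Hamiltonian can be rewritten as $\tfrac{1}{2}\langle X, J X\rangle_F$ with $J = \tfrac{\beta}{n}\mathbf{1}\mathbf{1}^T$ acting on the row index of $\R^{n\times q}$. As an operator on $\R^{n\times q}$, $J$ is positive semidefinite with top eigenvalue $\beta$ of multiplicity $q$ and all remaining eigenvalues zero. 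The base measure is the product $\mu = \bigotimes_{i=1}^n \mathrm{Uni}(\{e_1, \ldots, e_q\})$, supported in the unit ball of $\R^q$, so $D = 1$ in the notation of \Cref{thm:mult-approx-mixture}.

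I would then apply \Cref{thm:mult-approx-mixture} with $J_+ = J$, $\tilde J = 0$, rank $r = q$, $\lambda_1 = \lambda_r = \beta$, and $D = 1$. This produces a multiplicative approximation $\pi_2 = \sum_{h \in S} p_h \tilde\pi_h$ with $d\pi_2/d\pi \in [e^{-3}, e^3]$ pointwise. Since $\im(J)$ consists of matrices with constant rows, the fields $h$ take the form $h = \mathbf{1} h_0^T$ with $h_0 \in \R^q$, and (as $\tilde J = 0$) each component $\tilde\pi_h$ is the product measure with marginals $\P(x_i = a) \propto e^{(h_0)_a}$. Substituting $D=1$, $r = q$, $\lambda_1 = \lambda_r = \beta$ into the size bound from \Cref{thm:mult-approx-mixture}, and noting $\ln(\beta^{-1/2}+\sqrt n) = O(\ln n)$ in the relevant regime, gives $|S| \le (n\beta + \sqrt n\, q\,\sqrt\beta + \sqrt{\beta n q \ln n})^{O(q)}$, matching the claim (with $r = q$).

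Following the proof of \Cref{thm:ising-higher-order-gap} verbatim, I would then set $\overline\pi_h \propto (d\pi/d\pi_2)\tilde\pi_h$ with appropriately renormalized weights $q_h$, so that $\pi = \sum_h q_h \overline\pi_h$ exactly, with $d\overline\pi_h/d\tilde\pi_h$ bounded above and below by absolute constants. Each product measure $\tilde\pi_h$ satisfies the Poincar\'e inequality for continuous-time Glauber with constant $1$, by tensorization of variance on $[q]^n$; the Holley--Stroock perturbation principle (Proposition 4.2.7 of \cite{bakry2014analysis}) then yields a Poincar\'e inequality for $\overline\pi_h$ with constant $O(1)$. Finally, applying \Cref{l:egap} to the exact mixture $\pi = \sum_h q_h \overline\pi_h$ immediately gives the higher-order spectral gap $\lambda_{|S|+1}(-\sL) = \Om(1)$.

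The main obstacle is verifying that \Cref{thm:mult-approx-mixture} extends to product base measures supported on finite sets such as $\{e_1,\ldots,e_q\}$; however, its proof uses only the Gaussian Hubbard--Stratonovich identity and Cauchy--Schwarz-type bounds on moment generating functions, both of which are insensitive to the support of the $\mu_i$, so the extension is essentially formal.
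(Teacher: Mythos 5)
Your proposal is correct and matches the paper's proof essentially verbatim: the paper likewise one-hot encodes the spins via $1(x_i=x_j)=\langle e_{x_i},e_{x_j}\rangle$, applies \Cref{thm:mult-approx-mixture} (whose hypotheses already allow finitely supported product base measures inside a Euclidean ball, so the "extension" you worry about is not needed), and then reuses the change-of-measure argument from \Cref{thm:ising-higher-order-gap} together with \Cref{l:egap}. The only cosmetic difference is that the paper disposes of the regime $\beta<1/2$ up front via Dobrushin's condition (where the conclusion is trivial with a single mixture component), which is exactly the content of your "relevant regime" caveat ensuring $\ln(\beta^{-1/2}+\sqrt{n})=O(\ln n)$.
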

\begin{proof}
If $\beta < 1/2$ then the distribution satisfies the classical Dobrushin's condition (see e.g.\ path coupling discussion in \cite{levin2017markov}) so the conclusion is trivial. Otherwise, we can assume $\beta \ge 1/2$.

Note that we can rewrite
\[ 1(x_i = x_j) = \langle e_i, e_j \rangle \]
so we can view $\pi$ as a distribution with spins in $q$ dimensions.
Therefore, we can apply \cref{thm:mult-approx-mixture} to get a multiplicative approximation to the distribution as a mixture of product measures, and then by applying the change of measure argument from the proof of \cref{thm:ising-higher-order-gap} proves the result.
\end{proof}
\subsection{Fast mixing from data-based initialization \& learning}
As an application of our techniques, we now show how our ideas lead to provably learn a natural class of Ising models which were not covered by previous results. The algorithm is the natural combination of pseudolikelihood estimation with a warm-started Gibbs sampler, and hence is not much harder to implement than pseudolikelihood itself.  

\paragraph{Pseudolikelihood estimator.} It is helpful to first recall the definition of the pseudolikelihood estimator \cite{besag1975statistical} in the case of the Ising model. Recall that in the Ising model $\pi(x) \propto \exp(\langle x, J x \rangle/2 + \langle h, x \rangle)$,
\[ \pi(X_i = x_i \mid X_{\sim i}) \propto \exp(J_{i, \sim i} \cdot X_{\sim i} x_i + h_i x_i) \]
and taking into account the normalizing constant, we have
\[  \pi(X_i = x_i \mid X_{\sim i}) = \frac{1}{1 + \exp(-2 J_{i, \sim i} \cdot X_{\sim i} x_i -2 h_i x_i) }. \]
Therefore
\[ \ln  \pi(X_i = x_i \mid X_{\sim i}) = -\ln\left(1 + \exp(-2 J_{i, \sim i} \cdot X_{\sim i} x_i -2 h_i x_i)\right) \]
and this is what is known as a logistic regression model. (See e.g. \cite{mccullagh2019generalized}.) The pseudolikelihood of an outcome $x \in \{\pm 1\}^n$ under the model $\pi$ is the sum of these conditional likelihooods over the choice of index $i$, i.e.
\[ \sum_{i = 1}^n \ln  \pi(X_i = x_i \mid X_{\sim i} = x_{\sim i}) \]
and the pseudolikelihood estimator is given by optimizing this objective averaged over a dataset.

\begin{theorem}
\label{t:learn-ising}
Fix $ c\in [2,\infty).$
    Suppose that $\pi$ is a probability measure on $\{ \pm 1\}^n$ satisfying
\[ \pi(x) \propto \exp\left((1/2)\langle x, J x \rangle + \langle h, x \rangle\right) \]
for some symmetric interaction matrix $J$ and external field vector $h$.
Suppose that $J$ has $r$ eigenvalues greater than $ 1-1/c$, including its top eigenvalue $\lambda_1$, and the rest are at most $1-1/c$. 

Suppose $X^{(0)}, \ldots, X^{(m)} \sim \pi$ are i.i.d. samples; let $\hat \E$ denote the corresponding empirical expectation over the empirical measure $\hat \pi$, i.e. $\hat \E$ is the average over the sample set. Let $Y^{(0)}, \ldots, Y^{(m)} \sim \pi$ be another set of iid samples at denote their empirical measure by $\hat \pi_2$.
Furthermore, suppose that $R \ge 0$ is such that
\begin{equation} \max_i \sum_j |J_{ij}| + |h_i| \le R. \label{eqn:constraint}
\end{equation}
Define the constrained pseudolikelihood estimator by
\[ \hat \rho = \arg\max_{\rho \in \mathcal P_R} \sum_{i = 1}^n  \hat \E_{X} \ln \rho(X \mid X_{\sim i})\]
where $\mathcal P_R$ is a set consisting of Ising models (i.e. measures on $\{ \pm 1\}^n$ with quadratic log-likelihood) with parameters satisfying the convex constraint \eqref{eqn:constraint} (so in particular, it can be optimized in polynomial time). 

With probability at least $1 - \delta$ over the randomness of the sample set, for any 
\[ t \ge T = \exp\left[O\left(\int_0^{T_0} c e^{c s \Tr(J_-)} ds \right)\right]\left(2Rn + \ln \pf{4 k \ln (k/
\delta)}{m} \right) \]
where $T_0  = 1-1/c - \lambda_{\min}(J)$
we have
\begin{align*} 
&\TV (\cal L(X^{\hat \pi_2}_t \mid \hat \pi, \hat \pi_2), \pi) \le \sqrt{tRn \sqrt{\ln(2n/\delta)/m}} + 4\sqrt{k \ln(k/\delta)/m}.
\end{align*}
where
\[k = O\left(\lambda_1 n + \sqrt{\lambda_1 n} r +  \sqrt{\lambda_1 n r \ln( \sqrt{n})}\right)^{O(r)} = \tilde{O}(\lambda_1 n)^{O(r)} \]
and $X^{\hat \pi_2}_t$ is the output of the $\hat \rho$-Glauber diffusion initialized at $\hat \pi_2$ and run for time $t$.
\end{theorem}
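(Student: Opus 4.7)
The proof will combine four ingredients: the higher-order spectral gap for $\pi$, sample-based mixing of the true $\pi$-Glauber dynamics, a uniform-convergence bound for the constrained pseudolikelihood estimator, and a perturbation analysis transferring the pseudolikelihood error into TV distance between the two Glauber trajectories initialized at $\hat\pi_2$.

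First I will invoke \Cref{thm:ising-higher-order-gap} applied to $\pi$: the hypothesis on the spectrum of $J$ directly produces a mixture decomposition $\pi = \sum_{h \in S} q_h \overline\pi_h$ with $|S| \le k := \tilde O(\lambda_1 n)^{O(r)}$ and Poincaré constants $\alpha^{-1} = \exp[O(\int_0^{T_0} c\, e^{cs\,\Tr(J_-)}\,ds)]$, yielding $\lambda_{k+1}(-\sL) \ge \alpha$. I will then apply \Cref{thm:main} to the \emph{true} $\pi$-Glauber dynamics initialized at $\hat\pi_2$. The warm-start assumption is checked with $t_0 = 0$: under constraint \eqref{eqn:constraint}, every $y \in \{\pm 1\}^n$ satisfies $\pi(y) \ge e^{-2Rn}$, so $\chi^2(\delta_y \Vert \pi) \le e^{2Rn}$. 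Setting $\etv = 4\sqrt{k\log(k/\delta)/m}$ (which, for $m$ as in the theorem, satisfies the sample-size hypothesis of \Cref{thm:main}) and requiring $t \ge T = \alpha^{-1}(2Rn + \log(4/\etv^{2}))$ yields, with probability $\ge 1-\delta/2$ over $\hat\pi_2$,
\[ \TV\bigl(\mathcal L(\tilde X^{\hat\pi_2}_t \mid \hat\pi_2),\, \pi\bigr) \le 4\sqrt{k\log(k/\delta)/m}, \]
where $\tilde X^{\hat\pi_2}_t$ denotes the $\pi$-Glauber trajectory from $\hat\pi_2$; this gives the second error term in the conclusion.

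Next I will establish the $\epsilon$-accuracy hypothesis of \Cref{lem:gibbs-approximate} for $\hat\rho$. The $i$-th conditional of any $\rho \in \mathcal P_R$ is logistic in a parameter vector $\theta_i = (J_{i,\sim i}, h_i)$ with $\|\theta_i\|_1 \le R$; by the standard Rademacher/Massart bound for $\ell_1$-bounded linear classes on $\{\pm 1\}^n$ together with Lipschitz composition for the softplus loss, uniform convergence of the pseudo-log-likelihood holds at rate $O(R\sqrt{\log(n/\delta)/m})$ per coordinate, and Fisher consistency of pseudolikelihood (which identifies the population excess pseudo-log-likelihood with $\sum_i \E\KL(\pi(X_i = \cdot \mid X_{\sim i}),\,\rho(X_i = \cdot \mid X_{\sim i}))$) gives, with probability $\ge 1-\delta/4$,
\[ \epsilon \,:=\, \frac{1}{n}\sum_{i=1}^n \E_{X\sim\pi}\KL\!\bigl(\pi(X_i = \cdot \mid X_{\sim i}),\, \hat\rho(X_i = \cdot \mid X_{\sim i})\bigr) \,=\, O\!\bigl(R\sqrt{\log(n/\delta)/m}\bigr). \]
Invoking \Cref{lem:gibbs-approximate} and noting that one unit of continuous time for the Glauber semigroup corresponds to $n$ coordinate updates in expectation, the KL divergence between the $\pi$- and $\hat\rho$-Glauber trajectories on $[0,t]$ started from $\pi$ is bounded by $tn\epsilon$. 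Feeding this into \Cref{thm:approximate-sample-init} with the empirical initialization $\hat\pi_2$ of size $m$ gives, with probability $\ge 1-\delta/4$,
\[ \TV\!\bigl(\mathcal L(X^{\hat\pi_2}_t \mid \hat\pi, \hat\pi_2),\, \mathcal L(\tilde X^{\hat\pi_2}_t \mid \hat\pi, \hat\pi_2)\bigr) \,\le\, \sqrt{tn\epsilon} + O(\log(1/\delta)/m) \,=\, \sqrt{tRn\sqrt{\log(n/\delta)/m}} + O(\log(1/\delta)/m). \]
The triangle inequality and a union bound over the two high-probability events then assemble the theorem.

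The main obstacle will be executing the pseudolikelihood generalization step (Step 2) with the precise dependence on $R$, $n$, and $\delta$ claimed: one must handle the $n$-fold sum of logistic losses, use the $\ell_1$ constraint to obtain a $\sqrt{\log n}$ rather than a polynomial dimension factor, and translate excess loss to average conditional KL via Fisher consistency. A secondary technical point is the discrete-to-continuous time bookkeeping that feeds \Cref{lem:gibbs-approximate} into \Cref{thm:approximate-sample-init}; the cleanest route is either to Poissonize the discrete chain or to write a continuous-time chain-rule version of \Cref{lem:gibbs-approximate}, either way producing the extra factor of $n$ inside the final square root.
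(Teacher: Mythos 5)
Your proposal follows essentially the same route as the paper's proof: the higher-order spectral gap from \Cref{thm:ising-higher-order-gap}, \Cref{thm:main} applied to the true $\pi$-Glauber dynamics with $t_0=0$ and warm start $\chi^2\le e^{2Rn}$, a symmetrization/contraction (Rademacher) bound converting the pseudolikelihood excess risk into the averaged conditional KL, and \Cref{lem:gibbs-approximate} combined with \Cref{thm:approximate-sample-init} to transfer that estimation error to the two trajectories started at $\hat\pi_2$, assembled by the triangle inequality and a union bound. The only place you are more explicit than the paper is the discrete-to-continuous time bookkeeping (the factor of $n$ inside $\sqrt{tn\epsilon}$), which the paper handles implicitly by working with the summed rather than averaged conditional KL.
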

After the proof of the result, we give a number of remarks to elaborate on its meaning and significance.

\begin{proof}[Proof of \cref{t:learn-ising}]
By the standard symmetrization and bounded differences concentration argument from statistical learning theory (see e.g.\ \cite{bartlett2002rademacher}), we know that with probability at least $1 - \delta$ 
\begin{equation}\label{eqn:uniform-convergence}
\sum_{i = 1}^n \E_{X \sim \pi} \ln \hat \rho(X \mid X_{\sim i}) \ge \sum_{i = 1}^n \E_{X \sim \pi} \ln \pi(X \mid X_{\sim i}) - O\left(\mathcal R_m + Rn\sqrt{\ln(2/\delta)/m}\right)  
\end{equation}
where
\[ \mathcal R_m = \mathbb E_{X^{(0)}, \ldots, X^{(m)}, \epsilon} \sup_{J,h} \frac{1}{m n} \sum_{a = 1}^m \epsilon_a \sum_{i = 1}^n \ell\left(J_{i, \sim i} \cdot X^{(a)}_{\sim i} X^{(a)}_i + h_i X^{(a)}_i\right) \]
is the Rademacher complexity of the loss class
and $\ell(z) = \ln(1 + e^{2z})$ is the logistic loss, and we used that the loss in any example is bounded by $O(R)$ because the logistic loss is lipschitz and because of Holder's inequality. Applying Talagrand's contraction principle and Holder's inequality yields that
\[ \mathcal R_m = O(Rn \sqrt{\ln(n)/m}), \]
see e.g.\ the proof of Theorem 47 in \cite{anari2024universality} for the details. 

Using the Rademacher complexity bound and rearranging \eqref{eqn:uniform-convergence} yields
\[ \sum_{i = 1}^n \E \KL(\pi(\cdot \mid X_{\sim i}), \hat \rho(\cdot \mid X_{\sim i})) = \sum_{i = 1}^n \E_{X \sim \pi} \ln \frac{\pi(X \mid X_{\sim i})}{\hat \rho(X \mid X_{\sim i})} = O\left(Rn \sqrt{\ln(2n/\delta)/m}\right) \]
and a bound on the left hand side is exactly what we need to apply \cref{lem:gibbs-approximate}, which in turn lets us apply \cref{thm:approximate-sample-init} with $\varepsilon = O\left(Rn \sqrt{\ln(2n/\delta)/m}\right)$. The result is that for the empirical distribution $\hat \pi_2$ formed from $m$ samples, with probability at least $1 - \delta$,
\[ \TV (\mathcal L((X^{\hat \pi_2}_t)_{0\le t\le T}\mid \hat \pi, \hat \pi_2), \mathcal L((\tilde X^{\hat \pi_2}_t)_{0\le t\le T} \mid \hat \pi, \hat \pi_2)) \le \sqrt{T\ep} + \fc{2\ln(1/\delta)}{m} \] 
where $(X_t^{\hat \pi_2})_t$ is the law of the $\hat \rho$-Glauber dynamics initialized from $\hat \pi_2$, and $\tilde X^{\hat \pi_2}_t$ is the law of the $\pi$-Glauber dynamics with the same initialization. Note that by the union bound, the two events we are using hold together with probability at least $1 - 2\delta$. 

Recall from \cref{thm:ising-higher-order-gap} that there exists 
\[ k = O\left(n\lambda_1 + \sqrt{n}r\sqrt{\lambda_1} + \sqrt{\lambda_1 n r \ln(\lambda_r^{-1/2} + \sqrt{n})}\right)^{O(r)} \]
such that
\[ \lambda_k(- \sL) \ge \exp\left[-O\left(\int_0^{T_0} c e^{c \Tr(J_-)} ds\right)\right] \]
where $\sL$ is the generator of the Glauber dynamics.

It then follows our main result, \cref{thm:main}, applied with $t_0 = 0$ we have that with probability at least $1 - \delta$ over the randomness of the sample,
\[ \TV (\cal L(\tilde X^{\pi}_T \mid \hat \pi, \hat \pi_2), \pi) \le \varepsilon_2 \]
provided $t \ge T = (1/\lambda_k(-\sL)) \ln\left(\frac{4 e^{2Rn}}{\varepsilon_2^2} \right)$ and $m = \Omega(\varepsilon_2^{-2} k \ln(k/\delta))$ where $e^{2Rn}$ is an upper bound on the initial $\chi^2$-distance between $\hat \pi_2$ and $\pi$. We take $\ep_2 = \sqrt{k \ln(k/\delta)/m}$.

Thus, by the union bound, the data processing inequality, and the triangle inequality for total variation distance, we have with probability at least $1 - 3\delta$ that
\begin{align*} 
&\TV (\cal L((X^{\hat \pi_2}_t \mid \hat \pi, \hat \pi_2), \pi) \\
&\le\sqrt{t\ep} + \fc{2\ln(1/\delta)}{m} + \ep_2  \\
&= \sqrt{tRn \sqrt{\ln(2n/\delta)/m}} + \fc{2\ln(1/\delta)}{m} + \sqrt{k \ln(k/\delta)/m}.
\end{align*}
The bound is trivial unless $m = \Omega(k \ln(k/\delta))$, so the middle term can be dominated by the right one by adjusting the multiplicative factor in front. 
\end{proof}

\begin{remark}[Separation from previous results]\label{remark:separation}
For example, consider the case where $J$ is positive semidefinite and rank $r=O(1).$  Then for 
\[t = O( Rn + \ln(4/\etv ^2)) \]
and 
\[ m =\Omega(\max\set{ \etv ^{-2} k\ln(k/\delta),\etv ^{-4} (tRn)^2 \ln(2n/\delta) }) ,\]  we are guaranteed that
\[\TV (\cal L((X^{\hat \pi}_t \mid \hat \pi), \pi) \leq \etv  .\]
Note that both $k$ and $ m $, the number of samples, are polynomial in all parameters as long as $r = O(1).$
In contrast, previous bounds for learning Ising models with computationally efficient algorithms have exponential dependence on $R$ (see e.g. \cite{lokhov2018optimal,gaitonde2024unified,klivans2017learning,wu2019sparse}), and in some simple examples of such Ising models (e.g. in Hopfield networks with finitely many memories \cite{talagrand2010mean}) $R$ is polynomial in $n$. $R$ can also be large simply if the external field of the model is large. Information-theoretically, such models can be learned with polynomial sample complexity without any dependence on $R$ \cite{devroye2020minimax}.

A key reason why previous analyses fail is that these works all seek to first show that the parameters are identifiable, and then argue that recovering the ground truth parameters also implies recovery in TV distance. However, there are models in our class which have very tiny TV distance despite having well-separated parameters. For example, consider the class of $n + 1$ ferromagnetic Ising models with uniform edge weight $R/n$ on a graph $G$, where $G$ is either a clique or a clique with an unknown edge removed. Following the proof of Theorem 1 in \cite{santhanam2012information}, we know that the KL diveregence between any two models in this class is  $e^{-\Omega(R)}$ (Lemma 2 of \cite{santhanam2012information}), so by Fano's lemma $e^{\Omega(R)} \ln(n)$ samples from the model are needed to identify what the underlying graph $G$ is. So for example, if we take $R = n$, we will need $e^{\Omega(n)}$ many samples to determine the underlying graph, or to determine the parameters of the model within $\ell_1$ error $o(1)$. Nevertheless, our theorem shows that it is possible to learn the model to $o(1)$ error in TV from only $poly(n)$ samples.
\end{remark}
\begin{remark}
Continuing the discussion from the previous remark,
we do not currently know if pseudolikelihood estimation alone can possibly achieve a similar guarantee. In general, the behavior of pseudolikelihood estimation is closely related to restricted versions of the log-Sobolev and Poincar\'e constants \cite{koehler2022statistical}, where the restriction is to functions related to the relative density between models in the class of distributions being fit.  

Based on an analogy to the lower bound results of \cite{koehler2022statistical} for score matching, we expect that in variants of this setting where we introduce a small additional term in the log-likelihood that this approach will still work (this can be proved by a perturbation argument) but pseudolikelihood will fail. 
\end{remark}
\begin{remark}
    \Cref{t:learn-ising} also implies a fast-amortized way to produce many samples from the low-rank Ising model. For example, we can use \cite{koehler2022sampling}'s algorithm to produce  $y_1,\dots, y_m$ i.i.d. sampled from a distribution $\nu$ with $ \TV (\nu, 
    \pi)\leq \etv /16$, then run the Glauber dynamics initialized at a randomly chosen sample $y_i$ with $i \sim \textup{Uniform}([m]).$ We set $\delta=\etv /6$. By \Cref{thm:generalize main} and triangle inequality, the output of Glauber is $\etv $-close to $\pi$ in $\TV$-distance. The amortized runtime (say, in the setting of \Cref{remark:separation}) is $ O(Rn + \ln(\rc {\etv })) $ since we can ignore the cost of producing the initial $m$ samples i.e. the runtime cost to produce $M$ samples is
    \[T_{\textup{init}} + O\pa{M Rn + \ln\prc {\etv }}.\]
\end{remark}

\section{Examples with non-sample initialization}\label{s:extra-examples}
We have shown that MCMC approximately samples the target distribution in a polynomial time, as long as it is started from an initialization which approximately satisfies the eigenfunction balance condition. Data-based initialization is one way to find such an initialization, but it is not the only way. We discuss this more in this section. 

\subsection{Example: Curie-Weiss}\label{s:curie-weiss}
In this section, we show that the spectral properties of the Glauber dynamics in the low-temperature Curie-Weiss model ensure that a wide range of weakly balanced initializations mix to stationarity in polynomial time. This model is a good pedagogical example to consider, in particular because it has received a lot of previous in the rigorous literature on metastability of the Glauber dynamics. 

As a reminder, it is known that the spectral gap of the Glauber dynamics is exponentially small in $n$ for any fixed inverse temperature $\beta > 1$, while the mixing is polynomial time for $\beta \le 1$. Thus, the ``low temperature'' regime $\beta > 1$ is the one of interest for metastability. Informally, the bottleneck in the dynamics after the phase transition occurs because typical samples from the measure have either significant positive or negative magnetization, whereas the measure has very little support in the middle near zero magnetization.
Our general result \cref{thm:ising-higher-order-gap} applies to this model, and guarantees that there are $O_{\beta}(1)$ many exponentially small eigenvalues --- now, building on results from the literature, we will show there is in fact only one bad nontrivial eigenvalue.  

\textcite{levin2010glauber} (see also \cite{ding2009censored}) proved via a coupling argument that a ``censored'' version of the Glauber dynamics for the low-temperature Curie-Weiss model mixes in $O_{\beta}(n \log n)$ steps; this dynamics is restricted to spins with positive magnetization. As a warmup, we observe this implies a restricted version of the Poincar\'e inequality for even functions:
\begin{lemma}\label{lem:cw-even}
Suppose $n$ is odd, $\beta > 1$, and consider the Curie-Weiss model on $\{\pm 1\}^n$
\[ p(x) \propto \exp\left(\frac{\beta}{2n}\left(\sum_i x_i\right)^2\right). \]
Let $f$ be an even function in the sense that $f(x) = f(-x)$ for all $x \in \{\pm 1\}^n$. Then
\[ \Var(f) \le C(\beta) \mathcal E(f,f) \]
where $C(\beta) > 0$ is a constant independent of $n$.
\end{lemma}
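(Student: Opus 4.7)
The plan is to reduce the Poincar\'e inequality for even functions on $\{\pm 1\}^n$ (with respect to the full Glauber dynamics) to the Poincar\'e inequality for the censored Glauber dynamics on positive magnetization states established in \cite{levin2010glauber,ding2009censored}. The argument has three short steps: (1) a ``folding'' identity for the variance of an even function, (2) invoking the mixing time result as an $O(1)$ spectral gap for censored dynamics, and (3) a comparison of Dirichlet forms.

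\textbf{Step 1 (folding).} Because $n$ is odd, the sets $\Omega^+ = \{x : \sum_i x_i > 0\}$ and $\Omega^- = \{x : \sum_i x_i < 0\}$ partition $\{\pm 1\}^n$, and the symmetry $x \mapsto -x$ preserves $p$, so $p(\Omega^+) = p(\Omega^-) = \tfrac12$. Let $p^+$ be the normalized restriction of $p$ to $\Omega^+$. For any even $f$, the map $x \mapsto -x$ carries $\Omega^-$ bijectively to $\Omega^+$ and leaves both $f$ and $p$ invariant, so $\mathbb{E}_{p^-}[f] = \mathbb{E}_{p^+}[f]$ and $\operatorname{Var}_{p^-}(f) = \operatorname{Var}_{p^+}(f)$. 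The law of total variance then gives
\[ \operatorname{Var}_p(f) \;=\; \tfrac12\bigl(\operatorname{Var}_{p^+}(f) + \operatorname{Var}_{p^-}(f)\bigr) + \tfrac14\bigl(\mathbb{E}_{p^+}f - \mathbb{E}_{p^-}f\bigr)^2 \;=\; \operatorname{Var}_{p^+}(f). \]

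\textbf{Step 2 (censored dynamics).} The censored Glauber dynamics on $\Omega^+$ is reversible with stationary measure $p^+$, and \cite{levin2010glauber,ding2009censored} show that its mixing time is $O_\beta(n \log n)$ steps of discrete dynamics, which translates into a continuous-time spectral gap $\Omega_\beta(1)$ after the standard time-rescaling (one unit of continuous time corresponds to $\mathrm{Poisson}(n)$ updates). Hence there exists $C(\beta) < \infty$ with
\[ \operatorname{Var}_{p^+}(f) \;\le\; C(\beta)\, \mathcal E^+(f,f) \]
where $\mathcal E^+$ is the continuous-time Dirichlet form of the censored chain on $\Omega^+$.

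\textbf{Step 3 (Dirichlet form comparison).} Writing both Dirichlet forms as sums over directed edges in the standard way, for $x, x' \in \Omega^+$ connected by an uncensored Glauber move we have $p^+(x) Q^+(x,x') = 2 p(x) Q(x,x')$. Splitting the edges of the full Glauber chain into those within $\Omega^+$, within $\Omega^-$, and across the cut, and using the symmetry $x \mapsto -x$ together with the evenness of $f$ to identify the within-$\Omega^+$ and within-$\Omega^-$ contributions, one obtains
\[ \mathcal E(f,f) \;\ge\; \sum_{x,x' \in \Omega^+} p(x) Q(x,x') (f(x)-f(x'))^2 \;=\; \mathcal E^+(f,f), \]
where the inequality drops the (nonnegative) contribution from cross-cut edges. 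Chaining the three displays yields $\operatorname{Var}_p(f) \le C(\beta) \mathcal E(f,f)$ as claimed.

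The main subtlety, and the only point where anything beyond symmetry is used, is Step 2: one must check that the censored chain analyzed in \cite{levin2010glauber} really is a reversible chain on $\Omega^+$ with stationary $p^+$ (so that its Poincar\'e constant is governed by its $\chi^2$ mixing time), and that the spectral gap is insensitive to the choice of boundary rule at the cut, so that the $n \log n$ mixing bound gives a Poincar\'e constant independent of $n$. This is standard but is the piece that the whole reduction rests on.
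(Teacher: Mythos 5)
Your overall strategy is the same as the paper's: fold the variance onto the positive-magnetization half using evenness of $f$, import the $\Omega_\beta(1)$ spectral gap of the censored dynamics from \cite{levin2010glauber}, and compare Dirichlet forms. Steps 1 and 2 are fine. The flaw is in Step 3: the displayed identity $\sum_{x,x' \in \Omega^+} p(x) Q(x,x') (f(x)-f(x'))^2 = \mathcal E^+(f,f)$ is false, because the censored chain is \emph{not} the restriction of Glauber to $\Omega^+$. Its transition kernel also contains the reflected moves $x \mapsto -y$, used whenever the Glauber proposal $y$ from a magnetization-one state $x$ has negative magnetization, and these contribute terms $p^+(x)Q^+(x,-y)\,(f(x)-f(-y))^2$ to $\mathcal E^+(f,f)$ which your sum over within-$\Omega^+$ Glauber edges omits (there is also a stray normalization: with $p^+=2p$ the within-$\Omega^+$ and within-$\Omega^-$ contributions of the full chain together give the $p^+$-weighted within sum, not the $p$-weighted one). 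So as written you are simultaneously dropping the cross-cut edges of the full chain and forgetting the reflected edges of the censored chain; if one took the display literally, the spectral gap from Step 2 — which is a statement about the censored chain \emph{with} its reflection moves — would be applied to the wrong quadratic form.

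The repair is one line, and it is exactly the observation the paper makes: for even $f$ nothing needs to be dropped, because the two families of omitted terms coincide. Indeed $f(-y)=f(y)$ gives $(f(x)-f(-y))^2=(f(x)-f(y))^2$, and the full-chain cross edge $\{x,y\}$ together with its mirror $\{-x,-y\}$ carries total weight $2p(x)Q(x,y)=p^+(x)Q^+(x,-y)$, the weight of the single reflected censored edge $\{x,-y\}$. Hence for even $f$ one has the exact equality $\mathcal E(f,f)=\mathcal E^+(f,f)$, and the chain $\Var_p(f)=\Var_{p^+}(f)\le C(\beta)\,\mathcal E^+(f,f)=C(\beta)\,\mathcal E(f,f)$ closes the proof. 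This also removes your worry about sensitivity to the boundary rule: you do not need the gap to be insensitive to the rule at the cut; you only need the gap of the specific reflected-boundary chain analyzed in \cite{levin2010glauber}, and evenness of $f$ is precisely what makes its Dirichlet form agree with that of the unconstrained Glauber dynamics.
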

\begin{proof}
We show that this follows from the result of \cite{levin2010glauber}. First, we recall that they proved $O_{\beta}(n \log n)$ time mixing of the following ``censored'' dynamics on the restriction of $p$ to $\{ x : \sum_i x_i > 0 \}$, which has the following behavior in one step: 
\begin{enumerate}
    \item Given $x \sim \{\pm 1\}^n$, generate a proposal $y$ via the standard Glauber dynamics.
    \item If $\sum_i y_i > 0$, transition to $y$. If $\sum_i y_i < 0$, transition to $-y$.
\end{enumerate}
By the well-known connection between mixing time and spectral gap, the mixing time result of \cite{levin2010glauber} in particular implies that the spectral gap of the  generator of the continuous-time censored dynamics is $\Omega_{\beta}(1)$. Next, observe that if $f$ is even, then $\Var(f) = \Var(f \mid \sum_i X_i > 0)$ by symmetry, and similarly observe that the Dirichlet form of the Glauber dynamics on the whole measure is equal to the Dirichlet form of the censored dynamics when restricted to even $f$. Hence, the conclusion follows immediately. 
\end{proof}
This is not enough to bound the higher-order spectral gap since it says nothing about odd functions; we now give a more sophisticated argument which does successfully bound the higher-order spectral gap.
\begin{prop}
Suppose $n$ is odd, $\beta > 1$, and consider the Curie-Weiss model on $\{\pm 1\}^n$,
\[ p(x) \propto \exp\left(\frac{\beta}{2n}\left(\sum_i x_i\right)^2\right). \]
Let $\mathcal L$ be the generator of the continuous-time Glauber dynamics.
Then $\lambda_3(-\mathcal L) = \Omega_{\beta}(1/n^3)$.
\end{prop}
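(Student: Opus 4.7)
The plan is to leverage the $\mathbb{Z}/2$ symmetry $x \mapsto -x$ to split $L^2(p)$ into even and odd subspaces, both preserved by $\mathcal L$. On the even side, \Cref{lem:cw-even} already gives a spectral gap $\Omega_\beta(1)$. Hence $\lambda_3(-\mathcal L)$ is controlled by the \emph{second} smallest eigenvalue on the odd side: the metastability eigenfunction $\phi^*$ (with $\lambda^{\text{odd}}_1 = e^{-\Omega_\beta(n)}$) is the one ``bad'' direction and our task is to show that all odd $f \perp \phi^*$ satisfy $\mathcal E(f,f) \gtrsim_\beta n^{-3}\|f\|^2$. By the min-max characterization, it suffices to replace $\phi^*$ with the explicit test function $\phi_0(x) = \operatorname{sign}(m(x))$ (well-defined since $n$ is odd) and prove the Poincar\'e-type inequality for odd $f$ with $\mathbb{E}[f\phi_0] = 0$.

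The key trick is to multiply through by $\phi_0$. Set $g(x) = f(x)\phi_0(x)$. Since $f$ and $\phi_0$ are both odd, $g$ is even, and $\mathbb{E}[g] = \langle f,\phi_0\rangle = 0$. Since $\phi_0^2\equiv 1$, we have $\|g\|^2 = \|f\|^2$, so \Cref{lem:cw-even} gives $\|f\|^2 \le C(\beta)\,\mathcal E(g,g)$. Now compare the two Dirichlet forms: $\phi_0(x)$ changes under flipping $X_i$ only when $m(X_{\sim i}) = 0$, i.e. when $|m(X)| = 1$. A direct calculation of $\operatorname{Var}(g\mid X_{\sim i}) - \operatorname{Var}(f\mid X_{\sim i})$ on this boundary event (with conditional probabilities $\tfrac12,\tfrac12$ by symmetry) yields the cross-term $f_+^i f_-^i$, which I bound by Cauchy--Schwarz and then rewrite using the marginal of $X_{\sim i}$ under $p$ to obtain
\begin{equation*}
|\mathcal E(g,g) - \mathcal E(f,f)| \;\le\; n\,\mathbb{E}\!\left[f^2\,\mathbb{1}\{|m(X)|=1\}\right].
\end{equation*}

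The remaining obstacle, which is the main technical step, is to bound the RHS back in terms of $\mathcal E(f,f)$ (plus a negligible multiple of $\|f\|^2$ that can be absorbed). This is where I expect the $n^3$ to arise: since $p(|m|=1) = e^{-\Omega_\beta(n)}$ at low temperature but $f^2$ could conceivably concentrate there, I would use a canonical-paths argument. For each $x \in \{|m|=1\}$, fix a Glauber path of length $L = O(n)$ to a point $y$ in the bulk $\{|m|\approx m^\star\}$, apply the telescoping bound $f(x)^2 \le 2f(y)^2 + 2L\sum_{\text{edges}}(\Delta f)^2$, and sum weighted by $p(x)$; standard congestion estimates give $\mathbb{E}[f^2\mathbb{1}\{|m|=1\}] \le O_\beta(n^2)\,\mathcal E(f,f) + e^{-\Omega_\beta(n)}\|f\|^2$. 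Substituting back, $\|f\|^2 \le C(\beta)\mathcal E(f,f) + C(\beta)n\cdot(O(n^2)\mathcal E(f,f) + e^{-\Omega(n)}\|f\|^2)$, and absorbing the last term into the LHS for $n$ large yields $\|f\|^2 \le O_\beta(n^3)\mathcal E(f,f)$, i.e.\ $\lambda_3(-\mathcal L) = \Omega_\beta(1/n^3)$.

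The hard part is constructing canonical paths with good congestion into the bulk --- one must thread many paths through the entropically thin slab $\{|m|=1\}$ while keeping the per-edge load polynomial in $n$ relative to $p(e)$. One could alternatively try to avoid paths entirely by invoking a restricted Poincar\'e inequality on the slice $\{|m|=1\}$ combined with the fact that the continuous-time Glauber leaves this slice at rate $\Theta(n)$, but either route must handle the same essential difficulty: transferring $L^2$ mass out of an exponentially rare set at polynomial cost in the Dirichlet form.
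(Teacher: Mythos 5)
Your reduction steps are correct and genuinely different from the paper's argument: the even/odd splitting, the observation that $g = f\,\phi_0$ is even with $\E[g]=0$ and $\|g\|=\|f\|$ so that \Cref{lem:cw-even} applies, and the computation showing $|\mathcal E(g,g)-\mathcal E(f,f)| \le n\,\E[f^2\,\mathbb{1}\{|m(X)|=1\}]$ (the discrepancy only arises when $m(X_{\sim i})=0$, where the conditional law is $(\tfrac12,\tfrac12)$ and the cross term is $f_+f_-$) are all sound. However, the proof as written has a genuine gap at exactly the step you flag: the claim $\E[f^2\,\mathbb{1}\{|m|=1\}] \le O_\beta(n^2)\,\mathcal E(f,f) + e^{-\Omega_\beta(n)}\|f\|^2$ is asserted via ``standard congestion estimates'' but no construction is given, and it is not routine. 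Two things must be verified. First, the endpoint term: $\sum_{x:\,|m(x)|=1} p(x) f(y(x))^2$ is only $e^{-\Omega(n)}\|f\|^2$ if the assignment $x\mapsto y(x)$ spreads the slab (which has $\approx 2^n/\sqrt n$ configurations) nearly uniformly over the exponentially smaller bulk, so that $\sum_{x: y(x)=y} p(x) \le e^{-\Omega(n)} p(y)$ for essentially every bulk $y$; a single canonical path per $x$ with a deterministic endpoint does not give this, and you need a flow averaged over endpoints/paths. Second, the per-edge congestion at every intermediate magnetization level $M$: the flow through a level-$M$ edge times $p_{\min}$ must be $\lesssim n\,e^{\beta(M^2-1)/2n}$, which after counting edges per level comes down to the inequality $\ln 2 \le H\!\left(\tfrac{1+m}{2}\right) + \tfrac{\beta m^2}{2}$ for all $m\in[0,m^*]$, i.e.\ monotonicity of the mean-field free energy on $[0,m^*]$ for $\beta>1$. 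This is true (since $\beta m - \tanh^{-1}(m)>0$ on $(0,m^*)$), and a permutation-equivariant flow would turn the average bound into a per-edge bound, but none of this appears in your sketch; without it the central inequality, and hence the $n^3$, is unsubstantiated.

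It is worth noting how the paper sidesteps this rare-set mass-transport problem entirely. It applies the law of total variance conditioned on $\sgn(\sum_i X_i)$, uses the $\Omega_\beta(1)$ gap of the Levin--Luczak--Peres censored dynamics on each half, and then only has to compare the censored Dirichlet form to the true one. The censored chain's extra moves are reflections $x\mapsto -y$ whose \emph{both} endpoints lie on the slab $|M|=1$, i.e.\ at the global minimum of the single-configuration probability; consequently the conductance comparison is trivial ($p(x)=\min_z p(z)$ bounds every intermediate edge up to $e^{4\beta}$) and the congestion is a simple counting argument ($\le 2n+2$ paths per edge), giving the $O_\beta(n^3)$ comparison with no flow into the bulk needed. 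Your route, if completed with the equivariant-flow (or a restricted-chain-plus-escape-rate) argument, would be a legitimate alternative proof and arguably isolates the odd ``metastable'' direction more explicitly, but as it stands the key estimate is a conjecture rather than a proof.
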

\begin{proof}
Let $f$ be an arbitrary function and observe by the law of total variance that
\[ \Var(f) = \E \Var(f \mid \sgn(\sum_i X_i)) + \Var(\E[f \mid \sgn(\sum_i X_i)]). \]
Therefore,
\[ \Var(f) \le C(\beta) [\mathcal{E}^+(f,f)/2 + \mathcal{E}^-(f,f)/2] + \Var(\E[f \mid \sgn(\sum_i X_i)]) \]
where $\mathcal{E}^+$ is the Dirichlet form of the censored dynamics from \cite{levin2010glauber}, and $\mathcal{E}^-(f,f) = \mathcal{E}^+(x \mapsto f(-x), x \mapsto f(-x))$ is the Dirichlet form of the analogous dynamics on the set of spins with negative magnetization. 

Next, we perform a comparison of Dirichlet forms, i.e. a routing argument, between the censored dynamics and the original Glauber dynamics. See Chapter 13 of \cite{levin2017markov} for more background on this technique. The censored dynamics has additional transitions not present in the Glauber dynamics, which are of the form $x \mapsto -y$ for $x,y$ neighbors such that $\sum_i x_i = 1$ and $\sum_i y_i = -1$. Note that such a pair $(x,-y)$ will differ at all but one coordinate. We route such a transition to a path in the hypercube by replacing the coordinates of $x$ by those of $y$ one-by-one, left-to-right, skipping the one coordinate which is the same. Next, we need to bound the congestion of the path, i.e. how many paths can be routed over the same edge ---- this is at most $2n + 2$, because for a routing path going from $x$ to $-y$ to cross over an edge $(x',x'')$ where $k$ is such that $x'_k \ne x''_k$, it must be that $x$ is at most hamming distance $1$ from the bitstring given by flipping the sign of all the coordinates of $x'$ to the left of $k$, and analogously $-y$ is at most distance $1$ from the bitstring given by flipping the coordinates of $x''$ to the right of $k$. Furthermore, for such a tuple $x,-y,x',x''$ we will have the following inequality of conductances:
\[ \frac{p(x) p(-y)}{p(x) + p(-y)} \le \exp(4\beta) \frac{p(x')p(x'')}{p(x') + p(x'')} \]
because $p(x) = \min_{z \in \{\pm 1\}^n} p(z) \le p(x')$ and because $\frac{p(z)}{p(z) + p(z'')} \in [e^{-2\beta}, e^{2 \beta}]$ for any pair $z,z'$ at hamming distance one, by expanding the definition of $p$.

From this routing, it follows that
\[ \mathcal{E}^+(f,f) \le C'(\beta) n^3 \mathcal{E}(f,f) \]
for some constant $C'(\beta) > 0$ independent of $n$, and likewise for $\mathcal{E}^-$. Thus,
\[ \Var(f) \le C''(\beta) n^3 \mathcal{E}(f,f) + \Var(\E[f \mid \sgn(\sum_i X_i)])\]
which proves the claim by the variational characterization of eigenvalues. 
\end{proof}


As a consequence, we can prove rapid mixing of Glauber dynamics from any initial distribution such that $\E_{\rho} f_2 = 0$. Furthermore, we can show some basic structural properties of the eigenfunction.
\begin{prop}
    Fix $\beta > 1$.
    Require $n$ is odd and consider the Curie-Weiss model on $\{\pm 1\}^n$
\[ p(x) \propto \exp\left(\frac{\beta}{2n}\left(\sum_i x_i\right)^2\right). \]
Let $\mathcal L$ be the generator of the continuous-time Glauber dynamics, and let $f_2$ to be the second eigenvector from the bottom of the spectrum of $\mathcal L$. Then $f_2$ depends on $x$ only through $\sum_i x_i$ and, at least for sufficiently large $n$ with respect to $\beta$, it is odd, i.e. $f_2(-x) = -f_2(x)$. Furthermore, 
the Glauber dynamics initialized from $\rho$ satisfying $\E_{\rho} f_2 = 0$ will achieve total variation distance $\delta > 0$ from stationarity in time $O_{\beta}(n^3(n + \log(1/\delta)))$.
\end{prop}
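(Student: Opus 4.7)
The plan has four components: establish that the $\lambda_2$-eigenspace is one-dimensional, then deduce permutation-invariance, oddness, and the mixing bound in turn. First, I would import from the classical literature on metastability of Glauber dynamics for the Curie-Weiss model at $\beta > 1$ (e.g., \cite{levin2010glauber,bovier2021metastability}) the upper bound $\lambda_2(-\mathcal L) = e^{-\Omega_\beta(n)}$. Together with the previous proposition's lower bound $\lambda_3(-\mathcal L) = \Omega_\beta(1/n^3)$, this gives $\lambda_2 < \lambda_3$ for all sufficiently large $n$, hence the $\lambda_2$-eigenspace is one-dimensional and $f_2$ is well-defined up to sign.

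For permutation-invariance, note that both $\mathcal L$ and $\pi$ commute with the action of $S_n$ by permuting coordinates, so every eigenspace is $S_n$-invariant. A one-dimensional representation of $S_n$ is either trivial or the sign representation; in the sign case, $f_2(\sigma x) = \sgn(\sigma) f_2(x)$, and applying any transposition swapping two coordinates where $x$ takes the same value (which always exists on $\{\pm 1\}^n$ for $n \ge 3$) gives $f_2(x) = -f_2(x) = 0$ for every $x$, a contradiction. Hence $f_2$ is $S_n$-invariant and depends only on $\sum_i x_i$. The same argument applied to the involution $x \mapsto -x$ (which also preserves $\mathcal L$ and $\pi$) shows $f_2(-x) = \pm f_2(x)$. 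If $f_2$ were even, \Cref{lem:cw-even} would imply
\[
\lambda_2 \|f_2\|_{L^2(\pi)}^2 \;=\; \mathcal E(f_2,f_2) \;\ge\; \tfrac{1}{C(\beta)}\, \Var_\pi(f_2) \;=\; \tfrac{1}{C(\beta)}\, \|f_2\|_{L^2(\pi)}^2,
\]
using $\E_\pi f_2 = 0$; this would force $\lambda_2 \ge 1/C(\beta)$, contradicting $\lambda_2 = e^{-\Omega_\beta(n)}$ for large $n$. So $f_2$ must be odd.

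For the mixing bound, I would apply \Cref{l:small-sum} with $k = 2$, $\alpha = \lambda_3(-\mathcal L) = \Omega_\beta(1/n^3)$, $t_0 = 0$, and $\mu_0 = \rho$. The vector $f_{2:k}$ then reduces to the single eigenfunction $f_2$, and the eigenfunction balance condition is exactly $|\E_\rho f_2| = 0$, so the lemma yields
\[
\chi^2(\rho P_t \,\|\, \pi) \;\le\; e^{-\lambda_3 t}\, \chi^2(\rho\,\|\,\pi).
\]
Since every state of the Curie-Weiss model has $\pi(x) \ge e^{-O_\beta(n)}$, we have $\chi^2(\rho \,\|\, \pi) \le 1/\pi_{\min} \le e^{O_\beta(n)}$; combined with the inequality $2\TV \le \sqrt{\chi^2}$ from Cauchy-Schwarz, we obtain $\TV(\mu_t,\pi) \le \delta$ once $\lambda_3 t \ge O_\beta(n) + 2\log(1/\delta)$, i.e.\ $t = O_\beta(n^3 (n + \log(1/\delta)))$.

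The only substantive external ingredient is the exponentially small upper bound on $\lambda_2$, which is standard but is not provided by the excerpt itself; once it is in hand, simplicity of $\lambda_2$ follows from the higher-order gap already proved, and the three structural claims together with the mixing bound are clean consequences of representation theory, \Cref{lem:cw-even}, and \Cref{l:small-sum}.
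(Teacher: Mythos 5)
Your proposal is correct and follows essentially the same route as the paper: symmetry of the dynamics under coordinate permutations and spin flip forces $f_2$ to be symmetric in the coordinates, Lemma~\ref{lem:cw-even} rules out evenness (using the well-known exponentially small spectral gap at $\beta>1$), and the mixing bound comes from the $\Omega_\beta(1/n^3)$ bound on $\lambda_3$ together with the general eigenfunction-balance result (\Cref{l:small-sum}) and a crude $e^{O_\beta(n)}$ bound on the initial $\chi^2$-divergence. The extra details you supply — deducing simplicity of the $\lambda_2$-eigenspace from $\lambda_2 < \lambda_3$ and ruling out the sign representation of $S_n$ — merely flesh out steps the paper's terse proof leaves implicit, and are sound.
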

\begin{proof}
    The mixing statement follows from the previous proposition and our general results. The properties of $f_2$ follow because the dynamics respect the symmetries of permuting coordinates and reversing the role of $+$ and $-$, so applying these symmetries must send the eigenvector $f_2$ to a multiple of itself.  Note that for all $n$ larger than a constant, $f_2$ cannot be an even function (i.e. a positive multiple of itself under interchange of $+$ and $-$) because of \cref{lem:cw-even}, so it must be odd.
\end{proof}
\begin{remark}
We compare this result with what can be achieved using a different argument closely related to the Poincar\'e inequality for even functions.
Using the main result of \cite{levin2010glauber} and making a coupling argument using symmetry, one could show that the Glauber dynamics mix in $O_{\beta}(n \log n)$ time from any \emph{even} initialization, i.e. one such that $\rho(x) = \rho(-x)$. However, this is stricter than the minimal requirement that $\E_{\rho} f_2 = 0$ --- the former corresponds to $2^{n - 1}$ linearly independent constraints, whereas the latter is only a single linear constraint. 
\end{remark}
\subsection{Existence of perfectly balanced initializations with small support}
The following elementary argument tells us that for any finite Markov chain with a higher-order spectral gap, there is an initialization with small support such that the dynamics rapidly mix to stationarity.
\begin{prop}
Suppose that $\mathcal L$ is the generator of a Markov semigroup on a finite state space $\Omega$ with unique stationary measure $\pi$ and let $k \ge 1$ be arbitrary such that $\lambda_k(-\mathcal L) > 0$. Let $f_1,\ldots,f_k$ be the first $k$ eigenfunctions of $\mathcal L$, starting from the bottom of the spectrum.
There exists a distribution $\tilde \rho$ supported on at most $k - 1$ points of $\Omega$, such that $\E_{\tilde \rho}[f_i] = 0$ for all $1 < i < k$, and as a consequence $d_{TV}(\tilde \rho e^{tL}, \pi) \le \delta$ provided that $t \ge (1/\lambda_k(-\mathcal L)) \log(|\Omega|/\delta)$.
\end{prop}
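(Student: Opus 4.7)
The statement reduces to two essentially independent ingredients: (i) a convex-geometric existence argument producing a small-support initialization that is perfectly eigenfunction balanced, and (ii) an application of the contraction under balance (\cref{l:small-sum}) together with an elementary bound on the starting $\chi^2$-divergence.

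For (i), my plan is to apply Carath\'eodory's theorem. Consider the map
\[ \Phi : \Omega \to \mathbb R^{k-2},\qquad \Phi(x) = (f_2(x), f_3(x), \ldots, f_{k-1}(x)). \]
Since $f_1 \equiv 1$ and the $f_i$ are $L^2(\pi)$-orthonormal, $\E_\pi[f_i] = \langle f_i, f_1\rangle_\pi = 0$ for each $i \ge 2$, so $\E_\pi[\Phi] = 0$. Hence the origin of $\mathbb R^{k-2}$ lies in the convex hull of $\Phi(\Omega)$. Carath\'eodory's theorem then guarantees that the origin can be written as a convex combination of at most $(k-2)+1 = k-1$ points of $\Phi(\Omega)$, yielding a probability measure $\tilde\rho$ on $\Omega$ supported on at most $k-1$ points with $\E_{\tilde\rho}[f_i] = 0$ for all $2 \le i \le k-1$. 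This is exactly the $(k,0)$-eigenfunction-balance condition from \cref{def:eigenfunction balanced} (with the appropriate index range), so \cref{l:small-sum} applies with $\varepsilon = 0$ and $t_0 = 0$.

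For (ii), \cref{l:small-sum} with $\varepsilon=0$, $t_0 = 0$ and spectral gap $\alpha = \lambda_k(-\sL)$ after the first $k-1$ eigenvalues gives
\[ \chi^2(\tilde\rho e^{tL} \,\|\, \pi) \le e^{-\lambda_k(-\sL)\, t}\,\chi^2(\tilde\rho \,\|\, \pi). \]
Since $\tilde\rho$ is supported on at most $k-1$ points of the finite space $\Omega$,
\[ \chi^2(\tilde\rho\,\|\,\pi) \;=\; \sum_{x}\frac{\tilde\rho(x)^2}{\pi(x)} - 1 \;\le\; \max_{x \in \operatorname{supp}\tilde\rho}\frac{\tilde\rho(x)}{\pi(x)} \;\le\; \frac{1}{\pi_{\min}} \;\le\; \frac{|\Omega|}{\pi_{\min}\cdot|\Omega|} \cdot |\Omega|, \]
and in particular $\chi^2(\tilde\rho\,\|\,\pi) \le |\Omega|/\pi_{\min} \le |\Omega|^2$ using $\pi_{\min} \ge 1/|\Omega|$ in the uniform case, or more generally bounding $1/\pi_{\min}$ by $|\Omega|$ times a benign constant. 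Combining with $\TV(\mu,\pi) \le \tfrac12\sqrt{\chi^2(\mu\,\|\,\pi)}$, the choice $t \ge \lambda_k(-\sL)^{-1}\log(|\Omega|/\delta)$ (up to an absorbed constant) makes the right-hand side at most $\delta$.

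\textbf{Expected obstacle.} The convex-geometric step is clean, and the contraction step is a direct invocation of \cref{l:small-sum}. The only delicate point is the bound on the initial $\chi^2$-divergence in terms of $|\Omega|$: the claim as written implicitly absorbs a factor of the form $\log(1/\pi_{\min})$ into $\log|\Omega|$, which is tight up to constants when $\pi$ is not exponentially unbalanced but requires a small caveat otherwise. I would handle this by stating the bound as $t \ge \lambda_k(-\sL)^{-1}\log\bigl(1/(\pi_{\min}\delta^2)\bigr)$, noting that $\pi_{\min}^{-1} \le |\Omega|\cdot\max_x(\pi(x)/\pi_{\min})$, and pointing out the simplification in the statement. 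No additional machinery beyond \cref{l:small-sum} and Carath\'eodory is needed.
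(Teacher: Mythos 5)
Your proof is correct and in substance identical to the paper's: the paper obtains the small-support balanced $\tilde\rho$ as an extreme point of the polytope of distributions satisfying $\E_{\rho}[f_i]=0$ for $2\le i\le k-1$ (dimension counting via complementary slackness), which is the same convex-geometry fact you invoke through Carath\'eodory's theorem applied to $x\mapsto f_{2:k-1}(x)$, and the mixing "consequence" is, exactly as you do it, an application of \cref{l:small-sum} with $\ep=0$, $t_0=0$, and gap $\la_k(-\sL)$, plus the bound $\chi^2(\tilde\rho\|\pi)\le 1/\pi_{\min}$ and $\TV\le\tfrac12\sqrt{\chi^2}$. Your caveat that the initial $\chi^2$ is controlled by $1/\pi_{\min}$ rather than $|\Omega|$ (so the clean time bound $t\ge \la_k(-\sL)^{-1}\log(|\Omega|/\delta)$ is loose unless $\pi$ is not exponentially unbalanced) is a fair remark about the informal constant in the stated proposition, not a gap in your argument.
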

\begin{proof}
    Consider the polytope of distributions $\rho$ satisfying the usual constraints that $\sum_x \rho(x) = 1$, $\rho(x) \ge 0$ for all $x$, along with the additional constraint that $\E_{\rho}[f_i] = 0$ for all $i$ from $2$ to $k - 1$.  This polytope is nonempty because it contains the stationary distribution of the Markov chain. The existence then follows from complementary slackness --- the polytope contains an extreme point $\tilde \rho$, and because of dimension counting, for all but $k - 1$ many points $x$ we must have that $\tilde \rho(x) = 0$.
\end{proof}
\begin{remark}
    The previous result can be generalized to the case where the stationary distribution is not unique---in this case, the expected values of the eigenfunctions with eigenvalue zero should be selected to match the desired stationary measure.  
\end{remark}
\begin{remark}
In some cases it may be possible to explicitly solve for such a distribution. 
For example, in the Curie-Weiss model, the symmetrized all-ones initialization $(1/2) \delta_{\vec 1} + (1/2) \delta_{- \vec 1}$ is an explicit example of such a distribution $\tilde \rho$.
\end{remark}
\printbibliography
\appendix

\section{Consequences of functional inequalities}
\label{s:fi-conseq}

Poincar\'e and log-Sobolev inequalities imply sub-exponential and sub-gaussian concentration of Lipschitz functions, respectively.

\begin{lemma}[Sub-exponential concentration given Poincar\'e inequality, {\cite[Pr. 4.4.2]{bakry2014analysis}}]
\label{l:subexp-p}
    Suppose that $\mu$ satisfies a Poincar\'e inequality with constant $\CP$. Let $f$ be a 1-Lipschitz function.
Then for any $t\in \left[0,\fc{2}{\sqrt{\CP}}\right)$,
\begin{align*}
    \E_\mu e^{tf} &\le 
    \fc{2+t\sqrt{\CP}}{2-t\sqrt{\CP}} e^{t\E_\mu f}.
\end{align*}
\end{lemma}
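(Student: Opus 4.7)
The plan is to follow the classical iterative Poincar\'e technique: set up a recursion for the moment generating function $Z(t) := \E_\mu e^{tf}$ by applying the Poincar\'e inequality to $e^{tf/2}$, iterate to obtain an infinite-product upper bound, and then verify by a term-by-term comparison of Taylor series that the product is dominated by $(2+t\sqrt{\CP})/(2-t\sqrt{\CP})$.

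First, I would reduce to the centered case by replacing $f$ with $\tilde f := f - \E_\mu f$, which is still $1$-Lipschitz. This multiplies $Z$ by $e^{-t \E_\mu f}$, exactly absorbing the shift factor in the claim, so I may assume $\E_\mu f = 0$. Apply Poincar\'e to $g := e^{tf/2}$: since $\norm{\nabla f} \le 1$ pointwise, $\norm{\nabla g}^2 = (t^2/4)\,e^{tf}\norm{\nabla f}^2 \le (t^2/4) e^{tf}$, while $\Var_\mu(g) = Z(t) - Z(t/2)^2$, yielding the key recursion
\[
(1 - u^2)\, Z(t) \;\le\; Z(t/2)^2, \qquad u := t\sqrt{\CP}/2 \in [0,1).
\]
Because $\E_\mu f = 0$, the Taylor expansion $Z(s) = 1 + (s^2/2)\Var_\mu(f) + O(s^3)$ near $s=0$ gives $Z(t/2^N)^{2^N}\to 1$ as $N\to\infty$; iterating the recursion $N$ times and letting $N\to\infty$ produces
\[
Z(t) \;\le\; \prod_{k=0}^{\infty}\bigl(1 - (u/2^k)^2\bigr)^{-2^k}.
\]

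The remaining and most delicate step is to bound this infinite product by $(1+u)/(1-u) = (2+t\sqrt{\CP})/(2-t\sqrt{\CP})$. Passing to logarithms and expanding via $-\log(1-x) = \sum_{n\ge 1} x^n/n$ gives
\[
-\sum_{k\ge 0} 2^k \log\!\bigl(1 - u^2/4^k\bigr) \;=\; \sum_{n\ge 1}\frac{u^{2n}}{n\,(1 - 2^{1-2n})}.
\]
Using $u^{2n}\le u^{2n-1}$ for $u\in[0,1)$ together with the elementary inequality $n(1-2^{1-2n})\ge(2n-1)/2$ (equivalent to $n\cdot 4^{1-n}\le 1$, which holds with equality at $n=1$ and is strictly decreasing in $n$), each summand is at most $2u^{2n-1}/(2n-1)$. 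Summing over $n\ge 1$ recovers the Taylor series of $\log\tfrac{1+u}{1-u}$, giving the claim. The first two steps are routine applications of Poincar\'e and iteration; the main obstacle is pinning down the explicit constant by this term-by-term comparison.
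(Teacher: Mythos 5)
Your proof is correct and is essentially the argument behind the cited result (Bakry--Gentil--Ledoux, Prop.~4.4.2), which the paper itself does not reprove: apply the Poincar\'e inequality to $e^{tf/2}$, iterate the resulting recursion, and bound the infinite product, and your term-by-term comparison $\frac{u^{2n}}{n(1-2^{1-2n})}\le \frac{2u^{2n-1}}{2n-1}$ (via $n\cdot 4^{1-n}\le 1$) correctly yields the closed form $\frac{1+u}{1-u}$ with $u=t\sqrt{\CP}/2$. The only standard caveats are that one should first run the argument for bounded $1$-Lipschitz truncations of $f$ so that $\E_\mu e^{tf}<\infty$ is known before rearranging, and that your recursion uses the normalization $\Var_\mu(g)\le \CP\,\E_\mu\norm{\nabla g}^2$, which matches the lemma as stated (and the cited reference) rather than the factor-$2$ convention in the paper's preliminaries.
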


\begin{lemma}[Herbst, Sub-exponential and sub-gaussian concentration given log-Sobolev inequality, {\cite[Pr. 5.4.1]{bakry2014analysis}}]
\label{l:herbst}
Suppose that $\mu$ satisfies a log-Sobolev inequality with constant $\CLS$. Let $f$ be a 1-Lipschitz function.
Then
\begin{enumerate}
\item
(Sub-exponential concentration) For any $t\in \R$, 
\[
\E_\mu e^{tf} \le e^{t\E_\mu f + \fc{\CLS t^2}2}.
\]
\item
(Sub-gaussian concentration) For any $t\in \left[0,\rc{\CLS}\right)$, 
\[
\E_\mu e^{\fc{tf^2}2} \le
\rc{\sqrt{1-\CLS t}}\exp\ba{\fc{t}{2(1-\CLS t)} (\E_\mu f)^2}.
\]
\end{enumerate}
\end{lemma}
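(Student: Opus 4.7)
Both parts follow the classical Herbst argument, with the second reducing to the first via Gaussian duality.

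For (1), the plan is to derive a differential inequality for $F(t) := \E_\mu e^{tf}$. Applying the LSI to the test function $g = e^{tf/2}$ (so that $g^2 = e^{tf}$ and $\ve{\nb g}^2 = (t^2/4)\, e^{tf}\, \ve{\nb f}^2 \le (t^2/4)\, e^{tf}$ by the $1$-Lipschitz hypothesis) gives
\[ tF'(t) - F(t)\log F(t) \;=\; \Ent_\mu[e^{tf}] \;\le\; \fc{\CLS t^2}{2}\, F(t). \]
Dividing both sides by $t^2 F(t)$, the left-hand side is exactly $\fc{d}{dt}\pa{\fc{\log F(t)}{t}}$, so setting $K(t) := \log F(t)/t$ we obtain $K'(t) \le \CLS/2$. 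A Taylor expansion at $t = 0$ yields $K(t) \to \E_\mu f$ as $t \to 0^+$, so integrating gives $\log F(t) \le t\,\E_\mu f + \CLS t^2/2$ for $t > 0$; the bound for $t < 0$ follows by applying the same argument to the $1$-Lipschitz function $-f$.

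For (2), the plan is to use the Gaussian identity $e^{tx^2/2} = \E_{G \sim N(0,1)} e^{\sqrt{t}\, xG}$, swap expectations by Fubini, and invoke part (1) conditionally on $G$:
\[ \E_\mu e^{tf^2/2} \;=\; \E_G\, \E_\mu e^{\sqrt{t}\, G f} \;\le\; \E_G \exp\!\pa{\sqrt{t}\, G\, \E_\mu f + \fc{\CLS t\, G^2}{2}}. \]
The outer one-dimensional Gaussian integral is $\E_G \exp(aG + bG^2/2) = \fc{1}{\sqrt{1-b}} \exp\pa{\fc{a^2}{2(1-b)}}$, valid precisely when $b < 1$; substituting $a = \sqrt{t}\,\E_\mu f$ and $b = \CLS t$ yields the claimed bound, with the convergence condition $\CLS t < 1$ matching the stated range for $t$.

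The main subtlety lies in (1): recognizing that $t^{-2}$ is the correct integrating weight to convert $tF' - F\log F \le (\CLS t^2/2) F$ into a clean bound on the derivative of $K(t) = \log F(t)/t$, with initial value $K(0^+) = \E_\mu f$ pinned down by the Taylor expansion of $F$. Once (1) is in hand, the Gaussian-duality step in (2) is essentially forced: it is the cleanest route I see to pass from a linear-in-$f$ MGF bound to the quadratic MGF while preserving exactly the Gaussian-like constants $\fc{1}{\sqrt{1-\CLS t}}$ and $\fc{t}{2(1-\CLS t)}$ that appear in the statement.
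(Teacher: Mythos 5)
Your proof is correct and is essentially the standard argument behind this result: the paper does not prove the lemma but cites it from Bakry--Gentil--Ledoux (Proposition 5.4.1), whose proof is precisely the Herbst differential-inequality argument for the Laplace transform followed by Gaussian linearization $e^{tf^2/2}=\E_G e^{\sqrt{t}fG}$ to get the sub-Gaussian bound. The only things worth noting are routine: the Herbst step should be carried out first for bounded (or truncated) $f$ to justify differentiating $F$ and the limit $K(0^+)=\E_\mu f$, and the Fubini swap in part (2) is justified by nonnegativity (Tonelli); with those remarks your constants match the statement exactly under the paper's convention $\Ent_\mu[g^2]\le 2\CLS\E_\mu\|\nabla g\|^2$.
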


From this, we can get concentration around the mean. First, we note we can bound the variance by the Poincar\'e constant: if $\mu$ satisfies a Poincar\'e inequality with constant $\CP$, 
\begin{align}\label{e:vard}
\E_\mu [\norm{x - \E_\mu x}^2] \le 
\sumo id \Var_\mu(x_i) \le\CP d.
\end{align}

\begin{lemma}
\label{l:pi-conc}
If $\mu$ satisfies a Poincar\'e inequality with constant $\CP$, then
\begin{align*}
    \Pj\pa{\ve{x-\E_\mu x}\ge \sqrt{\CP}(\sqrt d + u) } &\le 3e^{-u}.
\end{align*}
and 
\begin{equation}\label{eq:poincare moment bound}
 \forall p\geq 1: \quad \E_{x\sim \mu}[\ve{ x- \E_\mu x}^p ]^{1/p} = O(p \sqrt{\CP d}) .
\end{equation}
\end{lemma}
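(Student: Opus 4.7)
The key observation is that $f(x) := \|x - \E_\mu x\|$ is a $1$-Lipschitz function. The plan is to apply the sub-exponential concentration of Lemma~\ref{l:subexp-p} to $f$, combined with the variance bound~\eqref{e:vard}, to get both the tail bound and the moment bound.

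First I will use Jensen's inequality and~\eqref{e:vard} to estimate $\E_\mu f \le \sqrt{\E_\mu \|x-\E_\mu x\|^2} \le \sqrt{\CP d}$. Then, applying Lemma~\ref{l:subexp-p} with $t = 1/\sqrt{\CP}$ (which lies in the admissible range $[0, 2/\sqrt{\CP})$), I get
\[
\E_\mu e^{f/\sqrt{\CP}} \le 3\, e^{\E_\mu f/\sqrt{\CP}} \le 3\, e^{\sqrt d}.
\]
A direct application of Markov's inequality at the level $s = \sqrt{\CP}(\sqrt d + u)$ then yields
\[
\Pr(f \ge \sqrt{\CP}(\sqrt d + u)) \le 3\, e^{\sqrt d}\, e^{-(\sqrt d + u)} = 3 e^{-u},
\]
which is exactly the first claim (up to the constant $2$ in the Poincaré definition, which gets absorbed into the constant $3$ by slight adjustment of $t$).

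For the moment bound, I will integrate the tail via the layer-cake formula. Write
\[
\E_\mu f^p = \int_0^\infty p s^{p-1} \Pr(f\ge s)\, ds = \int_0^{\sqrt{\CP d}} p s^{p-1} \Pr(f\ge s)\,ds + \int_{\sqrt{\CP d}}^\infty p s^{p-1} \Pr(f\ge s)\,ds.
\]
On the first range, bound $\Pr(f\ge s) \le 1$ to obtain a contribution at most $(\sqrt{\CP d})^p$. On the second, substitute $u = s/\sqrt{\CP} - \sqrt d$ to write the contribution as
$3 p (\sqrt{\CP})^p \int_0^\infty (u + \sqrt d)^{p-1} e^{-u}\,du$; then use $(u+\sqrt d)^{p-1} \le 2^{p-1}(u^{p-1} + (\sqrt d)^{p-1})$ and the identity $\int_0^\infty u^{p-1} e^{-u}\,du = (p-1)!$ to bound the integral by $O(2^p)[p! + p (\sqrt d)^{p-1}] (\sqrt{\CP})^p$. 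Taking $p$-th roots, using $(p!)^{1/p} = O(p)$ and (assuming $d \ge 1$) absorbing the surviving factor $(\sqrt d)^{(p-1)/p} \le \sqrt d$, gives
\[
(\E_\mu f^p)^{1/p} \le \sqrt{\CP d} + O(p\sqrt{\CP}) + O(\sqrt{\CP d}) = O(p\sqrt{\CP d}),
\]
which is the desired bound.

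The argument is essentially routine once the Lipschitz function $f$ and the correct choice of $t = 1/\sqrt{\CP}$ in the MGF bound are identified; the only minor bookkeeping is in the moment-bound integration, where one must split at the natural scale $\sqrt{\CP d}$ so that the sub-exponential decay kicks in before the $s^{p-1}$ factor dominates. I do not expect any serious obstacle.
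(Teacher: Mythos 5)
Your proof is correct. The tail bound is exactly the paper's argument: bound $\E_\mu\|x-\E_\mu x\|\le\sqrt{\CP d}$ via the variance bound \eqref{e:vard}, apply \Cref{l:subexp-p} at $t=1/\sqrt{\CP}$ (giving the factor $3$), and finish with Markov's inequality. For the moment bound you diverge slightly from the paper: you integrate the tail directly by the layer-cake formula, splitting at the scale $\sqrt{\CP d}$ and controlling the tail piece with $(u+\sqrt d)^{p-1}\le 2^{p-1}(u^{p-1}+d^{(p-1)/2})$ and the Gamma integral, whereas the paper instead evaluates the moment generating function once more at the coarser scale $t=1/\sqrt{\CP d}$, obtaining $\E_\mu e^{\|x-\E_\mu x\|/\sqrt{\CP d}}\le 3e$, and then invokes the standard sub-exponential equivalence (Proposition 2.7 of Vershynin) to convert this into the $O(p\sqrt{\CP d})$ moment bound. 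The two routes are equivalent in strength; yours is self-contained at the cost of a page of bookkeeping, while the paper's is shorter by outsourcing the MGF-to-moments step to a cited general fact. Your bookkeeping is sound (the split point is the right one, and the residual $d^{(p-1)/(2p)}\le\sqrt d$ step implicitly uses $d\ge1$, which is harmless), so there is no gap.
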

\begin{proof}
    By Markov's inequality and \Cref{l:subexp-p}, for $t\in \left[0,\fc{2}{\sqrt{\CP}}\right)$,
    \begin{align*}
    \Pj\pa{\ve{x-\E x}\ge  \sqrt{\CP}(\sqrt d + u)}
        &=\Pj\pa{e^{t\ve{x-\E x}}\ge e^{t\sqrt{\CP}(\sqrt d + u)}}\\
    &\le\fc{\E e^{t\ve{x-\E x}}}{e^{t\sqrt{\CP}(\sqrt d + u)}}
    \le 
    e^{t\E\ve{x-\E x}} \fc{2+t\sqrt{\CP}}{2-t\sqrt{\CP}} e^{-t\sqrt{\CP}(\sqrt d + u)}.
    \end{align*}
    Eq.~\eqref{e:vard} gives $\E\ve{x-\E x}\le (\E \ve{x-\E x}^2)^{1/2}\le \sqrt{\CP d}$. Substituting $t=\rc{\sqrt{\CP}}$ gives the first inequality.
From the above, we also have for $ t= \fc{1}{\sqrt{\CP d}}$
\[\E e^{t\ve{x-\E x}} \leq e^{t \E \ve{x-\E_x}} \fc{2+t\sqrt{\CP}}{2-t\sqrt{\CP}} \leq 3 e\]
thus the moment bound follows from \cite[Proposition 2.7]{vershynin2018high}.
\end{proof}


\begin{lemma}[Upper bound on partition function]
\label{l:Z-bound}
    Suppose that $\pi \propto e^{-f}$ satisfies a Poincar\'e inequality with constant $\CP.$ 
    Then 
    \[
\int_{\R^d} e^{-f(x)}\dx\le \fc{e}{\sqrt{\pi d}} (2e\pi \CP)^{d/2}
    \]
\end{lemma}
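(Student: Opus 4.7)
Writing $Z := \int_{\R^d} e^{-f(x)}\,dx$ (so that $\pi = e^{-f}/Z$), the bound as stated must be interpreted with $f$ normalized so that $\min f = 0$ (equivalently, an $e^{-\min f}$ factor belongs on the right-hand side; this is how the estimate is used inside the proof of \Cref{l:riy}). With that reduction, the plan is to control $Z$ by comparing $e^{-f}$ with a Gaussian kernel of variance equal to $\CP$, and exploiting the fact that the Poincar\'e inequality forces $\pi$ to be concentrated at scale $\sqrt{\CP d}$ around its mean.

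First, let $\bar x = \E_\pi x$ and, for $\sigma>0$ to be chosen, let $\phi_\sigma(y) = (2\pi\sigma^2)^{-d/2} e^{-\|y\|^2/(2\sigma^2)}$. Since $\phi_\sigma$ integrates to $1$ and $e^{-f}\le e^{-\min f}$ pointwise,
\[
\int e^{-f(x)}\phi_\sigma(x-\bar x)\,dx \le e^{-\min f}.
\]
On the other hand, writing $e^{-f(x)} = Z\pi(x)$ turns the left side into $Z(2\pi\sigma^2)^{-d/2}\,\E_\pi\!\left[e^{-\|x-\bar x\|^2/(2\sigma^2)}\right]$. Jensen's inequality combined with the variance estimate \eqref{e:vard} ($\E_\pi\|x-\bar x\|^2 \le \CP d$) implied by Poincar\'e gives
\[
\E_\pi\!\left[e^{-\|x-\bar x\|^2/(2\sigma^2)}\right] \ge \exp\!\pa{-\tfrac{\CP d}{2\sigma^2}}.
\]
Rearranging yields $Z \le (2\pi\sigma^2)^{d/2} e^{\CP d/(2\sigma^2)}\,e^{-\min f}$, and optimizing at $\sigma^2=\CP$ produces the clean bound $Z \le (2\pi e\CP)^{d/2} e^{-\min f}$.

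Second, to sharpen this to the claimed prefactor $\tfrac{e}{\sqrt{\pi d}}$, I would replace the ``soft'' Gaussian truncation by a hard ball argument. Choose $r \approx \sqrt{\CP d}$ and let $B = B(\bar x, r)$. Using the sub-exponential concentration of Lipschitz functions under Poincar\'e (\Cref{l:pi-conc}) I would arrange $\pi(B) \ge 1/e$ by allowing a lower-order correction $r^2 = \CP d\,(1+o(1))$. Then
\[
Z\,\pi(B) \;=\; \int_B e^{-f}\,dx \;\le\; |B|\,e^{-\min f},
\]
and using Robbins' form of Stirling's formula, $\Gamma(d/2+1)\ge \sqrt{\pi d}\,(d/(2e))^{d/2}$, to bound the ball volume:
\[
|B| \;\le\; \frac{(2\pi e r^2/d)^{d/2}}{\sqrt{\pi d}}.
\]
Substituting $r^2 \approx \CP d$ and $\pi(B)\ge 1/e$ gives $Z \le \tfrac{e}{\sqrt{\pi d}}(2e\pi\CP)^{d/2} e^{-\min f}$, which is the claimed bound.

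\textbf{Main obstacle.} The hard part is matching the precise constants: a naive Chebyshev bound on $\pi(B^c)$ is too weak to keep $r^2$ down to $\CP d\,(1+o(1))$ while simultaneously keeping $\pi(B)$ bounded below by a $d$-independent constant, because $(1+\epsilon)^{d/2}$ blows up unless $\epsilon = O(1/d)$. The sub-exponential bound from \Cref{l:subexp-p} (applied to the $1$-Lipschitz function $g(x)=\|x-\bar x\|$) is what makes the correction additive ($r = \sqrt{\CP}(\sqrt{d}+O(1))$ instead of multiplicative), which is exactly what is needed so that $(r^2/(\CP d))^{d/2}$ remains an $O(1)$ factor in the final volume estimate. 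Once this concentration step is carried out carefully, the remaining calculation is a routine application of Stirling's formula.
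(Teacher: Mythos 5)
You correctly spot the implicit normalization $\min f=0$ (without it the statement is not invariant under adding constants to $f$), and your first, Gaussian-smoothing step is sound: optimizing $\sigma^2=\CP$ gives $\int e^{-f}\,dx\le (2\pi e\CP)^{d/2}$. But that is weaker than the lemma by the factor $e/\sqrt{\pi d}$ (the claimed bound is smaller by roughly $\sqrt d$), and your second step, which is supposed to recover this factor, has a genuine quantitative gap. You claim that taking $r=\sqrt{\CP}(\sqrt d+O(1))$ (so that $\pi(B(\bar x,r))\ge 1/e$ via \Cref{l:pi-conc}) keeps $\bigl(r^2/(\CP d)\bigr)^{d/2}$ bounded. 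It does not: $\left(\frac{(\sqrt d+c)^2}{d}\right)^{d/2}=\left(1+\frac{c}{\sqrt d}\right)^{d}=e^{\Theta(\sqrt d)}$, so your final bound carries an extra $e^{\Theta(\sqrt d)}$ factor and for large $d$ is actually worse than your step-one bound. To make the $d/2$-th power harmless you need slack that is additive in $r^2$ at scale $O(\CP)$, i.e.\ $r^2\le \CP(d+O(1))$, and Lipschitz concentration cannot certify constant mass in a ball that small (taking $u=O(1/\sqrt d)$ in \Cref{l:pi-conc} makes the bound $3e^{-u}$ vacuous); so the ``constant-mass ball'' route is stuck exactly at the obstacle you name, and the fix you propose does not overcome it.

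The paper's proof avoids needing any high-probability ball. Using $e^{-f}\le 1$ (the same normalization you identified), it compares $e^{-f}$ with the indicator of a ball $B_R(0)$ of \emph{equal total mass} centered at the mean: moving the mass of $e^{-f}$ lying outside the ball inward so as to fill the ball up to level $1$ can only decrease the second moment (a bathtub/rearrangement step), so the uniform probability measure on the ball satisfies $\fc{d}{d+2}R^2\le \E_\pi\norm{x}^2\le \CP d$, whence $R^2\le \CP(d+2)$ --- precisely the additive-$O(\CP)$ slack in $R^2$ that your concentration argument cannot reach --- and then $\int e^{-f}\,dx=\vol(B_R(0))\le \fc{\pi^{d/2}R^d}{\Ga(d/2+1)}\le \fc{e}{\sqrt{\pi d}}(2e\pi\CP)^{d/2}$ by Stirling, since $(1+2/d)^{d/2}\le e$. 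If all you wanted were the downstream use in \Cref{l:riy}, your step-one bound would suffice up to an additive $\tfrac12\ln(\pi d)$ term, but it does not prove the lemma as stated.
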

\begin{proof}
Without loss of generality, assume $ \E_\pi x =0.$
Consider a density $q$ supported on the ball $B_R(0)$ of radius $R$ centered at $0.$ 
such that $\int_{\R^d} q(x)\dx = \int_{\R^d} e^{-f(x)}\dx$ and $q(x)\equiv 1$ on the ball and $0$ everywhere else. 
Then 
\begin{align*}
   \int_{x\in B_R(0)} (q(x) -e^{-f(x)} )\norm{x}^2\dx &\leq R^2 \int_{x\in B_R(0)} (q(x) -e^{-f(x)} )\dx \\
   &=R^2 \int_{x\not\in B_R(0)}  e^{-f(x)}  \dx \\
   &\leq  \int_{x\not\in B_R(0)}  e^{-f(x)} \norm{x}^2 \dx      
\end{align*}
where the first inequality is due to $ q(x) -e^{-f(x)} \geq 0$ for all $x \in B_R(0)$ and
equality is due to \[\int_{x\in B_R(0)} q(x) \dx=\int_{x\in \R^d} q(x) \dx = \int_{x \in B_R(0)} e^{-f(x)} \dx +  \int_{x \not\in B_R(0)} e^{-f(x)} \dx \]
Hence
\begin{align*}
\fc{\int_{\R^d} e^{-f(x)} \norm{x}^2\dx}{\int_{\R^d} e^{-f(x)}\dx}
\ge \fc{\int_{\R^d} q(x) \norm{x}^2\dx}{\int_{\R^d} q(x)\dx}
= \fc{\int_0^R r^{d-1}r^2\,dr}{\int_0^R r^{d-1}\,dr} = \fc{d}{d+2}R^2.
\end{align*}
This implies $R\le \sqrt{\CP(d+2)}$. Then by the volume of a $n$-ball and Stirling's formula,
\begin{align}
\nonumber
\int_{\R^d} e^{-f(x)}\dx &= 
\int_{\R^d} q(x)\dx = \fc{\pi^{d/2}}{\Ga\pa{\fc d2+1}} \pa{\CP(d+2)}^{d/2}\\
&\le \rc{\sqrt{\pi d}} \pf{2e\pi \CP(d+2)}{d}^{d/2} 
\le \fc{e}{\sqrt{\pi d}} (2e\pi \CP)^{d/2}.
\label{e:e-f-lb}
\end{align}
\end{proof}

\begin{lemma}\label{l:dV-mean}
    Suppose that $\pi \propto e^{-V}$ satisfies a Poincar\'e inequality with constant $\CP$ and $V$ is $\be$-smooth. Then
    \[
\ve{\nb V(\E_\pi x)} \le 2\sqrt{\be} (\sqrt{d} + \sqrt{\be \CP}(\sqrt d + \ln 6))\lesssim \be \sqrt{\CP d}.
    \]
\end{lemma}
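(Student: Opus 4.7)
The plan is to combine two ingredients: a moment bound on $\|\nabla V(x)\|^2$ under $\pi$, and the concentration bound for $\|x - \E_\pi x\|$ from \Cref{l:pi-conc}, and then exploit the $\beta$-Lipschitz property of $\nabla V$ at a point where both are simultaneously controlled.

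First, I would invoke \Cref{l:norm bound} (i.e.\ the bound $\E_\pi \|\nabla V\|^2 \le \beta d$, which follows from the integration-by-parts identity $\E_\pi\|\nabla V\|^2 = \E_\pi \Delta V$ together with $\Delta V \le \beta d$ by $\beta$-smoothness). By Markov's inequality, the set
\[ A = \{x : \|\nabla V(x)\|^2 \le 2\beta d\} \]
has $\pi(A) \ge 1/2$. Next, applying \Cref{l:pi-conc} with $u = \ln 6$, so that $3e^{-u} = 1/2$, yields that the set
\[ B = \{x : \|x - \E_\pi x\| \le \sqrt{\CP}(\sqrt d + \ln 6)\} \]
has $\pi(B) \ge 1/2$. (If one is worried about strict inequality, take $u = \ln 6 + \varepsilon$ and let $\varepsilon \to 0^+$.)

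Since $\pi(A) + \pi(B) \ge 1$, there exists a point $x^\star \in A \cap B$ (or arbitrarily close, by the limiting argument). For this point, by the triangle inequality and $\beta$-smoothness of $V$ (so that $\nabla V$ is $\beta$-Lipschitz),
\[
\ve{\nb V(\E_\pi x)} \le \ve{\nb V(\E_\pi x) - \nb V(x^\star)} + \ve{\nb V(x^\star)} \le \be \ve{x^\star - \E_\pi x} + \sqrt{2\be d}.
\]
Plugging in the bounds from $A$ and $B$ gives
\[
\ve{\nb V(\E_\pi x)} \le \be \sqrt{\CP}(\sqrt d + \ln 6) + \sqrt{2\be d} \le 2\sqrt\be \pa{\sqrt d + \sqrt{\be \CP}(\sqrt d + \ln 6)},
\]
which is the explicit bound. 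The final $\lesssim \be \sqrt{\CP d}$ follows since $\sqrt{\beta d} \lesssim \beta\sqrt{\CP d}$ whenever $\beta \CP \gtrsim 1$ (which holds, e.g., by Bakry–Émery in the strongly log-concave case, and more generally can be absorbed when $d$ is at least a constant since $\ln 6 \lesssim \sqrt d$).

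No step here is a serious obstacle; the only subtlety is making sure $A \cap B$ is nonempty, which is handled by the union bound / a small slack in $u$. The proof is short because the key moves—integration by parts for $\E_\pi\|\nabla V\|^2$ and the Poincaré concentration—are already in hand.
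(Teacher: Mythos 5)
Your argument for the explicit bound is correct and is essentially the same in substance as the paper's: both rest on $\E_\pi\|\nabla V\|^2 \le \beta d$ (\cref{l:norm bound}), the concentration bound of \cref{l:pi-conc} with $u=\ln 6$, and the $\beta$-Lipschitzness of $\nabla V$. The only mechanical difference is how the ``good point'' is produced: you apply Markov's inequality to $\|\nabla V\|^2$ and intersect two sets of measure $\ge 1/2$ (your $\varepsilon$-slack in $u$ is indeed needed, since $\pi(A)+\pi(B)\ge 1$ alone does not force $A\cap B\neq\emptyset$, and with the slack it does), whereas the paper integrates the pointwise inequality $\|\nabla V(\E_\pi x)\|\le\|\nabla V(x)\|+\beta\sqrt{\CP}(\sqrt d+\ln 6)$ over the event of probability $\ge 1/2$ and uses $\E_\pi\|\nabla V\|\le\sqrt{\beta d}$, which is where its factor $2$ comes from. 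Either route gives the displayed constant.

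There is, however, a genuine gap in your justification of the final inequality $\lesssim \beta\sqrt{\CP d}$. That step requires $\sqrt{\beta d}\lesssim \beta\sqrt{\CP d}$, i.e.\ $\beta\CP\gtrsim 1$, and your two reasons do not establish it under the lemma's hypotheses: Bakry--\'Emery applies only under strong log-concavity, which is not assumed (only a Poincar\'e inequality and $\beta$-smoothness are), and the remark that $\ln 6\lesssim\sqrt d$ only absorbs the $\beta\sqrt{\CP}\ln 6$ term, not the $\sqrt{\beta d}$ term. The fact $\beta\CP\gtrsim 1$ is true in this generality, but it needs an argument; the paper supplies one by combining the Cram\'er--Rao bound (viewing $X$ as an unbiased estimator for the mean of the translate family of $\pi$, which uses $\beta$-smoothness to bound the Fisher information) with the Poincar\'e inequality applied to a coordinate function, yielding $1/\beta\lesssim\Var(x_1)\lesssim\CP$. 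Adding this (or an equivalent) justification closes the gap.
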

\begin{proof}
    By \Cref{l:pi-conc}, $\ve{x-\E_\pi x}\le \sqrt{\CP}(\sqrt d + \ln 6)$ with probability $\ge \rc 2$. Under this event, $ \norm{\nb V(\E_{\pi} x)} \leq \norm{\nb V(x)} + \be \sqrt{\CP}(\sqrt d + \ln 6)$. Hence 
\[\rc 2 \norm{\nb V(\E_\pi x)} \leq \E_{\pi}[\ve{\nb V}] + \be \sqrt{\CP}(\sqrt d + \ln 6)  \leq \sqrt{\beta d}+ \be \sqrt{\CP}(\sqrt d + \ln 6) \]
where the last inequality follows from $ \E_{\pi} [\ve{\nb V}] \leq \E_{\pi}[\ve{\nb V}^2]^{1/2} \leq \sqrt{\beta d}$ by \cref{l:norm bound}.

    The final inequality holds after observing that $\beta C_P\gtrsim 1$, which can be proven (for example) by combining the Cramer-Rao bound (viewing $X$ as an unbiased estimator for the mean of the family of translates of the distribution $\pi$, see e.g.\ \cite{van2000asymptotic}) and the Poincar\'e inequality to show that $1/\beta \lesssim \Var(x_1) \lesssim C_P$.
\end{proof}

\begin{lemma}\label{l:V-mean}
Suppose that $\pi \propto e^{-V}$ satisfies a Poincar\'e inequality with constant $\CP$ and $V$ is $\be$-smooth. Let $R_\ep = \sqrt{\CP} \pa{\sqrt d + \ln \pf{3}{\ep}}$. Then
    \[
V(\E_\pi x) - \min V \le 
R_{1/2} \ve{\nb V(\ol x)} + \fc\be 2 R_{1/2}^2 + \ln 2 - \fc d2 \ln \pf{\be}{2\pi}\lesssim \be \CP d.
    \]
\end{lemma}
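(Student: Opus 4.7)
The plan is to write $V(\bar x) - \min V = \ln(\pi(x^*)/\pi(\bar x))$ for a minimizer $x^*$ of $V$, and then bound $\pi(x^*)$ above and $\pi(\bar x)$ below separately, using the $\beta$-smoothness hypothesis in two different ways. For $\pi(x^*)$, $\beta$-smoothness together with $\nabla V(x^*) = 0$ gives the global quadratic majorant $V(x)\le \min V + \tfrac{\beta}{2}\|x-x^*\|^2$. Integrating $e^{-V}$ against Lebesgue measure then yields the Gaussian partition-function bound $Z\ge e^{-\min V}(2\pi/\beta)^{d/2}$, which rearranges to $\pi(x^*)\le (\beta/(2\pi))^{d/2}$---this is just the fact that a $\beta$-smooth density cannot be more concentrated than its matching Gaussian majorant.

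For $\pi(\bar x)$, I would combine the concentration bound of \Cref{l:pi-conc} (applied with $\ep=\tfrac12$, which gives $\pi(B_{R_{1/2}}(\bar x))\ge \tfrac12$) with the local quadratic expansion at $\bar x$. By $\beta$-smoothness, for every $x\in B:=B_{R_{1/2}}(\bar x)$,
\[ V(x)-V(\bar x)\ge \langle \nabla V(\bar x), x-\bar x\rangle - \tfrac{\beta}{2}\|x-\bar x\|^2 \ge -R_{1/2}\|\nabla V(\bar x)\|-\tfrac{\beta}{2} R_{1/2}^2 =: -C, \]
so $\pi(x)\le e^C\pi(\bar x)$ on $B$, and integrating over $B$ gives $\tfrac12\le \pi(B)\le e^C\pi(\bar x)\operatorname{vol}(B_{R_{1/2}})$, hence $\pi(\bar x)\ge \tfrac{1}{2e^C\operatorname{vol}(B_{R_{1/2}})}$.

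Combining the two estimates yields
\[ V(\bar x)-\min V \le \ln 2 + C + \ln\operatorname{vol}(B_{R_{1/2}}) + \tfrac{d}{2}\ln(\beta/(2\pi)), \]
and the geometric piece simplifies into (essentially) the stated form by plugging in the explicit Euclidean ball volume $\operatorname{vol}(B_R)=\pi^{d/2}R^d/\Gamma(d/2+1)$ and applying Stirling to $\Gamma(d/2+1)$ exactly as in the proof of \Cref{l:Z-bound}. The asymptotic conclusion $\lesssim \beta C_P d$ then follows from $R_{1/2}^2\lesssim C_P d$, the gradient bound $\|\nabla V(\bar x)\|\lesssim \beta\sqrt{C_P d}$ from \Cref{l:dV-mean}, and $\ln(\beta C_P)\le \beta C_P$ (using $\beta C_P\gtrsim 1$ as noted in \Cref{l:dV-mean}).

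The main step requiring care is the lower bound on $\pi(\bar x)$: since $\pi$ is not assumed unimodal, $\bar x$ need not be close to any mode, and a priori the pointwise density at $\bar x$ could be very small. The rescue is that the bulk mass estimate $\pi(B)\ge \tfrac12$, together with the smoothness-based pointwise oscillation control $e^{-C}\le \pi(x)/\pi(\bar x)\le e^C$ on $B$, translates bulk mass into a genuine pointwise estimate at $\bar x$. The remaining bookkeeping---namely collecting the volume term with $\tfrac{d}{2}\ln(\beta/(2\pi))$ via Stirling to match the constants in the stated bound---is routine but is the place where care is needed.
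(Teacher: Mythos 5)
Your argument is essentially the paper's: both lower-bound the partition function via the $\be$-smooth quadratic majorant at the minimizer $x^*$, and upper-bound it (equivalently, lower-bound the density at $\mean$) using the mass-$\tfrac12$ ball $B_{R_{1/2}}(\mean)$ from \Cref{l:pi-conc} together with the first-order smoothness expansion of $V$ at $\mean$, then combine and invoke \Cref{l:dV-mean} and $\be\CP\gtrsim 1$ for the $\lesssim\be\CP d$ conclusion. The only substantive difference is that you retain the $\operatorname{vol}\bigl(B_{R_{1/2}}\bigr)$ factor (which the paper's displayed chain drops), so your explicit bound carries $\ln\operatorname{vol}\bigl(B_{R_{1/2}}\bigr)+\tfrac d2\ln\bigl(\be/(2\pi)\bigr)\approx\tfrac d2\ln(e\be\CP)$ in place of the stated $-\tfrac d2\ln\bigl(\be/(2\pi)\bigr)$; this changes the exact constant term but, since $\ln(\be\CP)\lesssim\be\CP$, the final estimate $\lesssim\be\CP d$ --- the only form used downstream, e.g.\ in \Cref{c:R2-bd} --- still follows.
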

\begin{proof}
Let $x^*\in \amin V$ and $\ol x = \E_\pi x$. 
    We upper and lower bound $\int_{\R^d}e^{-V(x)}\dx$.
    For the lower bound, since $V$ is $\be$-smooth, letting $x^*\in \amin V$,
     we have 
     \[V(x) - V(x^*) \leq \langle x-x^*, \nb V(x^*)\rangle + \beta \norm{x-x^*}/2 =\beta \norm{x-x^*}/2 \]
     since $\na V(x^*)=0.$ Thus
    \[
\int_{\R^d} e^{-V(x)}\dx \ge e^{-\min V}\int_{\R^d} \exp(-\beta \norm{x-x^*}/2 ) dx = e^{-\min V}  \pf{\be}{2\pi}^{\fc d2}. 
    \]
    For the upper bound, since by \Cref{l:pi-conc}, $\ve{x-\ol x} \le R_{1/2}$ with probability $\ge \rc 2$, we have
    \begin{align*}
        \int_{\R^d} e^{-V(x)}\dx \le 2 \int_{B_{R_{1/2}}(\ol x)} e^{-V(x)}\dx
        \le 2 e^{-(V(\ol x) - R_{1/2} \ve{\nb V(\ol x)} - \fc \be 2 R_{1/2}^2)},
    \end{align*}
    where the last inequality follows from 
    \begin{align*} V(x)-V(\bar{x}) \geq \langle \nb V(\ol x), x-\ol x\rangle -\frac{\beta}{2} \norm{x-\ol x}^2 &\geq \norm{\nb V(\ol x)} \norm{x-\ol x} - \frac{\beta}{2} \norm{x-\bar{x}}^2 \\
    &\geq \norm{\nb V(\ol x)}  R_{1/2} - \frac{\beta}{2} R_{1/2}^2 .
    \end{align*}
    Putting these inequalities together and taking the logarithm gives the result.
\end{proof}

\section{Inequalities for mixture distributions}

It is well-known that the log-Sobolev inequality and hypercontractivity inequalities are equivalent, see e.g.\ \cite{van2014probability}. The following lemma gives a weaker version of hypercontractivity which is valid for mixtures of distributions satisfying the log-Sobolev inequality, which depends on the minimum weight in the mixture. 
\begin{lemma}[Hypercontractivity for mixtures, {\cite[Lemma 26]{lee2024convergence}}]\label{l:hyper} Let $P_t$ be a reversible Markov process with stationary distribution $\pi = \sum_k w_k \pi_k$. Let 
$q(t) = 1+(p-1)e^{2t/C^*}$ where $C^*=\max_k c_k$.
Assume that the following hold. 
\begin{enumerate}
    \item There exists a decomposition of the form, 
    $$\langle f, \mathscr{L}f \rangle_\pi \leq \sum_{k=1}^m w_i \langle f, \mathscr{L}_k f \rangle_{\pi_k}.$$
    \item For each $\pi_k$ there exists a log-Sobolev inequality of the form, 
    $$\Ent_{\pi_k}[f^2] \leq 2c_k\cdot\mathscr{E}_{\pi_k}(f,f).$$
\end{enumerate}

Let $f>0$ and $w^*=\min_k w_k$.
Then $\fc{\norm{P_tf}_{L^{q(t)}(\pi)}}{(w^*)^{\rc{q(t)}}}$ is a non-increasing function in $t$:
\begin{equation*}
    \frac{\Vert P_tf\Vert _{L^{q(t)}(\pi)}}{(w^*)^\frac{1}{q(t)}} \leq  \frac{\Vert P_0f\Vert _{L^{q(0)}(\pi)}}{(w^*)^\frac{1}{q(0)}} = \frac{\Vert f\Vert _{L^p(\pi)}}{(w^*)^\frac{1}{p}} 
\end{equation*}
and
\[
\Vert P_t f\Vert_{L^{q(t)}(\pi)} \le \theta(q(t),p) \Vert f \Vert_{L^p(\pi)}
\]
where
$\theta(q,p) = \pf{1}{w^*}^{\frac{1}{p}-\frac{1}{q}}.$

\end{lemma}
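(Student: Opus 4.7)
The plan is to adapt the classical Gross-style derivative proof of hypercontractivity, with the $(w^*)^{1/q(t)}$ normalization accounting for the entropy lost to the mixture structure. Write $u = P_t f$ and $q = q(t)$, and consider $G(t) = \|u\|_{L^{q(t)}(\pi)} / (w^*)^{1/q(t)}$. The standard computation (which I would carry out explicitly, using that $\partial_t u = \mathscr{L} u$ and differentiating $\log\|u\|_q$) yields
\[
q \, \|u\|_q^{q} \, \frac{d}{dt}\log\|u\|_q \;=\; \frac{q'}{q}\,\Ent_\pi(u^q) \;+\; q\,\langle u^{q-1}, \mathscr{L} u\rangle_\pi.
\]
Rearranging, $\frac{d}{dt}\log G(t)\le 0$ is equivalent to
\[
\frac{q'}{q}\bigl(\Ent_\pi(u^q) - \|u\|_q^{q}\log(1/w^*)\bigr) \;\le\; q\,\mathscr{E}_\pi(u^{q-1}, u).
\]

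The first main step is to decompose $\Ent_\pi(u^q)$ using $\pi=\sum_k w_k\pi_k$. Writing $g=u^q$ and $a_k=\E_{\pi_k} g$, an elementary rearrangement gives
\[
\Ent_\pi(g) \;=\; \sum_k w_k\,\Ent_{\pi_k}(g) \;+\; \Ent_W(a),
\]
where $\Ent_W(a)$ is the entropy of the discrete random variable $a_k$ under weights $w_k$. The point is that $\Ent_W(a) = \E_W[a]\,\KL(\tilde w\|w)$ where $\tilde w_k \propto w_k a_k$, and $\KL(\tilde w\|w)=\sum_k \tilde w_k\log(\tilde w_k/w_k)\le \sum_k \tilde w_k\log(1/w_k)\le \log(1/w^*)$, so $\Ent_W(a)\le \|u\|_q^{q}\log(1/w^*)$. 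This gives
\[
\Ent_\pi(u^q) - \|u\|_q^{q}\log(1/w^*) \;\le\; \sum_k w_k\,\Ent_{\pi_k}(u^q).
\]

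The second step applies the log-Sobolev inequality of each component with test function $u^{q/2}$, yielding $\Ent_{\pi_k}(u^q)\le 2c_k\,\mathscr{E}_{\pi_k}(u^{q/2},u^{q/2})\le 2C^*\,\mathscr{E}_{\pi_k}(u^{q/2},u^{q/2})$. Assumption~(1) (rewritten as $\mathscr{E}_\pi(f,f)\ge \sum_k w_k \mathscr{E}_{\pi_k}(f,f)$) then gives
\[
\sum_k w_k\,\Ent_{\pi_k}(u^q) \;\le\; 2C^*\,\mathscr{E}_\pi(u^{q/2},u^{q/2}).
\]
I would then invoke the Stroock-Varopoulos inequality $\mathscr{E}_\pi(u^{q/2},u^{q/2})\le \tfrac{q^2}{4(q-1)}\,\mathscr{E}_\pi(u^{q-1},u)$, which holds for general reversible Markov semigroups. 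Combining the three bounds, the target inequality reduces to $\frac{q'}{q}\cdot\frac{C^* q^2}{2(q-1)}\le q$, i.e. $q'\le \frac{2(q-1)}{C^*}$. The choice $q(t)=1+(p-1)e^{2t/C^*}$ satisfies $q'(t)=\frac{2(q(t)-1)}{C^*}$ with equality, closing the argument. Finally, $G(t)\le G(0)$ directly yields $\|P_tf\|_{L^{q(t)}(\pi)}\le (1/w^*)^{1/p-1/q(t)}\|f\|_{L^p(\pi)}=\theta(q(t),p)\|f\|_{L^p(\pi)}$.

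The main obstacle is the entropy decomposition, specifically identifying $\Ent_W(a)$ as the "excess" contribution and showing it is dominated by $\|u\|_q^q\log(1/w^*)$ in a way that precisely matches the normalization $(w^*)^{1/q(t)}$; every other ingredient is a standard adaptation of the Gross/Bakry-\'Emery proof, with the Stroock-Varopoulos comparison being the only place one needs to be careful to avoid assuming a diffusion chain rule (this is important since the lemma is applied to Glauber dynamics).
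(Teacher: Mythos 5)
Your proof is correct, and the key steps all check out: the entropy decomposition $\Ent_\pi(u^q)=\sum_k w_k\Ent_{\pi_k}(u^q)+\Ent_W(a)$ with the bound $\Ent_W(a)\le \E_\pi[u^q]\ln(1/w^*)$ (equivalently, a defective log-Sobolev inequality for the mixture), the Stroock--Varopoulos inequality used in the correct direction so that no diffusion chain rule is needed, and the closing condition $q'=2(q-1)/C^*$ matching the stated $q(t)$, which yields exactly the normalization $(w^*)^{1/q(t)}$ and the factor $\theta(q,p)$. Note that the paper does not prove this lemma itself but imports it from \cite[Lemma 26]{lee2024convergence}; your Gross-type differentiation argument is the standard route to such a statement and is consistent with that source, so there is nothing to flag beyond the usual domain/regularity technicalities that the hypothesis $f>0$ and the semigroup setting are meant to cover.
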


The following two lemmas concern smoothness properties of the mixture distribution given smoothness of the components.

\begin{lemma}[Hessian bound for mixture]\label{l:hessian bound for mixture}
    Suppose $\pi=\exp(-V(x))=\sum_{i=1}^k p_i \pi_i$ where each $\pi_i$ is $\beta$-smooth. 
    Then
    \[-(\be + G(x)^2 ) I \preceq \nabla^2 V(x)\preceq \beta I\]
    where $G(x)=\max_i \ve{\nb V_i(x)}.$
    

\end{lemma}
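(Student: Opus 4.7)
The plan is to compute $\nabla^2 V$ explicitly by differentiating the log of the mixture density twice, and to recognize the resulting expression as a weighted average of Hessians minus a (weighted) covariance of gradients. Write $Z(x) = \sum_i p_i e^{-V_i(x)}$ so that $V(x) = -\log Z(x)$, and define the posterior weights $w_i(x) = p_i e^{-V_i(x)}/Z(x)$, which are nonnegative and sum to $1$.

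First I would compute the gradient. A direct calculation gives $\nabla V(x) = \sum_i w_i(x) \nabla V_i(x)$, and differentiating the weights yields $\nabla w_i(x) = w_i(x)\bigl(\nabla V(x) - \nabla V_i(x)\bigr)$. Substituting into the derivative of $\nabla V$ and simplifying, one arrives at the key identity
\[
\nabla^2 V(x) \;=\; \sum_{i=1}^k w_i(x)\,\nabla^2 V_i(x) \;-\; \Bigl(\sum_{i=1}^k w_i(x)\,\nabla V_i(x)\nabla V_i(x)^\top \;-\; \nabla V(x)\nabla V(x)^\top\Bigr).
\]
The second term is precisely the $w$-covariance of the vectors $\nabla V_i(x)$, which is positive semidefinite.

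For the upper bound, $\beta$-smoothness of each $V_i$ gives $\sum_i w_i \nabla^2 V_i \preceq \beta I$, and subtracting a PSD matrix preserves this, so $\nabla^2 V(x) \preceq \beta I$. For the lower bound, the same smoothness gives $\sum_i w_i \nabla^2 V_i \succeq -\beta I$, and to handle the covariance term I would use $\sum_i w_i \nabla V_i \nabla V_i^\top \preceq \max_i \|\nabla V_i(x)\|^2 \, I = G(x)^2 I$ together with $\nabla V\nabla V^\top \succeq 0$, yielding $-\mathrm{Cov}_w \succeq -G(x)^2 I$. Combining gives $\nabla^2 V(x) \succeq -(\beta + G(x)^2)I$.

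There is no real obstacle here; the only point requiring a bit of care is the gradient-of-weights computation and correctly identifying the cross-term as a covariance. Once that algebraic identity is in hand, both bounds follow from elementary PSD manipulations.
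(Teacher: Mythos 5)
Your proposal is correct and follows essentially the same route as the paper: both compute $\nabla^2 V$ as the posterior-weighted average of the component Hessians minus a positive semidefinite gradient-fluctuation term, then bound that term by $G(x)^2 I$ and the averaged Hessians by $\pm\beta I$. The only difference is cosmetic—you write the PSD term as the covariance $\sum_i w_i \nabla V_i \nabla V_i^\top - \nabla V\,\nabla V^\top$ while the paper writes it via pairwise differences $(\nabla V_i - \nabla V_j)(\nabla V_i - \nabla V_j)^\top$—and these are equivalent.
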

\begin{proof}
We note that
\begin{equation} \label{eq:hessian}
  \nabla V^2(x) = \frac{\sum_{i=1}^k p_i\pi_i(x) \nabla^2 V_i(x) }{\pi(x)} - \frac{\sum_{i=1}^k p_i\pi_i(x) (\nabla V_i (x) -\nabla V_j(x)) (\nabla V_i (x) -\nabla V_j(x))^\top }{4\pi^2(x)}  .
\end{equation}
The claim follows from $ -\beta I\preceq \nabla^2 V_i(x) \preceq \beta I$ and \[ 0\preceq  (\nabla V_i (x) -\nabla V_j(x)) (\nabla V_i (x) -\nabla V_j(x))^\top\preceq \ve{\nabla V_i (x) -\nabla V_j(x)}^2 I \preceq 4 G(x)^2 I.\qedhere\]
\end{proof}
\begin{lemma} \label{l:norm bound}
    If $\pi=\exp(-V(x))=\sum_{i=1}^k p_i \pi_i$ where each $\pi_i$ is $\beta$-smooth, then $ \E_{\pi} [\ve{\nb V (x)}^2]\leq \beta d.$
\end{lemma}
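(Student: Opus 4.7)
The plan is to prove the bound via the standard score-matching identity $\E_\pi[\|\nabla V\|^2] = \E_\pi[\Delta V]$, and then to use the upper Hessian bound from \Cref{l:hessian bound for mixture} to control the right-hand side pointwise.

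More concretely, first I would observe the identity
\[ \nabla \cdot (\nabla V \cdot e^{-V}) = (\Delta V) e^{-V} - \|\nabla V\|^2 e^{-V}, \]
so integration by parts (assuming the standard tail decay so that the boundary term $\int_{\partial B_R} \nabla V \cdot \nu \, e^{-V}$ vanishes as $R\to\infty$, which holds for smooth mixture densities on $\R^d$) gives
\[ \E_\pi \|\nabla V\|^2 = \int_{\R^d} \|\nabla V(x)\|^2 e^{-V(x)} \, dx = \int_{\R^d} \Delta V(x) \, e^{-V(x)} \, dx = \E_\pi[\Delta V]. \]
This reduces the problem from a second-moment bound on a potentially badly-behaved gradient (the mixture score can be much larger than any individual component's score when two components overlap) to a much cleaner pointwise bound on the Laplacian of $V$.

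Next, I would invoke the upper bound half of \Cref{l:hessian bound for mixture}, which says $\nabla^2 V(x) \preceq \beta I$. The reason this upper bound holds (and there is no ``$G(x)^2$'' contribution on this side) is transparent from \eqref{eq:hessian}: the first term there is a convex combination of the $\nabla^2 V_i(x)$, each bounded above by $\beta I$ by the $\beta$-smoothness assumption on each component, and the second term is positive semidefinite and appears with a minus sign, so it only helps. Taking traces, $\Delta V(x) = \Tr(\nabla^2 V(x)) \le \beta d$ pointwise, and combining with the identity above yields
\[ \E_\pi \|\nabla V\|^2 = \E_\pi[\Delta V] \le \beta d, \]
as desired.

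I do not expect any real obstacle here; the only mild subtlety is the justification of integration by parts, which is standard under the implicit smoothness/tail hypotheses on each $\pi_i$ (in particular, $\beta$-smoothness of $V_i$ forces $e^{-V_i}$ to have at worst Gaussian-like tails after accounting for the mode, and finite convex combinations preserve this). Everything else is a direct consequence of the Hessian bound already proved in the preceding lemma.
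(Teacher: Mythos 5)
Your proof is correct and takes essentially the same route as the paper: the identity $\E_{\pi}\ve{\nb V}^2 = \E_{\pi}[\Delta V]$ (which the paper imports from a cited lemma rather than re-deriving) combined with the pointwise trace bound $\Delta V(x)\le \beta d$ obtained from the Hessian decomposition \eqref{eq:hessian} in \Cref{l:hessian bound for mixture}. The only caveat is your side remark that $\beta$-smoothness of the $V_i$ ``forces'' Gaussian-like tail decay: smoothness bounds the growth of $V_i$ from \emph{above}, not below, so it does not by itself justify the vanishing of the boundary term in the integration by parts --- the paper avoids this issue by citing the identity directly, and a fully self-contained justification would need the integrability/regularity hypotheses of that reference.
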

\begin{proof}
    This follows from \cite[Lemma 16]{chewi2021analysis} i.e. 
    \[\E_\pi[\ve {\nb V(x)}^2] = \E_\pi[ \Delta V(x)]\] and noting that
    \[ \Delta V(x) = \Trace (\nabla^2 V(x)) \leq \Trace \pa{\frac{\sum_i p_i \pi_i(x) \nabla V_i^2(x)}{\pi(x)}} =\frac{\sum_i p_i \pi_i(x) \Trace (\nb^2 V_i(x)) }{\pi(x)}\leq \beta d\]  
where the first inequality is due to \[\nb^2 V  \preceq \frac{\sum_i p_i \pi_i (x)\nb^2 V_i(x)}{\pi(x)}, \] which is implied by \Cref{eq:hessian} and the last inequality is due to smoothness of each $\pi_i.$
\end{proof}
\end{document}